\documentclass{article}

\usepackage[final]{neurips_2025}
\usepackage{url}            %
\usepackage{booktabs}       %
\usepackage{amsfonts}       %
\usepackage{wrapfig}
\usepackage{caption}
\usepackage[mathscr]{euscript}
\usepackage{float}
\usepackage[nodisplayskipstretch]{setspace}

\usepackage{rotating}

\usepackage{times}
\usepackage{adjustbox}
\usepackage{dsfont}
\usepackage{amsmath, amssymb}
\usepackage{amsthm, thmtools, thm-restate}
\usepackage{bbm}
\usepackage{graphicx}
\usepackage{latexsym}
\usepackage{mathtools}
\usepackage{multirow}
\usepackage{paralist}

\usepackage{xcolor}
\usepackage{xspace}
\usepackage{enumitem}
\usepackage[most]{tcolorbox}
\tcbuselibrary{skins}

\newtheorem{theorem}{Theorem}%
\newtheorem{proposition}[theorem]{Proposition}
\newtheorem{lemma}[theorem]{Lemma}
\newtheorem{corollary}[theorem]{Corollary}

\newtheorem{definition}{Definition}
\newtheorem{assumption}[definition]{Assumption}

\newcommand{\myparagraph}[1]{\paragraph{#1.}\hspace{-0.8em}}  %

\usepackage{tikz}
\usepackage{pgfplots}
\usetikzlibrary{shapes}
\usetikzlibrary{positioning}
\usetikzlibrary{plotmarks}
\usetikzlibrary{patterns}
\usetikzlibrary{intersections,shapes.arrows}
\usetikzlibrary{pgfplots.fillbetween}
\definecolor{darkpink}{rgb}{0.91, 0.33, 0.5}
\pgfplotsset{compat=1.18}

\definecolor{puorange}{rgb}{0.80,0.40,0}
\definecolor{bluegray}{rgb}{0.04,0,0.7}
\definecolor{greengray}{rgb}{0.05,0.50,0.15}
\definecolor{darkbrown}{rgb}{0.40,0.2,0.05}
\definecolor{darkcyan}{rgb}{0,0.4,1}
\definecolor{black}{rgb}{0,0,0}
\definecolor{grey}{rgb}{0.93,0.93,0.93}
\definecolor{royalazure}{rgb}{0.0, 0.22, 0.66}

\newcommand{\red}[1]{{\color{purple}#1}}

\newcommand{\blue}[1]{{\color{royalazure}#1}}

\usepackage[colorlinks,citecolor=bluegray,linkcolor=darkbrown,urlcolor=blue,breaklinks]{hyperref}

\usepackage{minitoc} %

\usepackage[capitalise]{cleveref}

\crefname{section}{Sec.}{Sections}

\crefname{theorem}{Thm.}{Thms.}
\crefname{lemma}{Lem.}{Lems.}
\crefname{corollary}{Cor.}{Cors.}
\crefname{proposition}{Prop.}{Props.}
\crefname{assumption}{Asm.}{Asms.}
\crefname{property}{Propt.}{Propts.}
\crefname{algorithm}{Alg.}{Algs.}
\crefname{appendix}{Appx.}{Appxs.}

\crefname{figure}{Fig.}{Figs.}
\crefname{table}{Tab.}{Tabs.}

\definecolor{pastelgreen}{RGB}{227, 240, 227}
\colorlet{lightpastelgreen}{pastelgreen!60!white}
\definecolor{softblue}{RGB}{204, 225, 240}
\colorlet{lightersoftblue}{softblue!70!white}
\newtcolorbox{examplebox}[1][]{colback=gray!10, colframe=gray!10, sharp corners, boxrule=0pt, enhanced, breakable, #1}
\newtcolorbox{defibox}[1][]{colback=lightersoftblue, colframe=lightersoftblue, sharp corners, boxrule=0pt, enhanced, breakable, #1}

\newcommand{\abs}[1]{\left| #1 \right|}
\newcommand{\norm}[1]{\lVert #1 \rVert}

\newcommand{\Tr}{\operatorname{\bf Tr}}

\newcommand{\ip}[1]{{\langle #1 \rangle}}

\newcommand{\Var}{\operatorname{\mathbb{V}ar}}

\newcommand{\Cov}{\operatorname{\mathbb{C}ov}}

\DeclareMathOperator*{\argmin}{arg\,min}

\newcommand{\bigO}{\mathcal{O}}

\newcommand{\calA}{\mathcal{A}}
\newcommand{\calB}{\mathcal{B}}

\newcommand{\calD}{\mathcal{D}}
\newcommand{\calE}{\mathcal{E}}
\newcommand{\calF}{\mathcal{F}}
\newcommand{\calG}{\mathcal{G}}
\newcommand{\calH}{\mathcal{H}}

\newcommand{\calL}{\mathcal{L}}

\newcommand{\calN}{\mathcal{N}}

\newcommand{\calS}{\mathcal{S}}
\newcommand{\calW}{\mathcal{W}}
\newcommand{\calX}{\mathcal{X}}
\newcommand{\calY}{\mathcal{Y}}
\newcommand{\calZ}{\mathcal{Z}}

\newcommand{\ind}{\mathds{1}}

\newcommand{\wrt}{{\em w.r.t.~}}

\newcommand{\iid}{{\em i.i.d.~}}

\newcommand{\br}[1]{\ensuremath{\left\{#1\right\}}}
\renewcommand{\P}[2]{\ensuremath{{\mathbb P}_{#1}\left[#2\right]}}

\newcommand{\msc}[1]{\ensuremath{\mathscr{#1}}}
\newcommand{\prob}{\mathbb{P}}

\newcommand{\mc}[1]{\mathcal{#1}}
\newcommand{\sse}{\subseteq}
\newcommand \grad {\nabla}
\newcommand{\R}{\mathbb{R}}
\newcommand{\Ex}[2]{\ensuremath{{\mathbb E}_{#1}\left[#2\right]}}
\newcommand\E{\mathbb{E}}
\newcommand{\sbr}[1]{\ensuremath{\left[#1\right]}}
\newcommand{\p}[1]{\ensuremath{\left(#1\right)}}
\renewcommand{\d}{\mathrm{d}}

\newcommand{\pow}[1]{^{(#1)}}
\newcommand{\spow}[1]{^{\scriptscriptstyle(#1)}}

\newcommand{\gnorm}[1]{\ensuremath{\Vert#1\Vert_{\mathcal{G}}}}
\newcommand{\G}{\ensuremath{\mathcal{G}}}
\newcommand{\D}{\ensuremath{\mathrm{D}}}
\newcommand{\tr}[1]{\ensuremath{\text{tr}\left(#1\right)}}

\newcommand{\loss}{\ell} %
\newcommand{\ploss}{L}  %
\newcommand{\hg}{\hat{g}} %
\newcommand{\score}{S_\theta} %
\newcommand{\noscore}{S_{\text{no}}} %
\newcommand{\noscorehat}{\hat S_{\text{no}}} %
\newcommand{\sconv}{\mu} %
\newcommand{\smooth}{M} %
\newcommand{\pdir}{\Gamma} %
\newcommand{\pdirhat}{\hat\Gamma} %

\newcommand{\sgd}{\theta} %

\newcommand{\dsgd}{\theta} %

\newcommand{\dsgdbar}{\bar \theta} %

\newcommand{\Kconst}{K} %
\newcommand{\kappaconst}{\kappa} %

\newcommand{\secsmooth}{\alpha} %

\newcommand{\highsmooth}{\beta} %

\newcommand{\hessianconst}{\sconv_{\text{no}}} %
\newcommand{\hconst}{\smooth_{\text{no}}}

\newcommand{\Gr}{\G_{r}\p{g_0}} %
\newcommand{\Fro}{\mathrm{Fro}} %
\newcommand{\op}{\mathrm{op}}
\newcommand{\Q}{\mathbb{Q}}

\newcommand{\algoname}{OSGD\xspace}

\newcommand{\as}{\textit{a.s.~}}
\newcommand{\dno}{\delta_{\text{no}}}
\newcommand{\vno}{v_{\text{no}}}

\doparttoc
\faketableofcontents

\title{Stochastic Gradients under Nuisances}

\author{%
  Facheng Yu\thanks{email: \texttt{fachengy@uw.edu}.} \qquad  Ronak Mehta \qquad Alex Luedtke \qquad 
Zaid Harchaoui \vspace{0.3cm}\\
  University of Washington\\
}

\begin{document}

\maketitle

\begin{abstract}
    Stochastic gradient optimization is the dominant learning paradigm for a variety of scenarios, from classical supervised learning to modern self-supervised learning. We consider stochastic gradient algorithms for learning problems whose objectives rely on unknown nuisance parameters, and establish non-asymptotic convergence guarantees. Our results show that, while the presence of a nuisance can alter the optimum and upset the optimization trajectory, the classical stochastic gradient algorithm may still converge under appropriate conditions, such as Neyman orthogonality. Moreover, even when Neyman orthogonality is not satisfied, we show that an algorithm variant with approximately orthogonalized updates (with an approximately orthogonalized gradient oracle) may achieve similar convergence rates. Examples from orthogonal statistical learning/double machine learning and causal inference are discussed. 

\end{abstract}

\section{Introduction}\label{sec:intro}
Machine learning, statistics, and causal inference rely on risk minimization problems of the form
\begin{align}
    \min_{\theta \in \Theta} \: 
\big[
    \ploss_0(\theta) := \Ex{Z \sim \prob}{\loss_0(\theta; Z)}
   \big],
    \label{eq:erm}
\end{align}
where $\Theta \sse \R^d$ is a parameter space, $Z$ is a $\calZ$-valued random variable, and $\ell_0: \Theta \times \calZ \rightarrow \R$ is a loss function. The quantity $\ell_0(\theta; z)$ describes the performance of a model parametrized by $\theta \in \Theta$ on a test example $z \in \calZ$. Given only an oracle that provides a stochastic gradient estimate of the objective~\eqref{eq:erm}, practitioners are able to train models ranging from linear functions on tabular data to billion-parameter neural networks on vision and language data.

The success of stochastic gradient descent (SGD) algorithms \citep{amari1993backpropagation,bottou2005line,bottou2007tradeoffs,ward2020adagrad} has motivated an abundance of work on their theoretical properties under various algorithmic and risk conditions, such as class separability \citep{soundry2018theimplicit}, random reshuffling \citep{gurbuzbalan2021whyrandom}, decomposable objectives \citep{schmidt2017minimizing,vaswani2019painless}, quantization noise \citep{gorbunov20aunified}, and noise dominance \citep{sclocchi2024onthedifferent}. This success has been fueled by machine learning and AI software libraries such as JAX, PyTorch, TensorFlow, and others, which offer a wide range of SGD variants, as long as a loss function can be clearly specified. The gradient is then evaluated automatically on a mini-batch of datapoints and used for stochastic updates.

Though powerful, this recipe takes one thing for granted: that the learner can always compute the risk (or an unbiased estimate thereof). 
Indeed, many complex learning problems rely on a risk function that is only partially specified up to a class
\begin{align}
    \calL := \br{\ploss(\,\cdot\,, g): g \in \calG},\label{eq:loss_class}
\end{align}
where $\calG$ is a possibly infinite-dimensional set and $\ploss: \Theta \times \G \rightarrow \R$ is a function of both the target parameter $\theta \in \Theta$ and an unknown \emph{nuisance parameter}~$g \in \calG$. 

This framework originates from semiparametric estimation and inference~\citep{levit1979infinite,linnik2008statistical,bickel1993efficient,van2000asymptotic}, where the risk is a Kullback-Leibler (KL) divergence and $g$ provides information about the true data-generating distribution $\prob$, but is not of primary scientific interest. 
However, the partially specified loss formulation from~\eqref{eq:loss_class} is not limited to semiparametric estimation and inference problems. This framework connects to many areas of interest, including profile likelihood based learning~\citep{murphy2000onprofile, pavlichin2019approximate, hao2019thebroad} and distributionally robust learning~\citep{shapiro2017distributionally, Levy2020Large-Scale, mehta2024distributionally}. 

For instance, profile likelihood based learning reduces~\eqref{eq:loss_class} by applying a pointwise minimum over $g \in \calG$ to then construct a problem that can be solved in $\theta \in \Theta$. Another example arises in applications with distribution shifts, for which $g$ represents an unknown test data distribution that may differ from the one from which the training data were drawn. Distributionally robust learning reduces~\eqref{eq:loss_class} by instead taking a pointwise maximum over $\calG$ and solving the resulting problem. 
Although the pointwise minimum and maximum are natural reductions, it is often the case that there is a ``true'' $g_0 \in \calG$ and the loss class is reduced by first estimating $g_0$ with auxiliary data to produce some $\hg$, which we refer to as \emph{double/debiased machine learning}, or DML, following \citet{chernozhukov2018double}. The problem~\eqref{eq:erm} is then thought to be derived via $\ploss_0(\theta) \equiv \ploss(\theta, g_0)$ in this case (see examples in \Cref{sec:osl}). This is the focus of this paper. 

Despite the prominence of SGD and DML individually, the convergence guarantees of SGD to recover the risk minimizer with a misspecified nuisance parameter remain unknown. 
Indeed, after producing $\hg$, the user typically solves a (full batch) empirical risk minimization problem, i.e.~minimizing a sample average approximation of $\ploss(\cdot, \hg)$. In this paper, we aim to fill this gap by proving convergence guarantees on the sequence $(\theta\pow{n})_{n\geq 1}$ generated by updates of the form
\begin{align}
    \theta\pow{n} = \theta\pow{n-1} - \eta S(\theta\pow{n-1}, \hg;Z_n),\label{eq:sgd}
\end{align}
where $\eta > 0$ is a learning rate, $\calD_n := (Z_i)_{i=1}^n$ is a stream of independent data drawn from $\prob$, $\hg$ is a nuisance parameter estimate, and $S: \Theta \times \calG \times \calZ \rightarrow \R^d$ is a stochastic gradient oracle satisfying of $\E_{Z \sim \prob}[S(\theta, g; Z)] = \grad_\theta \ploss(\theta, g)$ for all $(\theta, g) \in \Theta \times \calG$.
In particular, when $\calG$ lies within a Banach space equipped with norm $\norm{\cdot}_\calG$, we wish to compare $\theta\pow{n}$ to 
\begin{align}\label{minimization problem}
   \theta_\star = \argmin_{\theta \in \Theta} \ploss(\theta, g_0),
\end{align} 
given conditions on the degree of misspecification $\norm{\hg - g_0}_{\calG}$ and (approximate) \emph{Neyman orthogonality} of the risk $L$ \citep{neyman1959probability}.

Intuitively, Neyman orthogonal classes of objectives are instances of~\eqref{eq:loss_class} whose curvature with respect to $\theta$ is insensitive to the choice of $g$ (see \Cref{sec:osl} for the formal description). When Neyman orthogonality is satisfied, the double machine learning framework is also known as \emph{orthogonal statistical learning (OSL)} \citep{zadik2018orthogonal,liu2022orthogonal,foster2023orthogonal}. In addition to the obvious computational considerations, we argue that the SGD perspective in this paper also sheds light on the methodological opportunities in DML/OSL. Indeed, while loss functions are typically specified by the chosen architecture, Neyman orthogonality is often achieved by specialized analytic calculations on the part of the user. Although this property is generally seen as a second-order property of the loss, it can also be viewed as a first-order property of the gradient oracle $S$. As we detail in \Cref{sec:optimization}, it may be easier and more aligned with the spirit of modern machine learning, to craft Neyman orthogonal gradient oracles instead of losses.

\myparagraph{Contributions}
We prove the first theoretical convergence guarantees for SGD under an unknown nuisance model.
We find that $\sgd\pow{n}$ converges linearly to a neighborhood of $\theta_\star$---the optimum in the well-specified case---with a radius that has a fourth-power (resp.~squared) dependence on $\Vert \hat g - g_0 \Vert_\G$ when Neyman orthogonality is (resp.~is not) satisfied. Our analysis can also apply to two-stream settings in which the nuisance parameter is learned online alongside the target. We further analyze a new algorithm, called orthogonalized SGD (\algoname), wherein the gradient oracle of a possibly non-orthogonal loss can be iteratively made orthogonal using an ``approximately orthogonalized'' gradient oracle, which is based on a separate estimation procedure. This algorithm enjoys a convergence guarantee that interpolates between the $\Vert \hat g - g_0 \Vert_\G^4$ (nuisance insensitive) and $\Vert \hat g - g_0 \Vert_\G^2$ (nuisance sensitive) regimes depending on the quality of the orthogonalizing operator. 

We provide an introduction to the OSL/DML setting in \Cref{sec:osl}. The SGD and OSGD algorithms are described and analyzed in \Cref{sec:optimization}. We discuss related work in \Cref{sec:discussion}; additional discussion can be found in \Cref{appx:discussion}. All proofs and numerical illustrations can be found in the Appendix.

\section{Orthogonal Statistical Learning}\label{sec:osl}

We first introduce various examples of risk functions in the form of~\eqref{eq:loss_class}, then formally introduce Neyman orthogonality and its implications. As is common in learning settings, the risk will be in the form of an expectation,
\begin{align*}
    \ploss(\theta, g) = \Ex{Z \sim \prob}{\loss(\theta, g; Z)},
\end{align*}
where $\loss: \Theta \times \calG \times \calZ \rightarrow \R$ is an instance-level loss function. Various assumptions used in the analysis in \Cref{sec:optimization} (e.g.~convexity) may be placed on either the loss $\loss$ or the risk $\ploss$. In each example, we provide the structure of the data point $Z$, the set $\calG$, and the loss $\loss$, and the true $g_0 \in \calG$ to fully specify the problem. Here, we interpret ``true'' to mean that $g_0$ is a parameter of the data-generating distribution (e.g.~a propensity score in causal inference), or that $g_0$ satisfies a cost-minimizing or utility-maximizing criterion (as in the profile likelihood or distributional robustness examples from \Cref{sec:intro}).

\begin{table}[ht]
    \centering

    \begin{adjustbox}{max width=1.0\linewidth}
    \renewcommand{\arraystretch}{1.15}
    \begin{tabular}{ccc}
    \toprule
        {\bf Example} & { $\ell(\theta, g; z)$} &{ $g_0$} \\
        \midrule
        PLM & $\frac{1}{2}(y - g_Y(w) - \ip{\theta, x - g_X(w)})^2$ & $(\Ex{\prob}{Y \mid W}, \Ex{\prob}{X \mid W})$\\
        CATE & $\frac{1}{2}\p{g\spow{1}(x) - g\spow{0}(x) + \frac{(w-g^{\text{prop}}(x))(y - g\spow{w}(x))}{g^{\text{prop}}(x)(1-g^{\text{prop}}(x))} - \ip{\theta, x}}^2$ & $(\Ex{\prob}{Y \mid W=1, X}, \Ex{\prob}{X \mid W=0, X}, \Ex{\prob}{W \mid X})$\\
        CRR & $- \sbr{\mu_g\spow{1}(z) \log p_\theta(x) + \mu_g\spow{0}(z) \log(1 - p_\theta(x))}$ & $(\Ex{\prob}{Y \mid W=1, X}, \Ex{\prob}{X \mid W=0, X}, \Ex{\prob}{W \mid X})$
        \\
        \bottomrule
    \end{tabular}
    \end{adjustbox}
    \vspace{6pt}
    \caption{Examples of Neyman Orthogonal Losses.}
    \label{tab:eg main}
\end{table}

\begin{examplebox}
    \textbf{Example 1} (Partially Linear Model)\textbf{.} Let $Z = (X, Y, W) \sim \prob$, where $X$ is an $\R^d$-valued input, $Y$ is a real-valued outcome, and $W$ is a $\calW$-valued control or confounder. The space $\calG$ is a nonparametric class containing functions of the form
    \begin{align*}
        g = (g_Y, g_X): \calW \rightarrow \R \times \R^d. 
    \end{align*}
    Following the construction of \citet{robinson1988root}, this $g$ is supplied to the loss
    \begin{align*}
        \loss_{\text{PLM}}(\theta, g; z) = \frac{1}{2}(y - g_Y(w) - \ip{\theta, x - g_X(w)})^2.
    \end{align*}
    To ensure $\theta_\star$ can be interpreted via the projection of $\E_{\prob}[Y|X,W]$ onto partially linear additive functions, the true nuisance is given by $g_0 = (g_{0, X}, g_{0, Y})$, where
    \begin{align*}
        g_{0, Y}(w) := \Ex{\prob}{Y \mid W = w} \text{ and } g_{0, X}(w) := \Ex{\prob}{X \mid W = w}.
    \end{align*}
\end{examplebox}

 The next example concerns a quantity widely studied in causal inference \citep{kennedy2023towards}.
\begin{examplebox}
    \textbf{Example 2} (Conditional Average Treatment Effect)\textbf{.}  We observe $Z = (X, Y, W)\sim \prob$, where $W$ is a binary treatment assignment. The functions in $\calG$ are of the form
    \begin{align*}
        g = (g\spow{0}, g\spow{1}, g^{\text{prop}}): \R^d \rightarrow \R \times \R \times (0, 1),
    \end{align*}
    and are evaluated (see \citet[Thm. 1]{van2014targeted}) at the loss
    \begin{align*}
        \loss_{\text{CATE}}(\theta, g; z) = \frac{1}{2}\p{g\spow{1}(x) - g\spow{0}(x) + \frac{w-g^{\text{prop}}(x)}{g^{\text{prop}}(x)(1-g^{\text{prop}}(x))}(y - g\spow{w}(x)) - \ip{\theta, x}}^2.
    \end{align*}
    For $g_0 = (g\spow{0}_0, g\spow{1}_0, g^{\text{prop}}_0)$ nuisance functions $g\spow{0}_0$ and $g\spow{1}_0$ represent the outcome regressions
    \begin{align*}
        g\spow{0}_0(x) := \E_{\prob}[Y \mid W=1, X = x] \text{ and } g\spow{1}_0(x) := \E_{\prob}[Y \mid W=0, X = x],
    \end{align*}
    whereas $g^{\text{prop}}_0(x) := \Ex{\prob}{W \mid X = x}$ denotes the propensity score. The minimizer $\theta_\star$ indexes a projection of the conditional average treatment effect $g\spow{1}_0-g\spow{0}_0$ onto linear functions.
\end{examplebox}

Finally, we maintain the data structure from the previous example, but consider a loss corresponding to a different target parameter according to \cite{van2024combining}.
\begin{examplebox}
    \textbf{Example 3} (Conditional Relative Risk)\textbf{.}
    We retain all components of the previous example, changing only the loss and assuming that the outcome $Y$ is binary/non-negative. First, consider the ``label'' function
    \begin{align*}
        \mu_g\spow{s}(z) = g\spow{s}(x) + \frac{\ind(w = s)}{sg^{\text{prop}}(x) + (1-s) (1-g^{\text{prop}}(x))}(y - g\spow{s}(x)),
    \end{align*}
    where $\ind(\cdot)$ denotes the indicator function, and the logit-linear predictor $ p_\theta(x) = e^{\ip{\theta, x}}/(1 + e^{\ip{\theta, x}})$.
    To obtain a linear approximation of the log-relative risk $\log (g\spow{1}_0/g\spow{0}_0)$, we employ the cross entropy-type loss function
    \begin{align*}
        \loss_{\text{CRR}}(\theta,  g; z) = - \sbr{\mu_g\spow{1}(z) \log p_\theta(x) + \mu_g\spow{0}(z) \log(1 - p_\theta(x))}.
    \end{align*}
\end{examplebox}

While the choices of the loss function in Examples 2 and 3 might look opaque to readers outside of causal inference and statistics, they are carefully designed to be \emph{Neyman orthogonal}. To motivate its definition, notice that, invariably, $g_0$ is unknown to the user. In DML, the user may produce or access some $\hg \in \calG$, which is an estimate of $g_0$ based on independent training data other than the stream $(Z_i)_{i=1}^n$ used to produce $\theta\pow{n}$. It is of clear interest how stochastic optimization algorithms (and their resulting minimizers) behave in light of the misspecification of $g_0$, and what precise theoretical conditions govern this behavior. Moreover, as we demonstrate in \Cref{sec:optimization}, these same conditions can be used to analyze procedures for which the user may access additional data to progressively improve the estimate $\hg$ and learn $\theta_\star$ simultaneously. We now formally introduce Neyman orthogonality, and by extension, the orthogonal statistical learning (OSL) variant of DML.

\myparagraph{Neyman Orthogonality}
For a definition that accounts for a possibly infinite-dimensional function class $\calG$, we introduce the \emph{directional derivative}, or equivalently, the \emph{derivative operator}.
\begin{defibox}
    \begin{definition}[Derivative Operator]\label{def derivative}
        For a functional $F$ mapping from a vector space $\calF$ to $\R$, we define the \textit{(directional) derivative operator} $\D$ as $\D F(f)[h] := \frac{\d}{\d t} F(f + th) \mid_{t=0}$ for any $f, h \in \calF$. For a vector-valued $F: \calF \mapsto \R^d$, this derivative operator can be generalized by taking derivatives coordinate-wise. We define the second-order derivative as $\D^2 F(f)[h, h'] := \D(\D F(f)[h])[h']$ for $h, h' \in \calF$ and higher-order derivatives similarly. For functionals of multiple variables $F: \calF \times \calG \rightarrow \R$, we use the subscript notation $\D_f F(f, g)[h]$ to indicate the directional derivative of $f \mapsto F(f, g)$ with $g \in \calG$ fixed.
    \end{definition}
\end{defibox}
We denote by $\score(\theta, g;z) = \nabla_\theta \loss(\theta, g;z)$ the gradient of the loss function \wrt the target parameter $\theta \in \Theta$. 
Borrowing terminology from statistics, we call this the \emph{score}, whether $\loss$ is based on a likelihood or not.\footnote{This notion of (Fisher) score differs from the ``score'' used in score-based generative modeling \citep{song2021scorebased}. If $\loss$ is based on a log-likelihood, then $S_\theta$ is the gradient \wrt the parameter $\theta \in \Theta$, \emph{not} the input $z \in \calZ$.} This constitutes one particular example of a stochastic gradient oracle $S$ used in~\eqref{eq:sgd}. Overloading notation, the \emph{population gradient oracle} is defined as $\score(\theta, g) = \E_{Z\sim\prob}[\nabla_\theta \loss(\theta, g;Z)]$. 

\begin{defibox}
\begin{definition}[Neyman Orthogonality]\label{def neyman orthogonal}
    The population gradient oracle $\score$ is \emph{Neyman orthogonal} at $(\theta_\star, g_0)$ over $\G' \sse \calG$ if
    \begin{align}\label{eq neyman orthogonal 1}
          \D_g \score(\theta_\star, g_0)[g - g_0] = 0  \quad \text{ for all } g \in \G'.
    \end{align}
    For $\Theta' \sse \Theta$, the population loss $\ploss$ is \emph{Neyman orthogonal} at $(\theta_\star, g_0)$ over $\Theta' \times \G'$ if 
    \begin{align} \label{eq neyman orthogonal 2}
          \D_g \D_\theta \ploss(\theta_\star, g_0)[\theta - \theta_\star ,g - g_0] = 0  \quad \text{ for all } (\theta, g) \in \Theta' \times \G'.
    \end{align}
\end{definition}
\end{defibox}
In Definition \ref{def neyman orthogonal}, we allow $\Theta' \times \G' \subseteq \Theta \times \G$ to be a proper subset, which not only provides a weaker condition, but also accounts for localization-style arguments. Moreover, since $\D_\theta \ploss(\theta_\star, g_0)[\theta - \theta_\star] = \ip{\score(\theta_\star, g_0), \theta - \theta_\star}$, if the population risk $\ploss$ satisfies \eqref{eq neyman orthogonal 1}, then \eqref{eq neyman orthogonal 2} holds for any target parameter class $\Theta' \subseteq \R^d$. 
As mentioned above, the risk functions in Examples 1–3 are all Neyman orthogonal at their respective value of $(\theta_\star, g_0)$. In the next section, we will discuss a procedure to make a non-orthogonal gradient oracle ``approximately'' orthogonal. We illustrate the intended outcome intuitively in \Cref{fig:osl} by comparing a generic loss and its orthogonalized counterpart.

\begin{figure}
    \centering
    \includegraphics[width=\linewidth, trim=50 10 80 10, clip]{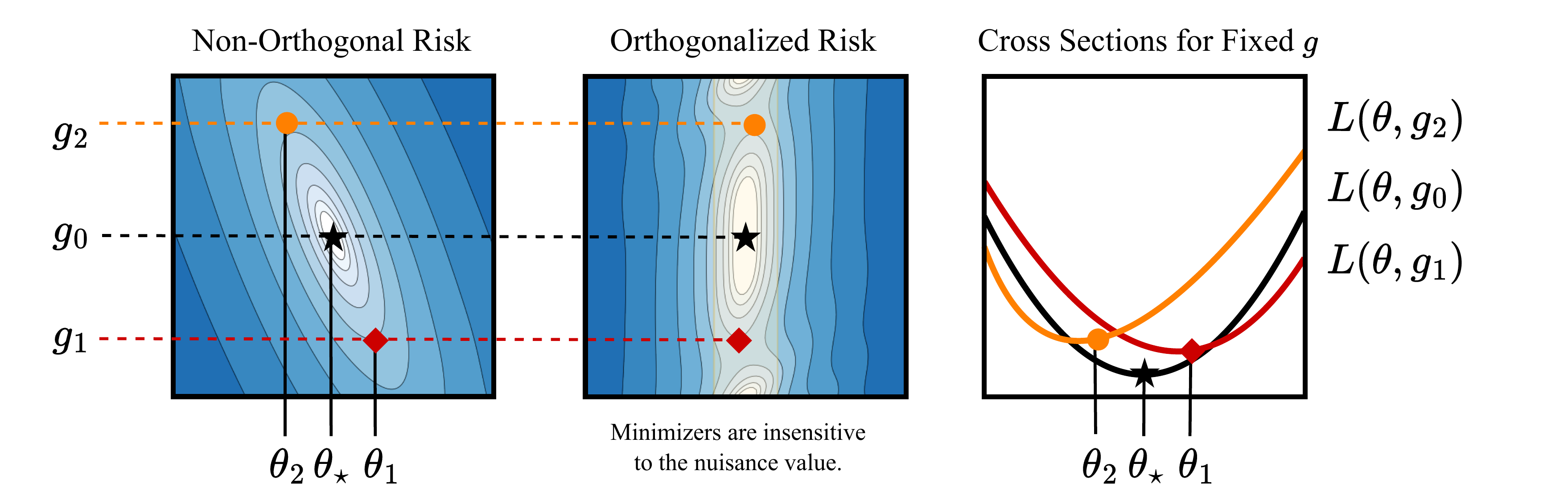}
    \caption{{\bf Illustration of Neyman Orthogonalization.} The first two panels are contour plots of the risk function $L(\theta, g)$, where $\theta$ varies on the $x$-axis and $g$ varies on the $y$-axis. For the orthogonalized risk (center) the contours are approximately axis-aligned. The right plot shows the cross sections of the non-orthogonal risk when fixing $g = g_0, g_1, g_2$. Due to non-orthogonality, the minimizers $\theta_1$ and $\theta_2$ shown in the first and third plots may drift significantly from $\theta_\star$. In contrast, the minimizers in the center plot are less sensitive to the choice of $g$.}
    \label{fig:osl}
\end{figure}

\section{Stochastic Gradient Optimization}\label{sec:optimization}
In this section, we propose two stochastic gradient algorithms, which rely on different choices of the stochastic gradient oracle $S$ used in~\eqref{eq:sgd}. The first is the familiar stochastic gradient oracle that provides a sample estimate of the gradient $\grad \ploss(\cdot, \hg)$ for a fixed estimate $\hg$. The second employs an \emph{approximately orthogonalized gradient oracle}, or \algoname oracle, to achieve a notion of approximate Neyman orthogonality (in a manner we make precise in this section). We analyze the first algorithm under both non-orthogonal and orthogonal settings, achieving an illustrative breakdown of ``nuisance sensitive'' and ``nuisance insensitive'' regimes regarding the theoretical convergence guarantee. For the \algoname algorithm, we prove a convergence guarantee that interpolates between the two regimes, depending on the accuracy of the oracle. 

\myparagraph{Notation and Assumptions}
For readers’ convenience, a table of all the notation we introduce throughout the paper is collected in \Cref{sec:a:notation}. We maintain the prototypical bias/variance conditions on the stochastic gradient oracle $S$, that is, an \blue{unbiasedness condition} and a \red{second-moment growth condition} (see \Cref{assumption of population risk}(d)).
To dispel confusion, note that by ``unbiased'', we mean specifically that $\blue{\Ex{Z \sim \prob}{S(\theta, g; Z)} = \grad_\theta \ploss(\theta, g)}$ for all $(\theta, g)$, as opposed to the ``bias'' of replacing $\hg$ with $g_0$, a terminology sometimes used in DML/OSL.
Our analysis will rely partly on the initial distance $r$ of the nuisance estimate $\hat g$ to $g_0$ in $\G$, which defines the ball
\begin{align}
    \Gr = \br{g \in \G: \gnorm{g - g_0} \leq r}.
\end{align}
Various assumptions on the risk will be required to hold only locally, that is, within $\Gr$ as opposed to the entire linear space $\G$. Thus, the assumptions become weaker as the estimate $\hg$ improves. 
\begin{assumption}\label{assumption of population risk}
The following conditions hold:
\begin{enumerate}[label=(\alph*)]
    \item \emph{Differentiability:} For any $(z, g) \in \calZ \times \G$, $\theta \mapsto \loss(\theta, g;z)$ is twice continuously differentiable. For any $(\theta, g), (\bar \theta, \bar g) \in \Theta \times \G$, (i) $\D_g^2\D_\theta \loss(\bar \theta, \bar g;z)[ \theta - \theta_\star,  g - g_0,  g - g_0]$ exists and is continuous, (ii) $\D_\theta L(\bar \theta, \bar g)[\theta - \theta_\star]$ and $\D_g L(\bar \theta, \bar g)[g - g_0]$ exist, and (iii) $\Ex{\prob}{\D_\theta \ell(\bar \theta, \bar g; Z)[\theta - \theta_\star]} = \D_\theta L(\bar \theta, \bar g)[\theta - \theta_\star]$, $\Ex{\prob}{\D_g \ell(\bar \theta, \bar g; Z)[g - g_0]} = \D_g L(\bar \theta, \bar g)[g - g_0]$.
    \item \emph{First-order optimality:}  The pair $(\theta_\star, g_0)$ satisfies $\score(\theta_\star, g_0) = 0$.
    \item \emph{Smoothness and strong convexity:} There exist constants $\smooth \geq \sconv>0$ such that for all $g \in \G_{r}\left(g_0\right)$, the population risk $\ploss(\cdot, g)$ is $\smooth$-smooth and $\sconv$-strongly convex for $\theta \in \Theta$.
    \item \emph{Second-moment growth:} There exist constants $\red{\Kconst_1}, \red{\kappaconst_1} \geq 0$ such that
    \begin{align*}
        \E_{Z \sim \prob}[\Vert\score(\theta, g; Z) - \score(\theta, g)\Vert_2^2] \leq \red{\Kconst_1} + \red{\kappaconst_1}\norm{\theta - \theta_\star}_2^2 \quad \forall\theta \in \Theta, g \in \G_{r}\left(g_0\right).
    \end{align*}
    \item \emph{Second-order smoothness:} There exists a constant $\secsmooth_1 \geq 0$ such that
    $$
    \left|\mathrm{D}_g \mathrm{D}_\theta L\left(\theta_{\star}, \bar{g}\right)\left[\theta-\theta_{\star}, g-g_0\right]\right| \leq \secsmooth_1\left\Vert\theta-\theta_{\star}\right\Vert_{2}\left\Vert g-g_0\right\Vert_{\G} \quad \forall \theta \in \Theta, g, \bar{g} \in \G_{r}\left(g_0\right).
    $$
\end{enumerate}
\end{assumption}
\Cref{assumption of population risk} does not require Neyman orthogonality at $(\theta_\star, g_0)$. Instead, \Cref{assumption of population risk}(a) is a standard differentiability condition.  \Cref{assumption of population risk}(b) and (c) implies that $\theta_\star$ is a unique global minimizer. 
\Cref{assumption of population risk}(d) generalizes the uniformly bounded second moment condition in stochastic optimization (e.g.~\citet{cutler2023stochastic}) by adding a quadratic form in $\theta$, which allows us to consider an unbounded feasible set $\Theta$ and more loss classes. Finally, \Cref{assumption of population risk}(d) and (e) can be satisfied when the Hessian of the population risk is a bounded operator. Usually, $\Kconst_1$, $\kappaconst_1$, and  $\secsmooth_1$ would depend on the initial nuisance estimation distance $r$. We provide in~\Cref{sec:a:examples}~estimates of the constants in~\Cref{assumption of population risk} for each motivating example. We proceed to the main results regarding the convergence of SGD and \algoname.

\myparagraph{Stochastic Gradient Algorithm}
Here, we use the standard single-sample stochastic gradient estimate $S = S_\theta$ in~\eqref{eq:sgd}.
This leads to the update
\begin{align}\label{baseline SGD procedure}
    \sgd\pow{n} = \sgd\pow{n-1} - \eta S_\theta(\sgd\pow{n-1}, \hat g; Z_n), \quad \sgd\pow{0} \in \Theta.
\end{align}
While the SGD procedure can be easily extended to using a batch of unbiased gradient estimates, we keep our single-observation construction to highlight the most important aspects of the analysis.
In order to achieve quantitative guarantees in the Neyman orthogonal setting, which essentially removes certain second-order terms that include $\theta$ and $g$, we will consider the following higher-order condition in some cases.
\begin{assumption}[Higher-Order Smoothness] \label{assumption orthogonality main}
     The risk $L$ satisfies \Cref{def neyman orthogonal} at $(\theta_\star, g_0)$, and there exists some constant $ \highsmooth_1 > 0$ such that
\begin{align*}
    \left|\mathrm{D}_g^2 \mathrm{D}_\theta L\left(\theta_{\star}, \bar{g}\right)\left[\theta-\theta_{\star}, g-g_0, g-g_0\right]\right| \leq \highsmooth_1\left\Vert \theta-\theta_{\star}\right\Vert_{2}\left\Vert g-g_0\right\Vert_{\G}^2 \quad \forall \theta \in \Theta, g, \bar{g} \in \G_{r}\left(g_0\right).
\end{align*}
\end{assumption}
When satisfied, \Cref{assumption orthogonality main} results in the nuisance insensitivity alluded to at the beginning of this section.
Notice that Neyman orthogonality is not necessary to construct a stochastic optimizer, and it is still possible to obtain a nuisance sensitive rate under only \Cref{assumption of population risk}. We demonstrate this in \Cref{theorem convergence rate baseline SGD}.
\begin{theorem}\label{theorem convergence rate baseline SGD}
    Define $\mc{D}_n = (Z_1, \ldots, Z_n)$, sampled from the product measure $\prob^n$. Suppose that \Cref{assumption of population risk} holds, $\hat g \in \Gr$ is estimated independently of $\calD_n$, and $\sgd\pow{0}, \ldots, \sgd\pow{n} \in \Theta$ almost surely.  The iterates of~\eqref{baseline SGD procedure} satisfy:
    \begin{enumerate}
        \item \red{\emph{Nuisance sensitive:}} If $\eta \leq \sconv/2(M\sconv + \kappaconst_1)$, then
        \begin{align*}
             \E_{\mc{D}_{n} \sim \prob^n}[\Vert\sgd\pow{n} - \theta_\star\Vert_2^2]  \leq& \p{1-\frac{\sconv \eta}{2}}^n\Vert\theta\pow{0} - \theta_\star\Vert_2^2+ \red{\frac{2\secsmooth_1^2}{\sconv^2} \Vert\hat g - g_0\Vert_\G^2} + \frac{4\Kconst_1 \eta}{\sconv}.
        \end{align*}
        \item \blue{\emph{Nuisance insensitive:}} If \Cref{assumption orthogonality main} also holds, then, for $\eta \leq \sconv/2(M\sconv + \kappaconst_1)$,
        \begin{align*}
             \E_{\mc{D}_{n} \sim\prob^n}[\Vert\sgd\pow{n} - \theta_\star\Vert_2^2]   \leq& \p{1-\frac{\sconv\eta}{2}}^n\Vert\theta\pow{0} - \theta_\star\Vert_2^2 + \blue{\frac{\highsmooth_1^2}{2\sconv^2} \Vert\hat g - g_0\Vert_\G^4} + \frac{4\Kconst_1 \eta}{\sconv}.
        \end{align*}
    \end{enumerate}
\end{theorem}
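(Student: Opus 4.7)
The plan is to derive a one-step contraction of the form $\E\Vert \sgd\pow{n} - \theta_\star\Vert_2^2 \leq (1 - \sconv\eta/2)\E\Vert \sgd\pow{n-1} - \theta_\star\Vert_2^2 + \eta A(\hat g) + C\eta^2 \Kconst_1$ and then sum the resulting geometric series. First, I would square the update~\eqref{baseline SGD procedure} and take conditional expectation given $\F_{n-1} := \sigma(\hat g, \calD_{n-1})$; since $\hat g$ is independent of $\calD_n$ and the oracle is conditionally unbiased, the cross term reduces to $-2\eta \ip{\grad_\theta \ploss(\sgd\pow{n-1}, \hat g),\, \sgd\pow{n-1} - \theta_\star}$, while the quadratic term is controlled by \Cref{assumption of population risk}(d) combined with $\smooth$-smoothness of $\ploss(\cdot, \hat g)$.

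The crux is bounding $\ip{\grad_\theta \ploss(\sgd\pow{n-1}, \hat g),\, \sgd\pow{n-1} - \theta_\star}$ from below. Using $\grad_\theta \ploss(\theta_\star, g_0) = 0$ from \Cref{assumption of population risk}(b), I would split this as
\begin{align*}
    \ip{\grad_\theta \ploss(\sgd\pow{n-1}, \hat g) - \grad_\theta \ploss(\theta_\star, \hat g),\, \sgd\pow{n-1} - \theta_\star} + \ip{\grad_\theta \ploss(\theta_\star, \hat g) - \grad_\theta \ploss(\theta_\star, g_0),\, \sgd\pow{n-1} - \theta_\star}.
\end{align*}
The first summand is at least $\sconv \Vert \sgd\pow{n-1} - \theta_\star\Vert_2^2$ by $\sconv$-strong convexity of $\ploss(\cdot, \hat g)$, which applies since $\hat g \in \Gr$ per \Cref{assumption of population risk}(c). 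For the second (nuisance-bias) summand I would use the fundamental theorem of calculus along the segment $g_t := g_0 + t(\hat g - g_0) \in \Gr$ (the ball is convex in $\G$), writing it as $\int_0^1 \D_g \D_\theta \ploss(\theta_\star, g_t)[\sgd\pow{n-1} - \theta_\star,\, \hat g - g_0]\,\d t$. Then \Cref{assumption of population risk}(e) yields $|\cdot| \leq \secsmooth_1 \Vert \sgd\pow{n-1} - \theta_\star\Vert_2 \Vert \hat g - g_0\Vert_\G$, and Young's inequality absorbs $(\sconv/2)\Vert \sgd\pow{n-1} - \theta_\star\Vert_2^2$ into the strong-convexity slack, leaving a per-step bias of order $\secsmooth_1^2 \Vert \hat g - g_0\Vert_\G^2/\sconv$ -- this is the nuisance sensitive case.

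Under \Cref{assumption orthogonality main}, Neyman orthogonality forces the integrand above to vanish at $t = 0$, so I would expand it by a further FTC in $g$ to extract the third-order derivative $\D_g^2 \D_\theta \ploss(\theta_\star, g_s)[\sgd\pow{n-1} - \theta_\star,\, \hat g - g_0,\, \hat g - g_0]$ for $s \in [0, t]$, apply the $\highsmooth_1$ bound from \Cref{assumption orthogonality main}, and integrate in $t$ to obtain $|\cdot| \leq (\highsmooth_1/2)\Vert \sgd\pow{n-1} - \theta_\star\Vert_2 \Vert \hat g - g_0\Vert_\G^2$; a further Young's step then yields the $\highsmooth_1^2 \Vert \hat g - g_0\Vert_\G^4/\sconv$ scaling -- the nuisance insensitive case. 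Combining these bounds with the stepsize condition $\eta \leq \min\{2/\sconv,\, \sconv/(2(\smooth\sconv + \kappaconst_1))\}$, which is chosen so that the residual variance contributions of the form $\eta^2(\kappaconst_1 + \smooth^2)\Vert \sgd\pow{n-1} - \theta_\star\Vert_2^2$ are absorbed into the contraction coefficient, gives the desired recursion. Iterating and using $\sum_{k \geq 0}(1 - \sconv\eta/2)^k \leq 2/(\sconv\eta)$ converts the per-step bias into the stated time-independent terms and the per-step variance into $4\Kconst_1\eta/\sconv$. The main technical subtlety is anchoring both Taylor expansions in $g$ at $\theta_\star$ (rather than at the iterate $\sgd\pow{n-1}$), which is essential because \Cref{assumption of population risk}(e) and \Cref{assumption orthogonality main} are stated with the first argument of $\ploss$ fixed to $\theta_\star$.
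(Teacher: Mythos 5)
Your overall strategy matches the paper's in its essential ideas: anchor the Taylor expansion in $g$ at $\theta_\star$ (exactly the subtlety you flag at the end, and the paper does the same, expanding $\D_\theta \ploss(\theta_\star,\hat g)[\cdot]$ around $g_0$), bound the bias by $\secsmooth_1\Vert\delta\Vert_2\gnorm{\hat g-g_0}$ or, under orthogonality, by $(\highsmooth_1/2)\Vert\delta\Vert_2\gnorm{\hat g-g_0}^2$, absorb it via Young's inequality against half the strong-convexity margin, and sum the geometric series; your per-step bias constants do accumulate to the stated $2\secsmooth_1^2/\sconv^2$ and $\highsmooth_1^2/(2\sconv^2)$. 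Where you diverge is the one-step analysis. The paper routes through function values: it first proves a proximal-style one-step improvement bounding $2\eta(\ploss(\sgd\pow{n},\hat g)-\ploss(\theta_\star,\hat g))$ by $(1-\sconv\eta)\Vert\delta\pow{n-1}\Vert_2^2-\Vert\delta\pow{n}\Vert_2^2$ plus terms involving only the centered noise $v\pow{n}=\score(\sgd\pow{n-1},\hat g;Z_n)-\score(\sgd\pow{n-1},\hat g)$, then lower-bounds the same excess risk by strong convexity plus the anchored directional-derivative bound. The payoff is that the $\eta^2$ term carries only $\E\Vert v\pow{n}\Vert_2^2\leq \Kconst_1+\kappaconst_1\Vert\delta\pow{n-1}\Vert_2^2$, which is what makes the stepsize condition $\eta\leq\sconv/2(\smooth\sconv+\kappaconst_1)$ (i.e., $\eta\lesssim 1/\smooth$ when $\kappaconst_1=0$) suffice.

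This is where your route has a genuine, though repairable, quantitative gap. Expanding $\Vert\delta\pow{n-1}-\eta S\Vert_2^2$ directly leaves you with $\eta^2\E[\Vert \score(\sgd\pow{n-1},\hat g;Z_n)\Vert_2^2\mid\mathcal{F}_{n-1}]$, which exceeds the noise second moment by the squared mean $\Vert\score(\sgd\pow{n-1},\hat g)\Vert_2^2$. Controlling that squared mean by plain $\smooth$-smoothness, as you propose, yields a residual $2\eta^2\smooth^2\Vert\delta\pow{n-1}\Vert_2^2$ (your own ``$\eta^2(\kappaconst_1+\smooth^2)$''), and absorbing it into the contraction requires $\eta\lesssim\sconv/\smooth^2$ --- smaller than the theorem's stated range by a condition-number factor, so your argument as written does not cover all $\eta\leq\sconv/2(\smooth\sconv+\kappaconst_1)$. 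The fix is either to adopt the paper's function-value route, or to replace the smoothness bound by co-coercivity of the gradient of the convex $\smooth$-smooth map $\ploss(\cdot,\hat g)$ anchored at $\theta_\star$, namely $\ip{\score(\theta,\hat g)-\score(\theta_\star,\hat g),\theta-\theta_\star}\geq\smooth^{-1}\Vert\score(\theta,\hat g)-\score(\theta_\star,\hat g)\Vert_2^2$, so that the $\eta^2$ contribution of the deterministic gradient is cancelled against the cross term for $\eta\lesssim 1/\smooth$. With that single repair, the rest of your argument (the FTC along $g_t=g_0+t(\hat g-g_0)\in\Gr$, the vanishing of the first-order integrand at $t=0$ under \Cref{assumption orthogonality main}, and the two Young steps) goes through and recovers the theorem.
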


Note that the assumption that the iterates remain in $\Theta$ is satisfied in common cases. It is satisfied trivially for the first two examples in \Cref{sec:osl} because $\Theta = \R^d$. Another case is when the loss decomposes into the sum of a $G$-Lipschitz continuous component and the $\ell_2^2$-norm regularizer, i.e. $\loss(\theta, g; z) = h(\theta, g; z) + \frac{\mu}{2}\norm{\theta}_2^2$. Then, the iterates and the optimum remain in $\br{\theta: \norm{\theta}_2 \leq G/\mu}$ (see, e.g., \citet[Appx. C]{mehta2023stochastic}), so \Cref{def neyman orthogonal} can be restricted to this compact set.

\Cref{theorem convergence rate baseline SGD} states that SGD converges linearly to a ball around $\theta_\star$ with a radius that depends on the bias (due to the replacement of $g_0$ with $\hg$) and the variance due to gradient noise. Moreover, the variance component decays proportionally to the learning rate $\eta$. Under \Cref{assumption orthogonality main}, the bias component can have a significantly more favorable scaling with the error in the nuisance estimate $\Vert\hat g - g_0\Vert_\G$---specifically, $\Vert\hat g - g_0\Vert_\G^4$ instead of $\Vert\hat g - g_0\Vert_\G^2$. A similar breakdown into two regimes of the bias scaling occurs in the works of both \citet{foster2023orthogonal} and \citet{liu2022orthogonal} under \Cref{assumption orthogonality main} (called ``slow rate'' and ``fast rate'' there). Importantly, their bounds are based on an exact, offline empirical risk minimization procedure for a fixed training set, i.e.~they provide excess risk bounds on the quantity $L(\hat{\theta}_n, g_0) - L(\theta_\star, g_0)$, where
\begin{align*}
    \hat{\theta}_n = \argmin_{\theta \in \Theta} \frac{1}{n}\sum_{i=1}^n \ell(\theta, \hg; Z_i).
\end{align*}
In contrast, \Cref{theorem convergence rate baseline SGD} accounts for both the expected distance to optimum and the interplay between bias incurred by $\hg$ and the progress achieved at each step. In particular, even when using a constant learning rate, the bias does not accrue on each iterate and is in fact constant in $n$. When $\hat{\theta}_n$ is designed to be doubly robust, using SGD can achieve \emph{double robustness}; see \Cref{appx discuss sec 55} for an example.

\myparagraph{Orthogonalized Stochastic Gradient Algorithm}
Given the marked improvement in the rate of decay of the bias term when an orthogonal loss is used, it is clearly beneficial to do so when possible. %
We now describe how we can induce orthogonality by adjusting the stochastic gradient oracle using the solution of an auxiliary problem. %

The construction of Neyman orthogonal losses has historically been motivated in semiparametric theory and statistical learning as a means to build efficient -- minimum asymptotic variance -- \emph{full batch} statistical estimators~\citep{tsiatis2006semiparametric, van2000asymptotic, foster2023orthogonal, chernozhukov2018plug}. The approach we follow is inspired by the construction reviewed in~\citet[Section 2.2]{chernozhukov2018double}; see also~\citet{luedtke2024simplifying}. We also give an intuitive explanation based on least-squares estimation, instead of the usual differential/information geometry one.

While our construction holds in general spaces, let us first consider the illustrative case when $\G = \R^k$. 
At the true parameters $(\theta_\star, g_0)$, consider the problem of finding the best predictor of the $\R^d$-valued target variable $S_\theta(\theta_\star, g_0; Z) = \grad_\theta \ell(\theta_\star, g_0; Z)$ given the $\R^k$-valued predictor $\grad_g \ell(\theta_\star, g_0; Z)$ variable in the space $\calL(\G, \Theta)$ containing all continuous and linear operators from $\G$ to $\Theta$:
\begin{align}
     \pdir_0 = \argmin_{\pdir \in \calL(\G, \Theta)} \Ex{\prob}{\norm{S_\theta(\theta_\star, g_0; Z) - \pdir \nabla_g \loss(\theta_\star, g_0; Z)}_2^2}. \label{eq:regression}
\end{align}
In the special case where $\loss(\theta, g; z) = -\log p_{\theta,g}(z)$ for a density $p_{\theta, g}$ on $\calZ$ that governs the random variable $Z$, the projection direction solving~\eqref{eq:regression} can be shown to satisfy $\pdir_0 = H_{\theta g}^\top H_{g g}^{-1}$, where $H_{\theta g} = \nabla_g\score(\theta_\star, g_0) \in \R^{k \times d}$ is the transposed Jacobian and $H_{gg} = \nabla_g^2 \ploss(\theta_\star, g_0) \in \R^{k \times k}$ is the Hessian.
The prediction $\pdir_0 \nabla_g \loss(\theta_\star, g_0; Z)$ accounts for the covariance between $\grad_\theta \ell(\theta_\star, g_0; Z)$ (the gradient \wrt $\theta$) and $\grad_g \ell(\theta_\star, g_0; Z)$ (the gradient \wrt $g$). It stands to reason that as $\theta \rightarrow \theta_\star$, the random vector
\begin{align}
    S(\theta, g_0; Z) &:= S_\theta(\theta, g_0; Z) - \pdir_0 \nabla_g \loss(\theta, g_0; Z) =S_\theta(\theta, g_0; Z) -  H_{\theta g}^\top H_{g g}^{-1} \nabla_g \loss(\theta, g_0; Z) \label{eq:no_score_finite}
\end{align}
would be less sensitive to perturbations of $g_0$, as the component of $S_\theta(\theta, g_0; Z)$ that is predictable through changes in $g_0$ is subtracted out. Furthermore, if we are aware that the expectation of $S$ is made zero at $\theta_\star$, then a stochastic gradient scheme based on~\eqref{eq:no_score_finite} could conceivably achieve a nuisance insensitive rate guarantee in lieu of \Cref{theorem convergence rate baseline SGD}. 
From a variance reduction viewpoint, the correction term in \eqref{eq:no_score_finite} subtracts the regression of the $\theta$ gradient of the loss on the $g$ “gradient” of the loss. By the law of total variance, the variance of the gradient reduces and improves the trajectory of stochastic optimization; see \Cref{appx discuss sec 4} for more details. This variational description \eqref{eq:no_score_finite} hints at how such an operator can be computed algorithmically, instead of the historical approach of deriving the operator via calculation by hand on case by case basis.

Supported by this illustration, we define a generalization that will provide a modified stochastic gradient oracle to use for optimization purposes. Without assuming that $\loss$ is a negative log-likelihood, we generalize the formulas for $\nabla_g \loss(\theta, g; z) \in \R^k$, $H_{\theta g} \in \R^{k \times d}$ and $H_{g g} \in \R^{k \times k}$ for when $\G \equiv (\G, \ip{\cdot, \cdot}_\G)$ is an infinite-dimensional Hilbert space. Under regularity conditions on the directional derivatives of $\ploss$, we have that $\nabla_g \loss(\theta, g; z) \in \G$ for all $z \in \calZ$, $H_{\theta g} = (H_{\theta g}\spow{1}, \ldots, H_{\theta g}\spow{d}) \in \G^d$, and $H_{gg}: \G \rightarrow \G$ is a bounded and self-adjoint operator. The formal details of their construction are contained in \Cref{sec:orthogonalization}. Just as in~\eqref{eq:no_score_finite}, we may consider the operator $\pdir_0: \G \rightarrow \R^d$, defined element-wise by $[\pdir_0 g]_j = \ip{H_{\theta g}\spow{j},H_{gg}^{-1} g}_\calG$, where the invertibility of $H_{gg}$ is satisfied by our assumptions preceding \Cref{theorem debiased SGD convergence rate}. As shown in \eqref{eq:regression}, the orthogonalizing $\Gamma_0$ is defined by both the true nuisance $g_0$ and the target $\theta_\star$, where $g_0$ can usually be learned as some conditional expectation and $\theta_\star$ can be learned by our proposed methods.
We then construct the central object of the upcoming \Cref{theorem debiased SGD convergence rate}: the \emph{Neyman orthogonalized (NO) gradient oracle} 
\begin{align}\label{no score of infinite dim}
    \noscore(\theta, g; z) = \score(\theta, g; z) - \pdir_0 \nabla_g \loss(\theta, g; z).
\end{align}

\begin{lemma}\label{lem NO gradient oracle main text}
    Suppose that \Cref{assumption of population risk}(a) holds and $\D_g^2 \ploss(\theta_\star, g_0)[\cdot,\cdot]: \G \times \G \mapsto \R$ is a bounded and symmetric bilinear form. Then the NO gradient oracle $\noscore(\theta, g;z)$ is Neyman orthogonal at $(\theta_0, g_0)$.
\end{lemma}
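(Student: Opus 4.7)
The plan is to verify the orthogonality condition \eqref{eq neyman orthogonal 1} directly by computing $\D_g \noscore(\theta_\star, g_0)[h]$ for an arbitrary direction $h \in \G$, and showing that the construction of $\pdir_0$ in terms of $H_{\theta g}$ and $H_{gg}^{-1}$ is precisely what is needed to annihilate the derivative. The proof is essentially a one-line algebraic cancellation once the derivative operators are identified with the appropriate Hessian blocks.

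First, I would pass from the per-sample oracle to the population oracle by taking expectations, using \Cref{assumption of population risk}(a) to exchange expectation and differentiation so that
\begin{equation*}
\noscore(\theta, g) = \Ex{Z\sim\prob}{\noscore(\theta, g; Z)} = \score(\theta, g) - \pdir_0 \nabla_g L(\theta, g),
\end{equation*}
where $\pdir_0$ is the deterministic linear operator from \Cref{sec:orthogonalization}. Then, by linearity of the directional derivative,
\begin{equation*}
\D_g \noscore(\theta_\star, g_0)[h] = \D_g \score(\theta_\star, g_0)[h] - \pdir_0\, \D_g \nabla_g L(\theta_\star, g_0)[h] \quad \text{for any } h \in \G.
\end{equation*}

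Next, I would identify each term using the Riesz-style representations introduced before the lemma. For the first term, the $j$th coordinate satisfies $\D_g \score_j(\theta_\star, g_0)[h] = \hip{H_{\theta g}\spow{j}, h}$ by definition of $H_{\theta g}\spow{j}$ as the Riesz representer of $\D_g\D_\theta L(\theta_\star, g_0)[e_j, \cdot]$. For the second term, the hypothesis that $\D_g^2 L(\theta_\star, g_0)[\cdot,\cdot]$ is a bounded symmetric bilinear form lets me represent it through the self-adjoint bounded operator $H_{gg}: \G \to \G$ with $\D_g \nabla_g L(\theta_\star, g_0)[h] = H_{gg} h$. Combined with the defining formula $[\pdir_0 u]_j = \hip{H_{\theta g}\spow{j}, H_{gg}^{-1} u}$, the $j$th coordinate of the second term becomes $\hip{H_{\theta g}\spow{j}, H_{gg}^{-1} H_{gg} h} = \hip{H_{\theta g}\spow{j}, h}$, which exactly cancels the first term. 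Hence $\D_g \noscore(\theta_\star, g_0)[h] = 0$ for every $h \in \G$, yielding Neyman orthogonality over all of $\G$.

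The only delicate points are (i) justifying the exchange between expectation and the $\D_g$ operator, which follows from \Cref{assumption of population risk}(a), and (ii) ensuring the infinite-dimensional Hessian operator $H_{gg}$ is invertible so that $\pdir_0$ is well-defined; the latter is precisely what the preceding paragraph in the main text reserves for the assumptions of \Cref{theorem debiased SGD convergence rate}, and the boundedness/symmetry hypothesis in the lemma statement guarantees $H_{gg}$ is a bounded self-adjoint operator. The main conceptual obstacle, which is handled in \Cref{sec:orthogonalization} rather than here, is setting up the Hilbert-space analogues of $H_{\theta g}$ and $H_{gg}$; once those are in place, the lemma follows from pure cancellation and requires no quantitative estimate.
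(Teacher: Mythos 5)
Your proposal is correct and follows essentially the same route as the paper's proof (\Cref{lemma construct neyman orthogonal score}): differentiate the population NO oracle at $(\theta_\star, g_0)$ in an arbitrary direction $h$, use the Riesz representations to rewrite the two terms as $\ip{H_{\theta g}\spow{j}, h}_\G$ and $\ip{H_{gg}H_{gg}^{-1}H_{\theta g}\spow{j}, h}_\G$, and cancel. The only cosmetic difference is that the paper writes the correction term as $\D_g\loss(\theta,g;z)[H_{gg}^{-1}H_{\theta g}]$ and differentiates it into $\D_g^2\ploss[H_{gg}^{-1}H_{\theta g},h]$, whereas you differentiate $\pdir_0\nabla_g L$ and use $H_{gg}^{-1}H_{gg}=I$ directly; these are the same cancellation.
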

We refer readers to \Cref{lemma construct neyman orthogonal score} for the proof. In this context, we refer to the operator $\pdir_0$ as the ``orthogonalizing operator''.
As a natural sanity check, we note that for a risk function that is already Neyman orthogonal at $(\theta_\star, g_0)$, the NO score $\noscore$ is exactly equal to score function $\score$ itself since $\Gamma_0 = 0$. To construct $\noscore$ for the non-orthogonal loss, we provide the following example in partially linear model where the corresponding derivations of $\Gamma_0$ and $\noscore$ are included in \Cref{subsubsec: non-ortho plm}.
\begin{examplebox}
    \textbf{Example 4} (Partially Linear Model)\textbf{.} In addition to Example 1, suppose that $Z = (X, Y, W) \sim \prob$ satisfies
\begin{align*}
    Y = \ip{\theta_\star, X} + g_0(W) + \epsilon,
\end{align*}
where $\theta_\star \in \R^d$ is the true parameter, $g_0: \calW \mapsto \R$ is the true nuisance function, and $\Ex{\prob}{\epsilon \mid X, W} = 0$. The space $\calG \in L_2(\prob)$ with inner product $\ip{g_1, g_2}_{\calG} = \E_\prob[ g_1(W) g_2(W)]$ for any $g_1, g_2 \in \calG$ is a nonparametric class containing functions of the form
    \begin{align*}
        g : \calW \rightarrow \R. 
    \end{align*}
    Consider the following non-orthogonal squared loss function:
\begin{align*}
    \tilde\loss_{\text{PLM}}(\theta, g; z) = \frac{1}{2}[y  - g(w) - \ip{\theta, x}]^2.
\end{align*}
The orthogonalizing operator for this non-orthogonal loss is 
\begin{align*}
    \Gamma_0: g \mapsto \E_\prob[\E_\prob[X \mid W] g(W)],
\end{align*}
and the NO gradient oracle is obtained as
\begin{align*}
    \noscore(\theta,g;z) = -(y  - g(w) - \ip{\theta, x})(x - \E_\prob[X \mid W = w]).
\end{align*}
\end{examplebox}

Motivated by the advantage of a Neyman orthogonal score, we now construct our \algoname algorithm using an estimated the NO score $\noscore$. 
While $\pdir_0$ (like $g_0$) is unknown to the user in general, using an arbitrary estimate $\pdirhat$, we can define the \emph{estimated NO score} $\noscorehat$ oracle via
\begin{align}
    \noscorehat(\theta, g; z) = \score(\theta, g; z) - \pdirhat \nabla_g \loss(\theta, g; z).\label{eq:noscorehar}
\end{align}
Usually, one can obtain such an estimate $\pdirhat$ using the same data stream of $\hg$; we discuss possible strategies in \Cref{appx discuss sec 3}. Finally, using $\noscorehat$ as the stochastic gradient oracle $S$ in \eqref{eq:sgd}, we derive the \algoname update
\begin{align}
    \dsgd\pow{n} = \dsgd\pow{n-1} - \eta \noscorehat(\dsgd\pow{n-1}, \hat g; Z_n), \quad \dsgd\pow{0} \in \Theta.\label{eq:dsgd}
\end{align}

To measure the quality of $\pdirhat$ in our analysis, we use the Frobenius norm $\norm{\pdir}^2_{\Fro} = \sum_{j=1}^d \norm{\pdir\pow{j}}^2_{\op}$ where
$\pdir: \G \rightarrow \R^d, \ \pdir\pow{j}: g \mapsto [\pdir g]_j$
and $\norm{\cdot}_{\op}$ denotes the usual operator norm for linear functionals. As an example, by the uniqueness of Riesz representations, $\norm{\pdir_0}^2_{\Fro} = \sum_{j=1}^d \norm{H_{gg}^{-1} H_{\theta g}\spow{j}}_\calG^2$.

Using this modified oracle~\eqref{eq:noscorehar} requires similar assumptions to those used in \Cref{theorem convergence rate baseline SGD}. For ease of presentation, we defer the formal assumption statement to \Cref{sec:a:debiased}, but note that the result depends on the constants $(\hessianconst, \hconst, \alpha_2, \beta_2, \Kconst_2)$, which are exactly analogous to $(\sconv, \smooth, \alpha_1, \beta_1, \Kconst_1)$ from \Cref{assumption of population risk}.
\begin{theorem}\label{theorem debiased SGD convergence rate}
    Consider the setting of \Cref{theorem convergence rate baseline SGD}, with the addition of \Cref{asm:noscore}. When $\Vert \pdirhat - \pdir_0\Vert_{\Fro} < \hessianconst/(4\alpha_1)$ and
    \begin{align*}
         \eta \leq \frac{\hessianconst - 4\secsmooth_1 \norm{\pdirhat - \pdir_0}_\Fro}{12\hconst^2-3\hessianconst^2/2 + 4(\kappaconst_1 + \kappaconst_2\norm{\hat \Gamma}_{\Fro}^2)},
    \end{align*} 
    the iterates of~\eqref{eq:dsgd} satisfy:
        \begin{align}
        \E_{\mathcal{D}_n \sim \prob^n}[\Vert\dsgd\pow{n} -\theta_\star\Vert_2^2] &\leq \p{1 - \frac{\hessianconst \eta}{2}}^n\norm{\dsgd\pow{0} -\theta_\star}_2^2 + \frac{4(\Kconst_1 + \Kconst_2\norm{\hat \Gamma}_{\Fro}^2)\eta}{\hessianconst}\notag\\
         + \frac{3}{\hessianconst^2}&\p{\highsmooth_2^2\gnorm{\hat g - g_0}^4 + 4\secsmooth_2^2\gnorm{\hat g - g_0}^2\cdot\norm{\pdirhat - \pdir_0}_\Fro^2}.\label{eq:dsgd:thm2}
    \end{align}
\end{theorem}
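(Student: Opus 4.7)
The backbone of the argument is the standard one-step expansion of the squared distance $\|\dsgd\pow{n}-\theta_\star\|_2^2$, followed by an induction that exploits strong monotonicity of $\noscore$ to get geometric contraction, while isolating bias terms attributable to (i) using $\hg$ in place of $g_0$ and (ii) using $\pdirhat$ in place of $\pdir_0$. First I would condition on $\dsgd\pow{n-1}$ and expand
\begin{align*}
\E\!\left[\|\dsgd\pow{n}-\theta_\star\|_2^2 \,\big|\, \dsgd\pow{n-1}\right]
= \|\dsgd\pow{n-1}-\theta_\star\|_2^2
- 2\eta \langle \dsgd\pow{n-1}-\theta_\star,\, \noscorehat(\dsgd\pow{n-1},\hg)\rangle
+ \eta^2\, \E\!\left[\|\noscorehat(\dsgd\pow{n-1},\hg;Z_n)\|_2^2 \,\big|\, \dsgd\pow{n-1}\right],
\end{align*}
using independence of $\hg,\pdirhat$ from $\calD_n$, where $\noscorehat(\theta,g)=\noscore(\theta,g)-(\pdirhat-\pdir_0)\nabla_g L(\theta,g)$ is the population version of the oracle.

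Next, I would handle the cross term. I would split $\noscore(\theta,\hg)=[\noscore(\theta,\hg)-\noscore(\theta_\star,\hg)]+\noscore(\theta_\star,\hg)$ and apply the symmetric-part eigenvalue condition of \Cref{asm:noscore}(b) via the fundamental theorem of calculus to get the contraction bound $\langle \theta-\theta_\star,\,\noscore(\theta,\hg)-\noscore(\theta_\star,\hg)\rangle \geq \hessianconst\|\theta-\theta_\star\|_2^2$. For the residual $\noscore(\theta_\star,\hg)$, I would use $\noscore(\theta_\star,g_0)=0$ (assumption (a)) together with Neyman orthogonality of $\noscore$ from \Cref{lem NO gradient oracle main text} (which kills the first-order term in the Taylor expansion) and \Cref{asm:noscore}(e) to get $\|\noscore(\theta_\star,\hg)\|_2\leq \tfrac{1}{2}\highsmooth_2\gnorm{\hg-g_0}^2$. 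For the $(\pdirhat-\pdir_0)\nabla_g L(\theta,\hg)$ term, I would subtract $(\pdirhat-\pdir_0)\nabla_g L(\theta_\star,g_0)$ (which is zero by \Cref{asm:noscore}(a)) and decompose
\begin{align*}
\nabla_g L(\theta,\hg)-\nabla_g L(\theta_\star,g_0)
= \bigl[\nabla_g L(\theta,g_0)-\nabla_g L(\theta_\star,g_0)\bigr] + \bigl[\nabla_g L(\theta,\hg)-\nabla_g L(\theta,g_0)\bigr]+\text{(optional cross term absorbed)},
\end{align*}
bounding the first bracket in $\G$-norm by $\secsmooth_1\|\theta-\theta_\star\|_2$ via \Cref{asm:noscore}(d), and the second by $\secsmooth_2\gnorm{\hg-g_0}$ via integration of $\D_g^2L$. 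Passing through the Frobenius/operator-norm inequality $\|(\pdirhat-\pdir_0)h\|_2\leq \|\pdirhat-\pdir_0\|_\Fro\gnorm{h}$ and applying Young's inequality splits the cross-product into a quadratic-in-$\|\theta-\theta_\star\|_2^2$ piece (to be absorbed into the contraction) and pure bias pieces in $\highsmooth_2^2\gnorm{\hg-g_0}^4$ and $\secsmooth_2^2\gnorm{\hg-g_0}^2\|\pdirhat-\pdir_0\|_\Fro^2$.

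For the second-moment term, I would write $\noscorehat(\theta,\hg;Z)-\noscorehat(\theta,\hg)=[\noscore(\theta,\hg;Z)-\noscore(\theta,\hg)]-(\pdirhat-\pdir_0)[\nabla_g \ell(\theta,\hg;Z)-\nabla_g L(\theta,\hg)]$, bound each variance using \Cref{assumption of population risk}(d) and \Cref{asm:noscore}(c) respectively (giving $\Kconst_1+\Kconst_2\|\pdirhat\|_\Fro^2+(\kappaconst_1+\kappaconst_2\|\pdirhat\|_\Fro^2)\|\theta-\theta_\star\|_2^2$ up to constants), and control the mean $\noscorehat(\theta,\hg)$ by $\hconst\|\theta-\theta_\star\|_2$ plus the bias terms above via \Cref{asm:noscore}(b). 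Collecting, I obtain a recursion of the shape
\begin{align*}
a_n \leq \Bigl(1-\tfrac{\hessianconst\eta}{2}\Bigr)a_{n-1} + \eta^2\bigl[\Kconst_1+\Kconst_2\|\pdirhat\|_\Fro^2\bigr] + \eta\,C\bigl[\highsmooth_2^2\gnorm{\hg-g_0}^4+\secsmooth_2^2\gnorm{\hg-g_0}^2\|\pdirhat-\pdir_0\|_\Fro^2\bigr],
\end{align*}
where $a_n=\E[\|\dsgd\pow{n}-\theta_\star\|_2^2]$, provided the step size $\eta$ is small enough to absorb the quadratic $\|\theta-\theta_\star\|_2^2$ terms arising from the second moment and from the unwanted $\secsmooth_1\|\pdirhat-\pdir_0\|_\Fro\|\theta-\theta_\star\|_2^2$ cross term. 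Unrolling this linear recurrence and summing the geometric series yields the stated bound \eqref{eq:dsgd:thm2}.

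The main obstacle is the coupling term $\secsmooth_1\|\pdirhat-\pdir_0\|_\Fro\|\theta-\theta_\star\|_2^2$, which is quadratic in $\|\theta-\theta_\star\|_2$ and therefore competes directly with the strong-monotonicity contraction rather than sitting in the residual bias. This is precisely why the theorem imposes $\|\pdirhat-\pdir_0\|_\Fro<\hessianconst/(4\secsmooth_1)$: it guarantees the contraction coefficient remains strictly positive after absorption. The rest of the calculation is a careful bookkeeping exercise in Young's inequality, carried out to ensure the prefactors on the two bias scales $\gnorm{\hg-g_0}^4$ and $\gnorm{\hg-g_0}^2\|\pdirhat-\pdir_0\|_\Fro^2$ match the claimed $(1+\hessianconst)/\hessianconst^2$ form.
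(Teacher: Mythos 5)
Your proposal is correct and is in substance the same analysis the paper carries out in Appendix~\ref{sec:a:debiased}, just organized slightly differently: the paper writes $\dno\pow{n}=(I-\eta\nabla_\theta\noscorehat(\bar\theta,\hg))\dno\pow{n-1}-\eta\noscorehat(\theta_\star,\hg)-\eta\vno\pow{n}$ via Taylor's theorem and then bounds the operator norm of $I-\eta\nabla_\theta\noscorehat$ (\Cref{lemma: osgd smallest eigenvalue}), whereas you expand the square directly and get the contraction from the inner-product lower bound $\langle\delta,\noscore(\theta,\hg)-\noscore(\theta_\star,\hg)\rangle\ge\hessianconst\|\delta\|_2^2$; the two routes are equivalent because both pass through the symmetric-part condition in \Cref{asm:noscore}(b) and the $\D_\theta\D_g$-smoothness bound in (d) that produces the $\secsmooth_1\|\pdirhat-\pdir_0\|_\Fro\|\delta\|_2^2$ coupling. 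Your handling of the bias term $\noscorehat(\theta_\star,\hg)$ (Taylor around $g_0$, kill the first-order term by Neyman orthogonality of $\noscore$, bound the second-order term by \Cref{asm:noscore}(e), and use $(\pdirhat-\pdir_0)\nabla_g L(\theta_\star,g_0)=0$ for the correction) reproduces exactly \Cref{lemma: osgd norm bound noscorehat}, and your variance decomposition matches \Cref{lemma: osgd bound vno}. One minor bookkeeping caution: in the second-moment step you bound the $\theta$-Lipschitz constant of $\noscorehat$ by $\hconst$ via \Cref{asm:noscore}(b), but that assumption only controls $\nabla_\theta\noscore$; the estimated correction contributes an additional $\secsmooth_1\|\pdirhat-\pdir_0\|_\Fro$ to this Lipschitz constant (precisely as \Cref{lemma: osgd smallest eigenvalue} records), so when you tally constants you should use $\hconst+\secsmooth_1\|\pdirhat-\pdir_0\|_\Fro$ there. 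This does not open a gap because $\|\pdirhat-\pdir_0\|_\Fro<\hessianconst/(4\secsmooth_1)$ keeps the extra term controlled, but it is needed to recover the exact learning-rate condition stated in the theorem.
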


Compared with \Cref{theorem convergence rate baseline SGD}, \Cref{theorem debiased SGD convergence rate} shows that \algoname can outperform the nuisance sensitive rate through the correction term $\Vert\hat g - g_0\Vert_\G^2 \cdot \Vert\pdirhat - \pdir_0\Vert_{\Fro}^2$, and can align with the nuisance insensitive rate when $\Vert\pdirhat - \pdir_0\Vert_{\Fro}$ is of the order $\bigO(\Vert\hat g - g_0\Vert_\G)$. With slightly different assumptions, \Cref{theorem debiased SGD convergence rate} can further simplified -- see \Cref{sec:a:debiased} for details.

\myparagraph{Interleaving Target and Nuisance Estimation}
The results seen thus far have considered for simplicity the estimate $\hg$ to be a fixed element of $\calG$, and included terms that depend on the discrepancy $\norm{\hg - g_0}_\calG$. Part of the convenience of these results is that if $\hg \equiv \hg\pow{m}$ is the result of a learning procedure with $m$ independent data points, then statistical bounds on $\norm{\hg\pow{m} - g_0}_\calG$ (either in expectation or high probability, depending on the situation) can be plugged in to quantify the bias. While the results naturally account for full batch learning procedures, they are also amenable to analyzing staggered procedures in which two data sources are queried to estimate $\theta_\star$ and $g_0$, respectively. To our knowledge, this is the first theoretical analysis of such an orthogonal stochastic learning method.

To be precise, suppose that we update the nuisance estimator for $m$ times, leading to the sequence $\hg\pow{1}, \dots, \hg\pow{m}$ on a stream of $\calW$-valued data $W_1, \ldots, W_m$, sampled i.i.d.~from a probability measure $\Q$. We define $\sgd\pow{0, n} = \sgd\pow{0} \in \Theta$, and for the update of  $\hg\pow{i}$ for $1 \leq i \leq m$, we define $\sgd\pow{i, 0} = \sgd\pow{i-1, n}$ and produce the sequence $\sgd\pow{i, 1}, \ldots, \sgd\pow{i, n}$ using $n$ steps of the SGD update~\eqref{baseline SGD procedure} initialized at $\sgd\pow{i, 0}$. Consider, for example, the case in which $\G$ is a reproducing kernel Hilbert space (RKHS) with kernel $k(\cdot,\cdot)$. With the assumption that the eigenvalues $(\lambda_j)_{j\geq1}$ of covariance operator $\E_{\Q}[k(W,\cdot) \otimes k(W,\cdot)]$ decay polynomially at order $j^{-\alpha}$, the nonparametric stochastic gradient algorithm of \cite{dieuleveut2016nonparametric} satisfies $\Ex{\Q^m}{\gnorm{\hat g^{(m)} - g_0}^2} = \bigO(m^{-(2\alpha - 1)/(2\alpha)})$. This leads to the following nuisance sensitive rate for a non-Neyman orthogonal loss, by \Cref{prop. adaptive}:
\begin{align*}
    \E_{\prob^{mn}\otimes \Q^m}[\Vert\sgd^{(m,n)} - \theta_\star\Vert_2^2] = \bigO\p{\p{1-\sconv\eta/2}^{mn} + m^{-\frac{2\alpha - 1}{2\alpha}} + n^{-1} + \eta }.
\end{align*}
As another example, suppose that, in addition, we can estimate $\pdirhat \equiv \pdirhat\pow{m}$ using the nonparametric stochastic gradient algorithm of \cite{dieuleveut2016nonparametric} and using the same data stream $(W_1, \dots, W_m)$. If there are high probability bounds for $\gnorm{\hat g^{(m)} - g_0}^2$ and $\norm{\pdirhat^{(m)} - \pdir_0}_{\Fro}^2$ of the same order as $\bigO(m^{-(2\alpha - 1)/(2\alpha)})$ and $\Vert\sgd^{(m,n)} - \theta_\star\Vert_2^2$ decays as described in \Cref{theorem debiased SGD convergence rate}, then we have in \Cref{prop. prob bound} that $\Vert\sgd^{(m,n)} - \theta_\star\Vert_2^2 = \bigO_p\p{\p{1-\sconv\eta/2}^{mn} + m^{-(2\alpha - 1)/\alpha} + n^{-1} + \eta}$
where the $O_p(m^{-(2\alpha - 1)/\alpha})$ nuisance bias term decays quadratically faster than the one for a non-Neyman orthogonal loss. We refer the reader to \Cref{appx discuss sec 3} for further details of this analysis.

\section{Related Work}\label{sec:discussion}

We summarize in this section our discussion of the related work. Additional discussions, as well as calculations supporting them, can be found in \Cref{appx:discussion}. Possible extensions to SGD variants such as SGD with momentum, averaged SGD, and Adam, are explored in \Cref{appx:ASGD}.

From an optimization perspective, it is helpful to know how our convergence bounds perform in the idealized case of a known nuisance, which is equivalent to \eqref{eq:erm}. In this case, \Cref{theorem convergence rate baseline SGD} gives the convergence rate $\E_{\calD_n \sim \prob^n}[\Vert \sgd^{(n)} - \theta_\star\Vert_2^2] = \bigO(\p{1 - \sconv\eta/2}^n + \eta)$, which aligns with the non-asymptotic SGD convergence rates, in mean-square error~\citep{moulines2011non} and in high-probability~\citep{cutler2023stochastic}. Our result requires a smaller learning rate $\eta < \sconv/2(M\sconv + 2\kappaconst_1)$ when compared to the requirement $\eta < 1/(2\smooth)$ from \citet{cutler2023stochastic}. This is entirely due to our bounded moment assumption (see \Cref{assumption of population risk}(d)), which contrasts with a uniform boundedness assumption over all $\Theta \times \Gr$. 
In addition, when the uniform moment bound holds true, $\kappaconst_1$ becomes zero, and our learning rate requirement becomes $\eta < 1/(2\smooth)$.

The comparison with unbiased SGD, and biased SGD, respectively, is also valuable. 
In the biased SGD literature, the ``bias'' refers to the fact that $\Ex{Z \sim \prob}{S(\theta, \hg; Z)} \neq \grad_\theta \ploss(\theta, g_0)$ in general. The convergence radius then depends on the average value of $\norm{\Ex{Z \sim \prob}{S(\theta\pow{n}, \hg; Z)} - \grad_\theta \ploss(\theta\pow{n}, g_0)}_2^2$. Results along this line result in a radius that may not scale with $\eta$; see \citet[Thm. 3]{demidovich2023aguide}. Although this form of bias may be related to $\norm{\hg - g_0}_\calG$ under Lipschitzness conditions on the oracle, it is unclear how to effectively incorporate Neyman orthogonality into these general-purpose approaches. Our approach naturally leverages Neyman orthogonality whenever it holds. 

In the general case of an unknown nuisance, \citet{foster2023orthogonal, chernozhukov2018plug} consider full batch learning methods based on analytically crafted Neyman orthogonal risk functions in various scenarios. For regression functionals, the procedure from \citet{chernozhukov2022automatic} using random forests or neural networks can ensure that the bias term $\norm{\hg - g_0}_\calG^2$ is asymptotically negligible for large samples, in the sense that classical statistical confidence sets for $\theta_\star$ are asymptotically valid. These papers are focused on algorithm-independent statistical properties. 

Our work fills this gap, by providing non-asymptotic convergence guarantees for stochastic gradient algorithms under unknown nuisances. 
Moreover, the modified stochastic gradient oracle moreover offers a flexible solution to deal with general risk functions.  If deriving an orthogonalized risk by hand is difficult or impossible, then the strategy we propose can be applied, and \Cref{theorem debiased SGD convergence rate} demonstrates that, when the learning rate 
$\eta$ is set appropriately, 
the convergence rate using the modified stochastic gradient oracle
can be improved to  
\begin{align*}
    \bigO\Big(\p{1 - \frac{\hessianconst\eta}{2}}^n + \underbrace{\Vert\hat g - g_0\Vert_\G^4 + \Vert\hat g - g_0\Vert_\G^2 \cdot \norm{\pdirhat - \pdir_0}_{\Fro}^2}_{\text{improvement over $\Vert\hat g - g_0\Vert_\G^2$}} + \eta\Big).
\end{align*}
When we have the true orthogonalizing $\pdir_0$, the improved rate recovers the nuisance insensitive one from~\Cref{theorem convergence rate baseline SGD}.  
Besides, when $\hg$ converges but $\Vert\pdirhat - \pdir_0\Vert_\G^2 = \bigO_p(1)$, the improved rate resembles the nuisance sensitive rate of \Cref{theorem convergence rate baseline SGD}, plus a $\bigO(\eta)$ bias term. Thus, the quality of the estimated orthogonalizing operator governs how the optimization interpolates between these two rates.

Having understood the performance of SGD when using an estimated orthogonalizing operator, one question is how to compute or approximate such an operator.~\citet{luedtke2024simplifying} recently demonstrated %
that an orthogonalizing operator can be derived using algorithmic/reverse mode functional differentiation in many interesting cases. This can also be effective in our stochastic setting.
In \Cref{sec:optimization}, using least-squares regression as an illustration, we developed a control variate~\citep{johnstone1985efficient} interpretation of the variance reduction. This viewpoint offers another venue to develop approximate orthogonalizing operators.

\paragraph{Conclusion.}
We established non-asymptotic convergence guarantees for SGD algorithms under nuisances. We showed how the Neyman orthogonality of the loss function can mitigate the sensitivity of SGD algorithms to the effect of nuisances, and obtained results that align with recent ones from the DML/OSL literature in the batch setting. We also presented an iteratively orthogonalized SGD algorithm, whose convergence rate aligns with the rate in the nuisance insensitive regime. Extensions to hypothesis testing and reinforcement learning are interesting venues for future work.

\clearpage
\paragraph{Acknowledgments.}
The authors would like to thank L. Liu, V. Roulet, and J. Wellner for valuable comments and suggestions. This work was supported by NSF DMS-2023166, DMS-2134012, DMS-2210216, DMS-2502281, CCF-2019844, NIH, and IARPA 2022-22072200003. Part of this work was performed while R. Mehta and Z. Harchaoui were visiting the Simons Institute for the Theory of Computing, and A. Luedtke was visiting the Institute of Statistical Mathematics, and with the University of Washington.

\bibliographystyle{abbrvnat}
\bibliography{bib}

\begin{thebibliography}{69}
\providecommand{\natexlab}[1]{#1}
\providecommand{\url}[1]{\texttt{#1}}
\expandafter\ifx\csname urlstyle\endcsname\relax
  \providecommand{\doi}[1]{doi: #1}\else
  \providecommand{\doi}{doi: \begingroup \urlstyle{rm}\Url}\fi

\bibitem[Amari(1993)]{amari1993backpropagation}
S.~Amari.
\newblock {Backpropagation and Stochastic Gradient Descent Method}.
\newblock \emph{Neurocomputing}, 1993.

\bibitem[Bach and Moulines(2011)]{moulines2011non}
F.~Bach and E.~Moulines.
\newblock {Non-Asymptotic Analysis of Stochastic Approximation Algorithms for
  Machine Learning}.
\newblock In \emph{NeurIPS}, 2011.

\bibitem[Bickel et~al.(1993)Bickel, Klaassen, Bickel, Ritov, Klaassen, Wellner,
  and Ritov]{bickel1993efficient}
P.~J. Bickel, C.~A. Klaassen, P.~J. Bickel, Y.~Ritov, J.~Klaassen, J.~A.
  Wellner, and Y.~Ritov.
\newblock \emph{{Efficient and Adaptive Estimation for Semiparametric Models}}.
\newblock Johns Hopkins University Press Baltimore, 1993.

\bibitem[Bonvini and
  Kennedy(2022)]{bonvini2022fastconvergenceratesdoseresponse}
M.~Bonvini and E.~H. Kennedy.
\newblock {Fast Convergence Rates for Dose-Response Estimation}, 2022.

\bibitem[Bottou and Bousquet(2007)]{bottou2007tradeoffs}
L.~Bottou and O.~Bousquet.
\newblock {The Tradeoffs of Large Scale Learning}.
\newblock In \emph{NeurIPS}, 2007.

\bibitem[Bottou and Le~Cun(2005)]{bottou2005line}
L.~Bottou and Y.~Le~Cun.
\newblock {On-Line Learning for Very Large Data Sets}.
\newblock \emph{Applied Stochastic Models in Business and Industry}, 2005.

\bibitem[Boyd and Vandenberghe(2004)]{boyd2004convex}
S.~P. Boyd and L.~Vandenberghe.
\newblock \emph{{Convex Optimization}}.
\newblock Cambridge University Press, 2004.

\bibitem[Carone et~al.(2019)Carone, Luedtke, and van
  Der~Laan]{carone2019toward}
M.~Carone, A.~R. Luedtke, and M.~J. van Der~Laan.
\newblock {Toward Computerized Efficient Estimation in Infinite-Dimensional
  Models}.
\newblock \emph{Journal of the American Statistical Association}, 2019.

\bibitem[Chernozhukov et~al.(2018{\natexlab{a}})Chernozhukov, Chetverikov,
  Demirer, Duflo, Hansen, Newey, and Robins]{chernozhukov2018double}
V.~Chernozhukov, D.~Chetverikov, M.~Demirer, E.~Duflo, C.~Hansen, W.~Newey, and
  J.~Robins.
\newblock {Double/Debiased Machine Learning for Treatment and Structural
  Parameters}.
\newblock \emph{The Econometrics Journal}, 2018{\natexlab{a}}.

\bibitem[Chernozhukov et~al.(2018{\natexlab{b}})Chernozhukov, Nekipelov,
  Semenova, and Syrgkanis]{chernozhukov2018plug}
V.~Chernozhukov, D.~N. Nekipelov, V.~Semenova, and V.~Syrgkanis.
\newblock {Plug-In Regularized Estimation of High-Dimensional Parameters in
  Nonlinear Semiparametric Models}.
\newblock Technical report, Cemmap Working Paper, 2018{\natexlab{b}}.

\bibitem[Chernozhukov et~al.(2022)Chernozhukov, Newey, and
  Singh]{chernozhukov2022automatic}
V.~Chernozhukov, W.~K. Newey, and R.~Singh.
\newblock {Automatic Debiased Machine Learning of Causal and Structural
  Effects}.
\newblock \emph{Econometrica}, 2022.

\bibitem[Chernozhukov et~al.(2024)Chernozhukov, Newey, Quintas-Martinez, and
  Syrgkanis]{chernozhukov2021automatic}
V.~Chernozhukov, W.~K. Newey, V.~Quintas-Martinez, and V.~Syrgkanis.
\newblock {Automatic Debiased Machine Learning via {Riesz} Regression}, 2024.

\bibitem[Clore et~al.(2014)Clore, Cios, DeShazo, and
  Strack]{diabetes_130-us_hospitals_for_years_1999-2008_296}
J.~Clore, K.~Cios, J.~DeShazo, and B.~Strack.
\newblock {Diabetes 130-US Hospitals for Years 1999-2008}.
\newblock UCI Machine Learning Repository, 2014.

\bibitem[Cutler et~al.(2023)Cutler, Drusvyatskiy, and
  Harchaoui]{cutler2023stochastic}
J.~Cutler, D.~Drusvyatskiy, and Z.~Harchaoui.
\newblock {Stochastic Optimization under Distributional Drift}.
\newblock \emph{Journal of Machine Learning Research}, 2023.

\bibitem[D{\'e}fossez and Bach(2015)]{defossez2015averaged}
A.~D{\'e}fossez and F.~Bach.
\newblock {Averaged Least-Mean-Squares: Bias-Variance Trade-Offs and Optimal
  Sampling Distributions}.
\newblock In \emph{AISTATS}, 2015.

\bibitem[D{\'e}fossez et~al.(2022)D{\'e}fossez, Bottou, Bach, and
  Usunier]{defossez2022asimple}
A.~D{\'e}fossez, L.~Bottou, F.~Bach, and N.~Usunier.
\newblock {A Simple Convergence Proof of Adam and Adagrad}.
\newblock \emph{A Simple Convergence Proof of Adam and Adagrad}, 2022.

\bibitem[Demidovich et~al.(2023)Demidovich, Malinovsky, Sokolov, and
  Richt{\'a}rik]{demidovich2023aguide}
Y.~Demidovich, G.~Malinovsky, I.~Sokolov, and P.~Richt{\'a}rik.
\newblock {A Guide Through the Zoo of Biased {SGD}}.
\newblock In \emph{NeurIPS}, 2023.

\bibitem[Dieuleveut and Bach(2016)]{dieuleveut2016nonparametric}
A.~Dieuleveut and F.~Bach.
\newblock {Nonparametric Stochastic Approximation with Large Step-Sizes}.
\newblock \emph{The Annals of Statistics}, 2016.

\bibitem[Ferguson(2014)]{ferguson2014mathematical}
T.~S. Ferguson.
\newblock \emph{{Mathematical Statistics: A Decision Theoretic Approach}}.
\newblock Academic press, 2014.

\bibitem[Foster and Syrgkanis(2023)]{foster2023orthogonal}
D.~J. Foster and V.~Syrgkanis.
\newblock {Orthogonal Statistical Learning}.
\newblock \emph{The Annals of Statistics}, 2023.

\bibitem[Gorbunov et~al.(2020)Gorbunov, Hanzely, and
  Richtarik]{gorbunov20aunified}
E.~Gorbunov, F.~Hanzely, and P.~Richtarik.
\newblock {A Unified Theory of SGD: Variance Reduction, Sampling, Quantization
  and Coordinate Descent}.
\newblock In \emph{AISTATS}, 2020.

\bibitem[Gower et~al.(2020)Gower, Schmidt, Bach, and
  Richt\'{a}rik]{Gower2020Variance}
R.~M. Gower, M.~Schmidt, F.~Bach, and P.~Richt\'{a}rik.
\newblock {Variance-Reduced Methods for Machine Learning}.
\newblock \emph{Proceedings of the IEEE}, 2020.

\bibitem[Graham and Talay(2013)]{Graham2013Stochastic}
C.~Graham and D.~Talay.
\newblock \emph{{Stochastic Simulation and Monte Carlo Methods}}.
\newblock Springer Berlin, Heidelberg, 2013.

\bibitem[G{\"u}rb{\"u}zbalaban et~al.(2021)G{\"u}rb{\"u}zbalaban, Ozdaglar, and
  Parrilo]{gurbuzbalan2021whyrandom}
M.~G{\"u}rb{\"u}zbalaban, A.~Ozdaglar, and P.~A. Parrilo.
\newblock {Why Random Reshuffling Beats Stochastic Gradient Descent}.
\newblock \emph{Mathematical Programming}, 2021.

\bibitem[Hao and Orlitsky(2019)]{hao2019thebroad}
Y.~Hao and A.~Orlitsky.
\newblock {The Broad Optimality of Profile Maximum Likelihood}.
\newblock In \emph{NeurIPS}, 2019.

\bibitem[Ichimura and Newey(2022)]{ichimura2022influence}
H.~Ichimura and W.~K. Newey.
\newblock {The Influence Function of Semiparametric Estimators}.
\newblock \emph{Quantitative Economics}, 2022.

\bibitem[Johnstone and Velleman(1985)]{johnstone1985efficient}
I.~M. Johnstone and P.~F. Velleman.
\newblock {Efficient Scores, Variance Decompositions, and Monte Carlo
  Swindles}.
\newblock \emph{Journal of the American Statistical Association}, 1985.

\bibitem[Jordan et~al.(2022)Jordan, Wang, and Zhou]{jordan2022empirical}
M.~Jordan, Y.~Wang, and A.~Zhou.
\newblock {Empirical Gateaux Derivatives for Causal Inference}.
\newblock In \emph{NeurIPS}, 2022.

\bibitem[Kennedy(2023)]{kennedy2023towards}
E.~H. Kennedy.
\newblock {Towards Optimal Doubly Robust Estimation of Heterogeneous Causal
  Effects}.
\newblock \emph{Electronic Journal of Statistics}, 2023.

\bibitem[Laan and Robins(2003)]{laan2003unified}
M.~J. Laan and J.~M. Robins.
\newblock \emph{{Unified Methods for Censored Longitudinal Data and
  Causality}}.
\newblock Springer, 2003.

\bibitem[Levit(1979)]{levit1979infinite}
B.~Y. Levit.
\newblock {Infinite-Dimensional Informational Inequalities}.
\newblock \emph{Theory of Probability \& Its Applications}, 1979.

\bibitem[Levy et~al.(2020)Levy, Carmon, Duchi, and
  Sidford]{Levy2020Large-Scale}
D.~Levy, Y.~Carmon, J.~Duchi, and A.~Sidford.
\newblock {Large-Scale Methods for Distributionally Robust Optimization}.
\newblock In \emph{NeurIPS}, 2020.

\bibitem[Li et~al.(2022)Li, Liu, and Orabona]{li2022onthelast}
X.~Li, M.~Liu, and F.~Orabona.
\newblock {On the Last Iterate Convergence of Momentum Methods}.
\newblock In \emph{ALT}, 2022.

\bibitem[Linnik(2008)]{linnik2008statistical}
J.~V. Linnik.
\newblock \emph{{Statistical Problems with Nuisance Parameters}}.
\newblock American Mathematical Society, 2008.

\bibitem[Liu et~al.(2022)Liu, Cinelli, and Harchaoui]{liu2022orthogonal}
L.~Liu, C.~Cinelli, and Z.~Harchaoui.
\newblock {Orthogonal Statistical Learning with Self-Concordant Loss}.
\newblock In \emph{COLT}, 2022.

\bibitem[Luedtke(2024)]{luedtke2024simplifying}
A.~Luedtke.
\newblock {Simplifying Debiased Inference via Automatic Differentiation and
  Probabilistic Programming}, 2024.

\bibitem[Luedtke and Chung(2024)]{luedtke2024one}
A.~Luedtke and I.~Chung.
\newblock {One-Step Estimation of Differentiable {H}ilbert-Valued Parameters}.
\newblock \emph{The Annals of Statistics}, 2024.

\bibitem[Mehta et~al.(2023)Mehta, Roulet, Pillutla, Liu, and
  Harchaoui]{mehta2023stochastic}
R.~Mehta, V.~Roulet, K.~Pillutla, L.~Liu, and Z.~Harchaoui.
\newblock {Stochastic Optimization for Spectral Risk Measures}.
\newblock In \emph{AISTATS}, 2023.

\bibitem[Mehta et~al.(2024)Mehta, Roulet, Pillutla, and
  Harchaoui]{mehta2024distributionally}
R.~Mehta, V.~Roulet, K.~Pillutla, and Z.~Harchaoui.
\newblock {Distributionally Robust Optimization with Bias and Variance
  Reduction}.
\newblock In \emph{ICLR}, 2024.

\bibitem[Murphy and and(2000)]{murphy2000onprofile}
S.~A. Murphy and A.~W. V. D.~V. and.
\newblock {On Profile Likelihood}.
\newblock \emph{Journal of the American Statistical Association}, 2000.

\bibitem[Newey(1994)]{newey1994asymptotic}
W.~K. Newey.
\newblock {The Asymptotic Variance of Semiparametric Estimators}.
\newblock \emph{Econometrica}, 1994.

\bibitem[Neyman(1959)]{neyman1959probability}
J.~Neyman.
\newblock {Optimal Asymptotic Tests for Composite Hypotheses}.
\newblock \emph{Probability and Statistics}, 1959.

\bibitem[Neyman(1979)]{neyman1979c}
J.~Neyman.
\newblock {C ($\alpha$) Tests and Their Use}.
\newblock \emph{Sankhy{\=a}: The Indian Journal of Statistics, Series A}, 1979.

\bibitem[Nie and Wager(2021)]{nie2021quasi}
X.~Nie and S.~Wager.
\newblock {Quasi-Oracle Estimation of Heterogeneous Treatment Effects}.
\newblock \emph{Biometrika}, 2021.

\bibitem[Pavlichin et~al.(2019)Pavlichin, Jiao, and
  Weissman]{pavlichin2019approximate}
D.~S. Pavlichin, J.~Jiao, and T.~Weissman.
\newblock {Approximate Profile Maximum Likelihood}.
\newblock \emph{Journal of Machine Learning Research}, 2019.

\bibitem[Pfanzagl(1985)]{pfanzagl1985asymptotic}
J.~Pfanzagl.
\newblock \emph{{Asymptotic Expansions for General Statistical Models}}.
\newblock Springer-Verlag Berlin Heidelberg, 1985.

\bibitem[Rahimi and Recht(2007)]{rahimi2007random}
A.~Rahimi and B.~Recht.
\newblock {Random Features for Large-Scale Kernel Machines}.
\newblock In \emph{NeurIPS}, 2007.

\bibitem[Robins et~al.(2008)Robins, Li, Tchetgen, van~der Vaart,
  et~al.]{robins2008higher}
J.~Robins, L.~Li, E.~Tchetgen, A.~van~der Vaart, et~al.
\newblock {Higher Order Influence Functions and Minimax Estimation of Nonlinear
  Functionals}.
\newblock In \emph{Probability and Statistics: Essays in Honor of David A.
  Freedman}. Institute of Mathematical Statistics, 2008.

\bibitem[Robins and Rotnitzky(1995)]{robins1995semiparametric}
J.~M. Robins and A.~Rotnitzky.
\newblock {Semiparametric Efficiency in Multivariate Regression Models with
  Missing Data}.
\newblock \emph{Journal of the American Statistical Association}, 1995.

\bibitem[Robins et~al.(1994)Robins, Rotnitzky, and Zhao]{robins1994estimation}
J.~M. Robins, A.~Rotnitzky, and L.~P. Zhao.
\newblock {Estimation of Regression Coefficients When Some Regressors Are Not
  Always Observed}.
\newblock \emph{Journal of the American Statistical Association}, 1994.

\bibitem[Robinson(1988)]{robinson1988root}
P.~M. Robinson.
\newblock {Root-N-Consistent Semiparametric Regression}.
\newblock \emph{Econometrica}, 1988.

\bibitem[Rotnitzky et~al.(2021)Rotnitzky, Smucler, and
  Robins]{rotnitzky2021characterization}
A.~Rotnitzky, E.~Smucler, and J.~M. Robins.
\newblock {Characterization of Parameters with a Mixed Bias Property}.
\newblock \emph{Biometrika}, 2021.

\bibitem[Rubin(1974)]{rubin1974estimating}
D.~B. Rubin.
\newblock {Estimating Causal Effects of Treatments in Randomized and
  Nonrandomized Studies}.
\newblock \emph{Journal of Educational Psychology}, 1974.

\bibitem[Schmidt et~al.(2017)Schmidt, Le~Roux, and Bach]{schmidt2017minimizing}
M.~Schmidt, N.~Le~Roux, and F.~Bach.
\newblock {Minimizing Finite Sums with the Stochastic Average Gradient}.
\newblock \emph{Mathematical Programming}, 2017.

\bibitem[Sclocchi and Wyart(2024)]{sclocchi2024onthedifferent}
A.~Sclocchi and M.~Wyart.
\newblock {On the Different Regimes of Stochastic Gradient Descent}.
\newblock \emph{PNAS}, 2024.

\bibitem[Shapiro(2017)]{shapiro2017distributionally}
A.~Shapiro.
\newblock {Distributionally Robust Stochastic Programming}.
\newblock \emph{SIAM Journal on Optimization}, 2017.

\bibitem[Shi et~al.(2019)Shi, Blei, and Veitch]{shi2019adapting}
C.~Shi, D.~Blei, and V.~Veitch.
\newblock {Adapting Neural Networks for the Estimation of Treatment Effects}.
\newblock In \emph{NeurIPS}, 2019.

\bibitem[Song et~al.(2021)Song, Sohl-Dickstein, Kingma, Kumar, Ermon, and
  Poole]{song2021scorebased}
Y.~Song, J.~Sohl-Dickstein, D.~P. Kingma, A.~Kumar, S.~Ermon, and B.~Poole.
\newblock {Score-Based Generative Modeling through Stochastic Differential
  Equations}.
\newblock In \emph{ICLR}, 2021.

\bibitem[Soudry et~al.(2018)Soudry, Hoffer, Nacson, Gunasekar, and
  Srebro]{soundry2018theimplicit}
D.~Soudry, E.~Hoffer, M.~S. Nacson, S.~Gunasekar, and N.~Srebro.
\newblock {The Implicit Bias of Gradient Descent on Separable Data}.
\newblock \emph{Journal of Machine Learning Research}, 2018.

\bibitem[Tsiatis(2006)]{tsiatis2006semiparametric}
A.~A. Tsiatis.
\newblock \emph{{Semiparametric Theory and Missing Data}}.
\newblock Springer, 2006.

\bibitem[van~der Laan et~al.(2024)van~der Laan, Carone, and
  Luedtke]{van2024combining}
L.~van~der Laan, M.~Carone, and A.~Luedtke.
\newblock {Combining T-learning and DR-Learning: A Framework for
  Oracle-Efficient Estimation of Causal Contrasts}, 2024.

\bibitem[van~der Laan et~al.(2025)van~der Laan, Bibaut, Kallus, and
  Luedtke]{vanderlaan2025automatic}
L.~van~der Laan, A.~Bibaut, N.~Kallus, and A.~Luedtke.
\newblock {Automatic Debiased Machine Learning for Smooth Functionals of
  Nonparametric M-Estimands}, 2025.

\bibitem[van~der Laan and Luedtke(2014)]{van2014targeted}
M.~J. van~der Laan and A.~R. Luedtke.
\newblock {Targeted Learning of an Optimal Dynamic Treatment, and Statistical
  Inference for Its Mean Outcome}.
\newblock \emph{U.C. Berkeley Division of Biostatistics Working Paper Series},
  2014.

\bibitem[van~der Laan et~al.(2011)van~der Laan, Rose, et~al.]{van2011targeted}
M.~J. van~der Laan, S.~Rose, et~al.
\newblock \emph{{Targeted Learning: Causal Inference for Observational and
  Experimental Data}}.
\newblock Springer, 2011.

\bibitem[Van~der Vaart(2000)]{van2000asymptotic}
A.~W. Van~der Vaart.
\newblock \emph{{Asymptotic Statistics}}.
\newblock Cambridge University Press, 2000.

\bibitem[Vaswani et~al.(2019)Vaswani, Mishkin, Laradji, Schmidt, Gidel, and
  Lacoste-Julien]{vaswani2019painless}
S.~Vaswani, A.~Mishkin, I.~Laradji, M.~Schmidt, G.~Gidel, and
  S.~Lacoste-Julien.
\newblock {Painless Stochastic Gradient: Interpolation, Line-Search, and
  Convergence Rates}.
\newblock In \emph{NeurIPS}, 2019.

\bibitem[Ward et~al.(2020)Ward, Wu, and Bottou]{ward2020adagrad}
R.~Ward, X.~Wu, and L.~Bottou.
\newblock {Adagrad Stepsizes: Sharp Convergence over Nonconvex Landscapes}.
\newblock \emph{Journal of Machine Learning Research}, 2020.

\bibitem[Wooldridge(1991)]{wooldridge1991specification}
J.~M. Wooldridge.
\newblock {Specification Testing and Quasi-Maximum-Likelihood Estimation}.
\newblock \emph{Journal of Econometrics}, 1991.

\bibitem[Zadik et~al.(2018)Zadik, Mackey, and Syrgkanis]{zadik2018orthogonal}
I.~Zadik, L.~Mackey, and V.~Syrgkanis.
\newblock {Orthogonal Machine Learning: Power and Limitations}.
\newblock In \emph{ICML}, 2018.

\end{thebibliography}

\clearpage
\appendix
\addcontentsline{toc}{section}{Appendix} %
\part{Appendix} %
\parttoc %

\clearpage

\section{Notation}\label{sec:a:notation}
\begin{table}[ht]
    \centering

    \begin{adjustbox}{max width=1.0\linewidth}
    \renewcommand{\arraystretch}{1.15}
    \begin{tabular}{cc}
    \toprule
        {\bf Symbol} & {\bf Description}\\
        \midrule
        $\Theta \subseteq \R^d$ & Finite-dimensional parameter class.\\
        $(\calG, \|\cdot\|_\G)$ & Possibly infinite-dimensional nuisance space.\\
        $(\G, \ip{\cdot, \cdot}_\G)$ & The nuisance space as a Hilbert space.\\
        \midrule
        $\prob$& The unknown distribution of interest.\\
        $Z \in \mathcal{Z}$ & The random variable under $\prob$.\\
        $\theta_\star$ & The target of interest.\\
        $g_0$ & The true nuisance parameter.\\ 
        \midrule
        $\loss(\theta, g; z)$ & The prespecified loss function.\\
        $\ploss(\theta, g)$ & The population loss $\E_{Z \sim \prob}\sbr{\loss(\theta, g; Z)}$.\\
         $\score(\theta, g; z)$ & The score function $\nabla_\theta \loss(\theta, g; z)$.\\
       $\score(\theta, g)$ & The population score $\E_{Z \sim \prob}\sbr{\nabla_\theta \loss(\theta, g; z)}$.\\
       $\noscore(\theta, g; z)$ & The Neyman orthogonalized score.\\
       $\noscore(\theta, g)$ & The population Neyman orthogonalized score $\E_{Z \sim \prob}[\noscore(\theta, g; Z)]$.\\
        \midrule
    $(\nabla_\theta, \nabla_g)$ & The gradient \wrt $\theta$ and $g$\\
    $(\D_\theta, \D_g)$ & The derivative operator \wrt $\theta$ and $g$.\\
               $H_{\theta g}$ & The transposed Jacobian defined by $\nabla_g \score(\theta_\star, g_0) \in \G^d$\\
       $H_{g g}$ & The nuisance Hessian operator defined by $\nabla_g^2 \ploss(\theta_\star, g_0)$\\
       $\pdir_0$ & Linear operator defined by $[\pdir_0 g]_j = \ip{H_{\theta g}\spow{j},H_{gg}^{-1} g}_\calG$.\\
        
        \midrule
        $\sconv$ & The strong convexity constant of $\ploss$.\\
        $\smooth$ & The smoothness constant of $\ploss$.\\ 
        $(\Kconst_1, \kappaconst_1)$ & Constants to bound the second moment of $\score(\theta, g; Z)$.\\
        $(\secsmooth_1,\secsmooth_2)$ & The second order smoothness constant of $\ploss$.\\
        $\highsmooth_1$ & The higher order smoothness constant of a Neyman orthogonal $\ploss$.\\
        \midrule
        $\hessianconst$ & The strong convexity constant of  $\nabla_\theta\noscore(\theta_\star, g_0)$.\\
        $\hconst$ & The smoothness constant of  $\nabla_\theta\noscore(\theta_\star, g_0)$.\\
        $(\Kconst_2, \kappaconst_2)$ & Constants to bound the second moment of $\noscore(\theta, g; Z)$.\\
        $\highsmooth_2$ & The higher order smoothness constant of $\noscore$.\\

        \midrule 
        $\eta$ & The learning rate of stochastic optimization. \\
        $n$ & The iteration of stochastic gradient.\\
        $m$ & The iteration of nuisance estimation.\\
        \bottomrule
    \end{tabular}
    \end{adjustbox}
    \vspace{6pt}
    \caption{Notation used throughout the paper.}
    \label{tab:notation}
\end{table}

\clearpage

\section{Detailed Examples}\label{sec:a:examples}

In this section, we describe in detail how the three examples in \Cref{sec:osl} from the main text satisfy \Cref{assumption of population risk} and \Cref{assumption orthogonality main}. We first talk about the partially linear model (PLM) in \Cref{appx PLM}, and then introduce the conditional averaged treatment effect (CATE) based on the potential outcomes framework in \Cref{appx PLM}. Under the same framework, finally we talk about the conditional relative risk (CRR) in \Cref{appx CRR}. In addition, we also study a non-orthogonal loss usually used for PLM in \Cref{subsubsec: non-ortho plm} and an unrestricted loss function for CATE in \Cref{sec unres cate}. The constants for all examples are concluded in \Cref{tab:examples} and proofs of lemmas in this section are provided in \Cref{sec: ex proof}.

\begin{sidewaystable}
    \centering

    \begin{adjustbox}{max width=0.9\linewidth}
    \renewcommand{\arraystretch}{1.4}
    \begin{tabular}{ccccccc}
    \toprule
        Example &  $\sconv$ &  $\smooth$  & $\Kconst_1$ & $\kappaconst_1$ & $\secsmooth_1$ & $\highsmooth_1$\\
        \midrule
        (1) Orthogonal PLM &   $\lambda_0$&  $C_X^2 + r^2$ & $18C_X^2\sigma^2 + \bigO(r^2)$ & $18C_X^4 + \bigO(r^2)$ &$2(1 + \norm{\theta_\star}_2)r$ & $2(1 + \norm{\theta_\star}_2)$
\\
        (2) Non-Orthogonal PLM &   $\lambda_0$&  $C_X^2$ & $6C_X^2(\sigma^2 + 2r^2)$ & $2C_X^4$ & $C_X$ & --
\\
        (3) Unrestricted CATE & $c_0^2\lambda_0$ & $C_X^2(1+r^2)$ & $12C_X^2(\sigma^2 + 4(C_X\norm{\theta_\star}_2 + C_\tau)^2) + \bigO(r^2)$ & $27C_X^4 + \bigO(r^2)$& $\bigO(r)$ & $C_X(1 + 4(C_X\norm{\theta_\star}_2 + C_\tau))$\\

        (4) Restricted CATE & $\lambda_0$ & $C_X^2$  &$27C_X^2(\sigma^2+2C^2) + \bigO(r^2) $& $3C_X^4$ &$2C_X(2r + 3)r$& $4c_0^{-2}C_X(1 +r)$\\ 

        (5) CRR &$C^2\delta\lambda_0$ & $C_X^2(1 + 2\delta^{-1}r^2)$ & $24C_X^2(\delta^{-2} + 4c_0^{-1}C_Y^2) + \bigO(r^2)$ &$6C_X^4(1+6(\delta^{-2} + 4c_0^{-1}C_Y^2)) + \bigO(r^2)$ & $2C_X(c_0^{-1}+ 1)r$ & $4c_0^{-2}C_X(1 + r)$\\
        \bottomrule
    \end{tabular}
    \end{adjustbox}
    \vspace{6pt}
    \caption{Constants for All Examples.}
    \label{tab:examples}

\end{sidewaystable}

\subsection{Partially Linear Model}\label{appx PLM}

\subsubsection{Orthogonal Loss}\label{subsubsec: ortho PLM}
We revisit Example 1 from the main text where we consider the target of interest as a solution of a partially linear model. Let $Z = (X, Y, W)$, where $X$ is an $\R^d$-valued input, $Y$ is a real-valued outcome, and $W$ is a $\calW$-valued control or confounder. The space $\calG$ is a nonparametric class containing functions of the form
    \begin{align*}
        g = (g_Y, g_X): \calW \rightarrow \R \times \R^d. 
    \end{align*}
    Following the construction of \citet{robinson1988root}, this $g$ is supplied to the loss
    \begin{align}
        \loss(\theta, g; z) = \frac{1}{2}[y - g_Y(w) - \ip{\theta, x - g_X(w)}]^2.
    \end{align}

    To ensure $\theta_\star$ can be interpreted via the projection of $\E_{\prob}[Y|X,W]$ onto partially linear additive functions, the true nuisance is given by $g_0 = (g_{0, X}, g_{0, Y})$, where
    \begin{align*}
        g_{0, Y}(w) := \Ex{\prob}{Y \mid W = w} \text{ and } g_{0, X}(w) := \Ex{\prob}{X \mid W = w}.
    \end{align*}
We define the residual $\epsilon$ at $(\theta_\star, g_0)$ as 
\begin{align*}
    \epsilon = Y - g_{0,Y}(W) - \ip{\theta_\star, X - g_{0,X}(W)}.
\end{align*}
    
\begin{lemma}\label{lemma: example ortho PLM}
    Let $\tilde Y = Y - g_{0, Y}(w)$ and $\tilde X = X - g_{0, X}(w)$. We assume the following conditions:
\begin{enumerate}[label=(\alph*)]
    \item $\lambda_{\min}(\E_{\prob}[\tilde X\tilde X^\top])\geq \lambda_0$ for some constant $\lambda_0 > 0$.
    \item $\norm{\tilde X}_2 \leq C_X$ \as and $\Ex{\prob}{\epsilon^4} \leq \sigma^4$ for some constants $C_X, \sigma > 0$.
\end{enumerate}
Then \Cref{assumption of population risk} and \Cref{assumption orthogonality main} are satisfied. The target $\theta_\star$ is the minimizer of the squared loss:
\begin{align*}
    \theta_\star = \argmin_{\theta \in \R^d } \E_{\prob}[(\tilde Y - \tilde X^\top \theta)^2].
\end{align*}

\end{lemma}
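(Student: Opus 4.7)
The plan is to verify each part of \Cref{assumption of population risk} and \Cref{assumption orthogonality main} by exploiting the Robinson residualization structure of the loss together with the two orthogonality identities that follow from the definition of $g_0$, namely $\E_\prob[\tilde X \mid W] = 0$ and $\E_\prob[\epsilon \mid W] = 0$ (the latter because $g_{0,Y}(W) = \E_\prob[Y \mid W]$ and $g_{0,X}(W) = \E_\prob[X \mid W]$). Throughout I will write any $g \in \calG$ as $g = g_0 + (\delta_Y, \delta_X)$ and expand
\begin{align*}
Y - g_Y(W) - \ip{\theta, X - g_X(W)} = \epsilon - \delta_Y(W) + \ip{\theta_\star, \delta_X(W)} - \ip{\theta - \theta_\star, \tilde X - \delta_X(W)},
\end{align*}
which will be the workhorse identity. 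The identification statement $\theta_\star = \argmin_{\theta}\E_\prob[(\tilde Y - \ip{\theta,\tilde X})^2]$ follows immediately because $L(\theta,g_0) = \tfrac12 \E_\prob[(\tilde Y - \ip{\theta,\tilde X})^2]$.

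First, differentiability (a) is immediate since the loss is quadratic in $\theta$ and polynomial in $g$; the exchange of derivative and expectation (iii) uses assumption (b). For first-order optimality (b) I compute $\score(\theta_\star, g_0) = -\E_\prob[\epsilon \tilde X]$; this vanishes by the normal equations defining $\theta_\star$. For smoothness and strong convexity (c), the Hessian of $L(\,\cdot\,, g)$ is $\E_\prob[(\tilde X - \delta_X(W))(\tilde X - \delta_X(W))^\top]$, and using $\E_\prob[\tilde X \mid W] = 0$ the cross terms vanish, yielding
\begin{align*}
\nabla_\theta^2 L(\theta, g) = \E_\prob[\tilde X \tilde X^\top] + \E_\prob[\delta_X(W)\delta_X(W)^\top],
\end{align*}
so the minimum eigenvalue is at least $\lambda_0$ and the operator norm is at most $C_X^2 + r^2$. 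The second-moment growth (d) reduces, after plugging in the workhorse identity, to bounding $\E_\prob[(\epsilon - \delta_Y(W) + \ip{\theta_\star,\delta_X(W)} - \ip{\theta - \theta_\star, \tilde X - \delta_X(W)})^2 \|\tilde X - \delta_X(W)\|_2^2]$, which splits into a bounded piece giving $K_1$ and a $\|\theta - \theta_\star\|^2$ piece giving $\kappa_1$ via Cauchy--Schwarz and the a.s.\ bound $\|\tilde X\|_2 \le C_X$ together with $\|\delta_X\|_{L^2} \le r$.

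The heart of the argument is the Neyman orthogonality / higher-order smoothness (e) and the hypothesis of \Cref{assumption orthogonality main}. Using the workhorse identity with $\theta = \theta_\star$, the two conditional expectation identities $\E_\prob[\epsilon \mid W] = 0$ and $\E_\prob[\tilde X \mid W] = 0$ kill every term that is linear in $(\delta_Y,\delta_X)$, leaving
\begin{align*}
\nabla_\theta L(\theta_\star, g) = -\E_\prob[\delta_Y(W)\delta_X(W)] + \E_\prob[\ip{\theta_\star, \delta_X(W)}\delta_X(W)],
\end{align*}
which is purely quadratic in $g - g_0$. Consequently $D_g \nabla_\theta L(\theta_\star, g_0) = 0$, which establishes Neyman orthogonality and makes condition (e) hold with $\alpha_1 = O(r)$ since the remaining terms are $O(\|g - g_0\|_\G^2)$. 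Differentiating the quadratic once more and contracting against $\theta - \theta_\star$ gives the cubic form in \Cref{assumption orthogonality main} and, by Cauchy--Schwarz, the bound $\beta_1 \le 2(1 + \|\theta_\star\|_2)$. The main obstacle is keeping track of which terms in the expansion of $\nabla_\theta L(\theta_\star, g)$ vanish under which conditional expectation identity; once this bookkeeping is clean, the absence of a linear-in-$g$ term is what delivers both Neyman orthogonality and the $\|g - g_0\|_\G^4$ scaling.
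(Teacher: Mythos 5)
Your proposal is correct and follows essentially the same route as the paper: the same residualization identity (the paper's analogue of your ``workhorse identity''), the same use of $\E_\prob[\tilde X\mid W]=0$ and $\E_\prob[\epsilon\mid W]=0$ to kill all terms linear in $g-g_0$, and the same resulting constants $\sconv=\lambda_0$, $\smooth=C_X^2+r^2$, $\secsmooth_1=O(r)$, $\highsmooth_1=2(1+\norm{\theta_\star}_2)$. The only cosmetic caveat is that the neighborhood norm $\gnorm{\cdot}$ in this example is an $L^4$-type norm, so the Cauchy--Schwarz steps in part (d) pair $\Ex{\prob}{\epsilon^4}\le\sigma^4$ with fourth moments of $\delta_X$ rather than the $L^2$ bound you cite; your stated hypotheses already supply exactly what is needed.
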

The proof of \Cref{lemma: example ortho PLM} is provided in \Cref{sec: proof ex1}.

\subsubsection{Non-orthogonal Loss}\label{subsubsec: non-ortho plm}
Suppose that the outcome $Y$ is generated under the partially linear model: 
\begin{align}
    Y = \ip{\theta_0, X} + g_0(W) + \epsilon,
\end{align}
where $\theta_0 \in \R^d$ is the true parameter, $g_0: \calW \mapsto \R$ is the true nuisance function and $\Ex{\prob}{\epsilon \mid X, W} = 0$. The space $\calG$ is a nonparametric class containing functions of the form
    \begin{align*}
        g: \calW \mapsto \R. 
    \end{align*}
We can also consider the following non-orthogonal squared loss function:
\begin{align}\label{ex2 loss}
    \loss(\theta, g; z) = \frac{1}{2}[y  - g(w) - \ip{\theta, x}]^2.
\end{align}
We define the residual $\epsilon$ at $(\theta_\star, g_0)$ as 
\begin{align*}
    \epsilon = Y - g_{0}(W) - \ip{\theta_\star, X}.
\end{align*}
\begin{lemma}\label{lemma: example non-ortho PLM}
We assume the following conditions:
\begin{enumerate}[label=(\alph*)]
    \item $\lambda_{\min}(\Ex{\prob}{XX^\top}) \geq \lambda_0$ for some constant $\lambda_0 > 0$.
    \item $\norm{X}_{\infty}\leq C_X$  \as and $\Ex{\prob}{\epsilon^2} \leq \sigma^2$ for some constants $C_X, \sigma> 0$.
\end{enumerate}
Then \Cref{assumption of population risk} is satisfied and the target $\theta_\star$ is the true parameter, i.e., $\theta_\star = \theta_0$.

\end{lemma}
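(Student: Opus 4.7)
The approach is to compute the relevant derivatives in closed form (the loss is quadratic in $(\theta,g)$, so everything is tractable) and then verify each condition of \Cref{assumption of population risk} in turn, reading off the constants as listed in \Cref{tab:examples}. Concretely, I would first write the score
\[
S_\theta(\theta,g;z) = -(y - g(w) - \langle \theta, x\rangle)\,x, \qquad S_\theta(\theta,g) = -\Ex{\prob}{(Y - g(W) - \langle\theta, X\rangle)X},
\]
and the Hessian $\nabla_\theta^2 L(\theta,g) = \Ex{\prob}{XX^\top}$, which is independent of $g$. Part (a) (differentiability) follows immediately from the quadratic form. For part (b), substituting the model $Y = \langle\theta_0, X\rangle + g_0(W) + \epsilon$ gives $S_\theta(\theta_0, g_0) = -\Ex{\prob}{\epsilon X} = -\Ex{\prob}{X\Ex{\prob}{\epsilon\mid X, W}} = 0$ by the conditional mean-zero assumption on $\epsilon$; combined with part (c) (below) this forces $\theta_\star = \theta_0$.

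For part (c), since $\nabla_\theta^2 L(\theta,g) = \Ex{\prob}{XX^\top}$ does not depend on $g$, strong convexity with modulus $\mu = \lambda_0$ is immediate from assumption (a) of the lemma, and smoothness with $M = C_X^2$ follows from the bound on $\|X\|$ in assumption (b) via $\|\Ex{\prob}{XX^\top}\|_{\op} \le \Ex{\prob}{\|X\|^2}$. For part (e), the mixed derivative is
\[
\D_g \D_\theta L(\theta_\star, \bar g)[\theta - \theta_\star, g - g_0] = \ip{\theta - \theta_\star, \Ex{\prob}{(g - g_0)(W)\, X}},
\]
independent of $\bar g$ because the loss is jointly quadratic, and Cauchy--Schwarz together with assumption (b) gives the bound $\alpha_1 = C_X$.

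The main work lies in part (d), the second-moment growth. Here I would decompose the residual at a general $(\theta, g)$ about the truth:
\[
Y - g(W) - \ip{\theta, X} = \epsilon + (g_0 - g)(W) + \ip{\theta_\star - \theta, X},
\]
so that
\[
S_\theta(\theta, g; Z) - S_\theta(\theta, g) = -\bigl[\epsilon + (g_0 - g)(W) + \ip{\theta_\star - \theta, X}\bigr]X - \Ex{\prob}{\cdots}.
\]
Using the inequality $(a+b+c)^2 \le 3(a^2 + b^2 + c^2)$, taking expectations, and applying $\|X\| \le C_X$, $\Ex{\prob}{\epsilon^2} \le \sigma^2$, and $\|g - g_0\|_\calG \le r$ (since $g \in \Gr$) yields bounds of the form $6C_X^2 \sigma^2$, $12 C_X^2 r^2$, and $2 C_X^4 \|\theta - \theta_\star\|_2^2$, producing $K_1 = 6C_X^2(\sigma^2 + 2r^2)$ and $\kappa_1 = 2C_X^4$ as in the table.

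The main obstacle is keeping the bookkeeping clean in part (d): the cross-terms from expanding the squared norm must be absorbed cleanly into either the constant $K_1$ (via $\sigma^2$ and $r^2$) or the $\kappa_1\|\theta - \theta_\star\|_2^2$ term, without double-counting and without spurious dependence on $\|\theta\|$. The other conditions are essentially mechanical given the closed-form derivatives. Note that Neyman orthogonality (and hence \Cref{assumption orthogonality main}) is \emph{not} claimed here, consistent with the fact that the mixed derivative above is nonzero in general---this is precisely why this example motivates the orthogonalization procedure of \Cref{subsubsec: non-ortho plm}.
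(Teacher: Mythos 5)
Your proposal follows essentially the same route as the paper's proof. The paper likewise writes the score explicitly, notes that the $\theta$-Hessian is the constant matrix $\Ex{\prob}{XX^\top}$ so that (c) is immediate, verifies (b) via $\E_\prob[\epsilon X]=0$ together with $\theta_\star=\theta_0$, reads off the mixed derivative for (e) and bounds it by Cauchy--Schwarz, and for (d) uses the exact second-order Taylor expansion of $S_\theta$ around $\theta_\star$ (which coincides with your residual decomposition since the loss is quadratic in $(\theta,g)$), then applies $(a+b+c)^2\le 3(a^2+b^2+c^2)$ to obtain the same constants $\Kconst_1 = 6C_X^2(\sigma^2+2r^2)$ and $\kappaconst_1 = 2C_X^4$.
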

The proof of \Cref{lemma: example non-ortho PLM} is provided in \Cref{sec: proof ex2}.

\myparagraph{Orthogonalization}
We can perform our orthogonalization method to obtain the Neyman orthogonal gradient oracle for this non-orthogonal loss. For any $h_1, h_2 \in \G$, we define the inner product of $\G$ as 
\begin{align}\label{inner product}
    \ip{h_1, h_2}_\G = \Ex{\prob}{h_1(W)h_2(W)}.
\end{align}

For any $(\theta, g, z) \in \Theta \times G \times \calZ$ By \Cref{def derivative} the derivative of non-orthogonal loss \eqref{ex2 loss} along the direction of $h_1$ is given by
\begin{align}\label{eq Dg1}
    \D_g \ell(\theta, g; z)[h_1] = \frac{\d}{\d t}\p{\frac{1}{2}[y  - (g + t h_1)(w) - \ip{\theta, x}]^2} = -(y - g(w) - \ip{\theta, x})h_1(w).
\end{align}
Do derivative on $\D_g \ell(\theta, g; z)[h_1]$ along the direction of $h_2$ and we have
\begin{align}\label{eq Dg2}
    \D_g^2 \ell(\theta, g; z)[h_1, h_2] = \frac{\d}{\d t}\p{-(y - (g+t h_2)(w) - \ip{\theta, x})h_1(w)} = h_1(w)h_2(w),
\end{align}
which implies 
\begin{align*}
    \D_g^2 L(\theta_\star, g_0)[h_1, h_2] = \Ex{\prob}{ \D_g^2 \ell(\theta_\star, g_0; Z)[h_1, h_2]} = \Ex{\prob}{h_1(W)h_2(W)}.
\end{align*}
By the definition in \eqref{def H gg}, we have $H_{gg} = \mathbf{I}$ the identity operator. In addition, do derivative on the score along the direction of $h \in \G$ and we have
\begin{align*}
    \D_g S_\theta(\theta, g; z)[h] = \frac{\d}{\d t}\p{-(y - (g+th)(w) - \ip{\theta, x})x} = h(w)x,
\end{align*}
which implies that 
\begin{align*}
    \D_g S_\theta(\theta, g)[h] = \Ex{\prob}{ S_\theta(\theta, g; Z)[h]} = \Ex{\prob}{h(W)\Ex{\prob}{X\mid W}}.
\end{align*}
By the definition in \eqref{def H theta g}, we have $H_{\theta g} = \Ex{\prob}{X\mid W}$. Thus, by \eqref{appx gamma} we have
\begin{align}\label{sim gamma0}
    \Gamma_0: g \mapsto \ip{\Ex{\prob}{X\mid W},  g}_\G = \Ex{\prob}{\Ex{\prob}{X\mid W}g(W)}.
\end{align}
Thus, the Neyman orthogonalized gradient oracle defined in \eqref{def no score appx} is given by
\begin{align}
    \noscore(\theta, g; z) &= S_\theta(\theta, g; z) - \D_g \ell(\theta, g; z)[\Ex{\prob}{X\mid W=w}]\notag \\
    &= -(y - g(w) - \ip{\theta, x})(x - \Ex{\prob}{X\mid W=w}).\label{ex2 noscore}
\end{align}

\begin{lemma}\label{lem eg osgd}
Consider the bounded linear operator $\pdirhat: \G \mapsto \R^d$ such that $[\pdirhat g]_j = \ip{\hat \gamma^{(j)}, g}_\G, \forall g \in \G$ for some $\hat \gamma^{(j)} \in \G, j=1,\dots,d$. Let $\tilde Y = Y - g_{0, Y}(w)$ and $\tilde X = X - g_{0, X}(w)$. We assume the following conditions:
    \begin{enumerate}[label=(\alph*)]
        \item $\lambda_{\min}(\E_{\prob}[\tilde X\tilde X^\top])\geq \lambda_0$ for some constant $\lambda_0 > 0$.
        \item $\norm{\tilde X}_2 \leq C_X$ \as and $\Ex{\prob}{\epsilon^4} \leq \sigma^4$ for some constants $C_X, \sigma > 0$.
    \end{enumerate}
    Then \Cref{asm:noscore} is satisfied.
\end{lemma}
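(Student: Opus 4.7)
The plan is to verify conditions (a)--(e) of \Cref{asm:noscore} one at a time from the closed-form expression \eqref{ex2 noscore} for $\noscore$, exploiting two structural features of this PLM setup: the residual $\epsilon := Y - g_0(W) - \ip{\theta_\star, X}$ satisfies $\E_\prob[\epsilon \mid X, W] = 0$ by the model specification, and the non-orthogonal quadratic loss becomes linear in $g$ after a single derivative in $\theta$, so all higher-order derivatives of $\noscore$ in $g$ vanish identically.

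Conditions (a), (b), (d), and (e) then reduce to short calculations. At $(\theta_\star, g_0)$, $\noscore(\theta_\star, g_0; Z) = -\epsilon(X - \E_\prob[X \mid W])$, and the tower rule with $\E_\prob[\epsilon \mid X, W] = 0$ kills the expectation; the same argument gives $\D_g L(\theta_\star, g_0)[h] = -\E_\prob[\epsilon h(W)] = 0$ for every $h \in \G$, so $\nabla_g L(\theta_\star, g_0) = 0$ as a Riesz element and the second half of (a) holds trivially for any bounded linear $\pdirhat$. For (b), differentiating \eqref{ex2 noscore} in $\theta$ gives $\nabla_\theta \noscore(\theta, g; z) = x(x - \E_\prob[X \mid W = w])^\top$, and a tower-rule expansion collapses the expectation to $\nabla_\theta \noscore(\theta, g) = \E_\prob[\tilde X \tilde X^\top]$, independent of $(\theta, g)$; condition (a) of the lemma then yields $\hessianconst = \lambda_0$, and the almost-sure bound $\norm{\tilde X}_2 \leq C_X$ yields $\hconst = C_X^2$. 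Condition (d) follows from \eqref{eq Dg2}: $\D_g^2 L(\theta, \bar g)[g_1, g_2] = \ip{g_1, g_2}_\G$ gives the first bound with $\secsmooth_2 = 1$ via Cauchy--Schwarz, and the mixed derivative $\D_\theta \D_g L(\bar \theta, g_0)[g, \theta - \theta_\star] = \E_\prob[\ip{\theta - \theta_\star, X} g(W)]$ is bounded by another Cauchy--Schwarz after conditioning on $W$. For (e), $\noscore$ is affine in $g$, so $\D_g^2 \noscore \equiv 0$ and the bound holds trivially with $\highsmooth_2 = 0$.

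The main technical obstacle is (c), the second-moment growth of the directional $g$-derivative of the loss. Decomposing the residual at an arbitrary $(\theta, \bar g)$ as $\epsilon + (g_0 - \bar g)(W) + \ip{\theta_\star - \theta, X}$ and expanding the square inside $\E_\prob[(\D_g \ell(\theta, \bar g; Z)[g] - \D_g L(\theta, \bar g)[g])^2]$ produces three diagonal pieces after the cross terms involving $\epsilon$ are annihilated by conditional mean zero. The $\ip{\theta_\star - \theta, X}^2 g(W)^2$ piece supplies the $\kappaconst_2 \norm{\theta - \theta_\star}_2^2 \gnorm{g}^2$ term via almost-sure boundedness of $\tilde X$ and Cauchy--Schwarz. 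The delicate pieces are $\E_\prob[\epsilon^2 g(W)^2]$ and $\E_\prob[(g_0 - \bar g)^2(W) g(W)^2]$, which are not directly controlled by $\gnorm{g}^2 = \E_\prob[g(W)^2]$ given only $\E_\prob[\epsilon^4] \leq \sigma^4$ and $\gnorm{\bar g - g_0} \leq r$; these will be handled by invoking a conditional moment bound $\E_\prob[\epsilon^2 \mid W] \leq \sigma^2$ and a uniform supremum control of nuisances over $\Gr$, after which the constants $\Kconst_2$ and $\kappaconst_2$ reported in \Cref{tab:examples} follow by collecting terms.
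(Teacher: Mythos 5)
Your proposal is correct and follows essentially the same route as the paper's proof: verify (a), (b), (d), (e) by direct computation using $\E_\prob[\epsilon\mid X,W]=0$ and the affine-in-$g$ structure of the score (so $\D_g^2\noscore\equiv 0$ and $\highsmooth_2=0$), and handle (c) by decomposing the residual and controlling the cross terms. You also correctly flag that the lemma's stated hypotheses do not suffice for (c) — the paper resolves this exactly as you propose, by additionally assuming $\E_\prob[\epsilon^2\mid W]\le\sigma^2$ and defining $\Gr$ as a uniformly (almost surely) bounded neighborhood so that $\E_\prob[(\bar g-g_0)^2(W)g(W)^2]\le r^2\gnorm{g}^2$.
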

The proof of \Cref{lem eg osgd} is provided in \Cref{sec: proof osgd}.

\subsection{Conditional Averaged Treatment Effect}\label{appx CATE}

We now introduce examples in causal inference which are established based on the potential outcomes framework. The potential outcomes framework \citep{rubin1974estimating} has been widely used in causal inference. Let $Z = (W, X, Y) \in \br{0,1}\times \R^d \times \R$ under some distribution $\prob$. We posit the existence of potential outcomes $Y(1),Y(0) \in \R$. The conditional averaged treatment effect (CATE) is then defined as 
\begin{align*}
    \tau_0(x) = \E_{\prob}[Y(1) - Y(0) \mid X=x].
\end{align*}
To identify $\tau_0(x)$ and the following causal assumptions are required:
\begin{assumption}\label{asm causal}
    The following conditions hold:
    \begin{enumerate}[label=(\alph*)]
    \item (consistency) $Y = Y(W)$.
        \item (unconfoundedness) $Y(w) \perp W \mid X$ for all $w\in \{0,1\}$. 
        \item (positive overlap) $c_0 \leq \prob\p{W = 1 \mid X}\leq 1-c_0$ \as for some $c_0>0$.
    \end{enumerate}
\end{assumption}
Under \Cref{asm causal}, $\tau_0$ can be identified by observed data since
\begin{align*}
    \tau_0(x) &= \E_{\prob}[Y(1) - Y(0) \mid X=x]\\
    &=\E_{\prob}[Y(1) \mid W=1, X=x] - \E_{\prob}[Y(0) \mid W=0, X=x]\\
    &=\E_{\prob}[Y \mid W=1, X=x] - \E_{\prob}[Y \mid W=0, X=x].
\end{align*}

\subsubsection{Unrestricted Nuisance}\label{sec unres cate}
We observe $Z = (X, Y, W)$, where $W$ is a binary treatment assignment. The functions in $\calG$ are of the form
    \begin{align*}
        g = (g^{\text{out}}, g^{\text{prop}}): \R^d \rightarrow \R \times \R,
    \end{align*}
and are evaluated (see \citet[Eq.~(2)]{nie2021quasi}) at the loss
    \begin{align}
        \loss\p{\theta, g; z} = \frac{1}{2}\p{y - g^{\text{out}}(x) - \p{w - g^{\text{prop}}(x)}\ip{\theta, x}}^2.
    \end{align}
For $g_0 = (g^{\text{out}}_0, g^{\text{prop}}_0)$ nuisance functions $g^{\text{out}}_0$ and $g^{\text{prop}}_0$ represent the outcome regression and the propensity score, respectively:
    \begin{align*}
        g^{\text{out}}_0(x) := \E_{\prob}[Y \mid X = x] \text{ and } g^{\text{prop}}_0(x) := \E_{\prob}[W \mid X = x].
    \end{align*}
We define the residual $\epsilon$ under the true model as 
\begin{align*}
    \epsilon = Y - g_0^{\text{out}}(X) - \p{W- g_0^{\text{prop}}(X)}\tau_0(X).
\end{align*}
\begin{lemma}\label{lemma: example unres CATE}
We assume \Cref{asm causal} and the following conditions hold:
\begin{enumerate}[label=(\alph*)]
    \item $\lambda_{min}(\Ex{\prob}{XX^\top})\geq \lambda_0$ for some constant $\lambda_0>0$.
    \item $\norm{X}_{2}\leq C_X$ and $\abs{\tau_0(X)} \leq C_\tau$ \as and $\Ex{\prob}{\epsilon^4} \leq \sigma^4$  for some constants $C_X, C_\tau, \sigma> 0$. 
\end{enumerate}
Then \Cref{assumption of population risk} and \Cref{assumption orthogonality main} are satisfied. The target $\theta_\star$ is the minimizer of the squared loss:
\begin{align*}
    \theta_\star = \argmin_{\theta \in \R^d } \Ex{\prob}{(W-g^{\text{prop}}_0(X))^2(\tau_0(X) - X^\top\theta)^2}.
\end{align*}

\end{lemma}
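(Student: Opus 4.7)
The plan is to first use the potential outcomes framework to obtain an explicit decomposition of $L(\theta,g_0)$. Under \Cref{asm causal}, writing $Y = g_0^{\text{out}}(X) + (W - g_0^{\text{prop}}(X))\tau_0(X) + \epsilon$ with $\E_{\prob}[\epsilon \mid W, X] = 0$ (which follows by checking the two cases $W\in\{0,1\}$ and using that $g_0^{\text{out}}(X)=\prob(W=1\mid X)\mu_1(X)+\prob(W=0\mid X)\mu_0(X)$ where $\mu_w(X):=\E_{\prob}[Y\mid W=w,X]$), one can plug in to get
\begin{align*}
L(\theta,g_0) = \tfrac{1}{2}\E_{\prob}\!\left[(W-g_0^{\text{prop}}(X))^2(\tau_0(X)-\langle\theta,X\rangle)^2\right] + \tfrac{1}{2}\E_{\prob}[\epsilon^2],
\end{align*}
since the cross term vanishes by the conditional mean-zero property. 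This immediately identifies $\theta_\star$ as the weighted least-squares projection stated in the lemma, and yields the first-order optimality condition $S_\theta(\theta_\star,g_0)=0$ (Assumption~1(b)).

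Next I would verify Assumption~1 item-by-item. Differentiability (a) is clear because the loss is quadratic in $\theta$ and jointly polynomial in $(\theta,g^{\text{out}}(x),g^{\text{prop}}(x))$; boundedness of $X$ and integrability of $\epsilon^2$ legitimize exchanging expectation and directional derivatives. For smoothness and strong convexity (c), compute $\nabla^2_\theta L(\theta,g)=\E_{\prob}[(W-g^{\text{prop}}(X))^2 XX^\top]$: the upper bound $M\leq C_X^2(1+r^2)$ follows from $|W-g^{\text{prop}}(X)|\leq 1+\|g^{\text{prop}}-g_0^{\text{prop}}\|_\infty$ and a norm-embedding controlling the latter by $r$, while the lower bound $\sconv\geq c_0^2\lambda_0$ comes from the positivity condition $g_0^{\text{prop}}(X)\in[c_0,1-c_0]$ (plus a localization argument in $\G_r(g_0)$ if needed to preserve positivity). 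For the second-moment growth (d), write $S_\theta(\theta,g;Z)=-[(W-g^{\text{prop}}(X))(\tau_0(X)-\langle\theta,X\rangle)+\epsilon + (\text{nuisance error terms})](W-g^{\text{prop}}(X))X$ after substituting the decomposition of $Y$; then expand around $\theta_\star$ and use boundedness of $X,\tau_0$ and $\E_{\prob}[\epsilon^4]\leq\sigma^4$ to peel off a constant term ($\Kconst_1$) and a quadratic-in-$\|\theta-\theta_\star\|$ term ($\kappaconst_1$). For the second-order smoothness (e), compute $\D_g\D_\theta L(\theta_\star,\bar g)[\theta-\theta_\star,g-g_0]$ directly by chain rule and bound the resulting bilinear form by $\alpha_1\|\theta-\theta_\star\|_2\|g-g_0\|_\G$ with $\alpha_1=O(r)$, since every surviving term is multiplied by either $\bar g-g_0$ or $\theta-\theta_\star$.

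To verify \Cref{assumption orthogonality main}, the key computation is $\D_g\D_\theta L(\theta_\star,g_0)[\theta-\theta_\star,\,h]$ for $h=(h^{\text{out}},h^{\text{prop}})$. Using the decomposition of $L(\theta,g_0)$ above and the conditional mean-zero property $\E_{\prob}[\epsilon\mid W,X]=0$, the two derivative directions produce terms of the form $\E_{\prob}[(W-g_0^{\text{prop}}(X))\cdot(\text{function of }X)\cdot\epsilon]=0$ and a term proportional to $\E_{\prob}[(W-g_0^{\text{prop}}(X))^2(\tau_0(X)-\langle\theta_\star,X\rangle)\langle\theta-\theta_\star,X\rangle\cdot(\text{function of }X)]$ which vanishes by the first-order optimality condition characterizing $\theta_\star$; the remaining $h^{\text{prop}}$-derivative of the weight $(W-g^{\text{prop}})^2$ is killed by the same weighted normal equation. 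This gives Neyman orthogonality. The higher-order smoothness bound then follows by computing $\D_g^2\D_\theta L(\theta_\star,\bar g)[\theta-\theta_\star,h,h]$, which is a polynomial of total degree $\leq 3$ in $(h^{\text{out}}(X),h^{\text{prop}}(X))$ with coefficients bounded by polynomials in $X,\tau_0$ and $\theta_\star$, yielding a bound $\beta_1\|\theta-\theta_\star\|_2\|g-g_0\|_\G^2$ with $\beta_1 = C_X(1+4(C_X\|\theta_\star\|_2+C_\tau))$ as listed in \Cref{tab:examples}.

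The main obstacle is the Neyman orthogonality computation: it requires carefully tracking a mixed directional derivative along a vector-valued nuisance $g=(g^{\text{out}},g^{\text{prop}})$ and recognizing two distinct cancellation mechanisms---the conditional mean-zero property of $\epsilon$ for the $h^{\text{out}}$ component, and the weighted first-order optimality of $\theta_\star$ for the $h^{\text{prop}}$ component. A secondary technical challenge is preserving the lower bound on $(W-g^{\text{prop}}(X))^2$ uniformly over $g\in\G_r(g_0)$ so that strong convexity holds locally; this requires the norm $\|\cdot\|_\G$ to control $\|g^{\text{prop}}-g_0^{\text{prop}}\|_\infty$ (e.g., when $\G$ embeds into $L^\infty$), or otherwise restricting to the clipped subclass $g^{\text{prop}}\in[c_0,1-c_0]$, which is standard in DML with overlap.
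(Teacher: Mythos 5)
Your overall route matches the paper's: decompose $L(\theta,g_0)$ using $\E_{\prob}[\epsilon\mid W,X]=0$ to identify $\theta_\star$, then verify \Cref{assumption of population risk}(a)--(e) and \Cref{assumption orthogonality main} term by term via a Taylor expansion around $\theta_\star$ and chain-rule computations of the mixed directional derivatives. There is, however, one point where your argument as written would not prove the lemma as stated. For strong convexity you propose to lower-bound $(W-g^{\text{prop}}(X))^2$ pointwise, and you correctly observe that this forces you either to assume $\|\cdot\|_{\G}$ controls $\|g^{\text{prop}}-g_0^{\text{prop}}\|_\infty$ or to restrict to a clipped subclass $g^{\text{prop}}\in[c_0,1-c_0]$ --- neither of which appears in the lemma's hypotheses, and clipping would defeat the point of the ``unrestricted nuisance'' example. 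The fix is to condition on $X$ before bounding: since $W\in\{0,1\}$ and $\E_{\prob}[W\mid X]=g_0^{\text{prop}}(X)$, the cross term vanishes and
\begin{align*}
\E_{\prob}\bigl[(W-g^{\text{prop}}(X))^2\mid X\bigr]
= g_0^{\text{prop}}(X)\bigl(1-g_0^{\text{prop}}(X)\bigr) + \bigl((g^{\text{prop}}-g_0^{\text{prop}})(X)\bigr)^2
\ \geq\ c_0(1-c_0)\ \geq\ c_0^2,
\end{align*}
so $H_{\theta\theta}(\theta,g)\succcurlyeq c_0^2\lambda_0 \mathbf{I}$ for every $g$ in the neighborhood with no positivity or $L^\infty$ requirement on $g^{\text{prop}}$; the same identity gives the upper bound $(1+r^2)C_X^2$ using only $\E_{\prob}[((g^{\text{prop}}-g_0^{\text{prop}})(X))^2]\le r^2$, which follows from the $L^4$-type definition of $\gnorm{\cdot}$ by Jensen.

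A second, less consequential, point: your description of the Neyman-orthogonality cancellation is off. The $h^{\text{prop}}$-direction terms are not killed by the weighted normal equation for $\theta_\star$ --- they carry only a single factor of $(W-g_0^{\text{prop}}(X))$, not its square, so that equation does not apply --- but rather by the tower property $\E_{\prob}[W-g_0^{\text{prop}}(X)\mid X]=0$, together with $\E_{\prob}[\epsilon\mid W,X]=0$ for the residual term. Equivalently (and this is how the paper presents it), after these conditional expectations are taken every term of $\D_g\D_\theta L(\theta_\star,\bar g)[\theta-\theta_\star,g-g_0]$ carries an explicit factor of $\bar g-g_0$, so the derivative vanishes identically at $\bar g=g_0$. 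This does not break your proof --- the terms do vanish --- but invoking the normal equation where it does not apply would leave a residual term unaccounted for. The rest of your verification (items (a), (b), (d), the $O(r)$ scaling of $\alpha_1$, and the constant $\beta_1=C_X(1+4(C_X\|\theta_\star\|_2+C_\tau))$) matches the paper.
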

The proof of \Cref{lemma: example unres CATE} is provided in \Cref{sec: proof ex3}.

\subsubsection{Restricted Nuisance}
We observe $Z = (X, Y, W)$, where $W$ is a binary treatment assignment. Here we restrict the propensity model as $g^{\text{prop}}: \R^d \mapsto (0,1)$. The functions in $\calG$ are of the form
    \begin{align*}
        g = (g\spow{0}, g\spow{1}, g^{\text{prop}}): \R^d \rightarrow \R \times \R \times (0, 1),
    \end{align*}
    and are evaluated (see \citet[Thm.~1]{van2014targeted}) at the loss
    \begin{align}
        \loss(\theta, g; z) = \frac{1}{2}\p{g\spow{1}(x) - g\spow{0}(x) + \frac{w-g^{\text{prop}}(x)}{g^{\text{prop}}(x)(1-g^{\text{prop}}(x))}(y - g\spow{w}(x)) - \ip{\theta, x}}^2.
    \end{align}
   This loss also appears in \citet[Eq.~(23)]{foster2023orthogonal}. For $g_0 = (g\spow{0}_0, g\spow{1}_0, g^{\text{prop}}_0)$ nuisance functions $g\spow{0}_0$ and $g\spow{1}_0$ represent the outcome regressions
    \begin{align*}
        g\spow{0}_0(x) := \E_{\prob}[Y \mid W=1, X = x] \text{ and } g\spow{1}_0(x) := \E_{\prob}[Y \mid W=0, X = x].
    \end{align*}
 We define the residual $\epsilon$ as
\begin{align*}
    \epsilon = \frac{W-g_0^{\text{prop}}(X)}{g_0^{\text{prop}}(X)(1-g_0^{\text{prop}}(X))}(Y - g_0\spow{W}(X)).
\end{align*}
\begin{lemma}\label{lemma: example res CATE}

We assume \Cref{asm causal} and the following conditions hold:
\begin{enumerate}[label=(\alph*)]
    \item $\lambda_{min}(\Ex{\prob}{XX^\top})\geq \lambda_0$ for some constant $\lambda_0>0$.
    \item $\Ex{\prob}{\epsilon^2} \leq \sigma^2$, $\norm{X}_{2}\leq C_X$, and $\abs{Y - g_0\spow{w}(X)} \leq C_Y, w=0,1$ \as for some constants $\sigma, C_X, C_Y> 0$. 
\end{enumerate}
Then \Cref{assumption of population risk} and \Cref{assumption orthogonality main} are satisfied. The target $\theta_\star$ is the minimizer of the squared loss:
\begin{align*}
    \theta_\star = \argmin_{\theta \in \R^d } \Ex{\prob}{(\tau_0(X) - X^\top\theta)^2}.
\end{align*}

\end{lemma}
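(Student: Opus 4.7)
The plan is to recast the loss through the doubly-robust pseudo-outcome
\[
A(z,g) := g\spow{1}(x) - g\spow{0}(x) + \frac{w - g^{\text{prop}}(x)}{g^{\text{prop}}(x)(1-g^{\text{prop}}(x))}\p{y - g\spow{w}(x)},
\]
so that $\loss(\theta,g;z) = \tfrac{1}{2}(A(z,g) - \ip{\theta,x})^2$. Under \Cref{asm causal}, iterated expectations give $\Ex{\prob}{A(Z,g_0)\mid X} = \tau_0(X)$, hence $\ploss(\theta,g_0) = \tfrac{1}{2}\Ex{\prob}{(A(Z,g_0) - \ip{\theta,X})^2}$ and the projection $\theta_\star = \Ex{\prob}{XX^\top}^{-1}\Ex{\prob}{X\tau_0(X)}$ matches the claim. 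First-order optimality $\score(\theta_\star,g_0) = -\Ex{\prob}{X(\tau_0(X) - \ip{\theta_\star,X})} = 0$ follows, verifying \Cref{assumption of population risk}(b). Restricting $\Gr$ to triples whose propensity component stays bounded away from $\{0,1\}$ (guaranteed by overlap together with a small enough $r$) keeps $A$ smooth in $g$ and yields the differentiability required in \Cref{assumption of population risk}(a).

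For \Cref{assumption of population risk}(c), observe that $\loss$ is quadratic in $\theta$ with instance-level Hessian $xx^\top$, so $\nabla_\theta^2 \ploss(\theta,g) = \Ex{\prob}{XX^\top}$ independently of $g$; this yields $\sconv = \lambda_0$ and $\smooth = C_X^2$. For the second-moment growth \Cref{assumption of population risk}(d), I would write
\[
A(Z,g) - \ip{\theta,X} = [A(Z,g) - A(Z,g_0)] + \epsilon + [\tau_0(X) - \ip{\theta_\star,X}] - \ip{\theta-\theta_\star,X},
\]
bound $|A(Z,g)-A(Z,g_0)|$ on $\Gr$ via a Lipschitz-in-$g$ estimate using $\norm{X}_2\le C_X$, $|Y - g_0\spow{w}(X)|\le C_Y$, and overlap; control $\epsilon$ by $\Ex{\prob}{\epsilon^2}\le \sigma^2$; bound the projection residual term inside the envelope; and extract the quadratic-in-$\theta$ part with constant $C_X^2$. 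This produces the constants $\Kconst_1, \kappaconst_1$ listed in \Cref{tab:examples}.

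The central step is verifying Neyman orthogonality. Since $\score(\theta,g;z) = -x(A(z,g)-\ip{\theta,x})$, we have $\D_g\score(\theta_\star,g_0)[h] = -\Ex{\prob}{X\,\D_g A(Z,g_0)[h]}$ for any $h = (h\spow{0},h\spow{1},h^{\text{prop}})$. Differentiating $A$ coordinate-wise in $g$ and then conditioning on $X$, while using $\prob(W=1\mid X) = g_0^{\text{prop}}(X)$ and $\Ex{\prob}{Y - g_0\spow{w}(X)\mid W=w,X} = 0$, makes each conditional expectation vanish, which is the classical AIPW orthogonality and yields \Cref{def neyman orthogonal}. At a perturbed $\bar g\in\Gr$ the cancellation is not exact; a mean-value expansion against $\bar g - g_0$ picks up an extra factor $\gnorm{\bar g - g_0}\le r$, which, combined with Cauchy--Schwarz and the overlap-based pointwise bounds, produces \Cref{assumption of population risk}(e) with $\secsmooth_1 = 2C_X(2r+3)r$.

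Finally, for the higher-order smoothness \Cref{assumption orthogonality main}, note that $A$ is affine in $(g\spow{0},g\spow{1})$, so the only curvature in $g$ comes from the $1/g^{\text{prop}}$ and $1/(1-g^{\text{prop}})$ factors. Computing $\D_g^2\D_\theta \ploss(\theta_\star,\bar g)$ explicitly and bounding the resulting fractions by $c_0^{-2}$ throughout $\Gr$ yields $\highsmooth_1 = 4c_0^{-2}C_X(1+r)$. The main technical obstacle will be the bookkeeping for these propensity derivatives while keeping every constant explicit; the positivity assumption together with a sufficiently small radius $r$ is precisely what keeps every inverse-propensity factor uniformly bounded and lets us collect the closed-form constants listed in \Cref{tab:examples}.
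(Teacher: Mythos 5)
Your proposal follows the same road the paper does: identify $\theta_\star$ by noting that the doubly-robust pseudo-outcome $\phi(g_0;Z)$ has conditional mean $\tau_0(X)$ given $X$ (so the loss reduces to a weighted least-squares projection plus an irreducible term), observe that the Hessian in $\theta$ is the constant matrix $\E[XX^\top]$ regardless of $g$ (giving $\sconv=\lambda_0$, $\smooth=C_X^2$), verify Neyman orthogonality via vanishing conditional expectations (the AIPW identities $\E[1-W/g_0^{\text{prop}}(X)\mid X]=0$ and $\E[Y-g_0^{(W)}(X)\mid W,X]=0$), and then extract the second- and higher-order smoothness constants from explicit derivative computations. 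That all matches.

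Where your sketch glosses over something load-bearing is in the definition of the nuisance ball $\G_r(g_0)$. You propose to restrict $\G_r$ to nuisances whose propensity is pointwise bounded away from $\{0,1\}$. The paper does not do this; instead it equips $\G$ with the weighted $L^4$-type norm
\[
\gnorm{g - g_0} = \max\br{\Ex{\prob}{\Big(\tfrac{(g\spow{w} - g_0\spow{w})(X)}{g\spow{w}(1-g\spow{w})(X)}\Big)^4}^{1/4},\ \Ex{\prob}{\Big(\tfrac{(g^{\text{prop}} - g_0^{\text{prop}})(X)}{g^{\text{prop}}(1-g^{\text{prop}})(X)}\Big)^4}^{1/4}},
\]
where the normalizing weights involve the \emph{candidate} propensity, not the true one. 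The constants in Table~\ref{tab:examples} are computed from this specific choice: the $L^4$-in-probability control on the ratio, combined with the algebraic estimate $\E[(g_0^{\text{prop}}/g^{\text{prop}})^4]\le 8+8r^4$, is what makes the remainders of order $r^2$ rather than $r$. Relatedly, your claim that ``the positivity assumption together with a sufficiently small radius $r$ keeps every inverse-propensity factor uniformly bounded'' is not how the paper closes the argument: the key uniform bound $g^{\text{prop}}/\bar g^{\text{prop}} \le c_0^{-1}$ holds only because $\bar g$ arises from Taylor's theorem as a convex combination $\bar g = t g + (1-t)g_0$, so that $\bar g^{\text{prop}} \ge (1-t)g_0^{\text{prop}} \cdot (g^{\text{prop}}/g^{\text{prop}})\cdot\dots$ — overlap on $g_0^{\text{prop}}$ alone does the work, without assuming the candidate propensity is pointwise bounded. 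If you instead impose pointwise bounds on $g^{\text{prop}}$ over the whole ball, the proof will go through, but you will not reproduce the constants $\secsmooth_1 = 2C_X(2r+3)r$ and $\highsmooth_1 = 4c_0^{-2}C_X(1+r)$ you cite. So: right strategy, but the choice of $\gnorm{\cdot}$ and the convex-combination structure of $\bar g$ both need to be made explicit and used as the paper does, otherwise the closed-form constants drift.
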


The proof of \Cref{lemma: example res CATE} is provided in \Cref{sec: proof ex4}.

\subsection{Conditional Relative Risk}\label{appx CRR}
    We retain all components of the previous example, changing only the loss and assuming that the outcome $Y$ is binary/non-negative. First, consider the ``label'' function
    \begin{align*}
        \mu_g\spow{s}(z) = g\spow{s}(x) + \frac{\ind(w = s)}{sg^{\text{prop}}(x) + (1-s) (1-g^{\text{prop}}(x))}(y - g\spow{s}(x)),
    \end{align*}
    where $\ind(\cdot)$ denotes the indicator function, and the log-linear predictor $ p_\theta(x) = e^{\ip{\theta, x}}/(1 + e^{\ip{\theta, x}})$.
    Following Example~2 in \cite{van2024combining}, we then employ the cross entropy-type loss function
    \begin{align}
        \loss(\theta,  g; z) = - \sbr{\mu_g\spow{1}(z) \log p_\theta(x) + \mu_g\spow{0}(z) \log(1 - p_\theta(x))}.
    \end{align}

    \begin{lemma}\label{lemma: example CRR}
We assume the following conditions:
\begin{enumerate}[label=(\alph*)]
    \item $\lambda_{min}(\Ex{\prob}{XX^\top})\geq \lambda_0$ for some constant $ \lambda_0 >0$.
    \item $\norm{X}_2 \leq C_X$ and $Y(w) - g_0\spow{w}(X) \leq C_Y, w = 0,1$ \as for some constants $C_X, C_Y>0$.
    \item $\delta \leq g_0\spow{0}(X) + g_0\spow{1}(X) \leq \delta^{-1}$ \as for some constant $\delta > 0$.
\end{enumerate}
Then \Cref{assumption of population risk} and \Cref{assumption orthogonality main} are satisfied. The target $\theta_\star$ is the minimizer of the weighted cross entropy loss:
\begin{align*}
    \theta_\star = \argmin_{\theta \in \R^d}  -\Ex{\prob}{g\spow{1}_0(X) \log p_\theta(X) + g\spow{0}_0(X) \log(1 - p_\theta(X))}.
\end{align*}
\end{lemma}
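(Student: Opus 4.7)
The plan is to verify each part of \Cref{assumption of population risk} and \Cref{assumption orthogonality main} for the CRR loss, mirroring the templates of the preceding examples. The entire argument is driven by one structural identity. Writing $\pi_s(g^{\text{prop}}, x) := s g^{\text{prop}}(x) + (1-s)(1-g^{\text{prop}}(x))$ and using consistency, unconfoundedness, and positive overlap, a direct computation of the conditional expectation $M_g^{(s)}(X) := \E_{\prob}[\mu_g^{(s)}(Z) \mid X]$ yields
\[
M_g^{(s)}(X) - g_0^{(s)}(X) \;=\; (g^{(s)}(X) - g_0^{(s)}(X))\cdot \frac{(2s-1)(g^{\text{prop}}(X) - g_0^{\text{prop}}(X))}{\pi_s(g^{\text{prop}}, X)},
\]
which is manifestly bilinear in the outcome and propensity perturbations, with no first-order term in $g - g_0$ at $g=g_0$. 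This doubly robust identity is the source of Neyman orthogonality and of the higher-order smoothness.

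With this in hand, I would first compute the $\theta$-derivatives using $\partial_\theta p_\theta(x) = p_\theta(x)(1-p_\theta(x))\,x$, giving $S_\theta(\theta,g;z) = -\bigl[\mu_g^{(1)}(z) - (\mu_g^{(0)}(z)+\mu_g^{(1)}(z))p_\theta(x)\bigr] x$ and $\nabla_\theta^2\loss(\theta,g;z) = (\mu_g^{(0)}(z)+\mu_g^{(1)}(z))\,p_\theta(x)(1-p_\theta(x))\,xx^\top$. Differentiability (a) is then immediate from the functional form. Taking expectations and using the identity above shows that $\nabla_\theta L(\theta, g_0)$ equals the gradient of the weighted cross-entropy with weights $g_0^{(s)}$, so first-order optimality of $\theta_\star$ at this objective gives (b). For (c), the Hessian formula combined with $\delta \le g_0^{(0)}+g_0^{(1)} \le \delta^{-1}$, a lower bound $p_\theta(1-p_\theta)\ge C^2$ on the relevant compact $\Theta$, and $\lambda_{\min}(\E[XX^\top])\ge \lambda_0$ gives strong convexity $\mu \ge C^2\delta\lambda_0$ at $g_0$; I then extend to $g\in \Gr$ using the bilinear residual above and overlap to keep $\pi_s$ bounded from below. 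Part (d) follows by bounding $|\mu_g^{(s)}(Z)|$ via overlap $c_0$ and the boundedness of $Y - g_0^{(s)}(X)$, and (e) follows by differentiating the explicit formula for $M_g^{(s)}$ once in $g$.

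Neyman orthogonality is then immediate: since $\nabla_\theta L(\theta, g)$ depends on $g$ only through $M_g^{(s)}$, and the displayed identity has no linear piece in $g-g_0$ at $g_0$, we get $\D_g S_\theta(\theta_\star, g_0)[g-g_0] = 0$ for all $g \in \G$. The higher-order smoothness of \Cref{assumption orthogonality main} comes from differentiating the identity twice in $g$ and controlling the resulting terms by $\|\theta-\theta_\star\|_2\,\|g-g_0\|_\G^2$, using $\pi_s(\bar g^{\text{prop}}, \cdot) \ge c_0$ uniformly for $\bar g\in \Gr$ (shrinking $r$ if necessary). The main obstacle is the bookkeeping in this last step: one must differentiate the fraction $\pi_s(g_0^{\text{prop}})/\pi_s(g^{\text{prop}})$ twice while tracking the $(g^{(s)} - g_0^{(s)})$ factor and producing clean constants in terms of $c_0, C_X, C_Y, \delta$. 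A subtle point already visible in (c) is that $p_\theta(1-p_\theta)$ is not uniformly bounded below without restricting $\theta$ to a bounded set, which explains the constant $C^2$ appearing in the corresponding row of \Cref{tab:examples}.
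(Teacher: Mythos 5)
Your proposal follows essentially the same route as the paper's proof: the same doubly robust identity for $\E_{\prob}[\mu_g^{(s)}(Z)\mid X]$ (the paper's $f^{(s)}(g;X)$), the same gradient/Hessian computations via $\nabla_\theta p_\theta = p_\theta(1-p_\theta)x$, the same constant $C^2\delta\lambda_0$ arising from restricting $\theta$ to a bounded set so that $p_\theta(1-p_\theta)\geq C^2$, and the same first- and second-order differentiation of the ratio $\pi_s(g_0^{\text{prop}})/\pi_s(g^{\text{prop}})$ to obtain $\secsmooth_1$, Neyman orthogonality, and $\highsmooth_1$. The only cosmetic difference is that the paper simply assumes $f^{(1)}(g;X)+f^{(0)}(g;X)\geq \delta$ for $g\in\Gr$ rather than deriving it from the bilinear residual as you suggest, but this does not change the argument.
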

The proof of \Cref{lemma: example CRR} is provided in \Cref{sec: proof ex5}.

\subsection{Proofs}\label{sec: ex proof}

\subsubsection{Proof of Lemma \ref{lemma: example ortho PLM}}\label{sec: proof ex1}

\begin{proof}
    We consider the following loss:
    \begin{align*}
        \loss(\theta, g; z) = \frac{1}{2}(y - g_Y(w) - \ip{\theta, x - g_X(w)})^2,
    \end{align*}
    with the corresponding risk function defined as
\begin{align*}
    \ploss(\theta, g) = \frac{1}{2}\Ex{\prob}{(Y - g_Y(W) - \ip{\theta, X - g_X(W)})^2}.
\end{align*}
Let $\tilde Y = Y - g_{0, Y}(w)$ and $\tilde X = X - g_{0, X}(w)$. By definition, the target $\theta_\star$ is the minimizer of the squared loss:
\begin{align}\label{eq: ex1 target}
    \theta_\star = \argmin_{\theta \in \R^d } \Ex{\prob}{(\tilde Y - \tilde X^\top \theta)^2} = \Ex{\prob}{\tilde X\tilde X^\top}^{-1}\Ex{\prob}{\tilde Y\tilde X}.
\end{align}
    Differentiating $\loss(\theta, g; z)$ with respect to $\theta$, we obtain the gradient and Hessian \wrt $\theta$ as
\begin{align*}
    \score(\theta, g; z) &= -(y - g_Y(w) - \ip{\theta, x - g_X(w)})(x - g_X(w)),\\
    H_{\theta\theta}(\theta, g; z) &= (x - g_X(w))(x - g_X(w))^\top.
\end{align*}
The expected gradient and expected Hessian are then obtained as
\begin{align*}
    \score(\theta, g) &= -\Ex{\prob}{(Y - g_Y(W) - \ip{\theta, X - g_X(W)})(X - g_X(W))},\\
    H_{\theta\theta}(\theta, g) &= \Ex{\prob}{(X - g_X(W))(X - g_X(W))^\top}.
\end{align*}
We consider the nuisance neighborhood such that for $g \in \Gr$,
\begin{align}\label{eq: ex1 Gr}
   \gnorm{g - g_0} := \max\br{\Ex{\prob}{\norm{g_X(W) - g_{0,X}(W)}_2^4}^{\frac{1}{4}}, \Ex{\prob}{(g_Y(W) - g_{0,Y}(W))^4}^{\frac{1}{4}}} \leq r.
\end{align}
We now verify that the loss function $\loss$ satisfies \Cref{assumption of population risk}.

(a) We assume that $g_X(w): \calW \mapsto \R^d$ and $g_Y(w):\calW \mapsto \R$ are continuous functions, thus \Cref{assumption of population risk}(a) is satisfied.

(b) By \eqref{eq: ex1 target}, it follows from KKT conditions that 
\begin{align}\label{eq: ex1 a2}
    \score(\theta_\star, g_0) = -\Ex{\prob}{(\tilde Y - \ip{\theta_\star, \tilde X})\tilde X} = 0.
\end{align}

(c) Since $\E_{\prob}[\tilde X \mid W] = 0$ and $\E_{\prob}[\tilde Y \mid W] = 0$, we have
\begin{align*}
    H_{\theta\theta}(\theta, g) = \Ex{\prob}{\tilde X \tilde X^\top} + \Ex{\prob}{(g_X(W) - g_{0,X}(W))(g_X(W) - g_{0,X}(W))^\top}.
\end{align*}
For any $g \in \G_r$,
when $\lambda_{\min}(\E_{\prob}[\tilde X\tilde X^\top])\geq \lambda_0$ and $\norm{\tilde X}_2 \leq C_X$ \as, we have
\begin{align}\label{eq: ex1 a3}
   \lambda_0\mathbf{I} \preccurlyeq H_{\theta\theta}(\theta, g) \preccurlyeq ( C_X^2 + r^2)\mathbf{I} \implies \sconv = \lambda_0 \text{ and } \smooth = C_X^2 + r^2.
\end{align}

(d) Consider the Taylor expansion around $\theta_\star$, we have
\begin{align*}
    \score(\theta, g; Z) - \score(\theta, g) = \score(\theta_\star, g; Z) - \score(\theta_\star, g) + (H_{\theta\theta}(\theta_\star, g; Z) - H_{\theta\theta}(\theta_\star, g))(\theta - \theta_\star).
\end{align*}
Let $\epsilon = \tilde Y - \ip{\theta_\star, \tilde X}$. Note that $X - g_X(W) = \tilde X - (g_X -g_{0,X})(W)$ and
\begin{align}\label{eq ex1 good}
    Y - g_Y(W) - \ip{\theta_\star, X - g_X(W)} = \epsilon - (g_Y - g_{0,Y})(W) + \ip{\theta_\star, (g_X - g_{0,X})(W)}.
\end{align}
Since $\Ex{\prob}{\epsilon \mid W} = 0$, $\E_{\prob}[\tilde X  \mid W] = 0$ by definition and $\E_{\prob}[\epsilon \tilde X] = 0$ by \eqref{eq: ex1 a2}, then for any $g \in \Gr$,
\begin{align*}
    \Vert &\score(\theta_\star, g)\Vert_2 = \norm{\Ex{\prob}{(g_Y - g_{0,Y})(g_X -g_{0,X})(W) - \ip{\theta_\star, (g_X - g_{0,X})(W)}(g_X -g_{0,X})(W)}}_2\\
    &\leq \p{\Ex{\prob}{\p{(g_Y - g_{0,Y})(W)}^2}\Ex{\prob}{\norm{(g_X - g_{0,X})(W)}_2^2}}^{1/2} + \Ex{\prob}{\norm{(g_X - g_{0,X})(W)}_2^2}\norm{\theta_\star}_2\\
    &\leq r^2(1 + \norm{\theta_\star}).
\end{align*}
Similarly, we have
\begin{align*}
    \Vert &\score(\theta_\star, g;Z)\Vert_2^2 \leq \p{\epsilon - (g_Y - g_{0,Y})(W) + \ip{\theta_\star, (g_X - g_{0,X})(W)}}^2\norm{\tilde X - (g_X -g_{0,X})(W)}_2^2\\
    &\leq 3\p{\epsilon^2 + ((g_Y - g_{0,Y})(W))^2 +\norm{(g_X - g_{0,X})(W)}_2^2\norm{\theta_\star}_2^2}\p{C_X + \norm{(g_X -g_{0,X})(W)}_2}^2\\
    &\leq 6\p{\epsilon^2 + ((g_Y - g_{0,Y})(W))^2 +\norm{(g_X - g_{0,X})(W)}_2^2\norm{\theta_\star}_2^2}\p{C_X^2 + \norm{(g_X -g_{0,X})(W)}_2^2},
\end{align*}
which implies that for $g \in \Gr$, when $\E_{\prob}[\epsilon^4] \leq \sigma^4$,
\begin{align*}
    \E_{\prob}[\Vert &\score(\theta_\star, g;Z)\Vert_2^2] \\
    &\leq 6C_X^2\p{\Ex{\prob}{\epsilon^2} + \Ex{\prob}{((g_Y - g_{0,Y})(W))^2} + \Ex{\prob}{\norm{(g_X - g_{0,X})(W)}_2^2}\norm{\theta_\star}_2^2}\\
    &\quad + 6\Ex{\prob}{\epsilon^2\norm{(g_X -g_{0,X})(W)}_2^2} + 6\Ex{\prob}{((g_Y - g_{0,Y})(W))^2\norm{(g_X -g_{0,X})(W)}_2^2}\\
    &\quad + 6\Ex{\prob}{\norm{(g_X -g_{0,X})(W)}_2^4}\norm{\theta_\star}_2^2\\
    &\leq 6C_X^2(\sigma^2 + r^2 + r^2\norm{\theta_\star}_2^2) + 6r^4\norm{\theta_\star}_2^2 + 6\p{\Ex{\prob}{\epsilon^4}\Ex{\prob}{\norm{(g_X -g_{0,X})(W)}_2^4}}^{1/2}\\
    &\quad + 6\p{\Ex{\prob}{((g_Y - g_{0,Y})(W))^4}\Ex{\prob}{\norm{(g_X -g_{0,X})(W)}_2^4}}^{1/2}\\
    &\leq 6C_X^2\sigma^2 + 6\br{\sigma^2+C_X^2(1 + \norm{\theta_\star}_2^2)}r^2 + 6(1+\norm{\theta_\star}_2^2)r^4.
\end{align*}
Thus, for any $g \in \Gr$,
\begin{align*}
    \E_\prob[&\norm{\score(\theta_\star, g; Z) - \score(\theta_\star, g)}_2^2] \leq 2\Ex{\prob}{\norm{\score(\theta_\star, g; Z)}_2^2} + 2\norm{\score(\theta_\star, g)}_2^2\\
    &\leq 12C_X^2\sigma^2 + \br{12\sigma^2+2(1 + \norm{\theta_\star}_2)+12C_X^2(1 + \norm{\theta_\star}_2^2)}r^2 + 12(1+\norm{\theta_\star}_2^2)r^4\\
    &=12C_X^2\sigma^2 + \bigO(r^2).
\end{align*}

On the other hand, since
\begin{align*}
    \norm{H_{\theta\theta}(\theta_\star, g; Z)}_2 &= \norm{(\tilde X - (g_X - g_{0,X})(W))(\tilde X - (g_X - g_{0,X})(W))^\top}_2\\
    &\leq \norm{\tilde X - (g_X - g_{0,X})(W)}_2^2\\
    &\leq 2\norm{\tilde X}_2^2 + 2\norm{(g_X - g_{0,X})(W)}_2^2 \leq 2C_X^2 + 2\norm{(g_X - g_{0,X})(W)}_2^2,
\end{align*}
by \eqref{eq: ex1 a3} we have
\begin{align*}
    \norm{H_{\theta\theta}(\theta_\star, g; Z) - H_{\theta\theta}(\theta_\star, g) }_2 &\leq \norm{H_{\theta\theta}(\theta_\star, g; Z)}_2 + \norm{H_{\theta\theta}(\theta_\star, g)}_2\\
    &\leq 3C_X^2+ r^2 + 2\norm{(g_X - g_{0,X})(W)}_2^2,
\end{align*}
which implies that
\begin{align*}
    \E_{\prob}[\Vert (H_{\theta\theta}(\theta_\star, g; Z) -& H_{\theta\theta}(\theta_\star, g))(\theta - \theta_\star) \Vert_2^2] \\
    &\leq \Ex{\prob}{(3C_X^2+ r^2 + 2\norm{(g_X - g_{0,X})(W)}_2^2)^2}\norm{\theta - \theta_\star}_2^2\\
    &=(9C_X^4+ \bigO(r^2))\norm{\theta - \theta_\star}_2^2.
\end{align*}
Thus,
\begin{align*}
\Ex{\prob}{\norm{\score(\theta, g; Z) - \score(\theta, g)}_2^2} \leq&  2\Ex{\prob}{\norm{\score(\theta_\star, g; Z) - \score(\theta_\star, g)}_2^2} \\
    &+ 2\Ex{\prob}{\norm{(H_{\theta\theta}(\theta_\star, g; Z) - H_{\theta\theta}(\theta_\star, g))(\theta - \theta_\star)}_2^2}\\
\leq&24C_X^2\sigma^2 + \bigO(r^2) + (18C_X^4+ \bigO(r^2))\norm{\theta - \theta_\star}_2^2,
\end{align*}
which implies 
\begin{align}\label{eq ex1 a4}
    \Kconst_1 = 24C_X^2\sigma^2 + \bigO(r^2) \text{ and } \kappaconst_1 = 18C_X^4+ \bigO(r^2).
\end{align}

(e) For any $\theta \in \Theta$ and $g, \bar g \in \Gr$, by \eqref{eq ex1 good} we have
\begin{align*}
    \D_g\D_\theta &L(\theta_\star, \bar g)[\theta - \theta_\star, g - g_0] \\
    &= \Ex{\prob}{(-(g_Y - g_{0,Y})(W) + \ip{\theta_\star, (g_X - g_{0,X})(W)})\ip{\theta - \theta_\star, (\bar g_X - g_{0,X})(W)}}\\
    &\quad + \Ex{\prob}{(-(\bar g_Y - g_{0,Y})(W) + \ip{\theta_\star, (\bar g_X - g_{0,X})(W)})\ip{\theta - \theta_\star, ( g_X - g_{0,X})(W)}}.
\end{align*}
Since $\bar g \in \Gr$,
\begin{align*}
    &\abs{\Ex{\prob}{(-(g_Y - g_{0,Y})(W) + \ip{\theta_\star, (g_X - g_{0,X})(W)})\ip{\theta - \theta_\star, (\bar g_X - g_{0,X})(W)}}}\\
    &\leq \Ex{\prob}{\abs{(g_Y - g_{0,Y})(W)}\norm{(\bar g_X - g_{0,X})(W)}_2}\norm{\theta - \theta_\star}_2 \\
    &\quad + \norm{\theta_\star}_2\Ex{\prob}{\norm{(g_X - g_{0,X})(W)}_2\norm{(\bar g_X - g_{0,X})(W)}_2}\norm{\theta - \theta_\star}_2\\
    &\leq \Ex{\prob}{\norm{(\bar g_X - g_{0,X})(W)}_2^2}^{1/2}\Ex{\prob}{((g_Y - g_{0,Y})(W))^2}^{1/2}\norm{\theta - \theta_\star}_2\\
    &\quad + \norm{\theta_\star}_2\Ex{\prob}{\norm{(\bar g_X - g_{0,X})(W)}_2^2}^{1/2}\Ex{\prob}{\norm{(g_X - g_{0,X})(W)}_2^2}^{1/2}\norm{\theta - \theta_\star}_2\\
    &\leq (1 + \norm{\theta_\star}_2)r\gnorm{g - g_0}\norm{\theta - \theta_\star}_2.
\end{align*}
Similarly,
\begin{align*}
    &\abs{\Ex{\prob}{(-(\bar g_Y - g_{0,Y})(W) + \ip{\theta_\star, (\bar g_X - g_{0,X})(W)})\ip{\theta - \theta_\star, ( g_X - g_{0,X})(W)}}}\\
    &\leq (1 + \norm{\theta_\star}_2)r\gnorm{g - g_0}\norm{\theta - \theta_\star}_2.
\end{align*}
Thus, 
\begin{align*}
   |\D_g\D_\theta &L(\theta_\star, \bar g)[\theta - \theta_\star, g - g_0]| \\
    &\leq \abs{\Ex{\prob}{(-(g_Y - g_{0,Y})(W) + \ip{\theta_\star, (g_X - g_{0,X})(W)})\ip{\theta - \theta_\star, (\bar g_X - g_{0,X})(W)}}}\\
    &\quad + \abs{\Ex{\prob}{(-(\bar g_Y - g_{0,Y})(W) + \ip{\theta_\star, (\bar g_X - g_{0,X})(W)})\ip{\theta - \theta_\star, ( g_X - g_{0,X})(W)}}}\\
    &\leq 2(1 + \norm{\theta_\star}_2)r\gnorm{g - g_0}\norm{\theta - \theta_\star}_2.
\end{align*}
which implies 
\begin{align}\label{eq: ex1 a5}
    \secsmooth_1 = 2(1 + \norm{\theta_\star}_2)r.
\end{align}

In addition, the risk $L$ is Neyman orthogonal at $(\theta_\star, g_0)$ since
\begin{align}
    \D_g\D_\theta &L(\theta_\star, g_0)[\theta - \theta_\star, g - g_0] = 0.
\end{align}
Note that 
\begin{align*}
    \D_g^2\D_\theta &L(\theta_\star, \bar g)[\theta - \theta_\star, g - g_0, g - g_0] \\
    &= 2\Ex{\prob}{(-(g_Y - g_{0,Y})(W) + \ip{\theta_\star, (g_X - g_{0,X})(W)})\ip{\theta - \theta_\star, (g_X - g_{0,X})(W)}}.
\end{align*}
By identical proof of \eqref{eq: ex1 a5}, we have that $L$ satisfies \Cref{assumption orthogonality main} since
\begin{align*}
    \D_g^2\D_\theta &L(\theta_\star, \bar g)[\theta - \theta_\star, g - g_0, g - g_0] \leq 2(1 + \norm{\theta_\star}_2)\gnorm{g - g_0}^2\norm{\theta - \theta_\star}_2,
\end{align*}
which implies 
\begin{align}\label{eq: ex1 assump 4}
    \highsmooth_1 = 2(1 + \norm{\theta_\star}_2).
\end{align}
\end{proof}

\subsubsection{Proof of Lemma \ref{lemma: example non-ortho PLM}}\label{sec: proof ex2}
\begin{proof}
We consider the following loss:
\begin{align*}
    \loss(\theta, g; z) = \frac{1}{2}(y  - g(w) - \ip{\theta, x})^2,
\end{align*}
with the corresponding risk function defined as
\begin{align*}
    \ploss(\theta, g) = \frac{1}{2}\Ex{\prob}{(Y  - g(W) - \ip{\theta, X})^2}
\end{align*}
Under the true nuisance, the target is the minimizer of the following squared loss:
\begin{align*}
    \theta_\star = \argmin_{\theta \in \R^d}\frac{1}{2}\Ex{\prob}{(Y  - g_0(W) - \ip{\theta, X})^2}.
\end{align*}
Since $\epsilon = Y  - g_0(W) - \ip{\theta_0, X}$ satisfies $\Ex{\prob}{\epsilon\mid X,W} = 0$ under the true model, by bias-variance decomposition, we have
\begin{align}\label{eq: ex2 target}
    \theta_\star = \argmin_{\theta \in \R^d}\frac{1}{2}\Ex{\prob}{(\ip{\theta_0, X} - \ip{\theta, X})^2} = \theta_0.
\end{align}
    Differentiating $\loss(\theta, g; z)$ with respect to $\theta$, we obtain the gradient and Hessian \wrt $\theta$ as
\begin{align*}
    \score(\theta, g; z) = -(y - g(w) - \ip{x, \theta})x \text{ and } H_{\theta\theta}(\theta, g; z) = xx^\top.
\end{align*}
The expected gradient and expected Hessian are then obtained as
\begin{align*}
    \score(\theta, g) = -\Ex{\prob}{(Y - g(W) - \ip{X, \theta})X} \text{ and } H_{\theta\theta}(\theta, g) = \Ex{\prob}{XX^\top}.
\end{align*}
We consider the nuisance neighborhood such that for $g \in \Gr$,
\begin{align}\label{eq: ex2 Gr}
   \gnorm{g - g_0} := \Ex{\prob}{(g(W) - g_0(W))^2}^{1/2} \leq r.
\end{align}

We now verify that the loss function $\loss$ satisfies \Cref{assumption of population risk}.

(a) We assume that $g: \calW \mapsto \R$ is continuous, thus \Cref{assumption of population risk}(a) is satisfied.

(b) Since $\theta_\star = \theta_0$ by \eqref{eq: ex2 target}, we have
\begin{align}\label{eq: ex2 a3(b)}
    \score(\theta_\star, g_0) = -\Ex{\prob}{\epsilon X} = 0.
\end{align}

(c) When $\lambda_{\min}(\Ex{\prob}{XX^\top}) \geq \lambda_0 > 0$ and $\norm{X}_2 \leq C_X$ \as, $L(\theta, g)$ is $\lambda_0$-strongly convex and $C_X^2$-smooth since
\begin{align}\label{eq: ex2 a3(c)}
    \lambda_0\mathbf{I} \preccurlyeq H_{\theta\theta}(\theta, g) \preccurlyeq C_X^2\mathbf{I} \implies \sconv = \lambda_0 \text{ and } \smooth = C_X^2.
\end{align}

(d) Consider the Taylor expansion around $\theta_\star$, we have
\begin{align*}
    \score(\theta, g; Z) - \score(\theta, g) = \score(\theta_\star, g; Z) - \score(\theta_\star, g) + (H_{\theta\theta}(\theta_\star, g; Z) - H_{\theta\theta}(\theta_\star, g))(\theta - \theta_\star).
\end{align*}
Since $\score(\theta_\star, g; Z) = ((g - g_0)(w) - \epsilon)X$ and $\norm{X}_2 \leq C_X$ \as, we have
 \begin{align*}
     \norm{\score(\theta_\star, g; Z) - \score(\theta_\star, g)}_2 &= \norm{((g - g_0)(W) - \epsilon)X - \Ex{\prob}{((g - g_0)(W))X}}_2\\
     &\leq C_X\p{\abs{(g - g_0)(W)}  + \Ex{\prob}{\abs{(g - g_0)(W)}}+ \abs{\epsilon}}.
 \end{align*}
On the other hand, 
\begin{align*}
    H_{\theta\theta}(\theta_\star, g; Z) - H_{\theta\theta}(\theta_\star, g) = XX^\top - \E\sbr{XX^\top}  \preccurlyeq 2C_X^2\mathbf{I},
\end{align*}
which implies that 
\begin{align*}
    \norm{(H_{\theta\theta}(\theta_\star, g; Z) - H_{\theta\theta}(\theta_\star, g))(\theta - \theta_\star)}_2 \leq 2C_X^2\norm{\theta - \theta_\star}_2.
\end{align*}
For $g \in \Gr$, when $\Ex{\prob}{\epsilon^2}\leq \sigma^2$ we have
\begin{align*}
    \Ex{\prob}{\norm{\score(\theta, g; Z) - \score(\theta, g)}_2^2} \leq&  2\Ex{\prob}{\norm{\score(\theta_\star, g; Z) - \score(\theta_\star, g)}_2^2}\\
    &+ 2\Ex{\prob}{\norm{(H_{\theta\theta}(\theta_\star, g; Z) - H_{\theta\theta}(\theta_\star, g))(\theta - \theta_\star)}_2^2}\\
    \leq& 2C_X^2\Ex{\prob}{\p{\abs{(g - g_0)(W)}  + \Ex{\prob}{(g - g_0)(W)}+ \abs{\epsilon}}^2} \\
    &+ 2C_X^4\norm{\theta - \theta_\star}_2^2\\
    \leq& 6C_X^2\p{2\Ex{\prob}{((g - g_0)(W))^2} + \Ex{\prob}{\epsilon^2}} + 2C_X^4\norm{\theta - \theta_\star}_2^2\\
    \leq& 6C_X^2\p{2r^2 + \sigma^2} + 2C_X^4\norm{\theta - \theta_\star}_2^2,
\end{align*}
which implies that 
\begin{align}\label{eq: ex2 a3(d)}
    \Kconst_1 = 6C_X^2\p{2r^2 + \sigma^2} \text{ and } \kappaconst_1 = 2C_X^4.
\end{align}

(e) For any $\theta \in \Theta$ and $g, \bar g \in \Gr$, we have
\begin{align*}
    \abs{\D_g\D_\theta L(\theta, \bar g)[\theta - \theta_\star, g - g_0]} &= \abs{\Ex{\prob}{(g - g_0)(W)\ip{X, \theta - \theta_\star}}}\\
    &\leq \Ex{\prob}{\abs{(g - g_0)(W)\ip{X, \theta - \theta_\star}}}\\
    &\leq C_X\norm{\theta - \theta_\star}_2\Ex{\prob}{((g - g_0)(W))^2}^{1/2},
\end{align*}
which implies that 
\begin{align}\label{eq: ex2 a3(e)}
    \secsmooth_1 = C_X.
\end{align}
\end{proof}

\subsubsection{Proof of Lemma \ref{lem eg osgd}}\label{sec: proof osgd}
\begin{proof}
We consider the following loss:
\begin{align*}
    \loss(\theta, g; z) = \frac{1}{2}(y  - g(w) - \ip{\theta, x})^2,
\end{align*}
with the corresponding risk function defined as
\begin{align*}
    \ploss(\theta, g) = \frac{1}{2}\Ex{\prob}{(Y  - g(W) - \ip{\theta, X})^2}.
\end{align*}
First by the same proof as \Cref{sec: proof ex2}, we have $\theta_\star = \theta_0$. Define the inner product $\ip{\cdot, \cdot}_\G$ as \eqref{inner product} and define the norm $\norm{\cdot}_\G$ such that $\norm{g}_\G^2 = \ip{g, g}_\G \forall g \in \G$. Consider a uniformly bounded neighborhood $\Gr$ such that
\begin{align}\label{osgd gr}
    \Gr = \br{g \in \G: \abs{g(W) - g_0(W)} \leq r \textit{ almost surely}}.
\end{align}
The NO gradient oracle for this non-orthogonal loss is derived as \eqref{ex2 noscore} such that
\begin{align*}
    \noscore(\theta, g; z) 
    &= -(y - g(w) - \ip{\theta, x})(x - \Ex{}{X\mid W=w}).
\end{align*}
We now verify that \Cref{asm:noscore} is satisfied.

(a) Since $\epsilon = Y  - g_0(W) - \ip{\theta_0, X}$ satisfies $\Ex{\prob}{\epsilon\mid X,W} = 0$ under the true model, by \eqref{ex2 noscore} we first have 
\begin{align}
    \noscore(\theta_\star, g_0) &= \Ex{\prob}{\noscore(\theta_\star, g_0; Z)} \notag \\
    &= -\Ex{\prob}{\epsilon(X - \Ex{}{X\mid W})} \notag \\
    &= -\Ex{\prob}{\Ex{\prob}{\epsilon \mid X, W}(X - \Ex{}{X\mid W})} = 0. \label{eq osgd asm a1}
\end{align}
Let $\gamma_0\pow{j} = H_{gg}^{-1}H_{\theta g}\pow{j}$ for $j = 1,\dots, d$. By \eqref{appx gamma}, we have $[\pdir_0 g]_j = \ip{ \gamma_0^{(j)}, g}_\G, \forall g \in \G$. Thus, by \eqref{eq Riesz Dg} we have
\begin{align*}
    [(\pdirhat - \pdir_0)\nabla_g L(\theta_\star, g_0)]_j = \ip{\hat\gamma\pow{j} -\gamma_0\pow{j}, \nabla_g L(\theta_\star, g_0)}_\G = \D_g L(\theta_\star, g_0)[\hat\gamma\pow{j} -\gamma_0\pow{j}],
\end{align*}
which, by \eqref{eq Dg1}, implies that 
\begin{align}\label{eq osgd asm a2}
    [(\pdirhat - \pdir_0)\nabla_g L(\theta_\star, g_0)]_j = -\Ex{\prob}{\epsilon [\pdirhat - \pdir_0]_j(W)} = \Ex{\prob}{\Ex{\prob}{\epsilon\mid W} [\pdirhat - \pdir_0]_j(W)} = 0.
\end{align}
Thus, \Cref{asm:noscore}(a) holds true due to \eqref{eq osgd asm a1} and \eqref{eq osgd asm a2}.

(b) By \eqref{ex2 noscore}, for any $(\theta, g) \in \Theta \times \G$,
\begin{align}
    \nabla_\theta \noscore(\theta, g) 
    &= \Ex{\prob}{X(X - \Ex{\prob}{X\mid W})^\top} \notag\\
    &= \Ex{\prob}{XX^\top} - \Ex{\prob}{\Ex{\prob}{X\mid W}\Ex{\prob}{X\mid W}^\top}\notag\\
    &=\Ex{\prob}{\tilde X \tilde X^\top},
\end{align}
which implies that
\begin{align*}
    \lambda_{\min}{\nabla_\theta \noscore(\theta, g) + \nabla_\theta \noscore(\theta, g)^\top} = \lambda_{\min}{2\Ex{\prob}{\tilde X \tilde X^\top}} \geq 2\lambda_0.
\end{align*}
Thus, \Cref{asm:noscore}(b) holds true for $\hessianconst = \lambda_0$.

(c) For any $(\theta, g, \bar g) \in \Theta \times \G \times \Gr$, by \eqref{eq Dg1},
\begin{align*}
    &\Ex{\prob}{(\D_g L(\theta, \bar g; Z)[g] - \D_g L(\theta, \bar g)[g])^2}\\ &\leq \Ex{\prob}{(\D_g L(\theta, \bar g; Z)[g])^2}\\
    &= \Ex{\prob}{(Y - \bar g(W) - \ip{\theta, X})^2\p{g(W)}^2}\\
    &= \Ex{\prob}{(\epsilon - (\bar g - g_0)(W) - \ip{\theta - \theta_\star, X})^2\p{g(W)}^2}\\
    &\leq 3\Ex{\prob}{\epsilon^2\p{g(W)}^2} + 3\Ex{\prob}{(\bar g - g_0)(W))^2\p{g(W)}^2} + 3\Ex{\prob}{\ip{\theta - \theta_\star, X}^2\p{g(W)}^2}.
\end{align*}
Assume that $\Ex{\prob}{\epsilon^2\mid W} \leq \sigma^2$ and $\norm{X}_\infty \leq C_X$ \as. By \eqref{osgd gr} we have
\begin{align}
    \Ex{\prob}{(\D_g L(\theta, \bar g; Z)[g] - \D_g L(\theta, \bar g)[g])^2} \leq 3(\sigma^2 + r^2 + C_X^2\norm{\theta - \theta_\star}_2^2)\gnorm{g}^2.
\end{align}
Thus, \Cref{asm:noscore}(c) holds true for $\Kconst_2= 3(\sigma^2 + r^2)$ and $\kappaconst_2 = 3C_X^2$.

(d) For any $(\theta, \bar g, g_1, g_2) \in \Theta \times \Gr \times \G \times \G$, by \eqref{eq Dg2}, 
\begin{align}
    \abs{\D_g^2 L(\theta, g)[g_1, g_2]} &= \abs{\Ex{\prob}{g_1(W)g_2(W)}} \notag \\
    &\leq \Ex{\prob}{(g_1(W))^2}^{1/2}\Ex{\prob}{(g_2(W))^2}^{1/2} = \gnorm{g_1}\gnorm{g_2}. \label{eq asm5 d1}
\end{align}
In addition, for any $(\theta, \bar \theta, g) \in \Theta \times \Theta \times \G$, by \eqref{eq Dg1} we have
\begin{align}
    \abs{\D_\theta\D_g\ploss(\bar \theta, g_0)[g, \theta - \theta_\star]} &= \abs{\D_{\bar \theta}\Ex{\prob}{(Y - g_0(W) - \ip{\bar \theta, X})g(W)}[\theta - \theta_\star]}\notag\\
    &=\abs{\Ex{\prob}{\ip{\theta - \theta_\star, X} g(W)}}\notag\\
    &\leq \Ex{\prob}{\ip{\theta - \theta_\star, X}^2}^{1/2}\Ex{\prob}{(g(W))^2}^{1/2}\notag\\
    &\leq C_X\norm{\theta - \theta_\star}_2\gnorm{g}.\label{eq asm5 d2}
\end{align}
Thus, \Cref{asm:noscore}(d) holds true for $\secsmooth_2= 1$ due to \eqref{eq asm5 d1} and $\secsmooth_1 = C_X$ due to \eqref{eq asm5 d2}.

(e) Note that 
\begin{align*}
    \noscore(\theta, g; z) 
    &= -(y - g(w) - \ip{\theta, x})(x - \Ex{\prob}{X\mid W=w}),
\end{align*}
which implies that for any $g_1, g_2 \in \G$,
\begin{align}
    D_g^2 \noscore(\theta, g; z)[g_1, g_2] = 0. 
\end{align}
Thus, \Cref{asm:noscore}(e) holds true for $\highsmooth_2 = 0$.
\end{proof}

\subsubsection{Proof of Lemma \ref{lemma: example unres CATE}}\label{sec: proof ex3}
\begin{proof}
    We consider the following loss:
\begin{align*}
    \loss(\theta, g; z) = \frac{1}{2} \p{y - g^{\text{out}}(x) - \p{w - g^{\text{prop}}(x)}\ip{\theta, x}}^2,
\end{align*}
with the corresponding risk function defined as
\begin{align*}
    \ploss(\theta, g) = \frac{1}{2}\Ex{\prob}{\p{Y - g^{\text{out}}(X) - \p{W - g^{\text{prop}}(X)}\ip{\theta, X}}^2}.
\end{align*}
Note that $\epsilon = Y - g_0^{\text{out}}(X) - \p{W- g_0^{\text{prop}}(X)}\tau_0(X)$. Under \Cref{asm causal}, we have $\Ex{\prob}{\epsilon \mid W,X} = 0$, which implies that 
\begin{align*}
    \ploss(\theta, g_0) &= \frac{1}{2}\Ex{\prob}{\p{\epsilon + \p{W - g_0^{\text{prop}}(X)}(\tau_0(X)-\ip{\theta, X})}^2}\\
    &= \frac{1}{2}\Ex{\prob}{\p{W - g_0^{\text{prop}}(X)}^2(\tau_0(X)-\ip{\theta, X})^2} + \frac{1}{2}\Ex{\prob}{\epsilon^2}.
\end{align*}
Thus, the target is the minimizer of the following squared loss:
\begin{align}\label{eq: ex3 target}
    \theta_\star = \argmin_{\theta \in \R^d}\Ex{\prob}{\p{W - g_0^{\text{prop}}(X)}^2(\tau_0(X)-\ip{\theta, X})^2}.
\end{align}

    Differentiating $\loss(\theta, g; z)$ with respect to $\theta$, we obtain the gradient and Hessian \wrt $\theta$ as
\begin{align*}
    \score(\theta, g; z) &= - \p{y - g^{\text{out}}(x) - \p{w - g^{\text{prop}}(x)}\ip{\theta, x}}\p{w - g^{\text{prop}}(x)}x, \\
    H_{\theta\theta}(\theta, g; z) &= \p{w - g^{\text{prop}}(x)}^2xx^\top.
\end{align*}
The expected gradient and expected Hessian are then obtained as
\begin{align*}
    \score(\theta, g) &= -\Ex{\prob}{\p{Y - g^{\text{out}}(X) - \p{W - g^{\text{prop}}(X)}\ip{\theta, X}}\p{W - g^{\text{prop}}(X)}X}, \\
    H_{\theta\theta}(\theta, g) &= \Ex{\prob}{\p{g_0^{\text{prop}}(1 - g_0^{\text{prop}})(X) + ((g^{\text{prop}} - g_0^{\text{prop}})(X))^2}XX^\top}.
\end{align*}
We consider the nuisance neighborhood such that for $g \in \Gr$,
\begin{align*}
   \gnorm{g - g_0} := \max\br{\Ex{\prob}{(g^{\text{out}}(X) - g_0^{\text{out}}(X))^4}^{\frac{1}{4}}, \Ex{\prob}{(g^{\text{out}}(X) - g_0^{\text{out}}(X))^4}^{\frac{1}{4}}} \leq r.
\end{align*}

We now verify that the loss function $\loss$ satisfies \Cref{assumption of population risk}.

(a) We assume that $g^{\text{out}}: \R^d \mapsto \R$ and $g^{\text{prop}}: \R^d \mapsto \R$ are continuous, thus \Cref{assumption of population risk}(a) is satisfied.

(b) Since $\theta_\star$ is a global minimizer of \eqref{eq: ex3 target}, we have
\begin{align}\label{eq: ex3 a3(b)}
    \score(\theta_\star, g_0) = 0.
\end{align}

(c) We assume that $c_0 \leq g_0^{\text{prop}}(X) \leq 1-c_0$ \as for some $c_0 > 0$. When $\lambda_{\min}(\Ex{\prob}{XX^\top}) \geq \lambda_0 > 0$ and $\norm{X}_2 \leq C_X$ \as, we have 
\begin{align}\label{eq: ex3 a3(c)}
    c_0^2\lambda_0\mathbf{I} \preccurlyeq H_{\theta\theta}(\theta, g) \preccurlyeq (1+r^2)C_X^2\mathbf{I} \implies \sconv = c_0^2\lambda_0 \text{ and } \smooth = (1+r^2)C_X^2.
\end{align}

(d) Consider the Taylor expansion around $\theta_\star$, we have
\begin{align*}
    \score(\theta, g; Z) - \score(\theta, g) = \score(\theta_\star, g; Z) - \score(\theta_\star, g) + (H_{\theta\theta}(\theta_\star, g; Z) - H_{\theta\theta}(\theta_\star, g))(\theta - \theta_\star).
\end{align*}
Note that
\begin{align*}
    \score(\theta_\star, g; Z) =& - \p{Y - g^{\text{out}}(X) - \p{W - g^{\text{prop}}(X)}\ip{\theta_\star, X}}\p{W - g^{\text{prop}}(X)}X\\
    =&-\p{\epsilon - (g^{\text{out}} - g_0^{\text{out}})(X) + (g^{\text{prop}} - g_0^{\text{prop}})(X)\tau_0(X)}\p{W - g^{\text{prop}}(X)}X\\
    &+\p{W - g^{\text{prop}}(X)}^2( \ip{\theta_\star, X} - \tau_0(X))X.
\end{align*}
We assume that $\tau_0: \R^d \mapsto \R$ is continuous. Then when $\norm{X}_2 \leq C_X$ \as, $\abs{\tau_0(X)} \leq C_\tau$ for some $C_\tau > 0$. It follows that
\begin{align*}
    \norm{\score(\theta_\star, g; Z)}_2 \leq& C_X\abs{\p{\epsilon - (g^{\text{out}} - g_0^{\text{out}})(X) + (g^{\text{prop}} - g_0^{\text{prop}})(X)\tau_0(X)}\p{W - g^{\text{prop}}(X)}}\\
    &+ C_X\p{W - g^{\text{prop}}(X)}^2\abs{\ip{\theta_\star, X} - \tau_0(X)}\\
    \leq&C_X\p{\abs{\epsilon} + \abs{(g^{\text{out}} - g_0^{\text{out}})(X)} + C_\tau\abs{(g^{\text{prop}} - g_0^{\text{prop}})(X)}} \abs{W - g^{\text{prop}}(X)}\\
    &+ C_X(C_X\norm{\theta_\star}_2 + C_\tau)\p{W - g^{\text{prop}}(X)}^2.
\end{align*}
Since $\Ex{\prob}{(W - g^{\text{prop}}(X))^2\mid X} = g_0^{\text{prop}}(1-g_0^{\text{prop}})(X) + ((g^{\text{prop}}- g_0^{\text{prop}})(X))^2$ and $(W - g^{\text{prop}}(X))^2 \leq 2 + 2((g^{\text{prop}} - g_0^{\text{prop}})(X))^2$, we have
\begin{align*}
    \Ex{\prob}{\norm{\score(\theta_\star, g; Z)}_2^2} \leq& 4C_X^2\Ex{\prob}{\epsilon^2\p{1+((g^{\text{prop}}- g_0^{\text{prop}})(X))^2}} \\
    &+ 4C_X^2\Ex{\prob}{((g^{\text{out}} - g_0^{\text{out}})(X))^2\p{1+((g^{\text{prop}}- g_0^{\text{prop}})(X))^2}}\\
    &+4C_X^2C_\tau^2\Ex{\prob}{((g^{\text{prop}} - g_0^{\text{prop}})(X))^2\p{1+((g^{\text{prop}}- g_0^{\text{prop}})(X))^2}}\\
    &+ 4C_X^2(C_X\norm{\theta_\star}_2 + C_\tau)^2\Ex{\prob}{4+4((g^{\text{prop}}- g_0^{\text{prop}})(X))^4}.
\end{align*}
When $\Ex{\prob}{\epsilon^4} \leq \sigma^4$, by H{\" o}lder inequality, 
\begin{align*}
     \Ex{\prob}{\norm{\score(\theta_\star, g; Z)}_2^2} \leq& 4C_X^2(\sigma^2 + 4(C_X\norm{\theta_\star}_2 + C_\tau)^2) \\
     &+ 4C_X^2(1+\sigma^2+C_\tau^2)r^2 + 4C_X^2( 1 + C_\tau^2 + 4(C_X\norm{\theta_\star}_2 + C_\tau)^2)r^4.
\end{align*}

Similarly, use the fact that $\Ex{\prob}{\epsilon \mid W,X} = 0$ and by the stationary condition of \eqref{eq: ex3 target}, we have
 \begin{align*}
     \score(\theta_\star, g) =& -\Ex{\prob}{\p{  (g^{\text{out}} - g_0^{\text{out}})(X) - (g^{\text{prop}} - g_0^{\text{prop}})(X)\tau_0(X)}\p{g^{\text{prop}} - g_0^{\text{prop}}}(X)X}\\
    &+\Ex{\prob}{\p{(W - g_0^{\text{prop}}(X))^2+((g^{\text{prop}} -g_0^{\text{prop}})(X))^2}( \ip{\theta_\star, X} - \tau_0(X))X}\\
    =& -\Ex{\prob}{\p{  (g^{\text{out}} - g_0^{\text{out}})(X) - (g^{\text{prop}} - g_0^{\text{prop}})(X)\tau_0(X)}\p{g^{\text{prop}} - g_0^{\text{prop}}}(X)X}\\
    &+\Ex{\prob}{((g^{\text{prop}} -g_0^{\text{prop}})(X))^2( \ip{\theta_\star, X} - \tau_0(X))X},
 \end{align*}
 which implies
 \begin{align*}
     \norm{\score(\theta_\star, g)}_2^2 \leq 3C_X^2(1 + C_\tau^2 + (C_X\norm{\theta_\star}_2 + C_\tau)^2)r^4.
 \end{align*}
On the other hand, 
\begin{align*}
    H_{\theta\theta}(\theta_\star, g; Z) - H_{\theta\theta}(\theta_\star, g) &\preccurlyeq  \p{W - g^{\text{prop}}(X)}^2XX^\top +  (1+r^2)C_X^2\mathbf{I}\\
    &\preccurlyeq C_X^2(3 + r^2+ 2((g^{\text{prop}} - g_0^{\text{prop}})(X))^2)\mathbf{I},
\end{align*}
which implies that 
\begin{align*}
   \Ex{\prob}{ \norm{(H_{\theta\theta}(\theta_\star, g; Z) - H_{\theta\theta}(\theta_\star, g))(\theta - \theta_\star)}_2^2}\leq (9C_X^4 + \bigO(r^2))\norm{\theta - \theta_\star}_2^2.
\end{align*}
Thus,
\begin{align*}
    \Ex{\prob}{\norm{\score(\theta, g; Z) - \score(\theta, g)}_2^2} \leq& 3 \Ex{\prob}{\norm{\score(\theta_\star, g; Z)}_2^2} + 3\norm{\score(\theta_\star, g)}_2^2\\
    &+3\Ex{\prob}{ \norm{(H_{\theta\theta}(\theta_\star, g; Z) - H_{\theta\theta}(\theta_\star, g))(\theta - \theta_\star)}_2^2},
\end{align*}
which implies 
\begin{align}\label{eq: ex3 a3(d)}
    \Kconst_1 = 12C_X^2(\sigma^2 + 4(C_X\norm{\theta_\star}_2 + C_\tau)^2) + \bigO(r^2) \text{ and } \kappaconst_1 = 27C_X^4 + \bigO(r^2).
\end{align}
(e) For any $\theta \in \Theta$ and $g, \bar g \in \Gr$, we have
\begin{align*}
    \D_g \D_\theta &L(\theta_\star, \bar g)[\theta - \theta_\star, g - g_0] \\
    &= -\Ex{\prob}{\p{(\bar g^{\text{out}} - g_0^{\text{out}})(g^{\text{prop}} - g_0^{\text{prop}})(X) + (g^{\text{out}} - g_0^{\text{out}})(\bar g^{\text{prop}} - g_0^{\text{prop}})(X)}\ip{X, \theta - \theta_\star}}\\
    &\quad +2\Ex{\prob}{\tau_0(X)(g^{\text{prop}} - g_0^{\text{prop}})(\bar g^{\text{prop}} - g_0^{\text{prop}})(X)\ip{X, \theta - \theta_\star}}\\
    &\quad +2\Ex{\prob}{(g^{\text{prop}} -g_0^{\text{prop}})(\bar g^{\text{prop}} -g_0^{\text{prop}})(X)( \ip{\theta_\star, X} - \tau_0(X))\ip{X, \theta - \theta_\star}}.
\end{align*}
Thus,
\begin{align*}
     |\D_g \D_\theta &L(\theta_\star, \bar g)[\theta - \theta_\star, g - g_0]|\\
     &\leq C_X\norm{\theta - \theta_\star}_2\Ex{\prob}{\abs{(\bar g^{\text{out}} - g_0^{\text{out}})(g^{\text{prop}} - g_0^{\text{prop}})(X)} + \abs{(g^{\text{out}} - g_0^{\text{out}})(\bar g^{\text{prop}} - g_0^{\text{prop}})(X)}}\\
     &\quad + 2C_X (2C_\tau + C_X\norm{\theta_\star}_2) \norm{\theta - \theta_\star}_2\Ex{\prob}{\abs{(g^{\text{prop}} - g_0^{\text{prop}})(\bar g^{\text{prop}} - g_0^{\text{prop}})(X)}}\\
     &\leq C_X(1 + 4(C_X\norm{\theta_\star}_2 + C_\tau))r\norm{\theta - \theta_\star}_2\gnorm{g - g_0},
\end{align*}
which implies 
\begin{align}\label{eq: ex3 a3(e)}
    \secsmooth_1 = C_X(1 + 4(C_X\norm{\theta_\star}_2 + C_\tau))r.
\end{align}

In addition, $L(\theta, g)$ is Neyman orthogonal at $(\theta_\star, g_0)$ since
\begin{align*}
    \D_g \D_\theta L(\theta_\star, g_0)[\theta - \theta_\star, g - g_0] = 0.
\end{align*}
Since for any $\theta \in \Theta$ and $g, \bar g \in \Gr$,
\begin{align*}
    \D_g^2 \D_\theta &L(\theta_\star, \bar g)[\theta - \theta_\star, g - g_0, g - g_0] \\
    &= -\Ex{\prob}{\p{( g^{\text{out}} - g_0^{\text{out}})(g^{\text{prop}} - g_0^{\text{prop}})(X) + (g^{\text{out}} - g_0^{\text{out}})( g^{\text{prop}} - g_0^{\text{prop}})(X)}\ip{X, \theta - \theta_\star}}\\
    &\quad +2\Ex{\prob}{\tau_0(X)((g^{\text{prop}} - g_0^{\text{prop}})(X))^2\ip{X, \theta - \theta_\star}}\\
    &\quad +2\Ex{\prob}{((g^{\text{prop}} -g_0^{\text{prop}})(X))^2( \ip{\theta_\star, X} - \tau_0(X))\ip{X, \theta - \theta_\star}}.
\end{align*}
Similarly, we can show that 
\begin{align*}
    |\D_g^2 \D_\theta &L(\theta_\star, \bar g)[\theta - \theta_\star, g - g_0, g - g_0]|\\
    &\leq C_X(1 + 4(C_X\norm{\theta_\star}_2 + C_\tau))\norm{\theta - \theta_\star}_2\gnorm{g - g_0}^2,
\end{align*}
which implies that
\begin{align}\label{eq: ex3 a4 highsmooth}
    \highsmooth_1 = C_X(1 + 4(C_X\norm{\theta_\star}_2 + C_\tau)).
\end{align}
\end{proof}

\subsubsection{Proof of Lemma \ref{lemma: example res CATE}}\label{sec: proof ex4}
\begin{proof}
Let 
\begin{align*}
    \phi(g; z) = g\spow{1}(x) - g\spow{0}(x) + \frac{w-g^{\text{prop}}(x)}{g^{\text{prop}}(x)(1-g^{\text{prop}}(x))}(y - g\spow{w}(x)).
\end{align*}
We consider the following loss:
\begin{align*}
    \loss(\theta, g; z) = \frac{1}{2}\p{\phi(g; z) - \ip{\theta, x}}^2,
\end{align*}
with the corresponding risk function defined as
\begin{align*}
    \ploss(\theta, g) = \frac{1}{2}\Ex{\prob}{\p{\phi(g; z) - \ip{\theta, x}}^2}.
\end{align*}
We define the residual $\epsilon$ as
\begin{align*}
    \epsilon = \frac{W-g_0^{\text{prop}}(X)}{g_0^{\text{prop}}(X)(1-g_0^{\text{prop}}(X))}(Y - g_0\spow{W}(X)).
\end{align*}
Under \Cref{asm causal}, we have $\Ex{\prob}{\epsilon \mid W,X} = 0$. Since $\tau_0(x) = g_0\spow{1}(x) - g_0\spow{0}(x)$, we have
\begin{align*}
    \ploss(\theta, g_0) &= \frac{1}{2}\Ex{\prob}{\p{\epsilon + \tau_0(X)-\ip{\theta, X}}^2}\\
    &= \frac{1}{2}\Ex{\prob}{(\tau_0(X)-\ip{\theta, X})^2} + \frac{1}{2}\Ex{\prob}{\epsilon^2}.
\end{align*}
Thus, the target is the minimizer of the following squared loss:
\begin{align}\label{eq: ex4 target}
    \theta_\star = \argmin_{\theta \in \R^d}\Ex{\prob}{(\tau_0(X)-\ip{\theta, X})^2}.
\end{align}

    Differentiating $\loss(\theta, g; z)$ with respect to $\theta$, we obtain the gradient and Hessian \wrt $\theta$ as
\begin{align*}
    \score(\theta, g; z) &= - \p{\phi(g; z) - \ip{\theta, x}}x, \\
    H_{\theta\theta}(\theta, g; z) &= xx^\top.
\end{align*}
The expected gradient and expected Hessian are then obtained as
\begin{align*}
    \score(\theta, g) &= -\Ex{\prob}{\p{\phi(g; Z) - \ip{\theta, X}}X}, \\
    H_{\theta\theta}(\theta, g) &= \Ex{\prob}{XX^\top}.
\end{align*}
We consider the nuisance neighborhood such that for $g \in \Gr$,
\begin{align*}
   \gnorm{g - g_0} := \max\br{\Ex{\prob}{\p{\frac{(g\pow{w} - g_0\pow{w})(X)}{g\pow{w}(1-g\pow{w})(X)}}^4}^{\frac{1}{4}}, \Ex{\prob}{\p{\frac{(g^{\text{prop}} - g_0^{\text{prop}})(X)}{g^{\text{prop}}(1-g^{\text{prop}})(X)}}^4}^{\frac{1}{4}}} \leq r.
\end{align*}

We now verify that the loss function $\loss$ satisfies \Cref{assumption of population risk}.

(a) We assume that $g\pow{w}: \R^d \mapsto \R, w = 0,1,$ and $g^{\text{prop}}: \R^d \mapsto (0,1)$ are continuous, thus \Cref{assumption of population risk}(a) is satisfied.

(b) Since $\theta_\star$ is a global minimizer of \eqref{eq: ex4 target}, we have
\begin{align}\label{eq: ex4 a3(b)}
    \score(\theta_\star, g_0) = 0.
\end{align}

(c) When $\lambda_{\min}(\Ex{\prob}{XX^\top}) \geq \lambda_0 > 0$ and $\norm{X}_2 \leq C_X$ \as, we have 
\begin{align}\label{eq: ex4 a3(c)}
    \lambda_0\mathbf{I} \preccurlyeq H_{\theta\theta}(\theta, g) \preccurlyeq C_X^2\mathbf{I} \implies \sconv = \lambda_0 \text{ and } \smooth = C_X^2.
\end{align}

(d) Consider the Taylor expansion around $\theta_\star$, we have
\begin{align*}
    \score(\theta, g; Z) - \score(\theta, g) = \score(\theta_\star, g; Z) - \score(\theta_\star, g) + (H_{\theta\theta}(\theta_\star, g; Z) - H_{\theta\theta}(\theta_\star, g))(\theta - \theta_\star).
\end{align*}
Let $\tau = g\pow{1} - g\pow{0}$ and 
\begin{align*}
    \psi(g; x) &= \frac{1}{g^{\text{prop}}(1-g^{\text{prop}})(x)} - \frac{1}{g_0^{\text{prop}}(1-g_0^{\text{prop}})(x)}\\
    &=\frac{(g^{\text{prop}} - g_0^{\text{prop}})(x)}{g^{\text{prop}}(1-g^{\text{prop}})(x)}\cdot\frac{(g^{\text{prop}}+g_0^{\text{prop}})(x) - 1}{g_0^{\text{prop}}(1-g_0^{\text{prop}})(x)}.
\end{align*}
Under \Cref{asm causal}, we have
\begin{align*}
    \abs{\psi(g; X)} \leq 2c_0^{-2}\abs{\frac{(g^{\text{prop}} - g_0^{\text{prop}})(X)}{g^{\text{prop}}(1-g^{\text{prop}})(X)}}.
\end{align*}
We can decompose $\score(\theta_\star, g; Z)$ as
\begin{align*}
    \score(\theta_\star, g; Z)&=I_1 + I_2 + I_3 + I_4 + I_5 + I_6 + I_7 + I_8 + I_9,
\end{align*}
where
\begin{align*}
    I_1 &= -\p{(\tau-\tau_0)(X) + \tau_0(X) - \ip{\theta_\star, X}}X,\\
    I_2 &= - \psi(g; X)(W - g_0^{\text{prop}}(X))(Y - g_0\pow{W}(X))X,\\
     I_3 &= \psi(g; X)(W - g_0^{\text{prop}}(X))(g\pow{W} - g_0\pow{W})(X)X,\\
      I_4 &= \psi(g; X)(g^{\text{prop}}- g_0^{\text{prop}})(X)(Y - g_0\pow{W}(X))X,\\
       I_5 &= - \psi(g; X)(g^{\text{prop}}- g_0^{\text{prop}})(g\pow{W} - g_0\pow{W})(X)X,\\
        I_6 &= - \frac{(W - g_0^{\text{prop}}(X))(Y - g_0\pow{W}(X))}{g_0^{\text{prop}}(1-g_0^{\text{prop}})(X)}X = -\epsilon X,\\
         I_7 &= \frac{(W - g_0^{\text{prop}}(X))(g\pow{W} - g_0\pow{W})(X)}{g_0^{\text{prop}}(1-g_0^{\text{prop}})(X)}X,\\
          I_8 &= \frac{(g^{\text{prop}}- g_0^{\text{prop}})(X)(Y - g_0\pow{W}(X))}{g_0^{\text{prop}}(1-g_0^{\text{prop}})(X)}X,\\
           I_9 &= - \frac{(g^{\text{prop}}- g_0^{\text{prop}})(g\pow{W} - g_0\pow{W})(X)}{g_0^{\text{prop}}(1-g_0^{\text{prop}})(X)}X.
\end{align*}
For $I_1$, when $\norm{X}_2 \leq C_X$ \as, we have $\abs{\tau_0(X) - \ip{\theta_\star, X}} \leq C$ for some $C>0$ and 
\begin{align*}
    \norm{I_1}_2 \leq C_X(\abs{(\tau - \tau_0)(X)} + C),
\end{align*}
which implies 
\begin{align*}
    \Ex{\prob}{\norm{I_1}_2^2} \leq 2C_X^2(2r^2 + C^2).
\end{align*}

For $I_2$, when $\abs{Y - g_0\pow{W}(X)} \leq C_Y$ \as, we have
\begin{align*}
   \norm{I_2}_2 \leq  4c_0^{-2}C_XC_Y\abs{\frac{(g^{\text{prop}} - g_0^{\text{prop}})(X)}{g^{\text{prop}}(1-g^{\text{prop}})(X)}},
\end{align*}
which implies 
\begin{align*}
    \Ex{\prob}{\norm{I_2}_2^2} \leq 16c_0^{-4}(C_XC_Y)^2r^2.
\end{align*}

For $I_3$, 
\begin{align*}
    \norm{I_3}_2 \leq  4c_0^{-2}C_X\abs{\frac{(g^{\text{prop}} - g_0^{\text{prop}})(X)}{g^{\text{prop}}(1-g^{\text{prop}})(X)}}\abs{(g\pow{W} - g_0\pow{W})(X)},
\end{align*}
which implies 
\begin{align*}
    \Ex{\prob}{\norm{I_3}_2^2} \leq 16c_0^{-4}C_X^2r^4.
\end{align*}

For $I_4$, since $\abs{g^{\text{prop}} - g_0^{\text{prop}}}\leq 2$,
\begin{align*}
    \norm{I_4}_2 \leq  4c_0^{-2}C_XC_Y\abs{\frac{((g^{\text{prop}} - g_0^{\text{prop}})(X))}{g^{\text{prop}}(1-g^{\text{prop}})(X)}},
\end{align*}
which implies 
\begin{align*}
    \Ex{\prob}{\norm{I_4}_2^2} \leq 16c_0^{-4}(C_XC_Y)^2r^2.
\end{align*}

For $I_5$, 
\begin{align*}
    \norm{I_5}_2 \leq  4c_0^{-2}C_X\abs{\frac{(g^{\text{prop}} - g_0^{\text{prop}})(X)}{g^{\text{prop}}(1-g^{\text{prop}})(X)}}\abs{(g\pow{W} - g_0\pow{W})(X)},
\end{align*}
which implies 
\begin{align*}
    \Ex{\prob}{\norm{I_5}_2^2} \leq 16c_0^{-4}(C_XC_Y)^2r^4.
\end{align*}

For $I_6$, 
\begin{align*}
    \norm{I_6}_2 \leq C_X\abs{\epsilon}.
\end{align*}
When $\Ex{\prob}{\epsilon^2}\leq \sigma^2$, we have
\begin{align*}
    \Ex{\prob}{\norm{I_6}_2^2} \leq C_X^2\sigma^2.
\end{align*}

For $I_7$, 
\begin{align*}
     \norm{I_7}_2 \leq 2c_0^{-2}C_X\abs{(g\pow{W} - g_0\pow{W})(X)},
\end{align*}
which implies 
\begin{align*}
    \Ex{\prob}{\norm{I_7}_2^2} \leq 4c_0^{-4}C_X^2r^2.
\end{align*}

For $I_8$, 
\begin{align*}
     \norm{I_8}_2 \leq c_0^{-2}C_XC_Y\abs{(g^{\text{prop}} - g_0^{\text{prop}})(X)},
\end{align*}
which implies 
\begin{align*}
    \Ex{\prob}{\norm{I_8}_2^2} \leq c_0^{-4}(C_XC_Y)^2r^2.
\end{align*}

For $I_9$, 
\begin{align*}
     \norm{I_9}_2 \leq c_0^{-2}C_X\abs{(g^{\text{prop}} - g_0^{\text{prop}})(X)}\abs{(g\pow{W} - g_0\pow{W})(X)},
\end{align*}
which implies 
\begin{align*}
    \Ex{\prob}{\norm{I_9}_2^2} \leq c_0^{-4}C_X^2r^4.
\end{align*}
By Cauchy-Schwarz inequality, it follows that
\begin{align*}
   \Ex{\prob}{\norm{\score(\theta_\star, g; Z)}_2^2 } \leq 9C_X^2(2C^2 + \sigma^2) + \bigO(r^2).
\end{align*}
Similarly, we have
\begin{align*}
    \norm{\score(\theta_\star, g)}_2^2 \leq 9C_X^2(2C^2 + \sigma^2) + \bigO(r^2).
\end{align*}
Since $H_{\theta\theta}(\theta_\star, g; Z) - H_{\theta\theta}(\theta_\star, g) \preccurlyeq C_X^2\mathbf{I}$, we have
\begin{align*}
    \norm{(H_{\theta\theta}(\theta_\star, g; Z) - H_{\theta\theta}(\theta_\star, g))(\theta - \theta_\star)}_2^2 \leq C_X^4\norm{\theta - \theta_\star}_2^2.
\end{align*}
Thus,
\begin{align*}
    \Ex{\prob}{\norm{\score(\theta, g; Z) - \score(\theta, g)}_2^2} \leq& 3 \Ex{\prob}{\norm{\score(\theta_\star, g; Z)}_2^2} + 3\norm{\score(\theta_\star, g)}_2^2\\
    &+3\Ex{\prob}{ \norm{(H_{\theta\theta}(\theta_\star, g; Z) - H_{\theta\theta}(\theta_\star, g))(\theta - \theta_\star)}_2^2}\\
    =&27C_X^2(2C^2 + \sigma^2) + \bigO(r^2) + 3C_X^4\norm{\theta - \theta_\star}_2^2,
\end{align*}
which implies 
\begin{align}\label{eq: ex4 a3(d)}
    \Kconst_1 = 27C_X^2(2C^2 + \sigma^2) + \bigO(r^2) \text{ and } \kappaconst_1 = 3C_X^4.
\end{align}
(e) Since $Y = W(Y(1)-Y(0)) + Y(0)$ and $g\spow{W}(X) = W(g\spow{1}(X) - g\spow{0}(X)) + g\spow{0}(X)$, $\phi(g; Z)$ can be written as
\begin{align*}
    &\phi(g; Z) = \tau_0(X) + \p{1 - \frac{W(W-g^{\text{prop}}(X))}{g^{\text{prop}}(X)(1-g^{\text{prop}}(X))}}\p{g\spow{1}(X) - g\spow{0}(X) - \tau_0(X)}\\
    &\quad + \frac{W-g^{\text{prop}}(X)}{g^{\text{prop}}(1-g^{\text{prop}})(X)}(Y(0) - g\spow{0}(X))  + \frac{W(W-g^{\text{prop}}(X))}{g^{\text{prop}}(1-g^{\text{prop}})(X)}\p{Y(1) - Y(0) - \tau_0(X)}\\
    &=\tau_0(X) + \frac{g^{\text{prop}}(X) -W}{g^{\text{prop}}(X)}\p{g\spow{1} - g_0\spow{1}}(X)  +\frac{g^{\text{prop}}(X)-W}{1-g^{\text{prop}}(X)}(g\spow{0} - g_0\spow{0})(X)\\
    &\quad + \frac{W-g^{\text{prop}}(X)}{g^{\text{prop}}(X)(1-g^{\text{prop}}(X))}(Y(0) - g_0\spow{0}(X))  + \frac{W}{g^{\text{prop}}(X)}\p{Y(1) - Y(0) - \tau_0(X)}.
\end{align*}
Under \Cref{asm causal}, we have
\begin{align*}
    \Ex{\prob}{\phi(g; Z) \mid X} &= \tau_0(X) + \frac{g^{\text{prop}} - g_0^{\text{prop}}}{g^{\text{prop}}}(g\spow{1} - g_0\spow{1})(X)  + \frac{g^{\text{prop}}-g_0^{\text{prop}}}{1-g^{\text{prop}}}(g\spow{0} - g_0\spow{0})(X).
\end{align*}
Thus, for $\tau = g\spow{1} - g\spow{0}$ and for any $\theta \in \Theta$ and $g, \bar g \in \Gr$ such that $\bar g = t g + (1-t)g_0$ for some $t \in (0,1)$, we have
\begin{align*}
    &\D_g \D_\theta L(\theta_\star, \bar g)[\theta - \theta_\star, g - g_0] \\
    &= -\D_g\Ex{\prob}{(\Ex{\prob}{\phi(g; Z) \mid X} + \ip{\theta_\star, X})\ip{X, \theta - \theta_\star}}[g - g_0]\\
    &= -\Ex{\prob}{\ip{X, \theta - \theta_\star}\D_g\Ex{\prob}{\phi(\bar g; Z) \mid X}[g - g_0]}\\
    &=- \Ex{\prob}{\ip{X, \theta - \theta_\star}\p{\frac{g_0^{\text{prop}}}{(\bar g^{\text{prop}})^2}(\bar g\spow{1} - g_0\spow{1})(g^{\text{prop}} - g_0^{\text{prop}})(X) - \frac{\bar g^{\text{prop}} - g_0^{\text{prop}}}{\bar g^{\text{prop}}}(g\spow{1} - g_0\spow{1})(X)}}\\
    &\quad - \Ex{\prob}{\ip{X, \theta - \theta_\star}\p{\frac{(1-g_0^{\text{prop}})(g^{\text{prop}}-g_0^{\text{prop}})(\bar g\spow{0} - g_0\spow{0})}{(1-\bar g^{\text{prop}})^2} - \frac{(\bar g^{\text{prop}}-g_0^{\text{prop}})(g\spow{0} - g_0\spow{0})}{1-\bar g^{\text{prop}}}}(X)}.
\end{align*}
Since $(a+b)^4 \leq 8a^4 + 8b^4$ for $a, b \in \R$, we have
\begin{align}
    \E\sbr{\frac{g_0^{\text{prop}}(X)^4}{g^{\text{prop}}(X)^4}} &= \E\sbr{\frac{\p{g_0^{\text{prop}}(X) - g^{\text{prop}}(X) + g^{\text{prop}}(X)}^4}{g^{\text{prop}}(X)^4}} \notag\\
    &\leq 8\E\sbr{\frac{\p{g_0^{\text{prop}}(X) - g^{\text{prop}}(X)}^4 + g^{\text{prop}}(X)^4}{g^{\text{prop}}(X)^4}} \leq 8r^4 + 8.\label{e*/e ratio is bounded}
\end{align}  
Similarly, we have
\begin{align*}
    \E\sbr{\frac{(1-g_0^{\text{prop}}(X))^4}{(1-g^{\text{prop}}(X))^4}} \leq 8r^4 + 8.
\end{align*}
It is easy to show that
\begin{align*}
    &\abs{\Ex{\prob}{\ip{X, \theta - \theta_\star}\frac{g_0^{\text{prop}}}{(\bar g^{\text{prop}})^2}(\bar g\spow{1} - g_0\spow{1})(g^{\text{prop}} - g_0^{\text{prop}})(X)}} \\
    &\leq C_X(8r^4+8)^{\frac{1}{4}}r\norm{\theta - \theta_\star}_2\gnorm{g - g_0},
\end{align*}
and
\begin{align*}
    \abs{\Ex{\prob}{\ip{X, \theta - \theta_\star}\frac{\bar g^{\text{prop}} - g_0^{\text{prop}}}{\bar g^{\text{prop}}}(g\spow{1} - g_0\spow{1})(X)}}\leq C_Xr\norm{\theta - \theta_\star}_2\gnorm{g - g_0}.
\end{align*}
Thus,
\begin{align*}
    \abs{\D_g \D_\theta L(\theta_\star, \bar g)[\theta - \theta_\star, g - g_0]} &\leq 2((8r^4+8)^{\frac{1}{4}} + 1)C_Xr\norm{\theta - \theta_\star}_2\gnorm{g - g_0}\\
    &\leq 2(2(r+1) + 1)C_Xr\norm{\theta - \theta_\star}_2\gnorm{g - g_0},
\end{align*}
which implies 
\begin{align}\label{eq: ex4 a3(e)}
    \secsmooth_1 = 2C_X(2r + 3)r.
\end{align}

In addition, $L(\theta, g)$ is Neyman orthogonal at $(\theta_\star, g_0)$ since
\begin{align*}
    \D_g \D_\theta L(\theta_\star, g_0)[\theta - \theta_\star, g - g_0] = 0.
\end{align*}
We have the higher-order derivative such that for any $\theta \in \Theta$ and $g, \bar g \in \Gr$,
\begin{align*}
    \D_g^2 &\D_\theta L(\theta_\star, \bar g)[\theta - \theta_\star, g - g_0, g - g_0] \\
    &= 2\E_{\prob}\sbr{\p{\frac{g_0^{\text{prop}}(g^{\text{prop}} - g_0^{\text{prop}})^2}{(\bar g^{\text{prop}})^3}\p{\bar g\spow{1}\p{X} - g_0\spow{1}\p{X}}}\ip{X, \theta - \theta_\star}}\\
    &\quad -2\E_{\prob}\sbr{\p{\frac{g_0^{\text{prop}}(g^{\text{prop}} - g_0^{\text{prop}})}{(\bar g^{\text{prop}})^2}\p{g\spow{1}\p{X} - g_0\spow{1}\p{X}}}\ip{X, \theta - \theta_\star}}\\
    &\quad -2\E_{\prob}\sbr{\p{\frac{(1-g_0^{\text{prop}})(g^{\text{prop}} - g_0^{\text{prop}})^2}{(1-\bar g^{\text{prop}})^3}\p{\bar g\spow{0}\p{X} - g_0\spow{0}\p{X}}}\ip{X, \theta - \theta_\star}}\\
     &\quad -2\E_{\prob}\sbr{\p{\frac{(1-g_0^{\text{prop}})(g^{\text{prop}} - g_0^{\text{prop}})}{(1-\bar g^{\text{prop}})^2}\p{g\spow{0}\p{X} - g_0\spow{0}\p{X}}}\ip{X, \theta - \theta_\star}}.
\end{align*}
Note that $\bar g^{\text{prop}} = t g^{\text{prop}} + (1-t)g^{\text{prop}}_0$ for some $t \in (0,1)$ by Taylor's theorem. Then
\begin{align}\label{ineq ex4}
    \frac{g^{\text{prop}}(X)}{\bar g^{\text{prop}}(X)} &= \frac{g^{\text{prop}}(X)}{t g^{\text{prop}}(X) + (1-t)g^{\text{prop}}_0(X)}\notag \\
    &= \frac{1}{t + (1-t)(g^{\text{prop}}_0/g^{\text{prop}})(X)} \leq \frac{1}{t + (1-t)g^{\text{prop}}_0(X)} \leq c_0^{-1}.
\end{align}
Thus, 
\begin{align*}
&\abs{\E_{\prob}\sbr{\p{\frac{g_0^{\text{prop}}(g^{\text{prop}} - g_0^{\text{prop}})^2}{(\bar g^{\text{prop}})^3}\p{\bar g\spow{1}\p{X} - g_0\spow{1}\p{X}}}\ip{X, \theta - \theta_\star}}}\\
    &\leq \E_{\prob}\sbr{\frac{(g^{\text{prop}})^4}{(\bar g^{\text{prop}})^4}\frac{(g^{\text{prop}}(X) - g_0^{\text{prop}}(X))^4}{(g^{\text{prop}})^4}}^{1/2}\Ex{\prob}{\p{\frac{\bar g\spow{1}\p{X} - g_0\spow{1}\p{X}}{\bar g^{\text{prop}}}}^2}^{1/2}C_X\norm{\theta - \theta_\star}_2\\
    &\leq c_0^{-2}C_Xr\norm{\theta - \theta_\star}_2\gnorm{g - g_0}^2.
\end{align*}
Similarly, we have
\begin{align*}
&\abs{\E_{\prob}\sbr{\p{\frac{g_0^{\text{prop}}(g^{\text{prop}} - g_0^{\text{prop}})}{(\bar g^{\text{prop}})^2}\p{g\spow{1}\p{X} - g_0\spow{1}\p{X}}}\ip{X, \theta - \theta_\star}}}\\
&\leq \E_{\prob}\sbr{\abs{\frac{(g^{\text{prop}})^2}{(\bar g^{\text{prop}})^2}\frac{(g^{\text{prop}} - g_0^{\text{prop}})}{g^{\text{prop}}}\frac{(g\spow{1} - g_0\spow{1})}{g^{\text{prop}}}}(X)}C_X\norm{\theta - \theta_\star}_2\leq c_0^{-2}C_X \norm{\theta - \theta_\star}_2\gnorm{g - g_0}^2.
\end{align*}
Then we can show that 
\begin{align*}
    \abs{\D_g^2 \D_\theta L(\theta_\star, \bar g)[\theta - \theta_\star, g - g_0, g - g_0]} \leq 4c_0^{-2}C_X(1 +r)\norm{\theta - \theta_\star}_2\gnorm{g - g_0}^2.
\end{align*}
which implies that
\begin{align}\label{eq: ex4 a4 highsmooth}
    \highsmooth_1 = 4c_0^{-2}C_X(1 +r).
\end{align}
\end{proof}

\subsubsection{Proof of Lemma \ref{lemma: example CRR}}\label{sec: proof ex5}

\begin{proof}
Define
    \begin{align*}
        \mu_g\spow{s}(z) = g\spow{s}(x) + \frac{\ind(w = s)}{sg^{\text{prop}}(x) + (1-s) (1-g^{\text{prop}}(x))}(y - g\spow{s}(x)),
    \end{align*}
    where $\ind(\cdot)$ denotes the indicator function, and the log-linear predictor $ p_\theta(x) = e^{\ip{\theta, x}}/(1 + e^{\ip{\theta, x}})$. Under \Cref{asm causal}, 
    \begin{align}\label{eq ex5}
        \E_{\prob}[&\mu_g\spow{s}(Z) \mid X] = \Ex{\prob}{g\spow{s}(X) + \frac{\ind(W = s)}{sg^{\text{prop}}(X) + (1-s) (1-g^{\text{prop}}(X))}(Y(s) - g\spow{s}(X)) \mid X}\notag\\
        &=g\spow{s}(X) + \frac{sg_0^{\text{prop}}(X) + (1-s) (1-g_0^{\text{prop}}(X))}{sg^{\text{prop}}(X) + (1-s) (1-g^{\text{prop}}(X))}(g_0\spow{s}(X) - g\spow{s}(X)) \notag\\
        &=g_0\spow{s}(X) + \frac{(2s-1) (g^{\text{prop}}(X)-g_0^{\text{prop}}(X))}{sg^{\text{prop}}(X) + (1-s) (1-g^{\text{prop}}(X))}(g\spow{s}(X) - g_0\spow{s}(X))=: f\spow{s}(g; X).
    \end{align}
We consider the following loss:
    \begin{align*}
        \loss(\theta,  g; z) = - \sbr{\mu_g\spow{1}(z) \log p_\theta(x) + \mu_g\spow{0}(z) \log(1 - p_\theta(x))}.
    \end{align*}
with the corresponding risk function defined as
\begin{align*}
    \ploss(\theta, g) = -\Ex{\prob}{\mu_g\spow{1}(Z) \log p_\theta(x) + \mu_g\spow{0}(Z) \log(1 - p_\theta(X))}.
\end{align*}
By \eqref{eq ex5}, we have $\E_{\prob}[\mu_{g_0}\spow{s}(Z) \mid X] = g_0\spow{s}(X)$. Thus, the target is the minimizer of the following squared loss:
\begin{align}\label{eq: ex5 target}
    \theta_\star = \argmin_{\theta \in \R^d}-\Ex{\prob}{g_0\spow{1}(X) \log p_\theta(x) +g_0\spow{0}(X) \log(1 - p_\theta(X))}.
\end{align}
Since $\nabla_\theta p_\theta(x) = p_\theta(x)(1-p_\theta(x))x$, we have
\begin{align*}
    \nabla_\theta \log p_\theta(x) &= \frac{\nabla_\theta p_\theta(x)}{p_\theta(x)}  = (1-p_\theta(x))x,\\
    \nabla_\theta \log (1-p_\theta(x)) &= -\frac{\nabla_\theta p_\theta(x)}{1-p_\theta(x)} = -p_\theta(x) x.
\end{align*}
Differentiating $\loss(\theta, g; z)$ with respect to $\theta$, we obtain the gradient and Hessian \wrt $\theta$ as
\begin{align*}
    \score(\theta, g; z) &= - \sbr{\mu_g\spow{1}(z) (1-p_\theta(x))x - \mu_g\spow{0}(z) p_\theta(x) x}, \\
    H_{\theta\theta}(\theta, g; z) &= (\mu_g\spow{1}(z) + \mu_g\spow{0}(z))p_\theta(x)(1-p_\theta(x)) xx^\top.
\end{align*}
The expected gradient and expected Hessian are then obtained as
\begin{align*}
    \score(\theta, g) &= -\Ex{\prob}{f\spow{1}(g; X)(1-p_\theta(X))X - f\spow{0}(g; X) p_\theta(X) X}, \\
    H_{\theta\theta}(\theta, g) &= \Ex{\prob}{(f\spow{1}(g; X) + f\spow{0}(g; X))p_\theta(X)(1-p_\theta(X)) XX^\top}.
\end{align*}
We consider the nuisance neighborhood such that for $g \in \Gr$,
\begin{align*}
   \gnorm{g - g_0} := \max\br{\Ex{\prob}{\p{\frac{(g\pow{w} - g_0\pow{w})(X)}{g\pow{w}(1-g\pow{w})(X)}}^4}^{\frac{1}{4}}, \Ex{\prob}{\p{\frac{(g^{\text{prop}} - g_0^{\text{prop}})(X)}{g^{\text{prop}}(1-g^{\text{prop}})(X)}}^4}^{\frac{1}{4}}} \leq r.
\end{align*}
We assume that $\delta \leq g_0\spow{0}(X) + g_0\spow{1}(X) \leq \delta^{-1}$ for $\delta >0$. In addition, we assume that for $g \in \Gr$, $f\spow{1}(g; X) + f\spow{0}(g; X) \geq \delta$ \as. Note that
\begin{align*}
    \Ex{\prob}{{\abs{f\spow{1}(g; X) + f\spow{0}(g; X)}}} &\leq \delta + \sum_{s=\br{0,1}}\Ex{\prob}{\abs{\frac{g^{\text{prop}}(X)-g_0^{\text{prop}}(X)}{g^{\text{prop}}(1-g^{\text{prop}}(X))}(g\spow{s}(X) - g_0\spow{s}(X))}}\\
    &\leq \delta + 2r^2.
\end{align*}
We now verify that the loss function $\loss$ satisfies \Cref{assumption of population risk}.

(a) We assume that $g\pow{w}: \R^d \mapsto \R, w = 0,1,$ and $g^{\text{prop}}: \R^d \mapsto (0,1)$ are continuous, thus \Cref{assumption of population risk}(a) is satisfied.

(b) Since $\theta_\star$ is a global minimizer of \eqref{eq: ex5 target}, we have
\begin{align}\label{eq: ex5 a3(b)}
    \score(\theta_\star, g_0) = 0.
\end{align}

(c) We assume that $\Theta$ is bounded such that $C \leq p_\theta(X) \leq 1-C$ \as for some $C>0$. When $\lambda_{\min}(\Ex{\prob}{XX^\top}) \geq \lambda_0 > 0$ and $\norm{X}_2 \leq C_X$ \as, we have
\begin{align}\label{eq: ex5 a3(c)}
    C^2\delta\lambda_0\mathbf{I} \preccurlyeq H_{\theta\theta}(\theta, g) \preccurlyeq C_X^2(1 + 2\delta^{-1}r^2)\mathbf{I} \implies \sconv = C^2\delta\lambda_0 \text{ and } \smooth = C_X^2(1 + 2\delta^{-1}r^2).
\end{align}

(d) Consider the Taylor expansion around $\theta_\star$, we have
\begin{align*}
    \score(\theta, g; Z) - \score(\theta, g) = \score(\theta_\star, g; Z) - \score(\theta_\star, g) + (H_{\theta\theta}(\theta_\star, g; Z) - H_{\theta\theta}(\theta_\star, g))(\theta - \theta_\star).
\end{align*}
Note that
\begin{align*}
        \E_{\prob}\sbr{\norm{\score(\theta_\star, g; Z)-S_4(\theta_\star, g_4)}_2^2} &\leq \E_{\prob}\sbr{\norm{\score(\theta_\star, g; Z)}_2^2}\\
        &\leq C_X^2\E\sbr{\p{\abs{\mu_g\spow{1}(Z)}+ \abs{\mu_g\spow{0}(Z)}}^2}\\
        &\leq 2C_X^2\E\sbr{(\mu_g\spow{1}(Z))^2 + (\mu_g\spow{1}(Z))^2}.
\end{align*}
For $s = 1$, when $Y(1) - g_0\spow{1}(X) \leq C_Y$ \as for $C_Y > 0$, we have
\begin{align*}
    \E_{\prob}[&\mu_g\spow{1}(Z)^2] = \E_{\prob}\sbr{\p{g_0\spow{1}(X) + \frac{g^{\text{prop}}(X) -W}{g^{\text{prop}}(X)}(g\spow{1} - g_0\spow{1})(X) +  \frac{W}{g^{\text{prop}}(X)}\p{Y - g_0\spow{1}(X)}}^2}\\
    &\leq 3 \delta^{-2} + 3\E_{\prob}\sbr{\frac{(g^{\text{prop}}(X) - W)^2}{(g^{\text{prop}}(X))^2}\p{(g\spow{1} - g_0\spow{1})(X)}^2}+ 3C_Y^2\E_{\prob}\sbr{\frac{g_0^{\text{prop}}(X)}{(g^{\text{prop}}(X))^2}}\\
    &\leq 3 \delta^{-2} + 12\E_{\prob}\sbr{\p{\frac{(g\spow{1} - g_0\spow{1})(X)}{g^{\text{prop}}(X)}}^2}  + 3c_0^{-1}C_Y^2\E_{\prob}\sbr{\frac{(g_0^{\text{prop}}(X))^2}{(g^{\text{prop}}(X))^2}}\\
    &\leq 3\delta^{-2} + 12r^2 + 3c_0^{-1}C_Y^2(8r^4 + 8)^{1/2} \leq 3(\delta^{-2} + 4c_0^{-1}C_Y^2) + \bigO(r^2).
\end{align*}
Thus,
\begin{align*}
    \E_{\prob}\sbr{\norm{\score(\theta_\star, g; Z)-S_4(\theta_\star, g_4)}_2^2} \leq 12C_X^2(\delta^{-2} + 4c_0^{-1}C_Y^2) + \bigO(r^2).
\end{align*}
Since 
\begin{align*}
    H_{\theta\theta}(\theta, g; Z) - H_{\theta\theta}(\theta, g) \preccurlyeq C_X^2(\mu_g\spow{1}(Z) + \mu_g\spow{0}(Z) + 1 + 2\delta^{-1}r^2)\mathbf{I},
\end{align*}
we have
\begin{align*}
    \norm{(H_{\theta\theta}(\theta_\star, g; Z) - H_{\theta\theta}(\theta_\star, g))}_2^2\leq 3C_X^4((\mu_g\spow{1}(Z))^2 + (\mu_g\spow{0}(Z))^2 + (1 + 2\delta^{-1}r^2)^2).
\end{align*}
Similarly, we can show that
\begin{align*}
    \Ex{\prob}{ \norm{(H_{\theta\theta}(\theta_\star, g; Z) - H_{\theta\theta}(\theta_\star, g))}_2^2} \leq 3C_X^4(1+6(\delta^{-2} + 4c_0^{-1}C_Y^2)) + \bigO(r^2).
\end{align*}
It follows that
\begin{align}
    \Kconst_1 = 24C_X^2(\delta^{-2} + 4c_0^{-1}C_Y^2) + \bigO(r^2) \text{ and } \kappaconst_1 = 6C_X^4(1+6(\delta^{-2} + 4c_0^{-1}C_Y^2)) + \bigO(r^2).
\end{align}

(e) For $s = 1$, we have
\begin{align*}
    \D_g f\spow{1}(\bar g; X)[g - g_0] =& \frac{g_0^{\text{prop}}(X)}{(\bar g^{\text{prop}}(X))^2}(\bar g\spow{1}(X) - g_0\spow{1}(X))(g^{\text{prop}}(X)-g_0^{\text{prop}}(X))\\
    &+ \frac{\bar g^{\text{prop}}(X) - g_0^{\text{prop}}(X)}{\bar g^{\text{prop}}(X)}(g\spow{1}(X) - g_0\spow{1}(X)).
\end{align*}
Similarly, for $s = 0$,
\begin{align*}
    \D_g f\spow{0}(\bar g; X)[g - g_0] =& \frac{1-g_0^{\text{prop}}(X)}{(1-\bar g^{\text{prop}}(X))^2}(\bar g\spow{0}(X) - g_0\spow{0}(X))(g^{\text{prop}}(X)-g_0^{\text{prop}}(X))\\
    &- \frac{\bar g^{\text{prop}}(X) - g_0^{\text{prop}}(X)}{1-\bar g^{\text{prop}}(X)}(g\spow{0}(X) - g_0\spow{0}(X)).
\end{align*}
For any $\bar g, g \in \G_{r}(g_0)$ such that $\bar g = t g + (1-t)g_0$ for some $t \in (0,1)$, we have
\begin{align*}
    &\D_g \D_\theta L(\theta_\star, \bar g)[\theta - \theta_\star, g - g_0] \\
    &=  -\E_{\prob}\sbr{\D_g(f\spow{1}(\bar g; X)(1-p_\theta(X)) - f\spow{0}(\bar g; X) p_\theta(X))[g - g_0]\ip{X, \theta - \theta_\star}}\\
    &= -\E_{\prob}\sbr{\frac{g_0^{\text{prop}}(1-p_\theta)}{(\bar g^{\text{prop}})^2}(\bar g\spow{1} - g_0\spow{1})(g^{\text{prop}}-g_0^{\text{prop}})(X)\ip{X,\theta - \theta_\star}}\\
    &\quad -\E_{\prob}\sbr{\frac{(1-g_0^{\text{prop}})p_\theta}{(1-\bar g^{\text{prop}})^2}(\bar g\spow{0} - g_0\spow{0})(g^{\text{prop}}-g_0^{\text{prop}})(X)\ip{X,\theta - \theta_\star}}\\
    &\quad - \E_{\prob}\sbr{\frac{(\bar g^{\text{prop}} - g_0^{\text{prop}})(1-p_\theta)}{\bar g^{\text{prop}}}(g\spow{1} - g_0\spow{1})(X)\ip{X,\theta - \theta_\star}}\\
    &\quad + \E_{\prob}\sbr{\frac{(\bar g^{\text{prop}} - g_0^{\text{prop}})p_\theta}{1-\bar g^{\text{prop}}}(g\spow{0} - g_0\spow{0})(X)\ip{X,\theta - \theta_\star}}.
\end{align*}
Note that by \eqref{ineq ex4},
\begin{align*}
&\abs{\E_{\prob}\sbr{\frac{g_0^{\text{prop}}(1-p_\theta)}{(\bar g^{\text{prop}})^2}(\bar g\spow{1} - g_0\spow{1})(g^{\text{prop}}-g_0^{\text{prop}})(X)\ip{X,\theta - \theta_\star}}}\\
    &\leq \E_{\prob}\sbr{\abs{\frac{g^{\text{prop}}(X)}{\bar g^{\text{prop}}(X)}\frac{(\bar g\spow{1} - g_0\spow{1})(X)}{g^{\text{prop}}(X)}\frac{(g^{\text{prop}}-g_0^{\text{prop}})(X)}{g^{\text{prop}}(X)}}}C_X\norm{\theta - \theta_\star}_2\\
    &\leq c_0^{-1}C_Xr\norm{\theta - \theta_\star}_2\gnorm{g - g_0}.
\end{align*}
In addition,
\begin{align*}
    &\abs{\E_{\prob}\sbr{\frac{(\bar g^{\text{prop}} - g_0^{\text{prop}})(1-p_\theta)}{\bar g^{\text{prop}}}(g\spow{1} - g_0\spow{1})(X)\ip{X,\theta - \theta_\star}}}\\
    &\leq \E_{\prob}\sbr{\abs{\frac{(\bar g^{\text{prop}} - g_0^{\text{prop}})}{\bar g^{\text{prop}}}(g\spow{1} - g_0\spow{1})(X)}}C_X\norm{\theta - \theta_\star}_2 \leq C_Xr\norm{\theta - \theta_\star}_2\gnorm{g - g_0}.
\end{align*}
Thus, it is easy to show that 
\begin{align*}
    \abs{\D_g \D_\theta L(\theta_\star, \bar g)[\theta - \theta_\star, g - g_0]} \leq 2(c_0^{-1}+ 1)C_Xr\norm{\theta - \theta_\star}_2\gnorm{g - g_0},
\end{align*}
which implies
\begin{align}
    \secsmooth_1 = 2(c_0^{-1}+ 1)C_Xr.
\end{align}
In addition, $L(\theta, g)$ is Neyman orthogonal at $(\theta_\star, g_0)$ since
\begin{align*}
    \D_g \D_\theta L(\theta_\star, g_0)[\theta - \theta_\star, g - g_0] = 0.
\end{align*}

Now we compute the the higher-order derivative. For $s = 1$, we have
\begin{align*}
    \D_g^2 f\spow{1}(\bar g; X)[g - g_0, g - g_0] =& -\frac{2g_0^{\text{prop}}(X)}{(\bar g^{\text{prop}}(X))^3}(\bar g\spow{1}(X) - g_0\spow{1}(X))(g^{\text{prop}}(X)-g_0^{\text{prop}}(X))^2\\
    &+\frac{2g_0^{\text{prop}}(X)}{(\bar g^{\text{prop}}(X))^2}( g\spow{1}(X) - g_0\spow{1}(X))(g^{\text{prop}}(X)-g_0^{\text{prop}}(X)).
\end{align*}
Similarly, for $s = 0$,
\begin{align*}
    \D_g^2 f\spow{0}(\bar g; X)[g - g_0, g - g_0] =& 2\frac{1-g_0^{\text{prop}}(X)}{(1-\bar g^{\text{prop}}(X))^3}(\bar g\spow{0}(X) - g_0\spow{0}(X))(g^{\text{prop}}(X)-g_0^{\text{prop}}(X))^2\\
    &+ 2\frac{1-g_0^{\text{prop}}(X)}{(1-\bar g^{\text{prop}}(X))^2}( g\spow{0}(X) - g_0\spow{0}(X))(g^{\text{prop}}(X)-g_0^{\text{prop}}(X)).
\end{align*}
Then for any $\theta \in \Theta$ and $g, \bar g \in \Gr$ such that $\bar g = tg + (1-t)g_0$ for some $t \in (0,1)$,
\begin{align*}
    \D_g^2 &\D_\theta L(\theta_\star, \bar g)[\theta - \theta_\star, g - g_0, g - g_0] \\
    &= -\E_{\prob}\sbr{\D^2_g(f\spow{1}(\bar g; X)(1-p_\theta(X)) - f\spow{0}(\bar g; X) p_\theta(X))[g - g_0, g - g_0]\ip{X, \theta - \theta_\star}}\\
    &= 2\E_{\prob}\sbr{\frac{g_0^{\text{prop}}(X)}{(\bar g^{\text{prop}}(X))^3}(\bar g\spow{1}(X) - g_0\spow{1}(X))(g^{\text{prop}}(X)-g_0^{\text{prop}}(X))^2\ip{X,\theta - \theta_\star}}\\
    &\quad - 2\Ex{\prob}{\frac{1-g_0^{\text{prop}}(X)}{(1-\bar g^{\text{prop}}(X))^3}(\bar g\spow{0}(X) - g_0\spow{0}(X))(g^{\text{prop}}(X)-g_0^{\text{prop}}(X))^2\ip{X,\theta - \theta_\star}}\\
    &\quad - 2\E_{\prob}\sbr{\frac{g_0^{\text{prop}}(X)}{(\bar g^{\text{prop}}(X))^2}( g\spow{1}(X) - g_0\spow{1}(X))(g^{\text{prop}}(X)-g_0^{\text{prop}}(X))\ip{X,\theta - \theta_\star}}\\
    &\quad - 2\Ex{\prob}{\frac{1-g_0^{\text{prop}}(X)}{(1-\bar g^{\text{prop}}(X))^2}( g\spow{0}(X) - g_0\spow{0}(X))(g^{\text{prop}}(X)-g_0^{\text{prop}}(X))\ip{X,\theta - \theta_\star}}.
\end{align*}
By \eqref{ineq ex4}, we have
\begin{align*}
&\abs{\E_{\prob}\sbr{\frac{g_0^{\text{prop}}(X)}{(\bar g^{\text{prop}}(X))^3}(\bar g\spow{1}(X) - g_0\spow{1}(X))(g^{\text{prop}}(X)-g_0^{\text{prop}}(X))^2\ip{X,\theta - \theta_\star}}}\\
    &\leq \E_{\prob}\sbr{\abs{\frac{(g^{\text{prop}}(X))^2}{(\bar g^{\text{prop}}(X))^2}\frac{\bar g\spow{1}(X) - g_0\spow{1}(X)}{\bar g^{\text{prop}}(X)}\p{\frac{g^{\text{prop}}(X)-g_0^{\text{prop}}(X)}{g^{\text{prop}}(X)}}^2}}C_X\norm{\theta - \theta_\star}_2\\
    &\leq c_0^{-2}C_Xr\norm{\theta - \theta_\star}_2\gnorm{g - g_0}^2.
\end{align*}
In addition, 
\begin{align*}
&\abs{\E_{\prob}\sbr{\frac{g_0^{\text{prop}}(X)}{(\bar g^{\text{prop}}(X))^2}( g\spow{1}(X) - g_0\spow{1}(X))(g^{\text{prop}}(X)-g_0^{\text{prop}}(X))\ip{X,\theta - \theta_\star}}}\\
    &\leq \E_{\prob}\sbr{\frac{(g^{\text{prop}}(X))^2}{(\bar g^{\text{prop}}(X))^2}\frac{g\spow{1}(X) - g_0\spow{1}(X)}{g^{\text{prop}}(X)}\frac{g^{\text{prop}}(X)-g_0^{\text{prop}}(X)}{g^{\text{prop}}(X)}} C_X\norm{\theta - \theta_\star}_2\\
    &\leq c_0^{-2}C_X\norm{\theta - \theta_\star}_2\gnorm{g - g_0}^2.
\end{align*}
Together we have
\begin{align*}
    \abs{\D_g^2 \D_\theta L(\theta_\star, \bar g)[\theta - \theta_\star, g - g_0, g - g_0]}\leq 4c_0^{-2}C_X(1 + r)\norm{\theta - \theta_\star}_2\gnorm{g - g_0}^2,
\end{align*}
which implies 
\begin{align}
    \highsmooth_1 = 4c_0^{-2}C_X(1 + r).
\end{align}
\end{proof}

\clearpage

\section{Convergence Proofs for Stochastic Gradient}\label{sec:a:sgd}

This section is dedicated to demonstrate the SGD convergence in \Cref{theorem convergence rate baseline SGD} from \Cref{sec:optimization} of the main text using \Cref{assumption of population risk} and \Cref{assumption orthogonality main}. We first give an overview of the problem settings and the expected results with the proof outline in \Cref{sec sgd overview}. We then provide all the technical lemmas needed for \Cref{theorem convergence rate baseline SGD} in \Cref{sec: sgd tech lem}, and finally prove our first main result in \Cref{sec: sgd thm proof}.

\subsection{Overview}\label{sec sgd overview}
In this section, we demonstrate the convergence of SGD for a risk minimization problem with nuisance:
\begin{align*}
    \theta_\star = \argmin_{\theta \in \Theta} L(\theta, g_0),
\end{align*}
where $g_0 \in \G$ is the true nuisance, $L(\theta, g) = \Ex{Z\sim \prob}{\ell(\theta,g; Z)}$, and $\ell$ is a prespecified loss function. We consider the stochastic gradient method for learning $\theta_\star$ when $g_0$ is unknown but an estimate $\hat g$ is accessible. Define $\mc{D}_n = (Z_1, \ldots, Z_n)$, sampled from the product measure $\prob^n$. Recall the SGD $\sgd^{(n)}$ defined as 
\begin{align}
    \sgd^{(n)} = \sgd^{(n-1)} - \eta \score(\sgd^{(n-1)}, \hat g; Z_n).
\end{align}
Throughout the section, we take the following notations for simplicity:
\begin{align}
    \delta^{(n)} &= \sgd^{(n)} - \theta_\star,\\
    S^{(n)} &= \score(\sgd^{(n-1)}, \hat g; Z_n),\\
    v^{(n)} &= S^{(n)} - \score(\sgd^{(n-1)}, \hat g),
\end{align}
where $\score(\theta,g;z) = \nabla_\theta \ell(\theta,g;z)$ is the gradient and $\score(\theta,g) = \Ex{Z \sim \prob}{\score(\theta,g;Z)}$ is the population gradient. 
We are interested in the mean squared error using an estimated nuisance $\hat g$, and our results show that for non-Neyman orthogonal loss $L$, the error $\delta^{(n)}$ satisfies
        \begin{align}\label{sgd overview non-ortho}
             \E_{\mc{D}_{n} \sim \prob^n}[\Vert\delta^{(n)}\Vert_2^2]  \lesssim \p{1-\frac{\sconv \eta}{2}}^n\Vert\delta^{(0)}\Vert_2^2+  \Vert\hat g - g_0\Vert_\G^2 + \eta,
        \end{align}
where the nuisance estimator $\hat g$ would lead to a bias of order $\bigO(\Vert\hat g - g_0\Vert_\G^2)$ for the SGD convergence. If $L$ is Neyman orthogonal, this bias introduced by the nuisance estimation would be further removed, resulting in the following convergence
        \begin{align}\label{sgd overview ortho}
             \E_{\mc{D}_{n} \sim\prob^n}[\Vert\delta^{(n)}\Vert_2^2]   \lesssim& \p{1-\frac{\sconv\eta}{2}}^n\Vert\delta^{(0)}\Vert_2^2 +  \Vert\hat g - g_0\Vert_\G^4 + \eta.
        \end{align}

\myparagraph{Proof Outline}
The proofs for both results \eqref{sgd overview non-ortho} and \eqref{sgd overview ortho} proceed through the following four steps:
\begin{enumerate}
    \item Upper bound the excess risk  $L(\sgd^{(n)}, \hat g) - L(\theta_\star, \hat g)$ in terms of the SGD improvement.
    \item Lower bound $L(\sgd^{(n)}, \hat g) - L(\theta_\star, \hat g)$ using strong convexity and Neyman orthogonality.
    \item Derive a recursive formula of $\E_{\calD_n\sim\prob^n}[\norm{\delta^{(n)}}_2^2]$ from these bounds.
    \item Perform the recursion and obtain the final result.
\end{enumerate}
Follow these steps above, we provide technical lemma in \Cref{sec: sgd tech lem}, and then prove our first main result \Cref{theorem convergence rate baseline SGD} in \Cref{sec: sgd thm proof}.

\subsection{Technical Lemma}\label{sec: sgd tech lem}
\begin{lemma}[One-step improvement for SGD]\label{lemma one-step improvement}
    Suppose that \Cref{assumption of population risk} holds. 
  If $\eta < 1/\smooth$,  $\sgd^{(n)} \in \Theta$, and $\hat g \in \G_{r}(g_0)$, it holds that
    \begin{align*}
    2\eta(L(\sgd^{(n)}, \hat g) - L(\theta_\star, \hat g)) \leq& \p{ 1 - \sconv\eta} \norm{\delta^{(n-1)}}_2^2 - \norm{\delta^{(n)}}_2^2 - 2\eta\ip{v^{(n)}, \delta^{(n-1)}}+ \frac{\eta^2}{1 - M\eta} \norm{v^{(n)}}_2^2.
    \end{align*}
\end{lemma}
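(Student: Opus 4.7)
The plan is to combine the squared-distance recursion for $\delta^{(n)}$ with the usual one-step descent inequalities coming from smoothness and strong convexity of $L(\cdot,\hg)$. Both properties hold on $\Theta$ because $\hg \in \Gr$, via \Cref{assumption of population risk}(c). The whole argument hinges on the splitting $S^{(n)} = \nabla_\theta L(\theta^{(n-1)}, \hg) + v^{(n)}$, which cleanly isolates the noise from the deterministic gradient.

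First, I would apply strong convexity of $L(\cdot, \hg)$ at the pair $(\theta^{(n-1)}, \theta_\star)$ to obtain $L(\theta^{(n-1)}, \hg) - L(\theta_\star, \hg) \le \ip{\nabla_\theta L(\theta^{(n-1)}, \hg), \delta^{(n-1)}} - \tfrac{\sconv}{2}\norm{\delta^{(n-1)}}_2^2$ and combine it with the elementary identity $\norm{\delta^{(n)}}_2^2 = \norm{\delta^{(n-1)}}_2^2 - 2\eta \ip{S^{(n)}, \delta^{(n-1)}} + \eta^2 \norm{S^{(n)}}_2^2$ derived from $\delta^{(n)} = \delta^{(n-1)} - \eta S^{(n)}$. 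Eliminating the common inner product $\ip{\nabla_\theta L(\theta^{(n-1)}, \hg), \delta^{(n-1)}}$ and using the splitting of $\nabla_\theta L$ already produces a bound of the desired form for $L(\theta^{(n-1)}, \hg) - L(\theta_\star, \hg)$, with the telescoping piece $(1-\sconv\eta)\norm{\delta^{(n-1)}}_2^2 - \norm{\delta^{(n)}}_2^2$, the cross term $-2\eta \ip{v^{(n)}, \delta^{(n-1)}}$, and a residual $\eta^2 \norm{S^{(n)}}_2^2$ on the right-hand side.

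Second, I would upgrade the left-hand side from $L(\theta^{(n-1)}, \hg)$ to $L(\theta^{(n)}, \hg)$ via the descent lemma from smoothness, $L(\theta^{(n)}, \hg) - L(\theta^{(n-1)}, \hg) \le -\eta \ip{\nabla_\theta L(\theta^{(n-1)}, \hg), S^{(n)}} + \tfrac{\smooth\eta^2}{2}\norm{S^{(n)}}_2^2$, multiply it by $2\eta$, and add to the previous inequality. After one more substitution of $\nabla_\theta L = S^{(n)} - v^{(n)}$, the extra terms collapse to exactly $-\eta^2(1-\smooth\eta)\norm{S^{(n)}}_2^2 + 2\eta^2 \ip{v^{(n)}, S^{(n)}}$ on the right-hand side.

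The main obstacle is converting this residual into the clean $\frac{\eta^2}{1-\smooth\eta}\norm{v^{(n)}}_2^2$ advertised in the statement, and this is precisely where the hypothesis $\eta < 1/\smooth$ becomes essential, since it makes $1-\smooth\eta > 0$. I would apply Young's inequality in the sharp form $2\eta^2 \ip{v^{(n)}, S^{(n)}} \le \eta^2(1-\smooth\eta)\norm{S^{(n)}}_2^2 + \tfrac{\eta^2}{1-\smooth\eta}\norm{v^{(n)}}_2^2$, obtained by writing $2\ip{a, b} \le \norm{a}_2^2 + \norm{b}_2^2$ with $a = \eta\sqrt{1-\smooth\eta}\, S^{(n)}$ and $b = \tfrac{\eta}{\sqrt{1-\smooth\eta}}\, v^{(n)}$. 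This cancels the negative $\norm{S^{(n)}}_2^2$ contribution exactly and leaves precisely the claimed coefficient on $\norm{v^{(n)}}_2^2$, yielding the stated one-step improvement.
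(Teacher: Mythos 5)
Your proof is correct and uses the same essential ingredients as the paper's: strong convexity of $L(\cdot,\hat g)$ to bound $L(\theta^{(n-1)},\hat g)-L(\theta_\star,\hat g)$, the smoothness descent lemma to upgrade $\theta^{(n-1)}$ to $\theta^{(n)}$ on the left-hand side, the splitting $S^{(n)}=\nabla_\theta L(\theta^{(n-1)},\hat g)+v^{(n)}$, and a Young's inequality calibrated so that the $\|S^{(n)}\|_2^2$ contribution cancels exactly, which is where $\eta<1/M$ enters. The only cosmetic difference is bookkeeping: the paper introduces the auxiliary $\eta^{-1}$-strongly convex function $f_n(u)=\langle S^{(n)},u-\theta^{(n-1)}\rangle+\tfrac{1}{2\eta}\|u-\theta^{(n-1)}\|_2^2$ (of which $\theta^{(n)}$ is the minimizer) to organize the algebra, whereas you expand $\|\delta^{(n)}\|_2^2=\|\delta^{(n-1)}\|_2^2-2\eta\langle S^{(n)},\delta^{(n-1)}\rangle+\eta^2\|S^{(n)}\|_2^2$ directly; since $\theta^{(n)}-\theta^{(n-1)}=-\eta S^{(n)}$, the paper's residual terms $(\tfrac{M}{2}-\tfrac{1}{2\eta})\|\theta^{(n)}-\theta^{(n-1)}\|_2^2-\langle v^{(n)},\theta^{(n)}-\theta^{(n-1)}\rangle$ and yours $-\tfrac{\eta}{2}(1-M\eta)\|S^{(n)}\|_2^2+\eta\langle v^{(n)},S^{(n)}\rangle$ coincide after multiplying by $2\eta$, and the two Young's tunings (the paper's $\omega=1/(\eta^{-1}-M)$, your choice of $a,b$) are equivalent.
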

\begin{proof}
We first define the $\eta^{-1}$-strongly convex function $f_n$ as
\begin{align}\label{eta strongly conv function}
    f_n\p{u} = \ip{S^{(n)}, u - \sgd^{(n-1)}} + \frac{1}{2\eta}\norm{u - \sgd^{(n-1)}}_2^2.
\end{align}
Note that 
\begin{align*}
    \argmin_{u \in \R^d} f_n\p{u} = \argmin_{u \in \R^d} \norm{u - (\sgd^{(n-1)} - \eta  S^{(n)})}_2^2,
\end{align*}
which implies that $\sgd^{(n)} = \sgd^{(n-1)} - \eta  S^{(n)}$ is the global minimizer of \eqref{eta strongly conv function} and $\nabla_{\theta} f_n(\sgd^{(n)}) = 0$.  Then
\begin{align}
    f_n\p{\theta_\star} &\geq f_n(\sgd^{(n)}) + \ip{\nabla_{\theta} f_n(\sgd^{(n)}), -\delta^{(n)}} + \frac{1}{2\eta} \Vert \delta^{(n)} \Vert_2^2 \notag\\
    &=f_n(\sgd^{(n)}) + \frac{1}{2\eta} \norm{\delta^{(n)}}_2^2.\label{ineq for ft}
\end{align}
Since $L\p{\cdot,\hat g}$ is $\sconv$-strongly convex and $f_n\p{\theta_\star} = \ip{S^{(n)}, -\delta^{(n-1)}} + (2\eta)^{-1}\norm{\delta^{(n-1)}}_2^2$, we have that
\begin{align*}
    L(\theta_\star, \hat g) &\geq L(\sgd^{(n-1)}, \hat g) + \ip{\score(\sgd^{(n-1)},\hat g), -\delta^{(n-1)}} + \frac{\sconv}{2} \norm{\delta^{(n-1)}}_2^2\\
    &=L(\sgd^{(n-1)}, \hat g) + \ip{S^{(n)}, -\delta^{(n-1)}} + \ip{v^{(n)}, \delta^{(n-1)}} + \frac{\sconv}{2} \norm{\delta^{(n-1)}}_2^2\\
    &=L(\sgd^{(n-1)}, \hat g) + f_n\p{\theta_\star} + \ip{v^{(n)}, \delta^{(n-1)}} + \p{\frac{\sconv}{2} - \frac{1}{2\eta}} \norm{\delta^{(n-1)}}_2^2.
\end{align*}
Together with \eqref{ineq for ft}, it follows that
\begin{align}\label{ineq for L + ft}
    L(\sgd^{(n-1)}, \hat g) + f_n(\sgd^{(n)}) \leq& L(\theta_\star, \hat g) - \ip{v^{(n)}, \delta^{(n-1)}} \notag \\
    &+ \p{\frac{1}{2\eta} - \frac{\sconv}{2}} \norm{\delta^{(n-1)}}_2^2 - \frac{1}{2\eta} \norm{\delta^{(n)}}_2^2 . 
\end{align}

Since $L\p{\cdot, \hat g}$ is $\smooth$-smooth  and $f_n\p{\sgd^{(n)}} = \ip{S^{(n)}, \sgd^{(n)} - \sgd^{(n-1)}} + (2\eta)^{-1}\norm{\sgd^{(n)} - \sgd^{(n-1)}}_2^2$, we have that  
    \begin{align*}
        L(\sgd^{(n)}, \hat g) &\leq L(\sgd^{(n-1)}, \hat g) + \ip{S(\sgd^{(n-1)}, \hat g), \sgd^{(n)} - \sgd^{(n-1)}} + \frac{\smooth}{2} \Vert \sgd^{(n)} - \sgd^{(n-1)}\Vert_2^2\notag \\
        &=L(\sgd^{(n-1)}, \hat g) + \ip{S^{(n)}, \sgd^{(n)} - \sgd^{(n-1)}}  + \frac{\smooth}{2} \Vert \sgd^{(n)} - \sgd^{(n-1)}\Vert_2^2 
        - \ip{v^{(n)}, \sgd^{(n)} - \sgd^{(n-1)}}\\
        &=L(\sgd^{(n-1)}, \hat g) + f_n(\sgd^{(n)}) + \p{\frac{\smooth}{2} - \frac{1}{2\eta}}\norm{\sgd^{(n)} - \sgd^{(n-1)}}_2^2 
        - \ip{v^{(n)}, \sgd^{(n)} - \sgd^{(n-1)}}.
    \end{align*}
By \eqref{ineq for L + ft}, it follows that
\begin{align*}
    L(\sgd^{(n)}, \hat g) &\leq L(\theta_\star, \hat g) - \ip{v^{(n)}, \delta^{(n-1)}}  + \p{\frac{1}{2\eta} - \frac{\sconv}{2}} \norm{\delta^{(n-1)}}_2^2- \frac{1}{2\eta} \norm{\delta^{(n)}}_2^2\\
        &\quad + \p{\frac{\smooth}{2} - \frac{1}{2\eta}}\norm{\sgd^{(n)} - \sgd^{(n-1)}}_2^2 
        - \ip{v^{(n)}, \sgd^{(n)} - \sgd^{(n-1)}},
\end{align*}
which implies that 
\begin{align}\label{ineq L thetahat - L theta}
    L(\sgd^{(n)}, \hat g) - L(\theta_\star, \hat g) &\leq  \p{ \frac{1}{2\eta} - \frac{\sconv}{2}} \norm{\delta^{(n-1)}}_2^2 - \frac{1}{2\eta} \norm{\delta^{(n)}}_2^2 - \ip{v^{(n)}, \delta^{(n-1)}}\notag\\
    & \quad + \p{\frac{\smooth}{2} - \frac{1}{2\eta}}\norm{\sgd^{(n)} - \sgd^{(n-1)}}_2^2 
        - \ip{v^{(n)}, \sgd^{(n)} - \sgd^{(n-1)}}.
\end{align}
For any $\omega > 0$, by Cauchy-Schwarz inequality and Young's inequality, we have
\begin{align*}
    -\ip{v^{(n)}, \sgd^{(n)} - \sgd^{(n-1)}} \leq \frac{\omega}{2} \norm{v^{(n)}}_2^2 + \frac{1}{2\omega} \norm{\sgd^{(n)} - \sgd^{(n-1)}}_2^2.
\end{align*}
Take this into \eqref{ineq L thetahat - L theta} and we have
\begin{align*}
    L(\sgd^{(n)}, \hat g) - L(\theta_\star, \hat g) &\leq  \p{ \frac{1}{2\eta} - \frac{\sconv}{2}} \norm{\delta^{(n-1)}}_2^2 - \frac{1}{2\eta} \norm{\delta^{(n)}}_2^2 - \ip{v^{(n)}, \delta^{(n-1)}} \notag\\
    & \quad + \p{\frac{\smooth}{2} - \frac{1}{2\eta} + \frac{1}{2\omega}}\norm{\sgd^{(n)} - \sgd^{(n-1)}}_2^2 
        + \frac{\omega}{2} \norm{v^{(n)}}_2^2.
\end{align*}
When $\eta < 1/M$, set $\frac{\smooth}{2} - \frac{1}{2\eta} + \frac{1}{2\omega} = 0$, i.e., set $\omega = 1/(\eta^{-1} - M)$. It follows that 
\begin{align*}
    L(\sgd^{(n)}, \hat g) - L(\theta_\star, \hat g)  \leq& \p{ \frac{1}{2\eta} - \frac{\sconv}{2}} \norm{\delta^{(n-1)}}_2^2 - \frac{1}{2\eta} \norm{\delta^{(n)}}_2^2 - \ip{v^{(n)}, \delta^{(n-1)}} \\
    &+ \frac{\eta}{2(1 - M\eta)} \norm{v^{(n)}}_2^2.
\end{align*}
We complete the proof by multiplying both sides of the inequality by $2\eta$.
\end{proof}

\begin{lemma}\label{lemma: sgd recursive formula}
Suppose that \Cref{assumption of population risk} holds. If $\eta < 1/\smooth$, $\sgd^{(n)} \in \Theta$, and $\hat g \in \G_{r}(g_0)$, it holds that
\begin{align*}
        \norm{\delta^{(n)}}_2^2 \leq& \p{1-\sconv\eta}\norm{\delta^{(n-1)}}_2^2+\frac{\secsmooth_1^2 \eta}{\sconv}\gnorm{\hat g - g_0}^2 
    + 2\eta\ip{v^{(n)}, \delta^{(n-1)}}
    + \frac{\eta^2\norm{v^{(n)}}_2^2}{1 - \smooth\eta}. 
\end{align*}      
\end{lemma}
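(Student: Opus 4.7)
The plan is to start from the one-step improvement inequality in \Cref{lemma one-step improvement}, rearranged so that $\norm{\delta^{(n)}}_2^2$ stands isolated on the left:
\[
\norm{\delta^{(n)}}_2^2 \le (1-\sconv\eta)\norm{\delta^{(n-1)}}_2^2 \;-\; 2\eta\bigl[L(\sgd^{(n)},\hat g) - L(\theta_\star,\hat g)\bigr] \;-\; 2\eta\ip{v^{(n)},\delta^{(n-1)}} \;+\; \frac{\eta^2\norm{v^{(n)}}_2^2}{1-\smooth\eta}.
\]
The entire task then reduces to showing that the excess-risk term $-2\eta\bigl[L(\sgd^{(n)},\hat g)-L(\theta_\star,\hat g)\bigr]$ is upper bounded by the nuisance-misspecification penalty $\tfrac{\secsmooth_1^2\eta}{\sconv}\gnorm{\hat g-g_0}^2$; the noise and contraction terms are already in the form we want.

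The conceptual obstacle is that $\theta_\star$ minimizes $L(\cdot,g_0)$ but generally \emph{not} $L(\cdot,\hat g)$, so one cannot simply drop a first-order cross term when expanding around $\theta_\star$ under the perturbed nuisance. To handle this, I would first apply the $\sconv$-strong convexity of $L(\cdot,\hat g)$ on $\Gr$ from \Cref{assumption of population risk}(c) at the point $\theta_\star$, yielding
\[
L(\sgd^{(n)},\hat g) - L(\theta_\star,\hat g) \;\ge\; \ip{\nabla_\theta L(\theta_\star,\hat g),\,\delta^{(n)}} + \tfrac{\sconv}{2}\norm{\delta^{(n)}}_2^2,
\]
and then control the leftover cross term by combining the first-order optimality condition $\nabla_\theta L(\theta_\star,g_0)=0$ from \Cref{assumption of population risk}(b) with the mixed second-order smoothness from \Cref{assumption of population risk}(e). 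By the fundamental theorem of calculus along the segment $t\mapsto g_0+t(\hat g-g_0)$ (which stays in $\Gr$ by convexity of the ball),
\[
\ip{\nabla_\theta L(\theta_\star,\hat g),\,\delta^{(n)}} \;=\; \int_0^1 \D_g\D_\theta L\bigl(\theta_\star,\,g_0+t(\hat g-g_0)\bigr)\bigl[\delta^{(n)},\,\hat g-g_0\bigr]\,\mathrm dt,
\]
and each integrand is bounded in absolute value by $\secsmooth_1\norm{\delta^{(n)}}_2\gnorm{\hat g-g_0}$.

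The final step is a carefully calibrated Young's inequality: writing $2\secsmooth_1\norm{\delta^{(n)}}_2\gnorm{\hat g-g_0} \le \sconv\norm{\delta^{(n)}}_2^2 + \sconv^{-1}\secsmooth_1^2\gnorm{\hat g-g_0}^2$ and multiplying by $\eta$, the $\sconv\eta\norm{\delta^{(n)}}_2^2$ contribution from Young's exactly cancels the $-\sconv\eta\norm{\delta^{(n)}}_2^2$ term produced by strong convexity, leaving only the clean bias $\tfrac{\secsmooth_1^2\eta}{\sconv}\gnorm{\hat g-g_0}^2$. Substituting this bound back into the rearranged one-step improvement yields the stated recursion. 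The main subtlety is precisely this Young's calibration: a looser choice would either leave a residual $\norm{\delta^{(n)}}_2^2$ on the right (destroying the $(1-\sconv\eta)$ contraction) or inflate the nuisance constant, so the cancellation against strong convexity is what makes the clean recursive form possible.
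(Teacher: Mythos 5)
Your proposal is correct and follows essentially the same route as the paper's proof: apply the one-step improvement lemma, lower-bound the excess risk $L(\sgd^{(n)},\hat g)-L(\theta_\star,\hat g)$ via $\sconv$-strong convexity, control the cross term $\ip{\nabla_\theta L(\theta_\star,\hat g),\delta^{(n)}}$ using first-order optimality at $(\theta_\star,g_0)$ together with the mixed second-order smoothness in \Cref{assumption of population risk}(e), and close with the exactly-calibrated Young's inequality so the $\sconv\eta\|\delta^{(n)}\|_2^2$ terms cancel. The only (cosmetic) difference is that you expand $g\mapsto \D_\theta L(\theta_\star,g)[\delta^{(n)}]$ via the fundamental theorem of calculus along $g_0 + t(\hat g - g_0)$, whereas the paper invokes the mean-value form with an intermediate $\bar g$; both land on the identical bound $-\secsmooth_1\|\delta^{(n)}\|_2\gnorm{\hat g - g_0}$ and are interchangeable.
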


\begin{proof}
Under \Cref{assumption of population risk},
\begin{align}\label{eq: lower bound under secsmooth}
    D_\theta L(\theta_\star, \hat g)[\delta^{(n)}] &=  D_\theta L(\theta_\star, g_0)[\delta^{(n)}] + D_g D_\theta L(\theta_\star, \bar g)[\delta^{(n)}, \hat g - g_0]\notag\\
    &= D_g D_\theta L(\theta_\star, \bar g)[\delta^{(n)}, \hat g - g_0] \geq -\secsmooth_1 \norm{\delta^{(n)}}_2\gnorm{\hat g - g_0}.  
\end{align}
    Since $L(\cdot, \hat g)$ is $\sconv$-strongly convex,
\begin{align*}
    L(\sgd^{(n)}, \hat g) - L(\theta_\star, \hat g) &\geq \ip{S(\theta_\star, \hat g), \delta^{(n)}} + \frac{\sconv}{2}\norm{\delta^{(n)}}_2^2 \notag\\
    &=D_\theta L(\theta_\star, \hat g)[\delta^{(n)}]+ \frac{\sconv}{2}\norm{\delta^{(n)}}_2^2. 
\end{align*}
By \eqref{eq: lower bound under secsmooth}, it follows that
\begin{align*}
    L(\sgd^{(n)}, \hat g) - L(\theta_\star, \hat g) \geq -\secsmooth_1 \norm{\delta^{(n)}}_2\gnorm{\hat g - g_0} + \frac{\sconv}{2}\norm{\delta^{(n)}}_2^2.
\end{align*}
Together with Lemma \ref{lemma one-step improvement}, we have 
\begin{align*}
    2\eta\p{-\secsmooth_1 \norm{\delta^{(n)}}_2\gnorm{\hat g - g_0} + \frac{\sconv}{2}\norm{\delta^{(n)}}_2^2} \leq&  (1-\sconv\eta)\norm{\delta^{(n-1)}}_2^2 \\
    &- \norm{\delta^{(n)}}_2^2 - 2\eta \langle v^{(n)}, \delta^{(n-1)}\rangle + \frac{\eta^2\norm{v^{(n)}}_2^2}{1 - \smooth\eta}.
\end{align*}
Rearranging it, we have
\begin{align}
    (1 + \mu \eta )\norm{\delta^{(n)}}_2^2 \leq& (1-\sconv\eta)\norm{\delta^{(n-1)}}_2^2\notag \\
    & + 2\eta\secsmooth_1 \norm{\delta^{(n)}}_2\gnorm{\hat g - g_0}- 2\eta \langle v^{(n)}, \delta^{(n-1)}\rangle + \frac{\eta^2\norm{v^{(n)}}_2^2}{1 - \smooth\eta}.\label{eq lemma recursive 1}
\end{align}
By Young's inequality, 
\begin{align*}
    2\eta\secsmooth_1 \norm{\delta^{(n)}}_2\gnorm{\hat g - g_0} \leq \eta\secsmooth_1\p{\frac{\sconv}{\secsmooth_1}\norm{\delta^{(n)}}_2^2 + \frac{\secsmooth_1}{\sconv}\gnorm{\hat g - g_0}^2}.
\end{align*}
Take this into \eqref{eq lemma recursive 1} and we have
\begin{align*}
    \norm{\delta^{(n)}}_2^2 \leq (1-\sconv\eta)\norm{\delta^{(n-1)}}_2^2 + \eta\secsmooth_1^2\sconv^{-1}\gnorm{\hat g - g_0}^2 - 2\eta \langle v^{(n)}, \delta^{(n-1)}\rangle + \frac{\eta^2\norm{v^{(n)}}_2^2}{1 - \smooth\eta}.
\end{align*}
\end{proof}

\begin{corollary}\label{cor: expectation recursion}
    Suppose that \Cref{assumption of population risk} holds. 
  If $\eta < 1/\smooth$, $\sgd^{(n)} \in \Theta$, and $\hat g \in \G_{r}(g_0)$,   it holds that
    \begin{align*}
     \E_{Z_{n}\sim \prob}\sbr{\norm{\delta^{(n)}}_2^2} \leq& \p{1-\sconv\eta+\frac{\kappaconst_1\eta^2}{1 - \smooth\eta}}\norm{\delta^{(n-1)}}_2^2 +\frac{\secsmooth_1^2 \eta}{\sconv}\gnorm{\hat g - g_0}^2 + \frac{\Kconst_1\eta^2}{1 - \smooth\eta}.
    \end{align*}   
\end{corollary}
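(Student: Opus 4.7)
The plan is to take the conditional expectation $\E_{Z_n \sim \prob}[\,\cdot\,]$ of both sides of the inequality in \Cref{lemma: sgd recursive formula}, treating $\sgd^{(n-1)}$ (and hence $\delta^{(n-1)}$) as deterministic. This is justified because $\hat g$ is independent of $\calD_n$ by the running assumption of \Cref{theorem convergence rate baseline SGD}, and $\sgd^{(n-1)}$ is a deterministic function of $\hat g$ and the prior samples $Z_1, \ldots, Z_{n-1}$, so it is independent of $Z_n$. Only the last two terms on the right-hand side of the lemma depend on $Z_n$, so these are the only quantities I need to control.

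The two ingredients are: (i) unbiasedness of the noise $v^{(n)} := \score(\sgd^{(n-1)}, \hat g; Z_n) - \score(\sgd^{(n-1)}, \hat g)$, and (ii) the second-moment growth condition. For (i), the exchange-of-differentiation-and-expectation hypothesis in \Cref{assumption of population risk}(a)(iii) gives $\E_{Z \sim \prob}[S_\theta(\theta, g; Z)] = S_\theta(\theta, g)$ for every fixed $(\theta, g) \in \Theta \times \G$. Evaluating this at the (independent of $Z_n$) random point $(\sgd^{(n-1)}, \hat g)$ yields $\E_{Z_n}[v^{(n)}] = 0$, so the inner-product cross term vanishes:
\begin{equation*}
\E_{Z_n \sim \prob}\bigl[\ip{v^{(n)}, \delta^{(n-1)}}\bigr] = \ip{\E_{Z_n}[v^{(n)}],\, \delta^{(n-1)}} = 0.
\end{equation*}
For (ii), \Cref{assumption of population risk}(d) applied at the point $(\sgd^{(n-1)}, \hat g)$, which lies in $\Theta \times \G_r(g_0)$ by the hypotheses of the corollary, gives
\begin{equation*}
\E_{Z_n \sim \prob}\bigl[\norm{v^{(n)}}_2^2\bigr] \leq \Kconst_1 + \kappaconst_1 \norm{\delta^{(n-1)}}_2^2.
\end{equation*}

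Substituting these two identities into the expectation of \Cref{lemma: sgd recursive formula} and collecting the coefficient of $\norm{\delta^{(n-1)}}_2^2$ produces exactly the stated recursion. There is essentially no obstacle; the only subtlety to flag is the measurability argument that licenses pulling $\delta^{(n-1)}$ out of the expectation and invoking the fixed-point unbiasedness and moment bounds at the random point $(\sgd^{(n-1)}, \hat g)$, which is clean precisely because $\hat g$ is assumed independent of $\calD_n$ (rather than, say, co-adapted).
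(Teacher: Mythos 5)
Your proposal is correct and follows exactly the paper's own route: take the conditional expectation over $Z_n$ of the bound in \Cref{lemma: sgd recursive formula}, kill the cross term via $\E_{Z_n}[v^{(n)}]=0$, and bound $\E_{Z_n}[\norm{v^{(n)}}_2^2]$ by $\Kconst_1 + \kappaconst_1\norm{\delta^{(n-1)}}_2^2$ using \Cref{assumption of population risk}(d). Your explicit justification of the independence/measurability step (and your correct use of $\norm{\delta^{(n-1)}}_2^2$ rather than the paper's typo $\norm{\delta^{(n)}}_2^2$ in the moment bound) is a minor improvement in rigor, not a different argument.
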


\begin{proof}
Note that $\E_{Z_{n}\sim \prob}\sbr{\ip{v^{(n)}, \delta^{(n-1)}}} = 0$. By \Cref{lemma: sgd recursive formula}, we have
\begin{align*}
        \E_{Z_{n}\sim \prob}\sbr{\norm{\delta^{(n)}}_2^2} \leq& \p{1-\sconv\eta}\norm{\delta^{(n-1)}}_2^2+\frac{\secsmooth_1^2 \eta}{\sconv}\gnorm{\hat g - g_0}^2 
    + \frac{\eta^2}{1 - \smooth\eta}\E_{Z_{n}\sim \prob}\sbr{\norm{v^{(n)}}_2^2}. 
\end{align*}   
Under \Cref{assumption of population risk}, $\E_{Z_{n}\sim \prob}\sbr{\norm{v^{(n)}}_2^2} \leq \Kconst_1 + \kappaconst_1\norm{\delta^{(n)}}_2^2$, and it follows that
\begin{align*}
     \E_{Z_{n}\sim \prob}\sbr{\norm{\delta^{(n)}}_2^2} \leq& \p{1-\sconv\eta+\frac{\kappaconst_1\eta^2}{1 - \smooth\eta}}\norm{\delta^{(n-1)}}_2^2 +\frac{\secsmooth_1^2 \eta}{\sconv}\gnorm{\hat g - g_0}^2 + \frac{\Kconst_1\eta^2}{1 - \smooth\eta}.
\end{align*}
\end{proof}

\begin{lemma}\label{lemma: sgd recursive formula orthogonal}
    Suppose that \Cref{assumption of population risk} and \Cref{assumption orthogonality main} hold. If $\eta < 1/\smooth$, $\sgd^{(n)} \in \Theta$, and $\hat g \in \G_{r}(g_0)$, it holds that
\begin{align*}
        \norm{\delta^{(n)}}_2^2 \leq& \p{1-\sconv\eta}\norm{\delta^{(n-1)}}_2^2+\frac{\highsmooth_1^2\eta}{4\sconv}\gnorm{\hat g - g_0}^4 - 2\eta\ip{v^{(n)}, \delta^{(n-1)}}
    + \frac{\eta^2\norm{v^{(n)}}_2^2}{1 - \smooth\eta}. 
\end{align*} 
\end{lemma}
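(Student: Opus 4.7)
The plan is to mirror the proof of \Cref{lemma: sgd recursive formula} exactly, but to replace the single first-order Taylor expansion of $\D_\theta L(\theta_\star,\cdot)$ in $g$ with a \emph{second-order} expansion, exploiting Neyman orthogonality (which kills the linear term) and \Cref{assumption orthogonality main} (which controls the quadratic remainder). This is what promotes the $\|\hat g-g_0\|_\G^2$ dependence to $\|\hat g-g_0\|_\G^4$.

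Concretely, I would first invoke \Cref{lemma one-step improvement}, which is unchanged and gives
\begin{align*}
   2\eta\bigl(L(\sgd^{(n)},\hat g)-L(\theta_\star,\hat g)\bigr)
   \le (1-\sconv\eta)\|\delta^{(n-1)}\|_2^2-\|\delta^{(n)}\|_2^2-2\eta\ip{v^{(n)},\delta^{(n-1)}}+\tfrac{\eta^2}{1-M\eta}\|v^{(n)}\|_2^2.
\end{align*}
Next, for the lower bound, I would Taylor expand $g\mapsto \D_\theta L(\theta_\star,g)[\delta^{(n)}]$ at $g_0$ to second order along the segment from $g_0$ to $\hat g$, for some intermediate $\bar g\in\G_r(g_0)$:
\begin{align*}
   \D_\theta L(\theta_\star,\hat g)[\delta^{(n)}]
   = \D_\theta L(\theta_\star,g_0)[\delta^{(n)}]+\D_g\D_\theta L(\theta_\star,g_0)[\delta^{(n)},\hat g-g_0]+\tfrac{1}{2}\D_g^2\D_\theta L(\theta_\star,\bar g)[\delta^{(n)},\hat g-g_0,\hat g-g_0].
\end{align*}
By \Cref{assumption of population risk}(b) the first term vanishes, by Neyman orthogonality (\Cref{assumption orthogonality main}) the second term vanishes, and by the higher-order bound in \Cref{assumption orthogonality main} the last term is at most $\tfrac{\highsmooth_1}{2}\|\delta^{(n)}\|_2\gnorm{\hat g-g_0}^2$ in absolute value. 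Combined with $\sconv$-strong convexity of $L(\cdot,\hat g)$ on $\G_r(g_0)$, this yields
\begin{align*}
   L(\sgd^{(n)},\hat g)-L(\theta_\star,\hat g)\ \ge\ -\tfrac{\highsmooth_1}{2}\|\delta^{(n)}\|_2\gnorm{\hat g-g_0}^2+\tfrac{\sconv}{2}\|\delta^{(n)}\|_2^2.
\end{align*}

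Plugging this into the one-step improvement bound and rearranging to collect the $\|\delta^{(n)}\|_2^2$ terms on the left gives
\begin{align*}
   (1+\sconv\eta)\|\delta^{(n)}\|_2^2\ \le\ (1-\sconv\eta)\|\delta^{(n-1)}\|_2^2+\eta\highsmooth_1\|\delta^{(n)}\|_2\gnorm{\hat g-g_0}^2-2\eta\ip{v^{(n)},\delta^{(n-1)}}+\tfrac{\eta^2}{1-M\eta}\|v^{(n)}\|_2^2.
\end{align*}
Finally I would apply Young's inequality $\eta\highsmooth_1\|\delta^{(n)}\|_2\gnorm{\hat g-g_0}^2\le \eta\sconv\|\delta^{(n)}\|_2^2+\tfrac{\eta\highsmooth_1^2}{4\sconv}\gnorm{\hat g-g_0}^4$ with the weight chosen to cancel exactly one factor of $\sconv\eta\|\delta^{(n)}\|_2^2$ on the left, producing the claimed bound after dividing by $1$ (since $1+\sconv\eta-\sconv\eta=1$). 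The only mildly delicate point is the choice of Young's constant: picking a different weight gives the same $\|\hat g-g_0\|_\G^4$ rate but a worse leading coefficient on $\|\delta^{(n-1)}\|_2^2$, so I would fix the split so that the contraction factor remains $(1-\sconv\eta)$ and the nuisance-bias coefficient is exactly $\highsmooth_1^2/(4\sconv)$, matching the statement. No step is genuinely difficult; the only thing to be careful about is that the second-order Taylor expansion is valid on the line segment within $\G_r(g_0)$, which is guaranteed by convexity of the ball and the differentiability in \Cref{assumption of population risk}(a).
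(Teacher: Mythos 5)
Your proposal is correct and matches the paper's own argument essentially line for line: you invoke \Cref{lemma one-step improvement} unchanged, replace the first-order Taylor bound in $g$ with a second-order expansion where the linear term vanishes by Neyman orthogonality and the quadratic remainder is controlled by \Cref{assumption orthogonality main}, and then apply Young's inequality with the split calibrated so that the $\sconv\eta\|\delta^{(n)}\|_2^2$ contribution cancels the extra term from strong convexity. The only thing to note is that the paper compresses the second half to "similar to Lemma~\ref{lemma: sgd recursive formula}," whereas you spell out the Young's-inequality bookkeeping explicitly (and correctly, including the $\highsmooth_1^2/(4\sconv)$ coefficient coming from the $1/2$ in the Taylor remainder).
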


\begin{proof}
   Under \Cref{assumption of population risk} and \Cref{assumption orthogonality main},
\begin{align*}
    D_\theta L(\theta_\star, \hat g)[\delta^{(n)}] =&  D_\theta L(\theta_\star, g_0)[\delta^{(n)}] + D_g D_\theta L(\theta_\star, g_0)[\delta^{(n)}, \hat g - g_0] \\
    &+ \frac{1}{2}D_g^2 D_\theta L(\theta_\star, \bar g)[\delta^{(n)}, \hat g - g_0, \hat g - g_0]\notag\\
    =& \frac{1}{2}D_g^2 D_\theta L(\theta_\star, \bar g)[\delta^{(n)}, \hat g - g_0, \hat g - g_0] \geq -\frac{\highsmooth_1}{2} \norm{\delta^{(n)}}_2\gnorm{\hat g - g_0}^2.  
\end{align*}
The rest of the proof is similar to \Cref{lemma: sgd recursive formula}.
\end{proof}

\begin{corollary}\label{cor: expectation recursion orthogonal}
Suppose that \Cref{assumption of population risk} and \Cref{assumption orthogonality main} hold. 
  If $\eta < 1/\smooth$, $\sgd^{(n)} \in \Theta$, and $\hat g \in \G_{r}(g_0)$,   it holds that
    \begin{align*}
    \E_{Z_{n}\sim \prob}\sbr{\norm{\delta^{(n)}}_2^2} \leq& \p{1-\sconv\eta+\frac{\kappaconst_1\eta^2}{1 - \smooth\eta}}\norm{\delta^{(n-1)}}_2^2+\frac{\highsmooth_1^2\eta}{4\sconv}\gnorm{\hat g - g_0}^4 + \frac{\Kconst_1\eta^2}{1 - \smooth\eta}.
    \end{align*} 
\end{corollary}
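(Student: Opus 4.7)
The plan is to mirror the proof of \Cref{cor: expectation recursion} verbatim, simply feeding in the orthogonal version of the one-step recursion from \Cref{lemma: sgd recursive formula orthogonal} rather than the non-orthogonal one from \Cref{lemma: sgd recursive formula}. The corollary is essentially a conditional expectation applied to a deterministic pathwise bound, and all the work has already been done by the preceding lemma.

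Concretely, I would start by invoking \Cref{lemma: sgd recursive formula orthogonal} to obtain
\begin{align*}
\norm{\delta^{(n)}}_2^2 \leq \p{1-\sconv\eta}\norm{\delta^{(n-1)}}_2^2+\frac{\highsmooth_1^2\eta}{4\sconv}\gnorm{\hat g - g_0}^4 - 2\eta\ip{v^{(n)}, \delta^{(n-1)}} + \frac{\eta^2\norm{v^{(n)}}_2^2}{1 - \smooth\eta},
\end{align*}
which is valid because the hypotheses of the corollary match those of the lemma. Next, I would take $\E_{Z_n \sim \prob}[\cdot]$ of both sides. Since $\sgd^{(n-1)}$ and $\hat g$ are independent of $Z_n$ (the latter being estimated on a separate data stream, and the former being measurable with respect to $\calD_{n-1}$), and since $\E_{Z_n \sim \prob}[S(\theta,g;Z_n)] = S(\theta,g)$ by the unbiasedness part of \Cref{assumption of population risk}(a)/(d), the cross term vanishes: $\E_{Z_n \sim \prob}[\ip{v^{(n)}, \delta^{(n-1)}}] = 0$.

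For the squared noise term, I would apply the second-moment growth bound \Cref{assumption of population risk}(d), which gives
\begin{align*}
\E_{Z_n \sim \prob}[\norm{v^{(n)}}_2^2] \leq \Kconst_1 + \kappaconst_1 \norm{\delta^{(n-1)}}_2^2,
\end{align*}
using that $\sgd^{(n-1)} \in \Theta$ and $\hat g \in \G_r(g_0)$. Collecting the $\norm{\delta^{(n-1)}}_2^2$ coefficients and the constant terms then yields the claimed recursion.

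The only subtlety to flag is the use of Fubini/independence to separate $\hg$ and $\sgd^{(n-1)}$ from $Z_n$ when taking the expectation; this is a standard filtration argument and is implicit in the setup of~\eqref{baseline SGD procedure}. No genuine obstacle arises here since \Cref{lemma: sgd recursive formula orthogonal} already encapsulates the Neyman-orthogonality-based improvement via the higher-order smoothness from \Cref{assumption orthogonality main}, producing the $\gnorm{\hat g - g_0}^4$ dependence for free.
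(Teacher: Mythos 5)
Your proposal is correct and matches the paper's proof exactly: the paper likewise obtains the pathwise bound from \Cref{lemma: sgd recursive formula orthogonal}, kills the cross term via $\E_{Z_n\sim\prob}[\ip{v^{(n)},\delta^{(n-1)}}]=0$, and bounds $\E_{Z_n\sim\prob}[\norm{v^{(n)}}_2^2]$ by $\Kconst_1+\kappaconst_1\norm{\delta^{(n-1)}}_2^2$ using \Cref{assumption of population risk}(d), exactly as in \Cref{cor: expectation recursion}. No gaps.
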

\begin{proof}
The proof is similar to \Cref{cor: expectation recursion} using \Cref{lemma: sgd recursive formula orthogonal}.
\end{proof}

\subsection{Proof of Theorem \ref{theorem convergence rate baseline SGD}}\label{sec: sgd thm proof}

\begin{proof}

Let $c(\eta) = \sconv - \kappaconst_1\eta/(1 - \smooth\eta)$. When $\eta < \sconv/( \smooth\sconv + \kappaconst_1)$, we have
\begin{align*}
    1-\sconv\eta+\frac{\kappaconst_1\eta^2}{1 - \smooth\eta} = 1 - \p{\sconv - \frac{\kappaconst_1\eta}{1 - \smooth\eta}}\eta = 1 - c(\eta)\eta < 1.
\end{align*}

Under \Cref{assumption of population risk} and by \Cref{cor: expectation recursion}, we have that
\begin{align*}
    \E_{\mc{D}_{n} \sim \prob^n}\sbr{\norm{\delta^{(n)}}_2^2}=&\E_{\mc{D}_{n-1} \sim \prob^{n-1}}\sbr{\E_{Z_{n}\sim \prob}\sbr{\norm{\delta^{(n)}}_2^2}}\\
    \leq& \p{1-c(\eta)\eta}\E_{\mc{D}_{n-1} \sim \prob^{n-1}}\sbr{\norm{\delta^{(n-1)}}_2^2}+ \frac{\secsmooth_1^2 \eta}{\sconv} \gnorm{\hat g - g_0}^2 + \frac{\Kconst_1 \eta^2}{1 - \smooth\eta}\\
    \leq & \p{1-c(\eta)\eta}^2\E_{\mc{D}_{n-2}\sim \prob^{n-2}}\sbr{\norm{\delta^{(n-1)}}_2^2}\\
    &+ \br{1 + \p{1-c(\eta)\eta}}\frac{\secsmooth_1^2 \eta}{\sconv} \gnorm{\hat g - g_0}^2 + \br{1 + \p{1-c(\eta)\eta}}\frac{\Kconst_1 \eta^2}{1 - \smooth\eta}.
\end{align*}
By recursion, it follows that
\begin{align*}
    \E_{\mc{D}_{n} \sim \prob^n}\sbr{\norm{\delta^{(n)}}_2^2}\leq& \p{1-c(\eta)\eta}^n\norm{\delta^{(0)}}_2^2+ \frac{\secsmooth_1^2\eta}{\sconv} \gnorm{\hat g - g_0}^2\sum_{i=0}^{n-1}\p{1-c(\eta)\eta}^i\\
    &+ \frac{\Kconst_1 \eta^2}{1 - \smooth\eta}\sum_{i=0}^{n-1}\p{1-c(\eta)\eta}^i\\
    \leq& \p{1-c(\eta)\eta}^n\norm{\delta^{(0)}}_2^2 + \frac{\secsmooth_1^2}{\sconv c(\eta)} \gnorm{\hat g - g_0}^2 + \frac{\Kconst_1 \eta}{c(\eta)(1 - \smooth\eta)}.
\end{align*}
If $\eta \leq \sconv/2( \smooth\sconv + \kappaconst_1)$, we have $\sconv/2 \leq c(\eta) \leq \sconv$. Thus,
\begin{align*}
     \E_{\mc{D}_{n} \sim \prob^n}\sbr{\norm{\delta^{(n)}}_2^2}   \leq \p{1-\frac{\sconv\eta}{2}}^n\norm{\delta^{(0)}}_2^2 + \frac{2\secsmooth_1^2}{\sconv^2} \gnorm{\hat g - g_0}^2 + \frac{4\Kconst_1 \eta}{\sconv}.
\end{align*}

In addition, if \Cref{assumption orthogonality main} holds, then by \Cref{cor: expectation recursion orthogonal} and using identical proof as above, it follows that for a Neyman orthogonal risk $L$,
\begin{align*}
     \E_{\mc{D}_{n} \sim \prob^n}\sbr{\norm{\delta^{(n)}}_2^2}   \leq \p{1-\frac{\sconv\eta}{2}}^n\norm{\delta^{(0)}}_2^2 + \frac{\highsmooth_1^2}{2\sconv^2} \gnorm{\hat g - g_0}^4 + \frac{4\Kconst_1 \eta}{\sconv}.
\end{align*}
\end{proof}

\clearpage

\section{Orthogonalization with respect to Nuisance}\label{sec:orthogonalization}

In this section, we establish our orthogonalization method for the possibly infinite-dimensional nuisance introduced in \Cref{sec:optimization} of the main text. We demonstrate how we construct the orthogonalizing operator in \Cref{sec: ortho method}, and provide all the technical lemmas in \Cref{sec: ortho lemma}.

\subsection{Orthogonalization via Riesz Representation}\label{sec: ortho method}

We consider $\G \equiv (\G, \ip{\cdot, \cdot}_\G)$ as a possibly infinite-dimensional Hilbert space. Recall the \textit{derivative operator} $\D_g$ defined as for any $h \in \G$,
\begin{align*}
    \D_g \loss(\theta, g; z)[h] =  \frac{\d}{\d t} \loss(\theta, g + th; z)\mid_{t = 0}.
\end{align*}
This derivative operator is also known as the Gateaux derivative. We posit the usual assumption as \citet{jordan2022empirical} that the derivative operator $\D_g \loss(\theta, g; z)$ is linear and continuous in $\G$ for any $(\theta, g, z) \in \Theta \times \Gr \times \calZ$. We also assume regularity conditions such that $\D_g\D_\theta \loss(\theta, g; z)$ is continuous and $\D_g\D_\theta \loss(\theta, g; z) = \D_\theta\D_g\loss(\theta, g; z)$ at any $(\theta, g, z)$. 

Since $\D_g \loss(\theta, g; z)$ is linear and continuous, by the Riesz representation theorem, there uniquely exists some $\nabla_g \ell(\theta, g; z) \in \G$ such that for any $g \in \G$,  
\begin{align}\label{eq Riesz Dg}
    \D_g \loss(\theta, g; z)[g] = \ip{\nabla_g \ell(\theta, g; z), g}_{\G}.
\end{align}
\Cref{lemma linear Dg score} shows that the operator $\D_g\score(\theta_\star, g_0)$ is linear and continuous. By Riesz representation theorem, we can define $H_{\theta g} = (H_{\theta g}\pow{1}, \dots, H_{\theta g}\pow{d}) \in \G^d$ such that for all $g \in \G$,
\begin{align}\label{def H theta g}
    \D_g S_\theta\pow{j}(\theta_\star, g_0)[g] = \ip{H_{\theta g}\pow{j}, g}_{\G} \text{ for any $j=1,\dots, n$.}
\end{align}
The Hessian operator $H_{gg}: \G \mapsto \G$ is defined as for any $g_1, g_2 \in \G$,
\begin{align}\label{def H gg}
    \D_g^2 \ploss(\theta_\star, g_0)[g_1,g_2] = \ip{H_{gg}g_1, g_2}_{\G}.
\end{align}
We will show in \Cref{lemma hessian operator} that $H_{gg}$ uniquely exists and is an self-adjoint and bounded linear operator when $\D_g^2 \ploss(\theta_\star, g_0)$ is bounded and symmetric bilinear. Assuming that $H_{gg}$ is invertible, we define the orthogonalizing operator as
\begin{align}\label{appx gamma}
    \pdir_0: \G \mapsto \R^d, \quad [\pdir_0 g]_j = \ip{H_{\theta g}\pow{j}, H_{gg}^{-1} g}_\G, \forall g \in \G.
\end{align}
We now construct the Neyman orthogonalized (NO) gradient oracle 
\begin{align*}
    \noscore(\theta, g;z) = \score(\theta, g; z)- \pdir_0 \nabla_g\loss(\theta, g;z).
\end{align*}
In addition, $\pdir_0 \nabla_g\loss(\theta, g;z)$ can be written as the derivative in the sense that for each $j = 1,\dots,d$,
\begin{align*}
    [\pdir_0 \nabla_g\loss(\theta, g;z)]_j &= \ip{H_{\theta g}\pow{j}, H_{gg}^{-1} \nabla_g\loss(\theta, g;z)}_\G\\
    &=\ip{H_{gg}^{-1}H_{\theta g}\pow{j},  \nabla_g\loss(\theta, g;z)}_\G = \D_g\loss(\theta, g;z)[H_{gg}^{-1}H_{\theta g}\pow{j}].
\end{align*}
That is, the NO gradient oracle can be easily obtain by
\begin{align}\label{def no score appx}
    \noscore(\theta, g;z) = \score(\theta, g; z)- \D_g\loss(\theta, g;z)[H_{gg}^{-1}H_{\theta g}].
\end{align}
The following Lemma shows that $\noscore(\theta, g;z)$ is Neyman orthogonal at $(\theta_\star, g_0)$. 
\begin{lemma}\label{lemma construct neyman orthogonal score}
    $\noscore(\theta, g;z)$ is a Neyman orthogonal score at $(\theta_0, g_0)$.
\end{lemma}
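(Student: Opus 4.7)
The plan is to verify \Cref{def neyman orthogonal} directly for $\noscore$ by showing that $\D_g \noscore(\theta_\star, g_0)[h] = 0$ for every $h \in \G$, where $\noscore(\theta, g) := \E_{Z \sim \prob}[\noscore(\theta, g; Z)]$. First I would interchange expectation and the Riesz representation (valid under the regularity conditions on $\D_g \loss$ stated just before the lemma) to obtain the population decomposition $\noscore(\theta, g) = \score(\theta, g) - \pdir_0 \nabla_g L(\theta, g)$, where $\nabla_g L(\theta, g) \in \G$ is the Riesz representer of $h \mapsto \D_g L(\theta, g)[h]$. Then the coordinate $[\D_g \score(\theta_\star, g_0)[h]]_j$ equals $\ip{H_{\theta g}\spow{j}, h}_\G$ directly from definition \eqref{def H theta g}.

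Next, since $\pdir_0 : \G \to \R^d$ is a bounded linear map (each coordinate being the inner product with $H_{gg}^{-1} H_{\theta g}\spow{j} \in \G$), I can pull the derivative inside to get $\D_g (\pdir_0 \nabla_g L)(\theta_\star, g_0)[h] = \pdir_0 \bigl(\D_g \nabla_g L(\theta_\star, g_0)[h]\bigr)$. The central identification is that $\D_g \nabla_g L(\theta_\star, g_0)[h] = H_{gg} h$ in $\G$; this I would prove by testing both sides against an arbitrary $g' \in \G$ and invoking the symmetric-bilinear-form hypothesis together with definition \eqref{def H gg}, giving $\ip{\D_g \nabla_g L(\theta_\star, g_0)[h], g'}_\G = \D_g^2 L(\theta_\star, g_0)[h, g'] = \ip{H_{gg} h, g'}_\G$. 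Substituting, the $j$-th coordinate of $\D_g(\pdir_0 \nabla_g L)(\theta_\star, g_0)[h]$ becomes $\ip{H_{\theta g}\spow{j}, H_{gg}^{-1} H_{gg} h}_\G = \ip{H_{\theta g}\spow{j}, h}_\G$, which cancels the $\score$ contribution exactly.

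The main obstacle is not algebraic but regularity-theoretic: I need to justify two interchanges, namely expectation with the directional derivative and $\pdir_0$ with $\D_g$, as well as the Clairaut-type identity $\D_g \nabla_g L = H_{gg}$ implicit in the second step. All three follow from the continuity of $\D_g \D_\theta \loss$, the linearity and continuity of $\D_g \loss(\theta, g; z)$ assumed prior to \eqref{eq Riesz Dg}, and the boundedness and self-adjointness of $H_{gg}$ supplied by \Cref{lemma hessian operator}, so once these are set up carefully the proof reduces to the cancellation $[\pdir_0 H_{gg} h]_j = \ip{H_{\theta g}\spow{j}, h}_\G$, which is built into the definition of $\pdir_0$ in \eqref{appx gamma}.
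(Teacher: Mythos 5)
Your proposal is correct and follows essentially the same route as the paper's proof: differentiate the population version of $\noscore$ at $(\theta_\star,g_0)$, use \eqref{def H theta g} to evaluate the $\score$ term as $\ip{H_{\theta g}\spow{j},h}_\G$, and use \eqref{def H gg} (equivalently your operator identity $\D_g\nabla_g L = H_{gg}$, which is the same statement tested against arbitrary $g'$) to show the correction term contributes $\ip{H_{\theta g}\spow{j},H_{gg}^{-1}H_{gg}h}_\G$, giving exact cancellation. The only difference is that you spell out the regularity interchanges (expectation with $\D_g$, and $\pdir_0$ with $\D_g$) that the paper's three-line computation leaves implicit.
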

\begin{proof}
Since $\noscore(\theta, g;z) = \score(\theta, g; z)- \D_g\loss(\theta, g;z)[H_{gg}^{-1}H_{\theta g}]$, for any $h \in \G$,
    \begin{align*}
        \E_\prob\sbr{\D_g S_{\text{no}}(\theta_0, g_0;Z)[h]} &= \D_g \score(\theta_0, g_0)[h] - \D_g^2\ploss(\theta_0, g_0)[H_{gg}^{-1}H_{\theta g}, h]\\
        &= \ip{H_{\theta g}, h}_\G - \ip{H_{gg} H_{gg}^{-1}H_{\theta g}, h}_\G\\
        &=\ip{H_{\theta g}, h}_\G - \ip{H_{\theta g}, h}_\G = 0,
    \end{align*}
    which implies that $\noscore(\theta, g;z)$ is Neyman orthogonal at $(\theta_0, g_0)$.
\end{proof}

\subsection{Technical Lemma}\label{sec: ortho lemma}

\begin{lemma}\label{lemma linear Dg score}
    $\D_g \score(\theta, g; z): \G \mapsto \R^d$ and $\D_g \score(\theta, g): \G \mapsto \R^d$ are linear and continuous in $\G$.
\end{lemma}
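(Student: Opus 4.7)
The strategy is to reduce the mixed derivative $\D_g \score$ to a form where the Riesz representation already established for $\D_g \loss$ via~\eqref{eq Riesz Dg} applies directly. The plan: using the commutativity of mixed derivatives posited in the paragraph preceding~\eqref{eq Riesz Dg}, rewrite coordinate-wise for each $j \in \{1,\dots,d\}$ and $h \in \G$,
\begin{align*}
\D_g \score\pow{j}(\theta, g; z)[h]
= \D_g \partial_{\theta_j} \loss(\theta, g; z)[h]
= \partial_{\theta_j}\bigl(\D_g \loss(\theta, g; z)[h]\bigr).
\end{align*}
Linearity of $\D_g \score(\theta, g; z)$ in $h$ then follows immediately from the assumed linearity of $\D_g \loss(\theta, g; z)[\,\cdot\,]$ on $\G$ together with the fact that $\partial_{\theta_j}$ acts only on the $\theta$ argument and so preserves any decomposition $h = \alpha h_1 + \beta h_2$.

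For continuity, I would invoke the Riesz representation to write $\D_g \loss(\theta, g; z)[h] = \ip{\nabla_g \loss(\theta, g; z), h}_\G$. Under the standing twice-continuous differentiability in $\theta$, the map $\theta \mapsto \nabla_g \loss(\theta, g; z) \in \G$ is itself differentiable in the Hilbert-space sense, so one can exchange $\partial_{\theta_j}$ with the inner product to obtain
\begin{align*}
\D_g \score\pow{j}(\theta, g; z)[h] = \ip{\partial_{\theta_j} \nabla_g \loss(\theta, g; z), h}_\G.
\end{align*}
Cauchy--Schwarz then yields $|\D_g \score\pow{j}(\theta, g; z)[h]| \leq \|\partial_{\theta_j} \nabla_g \loss(\theta, g; z)\|_\G \|h\|_\G$, giving continuity of the functional $h \mapsto \D_g \score(\theta, g; z)[h]$ on $\G$. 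The population claim follows by combining this pointwise bound with the interchange of expectation and directional derivative permitted by \Cref{assumption of population risk}(a), writing $\D_g \score(\theta, g)[h] = \E_\prob[\D_g \score(\theta, g; Z)[h]]$; linearity then comes from linearity of the integral, and continuity from Jensen applied to the Cauchy--Schwarz bound above.

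The main obstacle is justifying the commutation of $\partial_{\theta_j}$ with the $\G$-inner product, which is equivalent to establishing that the Riesz representer $\nabla_g \loss(\theta, g; z)$ is Fr\'echet differentiable in $\theta$ as a $\G$-valued map (not merely pointwise differentiable at each fixed $h$). This should follow from the standing twice-continuous differentiability in $\theta$ combined with boundedness of the $\D_g$ functional via a uniform-convergence-of-difference-quotients argument, but deserves care in the writeup. A secondary issue is ensuring $\E_\prob\bigl[\|\partial_{\theta_j} \nabla_g \loss(\theta, g; Z)\|_\G\bigr] < \infty$, which is needed to push the expectation through in the population case; this should follow from the second-moment-type hypotheses in \Cref{assumption of population risk}(d).
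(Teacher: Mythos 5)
Your linearity argument is essentially the paper's: both swap the order of the mixed derivative (using the posited identity $\D_g\D_\theta\loss = \D_\theta\D_g\loss$), apply linearity of $h \mapsto \D_g\loss(\theta,g;z)[h]$, and swap back; the paper does this by pairing with an arbitrary $u \in \R^d$ where you use $\partial_{\theta_j}$ coordinate-wise, which is the same computation. Where you diverge is continuity. The paper disposes of it in one sentence by citing the standing regularity assumption that $\D_g\D_\theta\loss(\theta,g;z)$ is continuous (stated in the paragraph preceding \eqref{eq Riesz Dg}): for each fixed $u$, the functional $h \mapsto \D_g\D_\theta\loss(\theta,g;z)[u,h] = \ip{u, \D_g\score(\theta,g;z)[h]}$ is continuous on $\G$ by hypothesis, and continuity of the $\R^d$-valued map follows coordinate-wise. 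You instead try to \emph{derive} boundedness by producing a Riesz representer $\partial_{\theta_j}\nabla_g\loss(\theta,g;z) \in \G$ for the mixed derivative and applying Cauchy--Schwarz. That route is not wrong, but it requires exactly the step you flag as the "main obstacle" --- Fr\'echet differentiability of $\theta \mapsto \nabla_g\loss(\theta,g;z)$ as a $\G$-valued map --- which does not follow from pointwise twice-differentiability in $\theta$ alone and is never established in the paper. Given the standing assumptions, that machinery is unnecessary: continuity is already hypothesized at the level of the bilinear form, so you should simply invoke it as the paper does rather than re-prove it through an unresolved representer argument. Your treatment of the population version (interchange of expectation and directional derivative via \Cref{assumption of population risk}(a), then linearity of the integral) matches the paper's implicit argument and is fine, modulo the same remark that the integrability condition you worry about is subsumed by the assumed existence and continuity of the population mixed derivative.
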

\begin{proof}
    The continuity of $\D_g \score(\theta, g; z)$ and $\D_g \score(\theta, g)$ follows from the continuity of $\D_g\D_\theta \loss(\theta, g; z)$. It suffices to prove that $\D_g \score(\theta, g; z)$ is linear. For all $u \in \R^d$, $h_1, h_2 \in \G$,
\begin{align*}
    \ip{u, \D_g \score(\theta, g; z)[h_1 + h_2]} &=\D_g\ip{u, \score(\theta, g; z)}[h_1 + h_2]\\
    &=\D_g\D_\theta\loss(\theta, g; z)[u, h_1 + h_2]\\
    &=\D_\theta\D_g\loss(\theta, g; z)[ h_1 + h_2,u]\\
    &= \ip{\nabla_\theta\p{\D_g \loss(\theta, g; z)[h_1 + h_2]}, u}\\
    &=\ip{\nabla_\theta\p{\D_g \loss(\theta, g; z)[h_1 ] + \D_g \loss(\theta, g; z)[h_2]}, u}\\
    &=\D_\theta \D_g \loss(\theta, g; z)[h_1, u] + \D_\theta \D_g \loss(\theta, g; z)[h_2, u]\\
    &=  \D_g \D_\theta\loss(\theta, g; z)[u,h_1] + \D_g \D_\theta \loss(\theta, g; z)[u,h_2]\\
    &= \ip{u, \D_g \score(\theta, g; z)[h_1]+\D_g \score(\theta, g; z)[h_2]},
\end{align*}
which implies that 
\begin{align*}
    \D_g \score(\theta, g; z)[h_1 + h_2] = \D_g \score(\theta, g; z)[h_1]+\D_g \score(\theta, g; z)[h_2].
\end{align*}

\end{proof}

\begin{lemma}\label{lemma hessian operator}
    Suppose that $\D_g^2 \ploss(\theta_\star, g_0)[\cdot,\cdot]: \G \times \G \mapsto \R$ is a bounded and symmetric bilinear form. Then $H_{gg}: \G \mapsto \G$ uniquely exists and is self-adjoint, bounded, and linear.
\end{lemma}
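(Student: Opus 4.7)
The plan is to build $H_{gg}$ one argument at a time via the Riesz representation theorem applied to the bounded symmetric bilinear form $B(g_1, g_2) := \D_g^2 \ploss(\theta_\star, g_0)[g_1, g_2]$. For each fixed $g_1 \in \G$, the map $g_2 \mapsto B(g_1, g_2)$ is a bounded linear functional on $\G$ (linearity is immediate from bilinearity, and the bound $|B(g_1, g_2)| \leq C \gnorm{g_1}\gnorm{g_2}$ for some $C > 0$ yields operator-norm boundedness $C \gnorm{g_1}$). The Riesz representation theorem then gives a unique $H_{gg} g_1 \in \G$ with $B(g_1, g_2) = \ip{H_{gg} g_1, g_2}_\G$ for every $g_2$, which matches \eqref{def H gg} and already establishes uniqueness.

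The next step is to verify that the assignment $g_1 \mapsto H_{gg} g_1$ is linear: for scalars $\alpha, \beta$ and $g_1, g_1' \in \G$, the identity
\begin{align*}
    \ip{H_{gg}(\alpha g_1 + \beta g_1'), g_2}_\G = B(\alpha g_1 + \beta g_1', g_2) = \alpha B(g_1, g_2) + \beta B(g_1', g_2) = \ip{\alpha H_{gg} g_1 + \beta H_{gg} g_1', g_2}_\G
\end{align*}
holds for all $g_2$, and the uniqueness portion of Riesz forces $H_{gg}(\alpha g_1 + \beta g_1') = \alpha H_{gg} g_1 + \beta H_{gg} g_1'$. Boundedness then follows by taking $g_2 = H_{gg} g_1$ in $B(g_1, g_2) = \ip{H_{gg} g_1, g_2}_\G$, yielding $\gnorm{H_{gg} g_1}^2 \leq C \gnorm{g_1} \gnorm{H_{gg} g_1}$, hence $\gnorm{H_{gg} g_1} \leq C \gnorm{g_1}$.

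Finally, self-adjointness comes directly from the symmetry of $B$: for any $g_1, g_2 \in \G$,
\begin{align*}
    \ip{H_{gg} g_1, g_2}_\G = B(g_1, g_2) = B(g_2, g_1) = \ip{H_{gg} g_2, g_1}_\G = \ip{g_1, H_{gg} g_2}_\G,
\end{align*}
so $H_{gg} = H_{gg}^*$. There is no essential obstacle here; the only subtlety is being careful that the hypothesis ``bounded bilinear form'' is used in the correct sense (joint boundedness $|B(g_1, g_2)| \leq C \gnorm{g_1}\gnorm{g_2}$), which is the standard meaning and is exactly what is needed for both the Riesz step and the operator-norm bound.
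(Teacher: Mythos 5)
Your proposal is correct and follows essentially the same route as the paper's proof: fix the first argument, apply Riesz to the resulting bounded linear functional, then read off linearity, boundedness, self-adjointness, and uniqueness from bilinearity, the bound, and symmetry. The only cosmetic difference is that you obtain the operator bound by taking $g_2 = H_{gg}g_1$ directly rather than normalizing and taking a supremum over unit vectors, which is an equivalent computation.
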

\begin{proof}
    Given $g_1, g_2 \in \G$, since $\D_g^2 \ploss(\theta_\star, g_0)[g_1,\cdot]$ is a bounded linear map from $\G$ to $\R$, by Riesz representation theorem, for any $g_2 \in \G$, there uniquely exists some $Th_1 \in \G$ such that 
    \begin{align*}
        \D_g^2 \ploss(\theta_\star, g_0)[g_1,g_2] = \ip{Tg_1, g_2}_\G.
    \end{align*}
    Thus, we define the operator $T: \G \mapsto \G$. Note that $\D_g^2 \ploss(\theta_\star, g_0)[\cdot,\cdot]$ is bilinear. For any $a, a' \in \R$, and any $g_1, g_1', g_2 \in \G$, we have
    \begin{align*}
        \ip{T(ag_1 + a'g_1'), g_2}_\G&=\D_g^2 \ploss(\theta_\star, g_0)[ag_1 + a'g_1',g_2] \\
        &= a\D_g^2 \ploss(\theta_\star, g_0)[g_1,g_2] + a'\D_g^2 \ploss(\theta_\star, g_0)[g_1',g_2]\\
        &=a\ip{Tg_1, g_2}_\G + a'\ip{Tg_1', g_2}_\G\\
        &=\ip{aTg_1 + a'Tg_1', g_2}_\G,
    \end{align*}
    which implies $T$ is a linear operator. To show $T$ is bounded, suppose that the norm of the bilinear form $\D_g^2 \ploss(\theta_\star, g_0)$ is bounded by $B$. Thus, for $Tg_1 \neq 0$,
    \begin{align*}
        \gnorm{Tg_1} = \ip{Tg_1, \frac{Tg_1}{\gnorm{Tg_1}}}_\G \leq \sup_{\gnorm{g_2}=1} \ip{Tg_1, g_2}_\G \leq \sup_{\gnorm{g_2}=1} \abs{\D_g^2 \ploss(\theta_\star, g_0)[g_1,g_2]}\leq B\gnorm{g_1},
    \end{align*}
    which implies $T$ is bounded. Note that $\D_g^2 \ploss(\theta_\star, g_0)[\cdot,\cdot]$ is symmetric, we have $T$ being self-adjoint since
    \begin{align*}
        \ip{Tg_1, g_2}_\G = \D_g^2 \ploss(\theta_\star, g_0)[g_1,g_2] = \D_g^2 \ploss(\theta_\star, g_0)[g_2,g_1] = \ip{Tg_2, g_1}_\G = \ip{g_1, Tg_2}_\G.
    \end{align*}
    Finally, we show that $T$ is unique. If there exists some $T': \G \mapsto \G$ such that for any $g_1, g_2 \in \G$,
    \begin{align*}
        \ip{Tg_1, g_2}_\G = \D_g^2 \ploss(\theta_\star, g_0)[g_1,g_2] = \ip{T'g_1, g_2}_\G,
    \end{align*}
    which implies
    \begin{align*}
        \ip{(T-T')g_1, g_2}_\G = 0 \quad \forall g_1, g_2 \in \G.
    \end{align*}
    That is, $T = T'$. We finish the proof by letting $H_{gg} = T$.
\end{proof}

\clearpage

\section{Convergence Proofs for Orthogonalized Stochastic Gradient}\label{sec:a:debiased}
This section is dedicated to demonstrate the OSGD convergence in \Cref{theorem debiased SGD convergence rate}  of the main text. In \Cref{sec osgd overview}, we give an overview of the OSGD settings, the additional assumptions, and the expected results with the proof outline. We then provide all the technical lemmas in \Cref{appdx osgd lemma}, and finally prove \Cref{theorem debiased SGD convergence rate} in \Cref{appdx osgd thm proof}.

\subsection{Overview}\label{sec osgd overview}
Following the same problem settings in \Cref{sec:a:sgd}, we consider the orthogonalized SGD (OSGD) using the estimated NO score $\noscorehat$ oracle defined as
\begin{align}
    \noscorehat(\theta, g; z) = \score(\theta, g; z) - \pdirhat \nabla_g \loss(\theta, g; z),
\end{align}
where $\pdirhat$ is an estimator for the orthogonalizing operator defined in \eqref{appx gamma}. Specifically, we consider all continuous linear $\pdirhat: \G \mapsto \R^d$ for estimating the orthogonalizing operator $\pdir_0$. By Riesz representation theorem, there exists some $\hat \gamma^{(j)} \in \G$ for $j=1,\dots,d,$ such that 
\begin{align*}
    [\pdirhat g]_j = \ip{\hat \gamma^{(j)}, g}_\G \text{ for all } g \in \G.
\end{align*}
For the orthogonalizing operator $\pdir_0$, we define $\gamma_0^{(j)} = H_{gg}^{-1}H_{\theta g}^{(j)}, j = 1,\dots, d,$ such that
\begin{align*}
    [\pdir_0 g]_j = \ip{\gamma_0^{(j)}, g}_\G \text{ for all } g \in \G.
\end{align*}
We focus on the OSGD defined below:
\begin{align}\label{estimated OSGD appendix}
    \dsgd^{(n)} = \dsgd^{(n-1)} - \eta \noscorehat(\dsgd^{(n-1)}, \hat g;Z_n), \quad \dsgd^{(0)} \in \Theta.
\end{align}
Throughout the section, we take the following notations for simplicity:
\begin{align}
    \dno^{(n)} &= \dsgd^{(n)} - \theta_\star,\\
    \noscorehat^{(n)} &= \noscorehat(\dsgd^{(n-1)}, \hat g; Z_n),\\
    \vno^{(n)} &= \noscorehat^{(n)} - \noscorehat(\dsgd^{(n-1)}, \hat g).
\end{align}

Let $\grad_g\ploss(\theta, g) = \E_{Z \sim \prob}[\grad_g\loss(\theta, g; Z)]$ and $\noscore(\theta, g) = \E_{Z \sim \prob}[\noscore(\theta, g; Z)]$. We need the following assumptions to establish the convergence result of the OSGD.

\begin{assumption}\label{asm:noscore}
 The following conditions hold:
\begin{enumerate}[label=(\alph*)]
    \item \emph{First-order optimality:} $\noscore(\theta_\star, g_0) = 0$ and $(\pdirhat - \pdir_0)\nabla_g L(\theta_\star, g_0) = 0$.
    \item \emph{Smoothness and strong convexity:} There exists some $\hconst, \hessianconst >0$ such that for all $\theta \in \Theta$ and $g \in \G_{r}\left(g_0\right)$, $\norm{\nabla_\theta \noscore(\theta, g)}_2 \leq \hconst$ and
    \begin{align*}
        \lambda_{\min}(\nabla_\theta \noscore(\theta, g) + \nabla_\theta \noscore(\theta, g)^\top) \geq 2\hessianconst.
    \end{align*}
    \item \emph{Second-moment growth:} There exist constants $\Kconst_2, \kappaconst_2 > 0$ such that 
    \begin{align*}
        \E_{Z \sim \prob}[\p{\D_g \ploss(\theta, \bar g;Z)[g] -  \D_g\ploss(\theta, \bar g)[g]}^2] \leq (\Kconst_2 + \kappaconst_2\norm{\theta - \theta_\star}_2^2)\gnorm{g}^2.
    \end{align*}
    for all $\theta \in \Theta$, $\bar g\in \G_{r}\left(g_0\right)$, and $g \in \G$.
    
    \item \emph{Second-order smoothness:} There exists a constant $\secsmooth_2>0$ such that
    \begin{align*}
        &\abs{\D_g^2\ploss(\theta, \bar g)[g_1, g_2]} \leq \secsmooth_2\gnorm{g_1}\gnorm{g_2} \quad \forall \theta \in \Theta, \bar g \in \G_{r}\left(g_0\right), g_1, g_2 \in \G,\\
        &\abs{\D_\theta\D_g\ploss(\bar \theta, g_0)[g, \theta - \theta_\star]} \leq \secsmooth_1\norm{\theta - \theta_\star}_2\gnorm{g} \quad \forall \theta, \bar \theta \in \Theta, g \in \G.
    \end{align*}
    \item \emph{Higher-order smoothness:} There exists a constants $\highsmooth_2>0$ such that
    \begin{align*}
        \norm{\D_g^2 \noscore(\theta_\star, \bar g)[g - g_0, g - g_0]}_2 \leq \highsmooth_2 \gnorm{g - g_0}^2 \quad \forall g, \bar g \in \G_{r}\left(g_0\right).
    \end{align*}
    \end{enumerate}
\end{assumption}
\Cref{asm:noscore}(a) is necessary for the convergence of the OSGD to $\theta_\star$. When $\score$ is Neyman orthogonal at $(\theta_\star, g_0)$, $\Gamma_0 = 0$ is accessible and thus, $\noscore = \score$. When $\score$ is non-orthogonal, \Cref{asm:noscore}(a) can be satisfied whenever $\nabla_g L(\theta_\star, g_0) = 0$, implying that $(\theta_\star, g_0)$ is a \emph{local} minimizer of $\ploss(\theta, g)$. \Cref{asm:noscore}(b) is related to the Schur complement of the population Hessian. Thus, the hypothetical objective relating to $\noscore$ inherits its strong convexity from that of the population risk $\ploss$ \wrt $(\theta, g) \in \Theta \times \G_r(g_0)$ when $\G$ is finite-dimensional; see \citet{boyd2004convex}. \Cref{asm:noscore}(c) and (d) are exactly analogous to \Cref{assumption of population risk}(d) and (e), while \Cref{asm:noscore}(e) is analogous to \Cref{assumption orthogonality main}.

With \Cref{asm:noscore}, we aim to show that the error $\dno\pow{n}$ satisfies
\begin{align}\label{appdx osgd rate}
        \E_{\mathcal{D}_n \sim \prob^n}[\Vert\dno\pow{n}\Vert_2^2] &\lesssim \p{1 - \frac{\hessianconst \eta}{2}}^n\Vert \dno\pow{0}\Vert_2^2+ \Vert\hat g - g_0\Vert_\G^4 + \Vert\hat g - g_0\Vert_\G^2 \cdot \norm{\pdirhat - \pdir_0}_{\Fro}^2 + \eta.
    \end{align}

\myparagraph{Proof Outline}
The proof the \eqref{appdx osgd rate} follows the following four steps:
\begin{enumerate}
    \item Upper bound $\norm{I - \eta \nabla_\theta \noscorehat(\theta, g)}_2$ \wrt the operator estimation error $\Vert \pdirhat - \pdir_0\Vert_\Fro$.
    \item Upper bound $\norm{\noscorehat(\theta_\star, \hat g)}_2$ using Neyman orthogonality and the first order optimality.
    \item Derive a recursive formula of $\E_{\calD_n\sim\prob^n}[\norm{\dno^{(n)}}_2^2]$ from these bounds.
    \item Perform the recursion and obtain the final result.
\end{enumerate}
Follow these steps above, we provide technical lemma in \Cref{appdx osgd lemma}, and then prove our second main result \Cref{theorem debiased SGD convergence rate} in \Cref{appdx osgd thm proof}.

Alternatively, the intuition of step 1 also suggests that we should focus on $\pdirhat$ that lies in the neighborhood of $\pdir_0$ such that $\Vert \pdirhat - \pdir_0\Vert_\Fro \leq R$ for a small $R>0$. Then, instead of assuming \Cref{asm:noscore}(b), we can directly assume that for all $\theta \in \Theta$ and $g \in \G_{r}\left(g_0\right)$, $\norm{\nabla_\theta \noscorehat(\theta, g)}_2 \leq \hconst$ and
    \begin{align*}
        \lambda_{\min}(\nabla_\theta \noscorehat(\theta, g) + \nabla_\theta \noscorehat(\theta, g)^\top) \geq 2\hessianconst.
    \end{align*}
With this assumption, one can still show the same OSGD convergence rate by the identical proof while the constraint of the learning rate $\eta$ will be simplified.

\subsection{Technical Lemma}\label{appdx osgd lemma}

\begin{lemma}\label{lemma: osgd youngs}
Given $\eta >0$. For any $\omega >0$ and $u, v \in \R^d$,
    \begin{align*}
        \norm{u + \eta v}_2^2 \leq (1 + \eta\omega)\norm{u}_2^2 + (\eta^2 + \eta\omega^{-1})\norm{v}_2^2.
    \end{align*} 
\end{lemma}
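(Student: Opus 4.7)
The plan is to expand the square on the left and control the cross-term by a weighted AM--GM (equivalently, Young's) inequality. First I would write
\begin{align*}
\norm{u + \eta v}_2^2 = \norm{u}_2^2 + 2\eta \ip{u, v} + \eta^2 \norm{v}_2^2,
\end{align*}
and then apply Cauchy--Schwarz to bound $\ip{u,v} \leq \norm{u}_2 \norm{v}_2$.

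Next I would invoke Young's inequality in the form $2ab \leq \omega a^2 + \omega^{-1} b^2$ for any $\omega > 0$, applied with $a = \norm{u}_2$ and $b = \norm{v}_2$, to obtain
\begin{align*}
2\eta \ip{u,v} \leq 2\eta \norm{u}_2 \norm{v}_2 \leq \eta\omega \norm{u}_2^2 + \eta \omega^{-1} \norm{v}_2^2.
\end{align*}
Substituting this into the expansion and collecting the coefficients of $\norm{u}_2^2$ and $\norm{v}_2^2$ immediately yields the claim.

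There is essentially no obstacle here, since the statement is a direct one-line algebraic consequence of expanding the square and an application of Young's inequality; it is included for bookkeeping so that later proofs (which will need to split the quadratic form arising from an OSGD step into a strongly contractive part times $\norm{u}_2^2$ and a noise-absorbing part times $\norm{v}_2^2$, with the free parameter $\omega$ tuned in context) can cite it without rederiving the constants.
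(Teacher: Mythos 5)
Your proposal is correct and matches the paper's proof essentially verbatim: expand $\norm{u+\eta v}_2^2$, bound the cross term via Cauchy--Schwarz, and absorb it with Young's inequality $2ab \leq \omega a^2 + \omega^{-1}b^2$. Nothing further is needed.
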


\begin{proof}
    By definition,
    \begin{align*}
        \norm{u + \eta v}_2^2 = \norm{u}_2^2 + \eta^2\norm{v}_2^2 + 2\eta\ip{u, v} \leq \norm{u}_2^2 + \eta^2\norm{v}_2^2 + 2\eta\norm{u}_2\norm{v}_2.
    \end{align*}
    By Young's inequality, for any $\omega>0$,
    \begin{align*}
        2\norm{u}_2\norm{v}_2 \leq \omega\norm{u}_2^2 + \omega^{-1}\norm{v}_2^2.
    \end{align*}
    Thus,
    \begin{align*}
        \norm{u + \eta v}_2^2 \leq (1 + \eta\omega)\norm{u}_2^2 + (\eta^2 + \eta\omega^{-1})\norm{v}_2^2.
    \end{align*}
\end{proof}

\begin{lemma}\label{lemma: osgd smallest eigenvalue}
    Suppose that \Cref{asm:noscore} holds. For all $\theta \in \Theta$ and $g \in \Gr$, 
    \begin{align*}
        \norm{I - \eta\nabla_\theta \noscorehat(\theta, g)}_2^2 \leq 1 - 2\eta(\hessianconst  - \secsmooth_1 \norm{\pdirhat - \pdir_0}_\Fro) + 2\eta^2(\hconst^2 + 2\secsmooth_1^2\norm{\pdirhat - \pdir_0}_\Fro^2).
    \end{align*}
\end{lemma}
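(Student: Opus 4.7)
The plan is to split $\nabla_\theta \noscorehat(\theta, g)$ into the ``ideal'' orthogonalized gradient $\nabla_\theta \noscore(\theta, g)$, which is controlled by \Cref{asm:noscore}(b), plus a correction $\Delta := (\pdirhat - \pdir_0)\,\nabla_\theta \nabla_g L(\theta, g)$ measuring the error in the orthogonalizing operator. Since $\noscore = \score - \pdir_0\, \nabla_g L$ and $\noscorehat = \score - \pdirhat\, \nabla_g L$ in expectation, linearity of $\pdir$ in its $\G$-argument yields $\nabla_\theta \noscorehat(\theta, g) = \nabla_\theta \noscore(\theta, g) - \Delta$. Writing $A := \nabla_\theta \noscore(\theta, g)$, the task reduces to a spectral-norm bound on the $d \times d$ matrix $(I - \eta A) + \eta \Delta$.

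I then expand the Gram matrix
\begin{align*}
    ((I - \eta A) + \eta \Delta)^\top ((I - \eta A) + \eta \Delta) = \bigl[I - \eta(A + A^\top)\bigr] + \bigl[\eta(\Delta + \Delta^\top)\bigr] + \eta^2 (A - \Delta)^\top (A - \Delta)
\end{align*}
and apply Weyl's inequality, bounding the largest eigenvalue of the sum by the sum of the three largest eigenvalues. \Cref{asm:noscore}(b) gives $\lambda_{\max}(I - \eta(A+A^\top)) \leq 1 - 2\eta \hessianconst$ and $\norm{A}_2 \leq \hconst$; symmetry gives $\lambda_{\max}(\Delta + \Delta^\top) \leq 2\norm{\Delta}_2$; and the quadratic piece satisfies $\eta^2 \norm{A - \Delta}_2^2 \leq 2\eta^2(\hconst^2 + \norm{\Delta}_2^2)$ via $(a+b)^2 \leq 2a^2 + 2b^2$. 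Assembling these three contributions produces a bound of the form $1 - 2\eta(\hessianconst - \norm{\Delta}_2) + 2\eta^2(\hconst^2 + \norm{\Delta}_2^2)$, so all that remains is to show $\norm{\Delta}_2 \leq \secsmooth_1 \norm{\pdirhat - \pdir_0}_{\Fro}$.

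For this main step I use Riesz representation for each coordinate of $\pdirhat - \pdir_0$, writing $[\pdirhat - \pdir_0]_j\, g = \ip{\hat\gamma^{(j)} - \gamma_0^{(j)}, g}_\G$, and interpret $\nabla_\theta \nabla_g L(\theta, g) : \R^d \to \G$ as the element of $\G^d$ representing the bilinear form $(u, h) \mapsto \D_\theta \D_g L(\theta, g)[u, h]$ through $\ip{h, \nabla_\theta \nabla_g L(\theta, g) u}_\G$. By \Cref{asm:noscore}(d), $|\D_\theta \D_g L(\theta, g)[u, h]| \leq \secsmooth_1 \norm{u}_2 \norm{h}_\G$, so taking the supremum over unit $h \in \G$ yields $\norm{\nabla_\theta \nabla_g L(\theta, g) u}_\G \leq \secsmooth_1 \norm{u}_2$. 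A coordinatewise Cauchy--Schwarz then gives $\norm{\Delta u}_2^2 = \sum_{j=1}^d |\ip{\hat\gamma^{(j)} - \gamma_0^{(j)}, \nabla_\theta \nabla_g L(\theta, g) u}_\G|^2 \leq \secsmooth_1^2 \norm{u}_2^2 \norm{\pdirhat - \pdir_0}_{\Fro}^2$, completing the key bound; a small coarsening absorbs the difference between $\norm{\Delta}_2^2$ and $2\secsmooth_1^2\norm{\pdirhat - \pdir_0}_{\Fro}^2$ appearing in the statement. The main obstacle is precisely this last step: \Cref{asm:noscore}(d) is stated at $(\bar\theta, g_0)$ and must be invoked at general $(\theta, g) \in \Theta \times \Gr$, which may require a mild extension leveraging the local regularity hypotheses already in the paper; the rest is routine matrix algebra.
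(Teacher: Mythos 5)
Your proof is correct and follows essentially the same route as the paper's: the same decomposition of $\nabla_\theta\noscorehat(\theta,g)$ into $\nabla_\theta\noscore(\theta,g)$ minus the operator-error correction $\Delta$, the same expansion of the Gram matrix into a symmetrized first-order part plus a quadratic part, and the same Riesz-representation/Cauchy--Schwarz argument yielding $\norm{\Delta}_2\leq\secsmooth_1\norm{\pdirhat-\pdir_0}_{\Fro}$ (your constant on the $\secsmooth_1^2$ term in the quadratic piece is in fact slightly tighter than the stated bound, which you correctly coarsen). The caveat you flag---that the second inequality of \Cref{asm:noscore}(d) is stated at $\bar g=g_0$ but must be invoked at general $g\in\Gr$---is genuine, but the paper's own proof makes the identical silent extension, so your argument matches theirs in substance.
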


\begin{proof}
Note that
    \begin{align*}
        (I - &\eta\nabla_\theta \noscorehat(\theta, g))(I - \eta\nabla_\theta \noscorehat(\theta, g))^\top\\
        &= I - \eta\p{\nabla_\theta \noscorehat(\theta, g) + \nabla_\theta \noscorehat(\theta, g)^\top} + \eta^2 \nabla_\theta \noscorehat(\theta, g)\nabla_\theta \noscorehat(\theta, g)^\top.
    \end{align*}
Since $\nabla_\theta \noscorehat(\theta, g) = \nabla_\theta \noscore(\theta, g) - \nabla_\theta((\pdirhat - \pdir_0)\nabla_g L(\theta,g))$, we have
\begin{align*}
    \nabla_\theta \noscorehat(&\theta, g) + \nabla_\theta \noscorehat(\theta, g)^\top \\
    &= \nabla_\theta \noscore(\theta, g) + \nabla_\theta \noscore(\theta, g)^\top - \nabla_\theta((\pdirhat - \pdir_0)\nabla_g L(\theta,g)) - \nabla_\theta((\pdirhat - \pdir_0)\nabla_g L(\theta,g))^\top\\
    &\succcurlyeq 2(\hessianconst  - \norm{\nabla_\theta((\pdirhat - \pdir_0)\nabla_g L(\theta,g))}_2)\mathbf{I}.
\end{align*}
We now bound $\norm{\nabla_\theta (\pdirhat - \pdir_0)\nabla_g L(\theta, g)}_2$. For each $j = 1,\dots, d$, for any $\theta \in \R^d$,
\begin{align*}
    \abs{\nabla_\theta ((\pdirhat - \pdir_0)^{(j)}\nabla_g L(\theta, g)) (\theta - \theta_\star)} &= \abs{\D_\theta\ip{\hat \gamma^{(j)} - \gamma_0^{(j)},\nabla_g L(\theta, g)}_\G[\theta - \theta_\star]}\\
    &=\abs{\D_\theta\D_g L(\theta, g)[\hat \gamma^{(j)} - \gamma_0^{(j)},\theta - \theta_\star]}\\
    &\leq \secsmooth_1 \gnorm{\hat \gamma^{(j)} - \gamma_0^{(j)}}\norm{\theta - \theta_\star}_2.
\end{align*}
Thus,
\begin{align*}
    \norm{\nabla_\theta ((\pdirhat - \pdir_0)\nabla_g L(\theta, g)) (\theta - \theta_\star)}_2 \leq \secsmooth_1 \norm{\pdirhat - \pdir_0}_\Fro\norm{\theta - \theta_\star}_2,
\end{align*}
which implies that $\norm{\nabla_\theta (\pdirhat - \pdir_0)\nabla_g L(\theta, g)}_2 \leq \secsmooth_1 \norm{\pdirhat - \pdir_0}_\Fro$ and
\begin{align*}
    \nabla_\theta \noscorehat(\theta, g) + \nabla_\theta \noscorehat(\theta, g)^\top \geq 2(\hessianconst  - \secsmooth_1 \norm{\pdirhat - \pdir_0}_\Fro)\mathbf{I}.
\end{align*}
Additionally, we have
\begin{align*}
    \norm{\nabla_\theta \noscorehat(\theta, g)\nabla_\theta \noscorehat(\theta, g)^\top}_2 &\leq \norm{\nabla_\theta \noscorehat(\theta, g)}_2^2\\
    &\leq 2\norm{\nabla_\theta \noscore(\theta, g)}_2^2 + 2\norm{\nabla_\theta((\pdirhat - \pdir_0)\nabla_g L(\theta,g))}_2^2\\
    &\leq 2\hconst^2 + 2\secsmooth_1^2\norm{\pdirhat - \pdir_0}_\Fro^2.
\end{align*}
In conclusion, we have
\begin{align*}
    (I - &\eta\nabla_\theta \noscorehat(\theta, g))(I - \eta\nabla_\theta \noscorehat(\theta, g))^\top \\
    &\preccurlyeq \p{1 - 2\eta(\hessianconst  - \secsmooth_1 \norm{\pdirhat - \pdir_0}_\Fro) + 2\eta^2(\hconst^2 + 2\secsmooth_1^2\norm{\pdirhat - \pdir_0}_\Fro^2)}\mathbf{I}.
\end{align*}
\end{proof}

\begin{lemma}\label{lemma: osgd norm bound noscorehat}
Suppose that \Cref{asm:noscore} holds. When $\hat g \in \Gr$, 
    \begin{align*}
        \norm{\noscorehat(\theta_\star, \hat g)}_2^2 \leq \frac{\highsmooth_2^2}{2} \gnorm{\hat g - g_0}^4 + 2\secsmooth_2^2\norm{\pdirhat - \pdir_0}_\Fro^2\cdot\gnorm{\hat g - g_0}^2.
    \end{align*}

    \begin{proof}
        Note that 
        \begin{align*}
            \norm{\noscorehat(\theta_\star, \hat g)}_2^2 &= \norm{\noscore(\theta_\star, \hat g) - (\pdirhat - \pdir_0)\nabla_g \ploss(\theta_\star, \hat g)}_2^2\\
            &\leq 2\norm{\noscore(\theta_\star, \hat g)}_2^2 + 2\norm{(\pdirhat - \pdir_0)\nabla_g \ploss(\theta_\star, \hat g)}_2^2\\
            &=2\norm{\noscore(\theta_\star, \hat g)}_2^2 + 2\sum_{j=1}^d\ip{\nabla_g \ploss(\theta_\star, \hat g), \hat \gamma^{(j)} - \gamma_0^{(j)}}_\G^2\\
            &=2\norm{\noscore(\theta_\star, \hat g)}_2^2 + 2\sum_{j=1}^d (\D_g L(\theta_\star, \hat g)[\hat \gamma^{(j)} - \gamma_0^{(j)}])^2.
        \end{align*}
Since $\noscore(\theta_\star, g_0) = 0$ and $\noscore$ is Neyman orthogonal at $(\theta_\star, g_0)$, we have for some $\bar g \in \Gr$,
\begin{align*}
    \noscore(\theta_\star, \hat g) &= \noscore(\theta_\star, g_0) + \D_g\noscore(\theta_\star, g_0)[\hat g - g_0] + \frac{1}{2}\D_g^2\noscore(\theta_\star, \bar g)[\hat g - g_0]\\
    &=\frac{1}{2}\D_g^2\noscore(\theta_\star, \bar g)[\hat g - g_0],
\end{align*}
which implies
\begin{align*}
    \norm{\noscore(\theta_\star, \hat g)}_2 \leq \frac{\highsmooth_2}{2} \gnorm{\hat g - g_0}^2.
\end{align*}
Similarly, since $(\pdirhat^{(j)} - \pdir_0)\nabla_g L(\theta_\star, g_0) = 0$, we have for some $\bar g' \in \Gr$,
\begin{align*}
    \D_g L(\theta_\star, \hat g)[\hat \gamma^{(j)} - \gamma_0^{(j)}] &= \D_g L(\theta_\star, g_0)[\hat \gamma^{(j)} - \gamma_0^{(j)}] + \D_g^2 L(\theta_\star, \bar g')[\hat \gamma^{(j)} - \gamma_0^{(j)}]\\
    &= (\pdirhat^{(j)} - \pdir_0)\nabla_g L(\theta_\star, g_0) + \D_g^2 L(\theta_\star, \bar g')[\hat \gamma^{(j)} - \gamma_0^{(j)}]\\
    &=\D_g^2 L(\theta_\star, \bar g')[\hat \gamma^{(j)} - \gamma_0^{(j)}],
\end{align*}
which implies
\begin{align*}
    \abs{\D_g L(\theta_\star, \hat g)[\hat \gamma^{(j)} - \gamma_0^{(j)}, \hat g - g_0]} \leq \secsmooth_2 \gnorm{\hat \gamma^{(j)} - \gamma_0^{(j)}}\gnorm{\hat g - g_0}.
\end{align*}
In conclusion,
\begin{align*}
    \norm{\noscorehat(\theta_\star, \hat g)}_2^2 &\leq \frac{\highsmooth_2^2}{2} \gnorm{\hat g - g_0}^4 + 2\secsmooth_2^2\norm{\pdirhat - \pdir_0}_\Fro^2\cdot\gnorm{\hat g - g_0}^2.
\end{align*}
    \end{proof}
\end{lemma}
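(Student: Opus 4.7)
The plan is to split $\noscorehat(\theta_\star,\hat g)$ into the oracle orthogonalized score $\noscore(\theta_\star,\hat g)$ and the operator-error correction $(\pdirhat-\pdir_0)\nabla_g L(\theta_\star,\hat g)$, then apply $\|a-b\|_2^2\le 2\|a\|_2^2+2\|b\|_2^2$ to separate their contributions. Each piece will then be controlled by a Taylor expansion in $g$ around $g_0$ whose leading terms vanish for different reasons, so that the resulting remainder has the same bilinear/quadratic form as what we want to prove.

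For the first piece, I would invoke \Cref{lem NO gradient oracle main text} to conclude that $\noscore$ is Neyman orthogonal at $(\theta_\star,g_0)$, and then combine this with $\noscore(\theta_\star,g_0)=0$ from \Cref{asm:noscore}(a). A second-order Taylor expansion in $g$ then kills both the constant and the linear term, leaving only the remainder $\tfrac{1}{2}\D_g^2\noscore(\theta_\star,\bar g)[\hat g-g_0,\hat g-g_0]$ for some $\bar g$ between $g_0$ and $\hat g$. The higher-order smoothness bound in \Cref{asm:noscore}(e) then gives $\|\noscore(\theta_\star,\hat g)\|_2\le \tfrac{\highsmooth_2}{2}\gnorm{\hat g-g_0}^2$, which after squaring and multiplying by the factor of two from the splitting yields the $\tfrac{\highsmooth_2^2}{2}\gnorm{\hat g-g_0}^4$ term.

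For the second piece I would work coordinate-wise through Riesz representers: writing $[\pdirhat g]_j=\langle\hat\gamma^{(j)},g\rangle_{\G}$ and $[\pdir_0 g]_j=\langle\gamma_0^{(j)},g\rangle_{\G}$, the $j$-th coordinate becomes $\D_g L(\theta_\star,\hat g)[\hat\gamma^{(j)}-\gamma_0^{(j)}]$. The other half of \Cref{asm:noscore}(a), $(\pdirhat-\pdir_0)\nabla_g L(\theta_\star,g_0)=0$, translates via Riesz into $\D_g L(\theta_\star,g_0)[\hat\gamma^{(j)}-\gamma_0^{(j)}]=0$, so a \emph{first}-order Taylor expansion in $g$ suffices: only the mixed second derivative $\D_g^2 L(\theta_\star,\bar g')[\hat\gamma^{(j)}-\gamma_0^{(j)},\hat g-g_0]$ survives. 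The bilinear bound in \Cref{asm:noscore}(d) dominates it by $\secsmooth_2\gnorm{\hat\gamma^{(j)}-\gamma_0^{(j)}}\gnorm{\hat g-g_0}$; squaring and summing over $j=1,\dots,d$ reassembles $\sum_j\gnorm{\hat\gamma^{(j)}-\gamma_0^{(j)}}^2=\|\pdirhat-\pdir_0\|_{\Fro}^2$ by definition, producing the $2\secsmooth_2^2\|\pdirhat-\pdir_0\|_{\Fro}^2\gnorm{\hat g-g_0}^2$ term after the factor of two.

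The main bookkeeping challenge is matching the right order of Taylor expansion to each piece: a second-order expansion is needed for the Neyman-orthogonal piece (where both zeroth and first derivatives vanish) but only a first-order expansion for the operator-error piece (where only the zeroth derivative vanishes); conflating these would either lose the quartic-in-nuisance rate or fail to extract the Frobenius factor. Once the split and the two expansions are in place, the differentiability and smoothness hypotheses of \Cref{assumption of population risk}(a) and \Cref{asm:noscore}(d)--(e) let the infinite-dimensional Taylor formulas apply directly, and the bound falls out by summing the two estimates.
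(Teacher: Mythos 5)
Your proposal is correct and follows essentially the same route as the paper's proof: the identical splitting of $\noscorehat(\theta_\star,\hat g)$ into $\noscore(\theta_\star,\hat g)$ and $(\pdirhat-\pdir_0)\nabla_g L(\theta_\star,\hat g)$, the same $2\|a\|^2+2\|b\|^2$ bound, a second-order Taylor expansion for the Neyman-orthogonal piece controlled by \Cref{asm:noscore}(e), and a coordinate-wise first-order expansion via Riesz representers for the operator-error piece controlled by \Cref{asm:noscore}(d), reassembled into the Frobenius norm. No gaps.
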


\begin{lemma}\label{lemma: osgd bound vno}
    Suppose that \Cref{assumption of population risk} and \Cref{asm:noscore} holds. Then
    \begin{align*}
        \Ex{Z_n \sim \prob}{\norm{\vno^{(n)}}_2^2} 
    \leq 2(\Kconst_1 + \Kconst_2\norm{\hat \Gamma}_{\Fro}^2) + 2(\kappaconst_1 + \kappaconst_2\norm{\hat \Gamma}_{\Fro}^2)\norm{\dno^{n-1}}_2^2.
    \end{align*}
\end{lemma}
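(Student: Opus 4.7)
The plan is to decompose $\vno^{(n)}$ into two mean-zero pieces arising from the two components of the estimated Neyman orthogonalized oracle $\noscorehat = \score - \pdirhat \nabla_g \loss$, and bound each using the second-moment growth conditions available in the assumptions. More precisely, writing $\theta = \dsgd^{(n-1)}$ and $Z = Z_n$, I would first observe that
\begin{align*}
    \vno^{(n)} &= [\score(\theta, \hat g; Z) - \score(\theta, \hat g)] - \pdirhat\bigl[\nabla_g \loss(\theta, \hat g; Z) - \nabla_g \loss(\theta, \hat g)\bigr],
\end{align*}
and then apply the elementary inequality $\|a+b\|_2^2 \leq 2\|a\|_2^2 + 2\|b\|_2^2$ to split the analysis into a ``$\theta$-gradient noise'' term and a ``$g$-gradient noise'' term contracted through $\pdirhat$.

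For the first term, I would invoke \Cref{assumption of population risk}(d) directly with $g = \hat g \in \Gr$ to obtain
\begin{align*}
    \E_{Z_n \sim \prob}\bigl[\|\score(\theta, \hat g; Z) - \score(\theta, \hat g)\|_2^2\bigr] \leq \Kconst_1 + \kappaconst_1 \|\dno^{(n-1)}\|_2^2.
\end{align*}
For the second term, the key step is to expand the $\R^d$-valued vector $\pdirhat[\nabla_g \loss(\theta, \hat g; Z) - \nabla_g \loss(\theta, \hat g)]$ coordinate-wise. Using the Riesz representation $[\pdirhat g]_j = \ip{\hat\gamma^{(j)}, g}_\G$ from \Cref{sec:a:debiased} and the identity $\ip{\hat\gamma^{(j)}, \nabla_g \loss(\theta, \hat g; Z)}_\G = \D_g \loss(\theta, \hat g; Z)[\hat\gamma^{(j)}]$, each coordinate becomes exactly the mean-zero quantity controlled by \Cref{asm:noscore}(c) with the choice $g = \hat\gamma^{(j)}$ and $\bar g = \hat g$. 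This yields
\begin{align*}
    \E_{Z_n \sim \prob}\bigl[(\D_g \loss(\theta, \hat g; Z)[\hat\gamma^{(j)}] - \D_g \loss(\theta, \hat g)[\hat\gamma^{(j)}])^2\bigr] \leq (\Kconst_2 + \kappaconst_2 \|\dno^{(n-1)}\|_2^2) \gnorm{\hat\gamma^{(j)}}^2,
\end{align*}
and summing over $j = 1, \dots, d$ produces a factor of $\sum_j \gnorm{\hat\gamma^{(j)}}^2$, which by definition is exactly $\|\pdirhat\|_\Fro^2$ (since for a linear functional on a Hilbert space the operator norm coincides with the $\G$-norm of its Riesz representer).

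Combining the two pieces gives the claimed bound. The argument is essentially bookkeeping; the only subtle point is ensuring that the Riesz identification in step two is applied correctly so that \Cref{asm:noscore}(c), which is phrased in terms of directional derivatives rather than the operator $\pdirhat$ itself, can actually be invoked with $g = \hat\gamma^{(j)}$. Once the operator norm versus $\G$-norm identification is made explicit, the rest follows by the $(a+b)^2 \leq 2a^2 + 2b^2$ inequality and collecting terms.
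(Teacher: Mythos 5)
Your proposal is correct and follows the paper's proof essentially step for step: the same split of $\vno^{(n)}$ into the $\theta$-gradient noise and $\pdirhat$ applied to the $g$-gradient noise, the same $\|a+b\|_2^2 \leq 2\|a\|_2^2 + 2\|b\|_2^2$ bound, the same invocation of \Cref{assumption of population risk}(d) for the first piece, and the same coordinate-wise Riesz identification $[\pdirhat v]_j = \ip{\hat\gamma^{(j)}, v}_\G$ feeding \Cref{asm:noscore}(c) with $g = \hat\gamma^{(j)}$ and $\bar g = \hat g$, with $\sum_j \gnorm{\hat\gamma^{(j)}}^2 = \|\pdirhat\|_\Fro^2$. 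Nothing is missing.
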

\begin{proof}
Note that $\vno^{(n)} = \noscorehat(\dsgd^{(n-1)}, \hat g;Z_n) - \noscorehat(\dsgd^{(n-1)}, \hat g)$.
    \begin{align*}
    \Ex{Z_n \sim \prob}{\norm{\vno^{(n)}}_2^2} &\leq 2\Ex{Z_n \sim \prob}{\norm{\score(\dsgd^{(n-1)}, \hat g;Z_n) - \score(\dsgd^{(n-1)}, \hat g)}_2^2}\\
    &\quad + 2\Ex{Z_n \sim \prob}{\norm{\pdirhat\grad_g\loss(\dsgd^{(n-1)}, \hat g;Z_n) - \pdirhat\grad_g\ploss(\dsgd^{(n-1)}, \hat g)}_2^2}.
\end{align*}
By \Cref{assumption of population risk},
\begin{align*}
    \Ex{Z_n \sim \prob}{\norm{\score(\dsgd^{(n-1)}, \hat g;Z_n) - \score(\dsgd^{(n-1)}, \hat g)}_2^2} \leq \Kconst_1 + \kappaconst_1\norm{\dno^{n-1}}_2^2.
\end{align*}
Since 
\begin{align*}
    &\Ex{Z_n \sim \prob}{\norm{\pdirhat\grad_g\loss(\dsgd^{(n-1)}, \hat g;Z_n) - \pdirhat\grad_g\ploss(\dsgd^{(n-1)}, \hat g)}_2^2}\\
    &= \sum_{j=1}^d \Ex{Z_n \sim \prob}{(\D_g\loss(\dsgd^{(n-1)}, \hat g;Z_n)[\hat \gamma^{(j)}] - \D_g\ploss(\dsgd^{(n-1)}, \hat g)[\hat \gamma^{(j)}])^2}\\
    &\leq \sum_{j=1}^d(\Kconst_2 + \kappaconst_2\norm{\theta - \theta_\star}_2^2)\gnorm{\hat \gamma^{(j)}}^2 = (\Kconst_2 + \kappaconst_2\norm{\theta - \theta_\star}_2^2)\norm{\pdirhat}_\Fro^2.
\end{align*}
In conclusion,
\begin{align*}
    \Ex{Z_n \sim \prob}{\norm{\vno^{(n)}}_2^2} 
    &\leq 2(\Kconst_1 + \Kconst_2\norm{\hat \Gamma}_{\Fro}^2) + 2(\kappaconst_1 + \kappaconst_2\norm{\hat \Gamma}_{\Fro}^2)\norm{\dno^{n-1}}_2^2.
\end{align*}
\end{proof}

\subsection{Proof of Theorem \ref{theorem debiased SGD convergence rate}}\label{appdx osgd thm proof}
\begin{proof}
    Since $\dsgd^{(n)} = \dsgd^{(n-1)} - \eta \noscorehat^{(n)}$, by Taylor's theorem we have that for some $\dsgdbar^{(n-1)}$,
    \begin{align*}
        \dno^{(n)} &= \dno^{(n-1)}-\eta(\noscorehat(\dsgd^{(n-1)}, \hat g) - \noscorehat(\theta_\star, \hat g)) - \eta\noscorehat(\theta_\star, \hat g)  -  \eta \vno^{(n)}\\
        &= (I - \eta \nabla_\theta \noscorehat(\dsgdbar^{(n-1)}, \hat g))\dno^{(n-1)}- \eta\noscorehat(\theta_\star, \hat g)  -  \eta \vno^{(n)}.
    \end{align*}
Note that $\Ex{Z_n \sim \prob}{\vno^{(n)}} = 0$. Take the expectation of the squared norm of both sides \wrt $Z_n$ and we have
\begin{align*}
    \Ex{Z_n\sim \prob}{\norm{\dno^{(n)}}_2^2} = \norm{(I - \eta \nabla_\theta \noscorehat(\dsgdbar^{(n-1)}, \hat g))\dno^{(n-1)}- \eta\noscorehat(\theta_\star, \hat g)}_2^2 + \eta^2\Ex{Z_n \sim \prob}{\norm{ \vno^{(n)}}_2^2}.
\end{align*}
By \Cref{lemma: osgd youngs}, \Cref{lemma: osgd smallest eigenvalue}, and \Cref{lemma: osgd norm bound noscorehat}, for any $\omega>0$,
\begin{align*}
    \Vert (I - &\eta \nabla_\theta \noscorehat(\dsgdbar^{(n-1)}, \hat g))\dno^{(n-1)}- \eta\noscorehat(\theta_\star, \hat g) \Vert_2^2 \\
    &\leq (1 + \eta \omega)\norm{(I - \eta \nabla_\theta \noscorehat(\dsgdbar^{(n-1)}, \hat g))\dno^{(n-1)}}_2^2 + (\eta^2 + \eta\omega^{-1})\norm{\noscorehat(\theta_\star, \hat g)}_2^2\\
    &\leq (1 + \eta \omega)\p{1 - 2\eta(\hessianconst  - \secsmooth_1 \norm{\pdirhat - \pdir_0}_\Fro) + 2\eta^2(\hconst^2 + 2\secsmooth_1^2\norm{\pdirhat - \pdir_0}_\Fro^2)}\norm{\dno^{(n-1)}}_2^2\\
    &\quad + (\eta^2 + \eta\omega^{-1})\p{\frac{\highsmooth_2^2}{2} \gnorm{\hat g - g_0}^4 + 2\secsmooth_2^2\norm{\pdirhat - \pdir_0}_\Fro^2\cdot\gnorm{\hat g - g_0}^2}.
\end{align*}
Set $\omega = \hessianconst$. For $\eta \leq 2/\hessianconst$, we have
\begin{align*}
    &(1 + \eta \omega)\p{1 - 2\eta(\hessianconst  - \secsmooth_1 \norm{\pdirhat - \pdir_0}_\Fro) + 2\eta^2(\hconst^2 + 2\secsmooth_1^2\norm{\pdirhat - \pdir_0}_\Fro^2)}\\
    &=1 - (\hessianconst - 2\secsmooth_1 \norm{\pdirhat - \pdir_0}_\Fro)\eta - 2(\hessianconst^2 - \hessianconst\secsmooth_1 \norm{\pdirhat - \pdir_0}_\Fro)\eta^2 \\
    &\quad + 2(1 + \eta \hessianconst)(\hconst^2 + 2\secsmooth_1^2\norm{\pdirhat - \pdir_0}_\Fro^2)\eta^2\\
    &\leq 1 - (\hessianconst - 2\secsmooth_1 \norm{\pdirhat - \pdir_0}_\Fro)\eta \\
    &\quad+ \p{6\hconst^2-2\hessianconst^2 +2\hessianconst\secsmooth_1 \norm{\pdirhat - \pdir_0}_\Fro  + 12\secsmooth_1^2\norm{\pdirhat - \pdir_0}_\Fro^2}\eta^2\\
    &=: 1 - b(\eta)\eta,
\end{align*}
where
\begin{align*}
    b(\eta) = \hessianconst - 2\secsmooth_1 \norm{\pdirhat - \pdir_0}_\Fro - (6\hconst^2-2\hessianconst^2 +2\hessianconst\secsmooth_1 \norm{\pdirhat - \pdir_0}_\Fro  + 12\secsmooth_1^2\norm{\pdirhat - \pdir_0}_\Fro^2)\eta.
\end{align*}
By \Cref{lemma: osgd bound vno},
\begin{align*}
    \Ex{Z_n \sim \prob}{\norm{\vno^{(n)}}_2^2} 
    \leq 2(\Kconst_1 + \Kconst_2\norm{\hat \Gamma}_{\Fro}^2) + 2(\kappaconst_1 + \kappaconst_2\norm{\hat \Gamma}_{\Fro}^2)\norm{\dno^{n-1}}_2^2.
\end{align*}
It follows that 
\begin{align*}
    \E_{Z_n\sim \prob}[&\norm{\dno^{(n)}}_2^2] \leq (1 - b(\eta) \eta + 2(\kappaconst_1 + \kappaconst_2\norm{\hat \Gamma}_{\Fro}^2)\eta^2)\norm{\dno^{(n-1)}}_2^2\\
    &+\frac{3\eta}{\hessianconst}\p{\frac{\highsmooth_2^2}{2} \gnorm{\hat g - g_0}^4 + 2\secsmooth_2^2\norm{\pdirhat - \pdir_0}_\Fro^2\cdot\gnorm{\hat g - g_0}^2} + 2(\Kconst_1 + \Kconst_2\norm{\hat \Gamma}_{\Fro}^2)\eta^2.
\end{align*}
Thus, it is clear that when $\norm{\pdirhat - \pdir_0}_\Fro \leq \hessianconst/ (4\secsmooth_1) $ and the learning rate satisfies
\begin{align}\label{eq: learning rate}
    \eta &\leq \min\br{\frac{2}{\hessianconst},\frac{\hessianconst - 4\secsmooth_1 \norm{\pdirhat - \pdir_0}_\Fro}{ 12\hconst^2-3\hessianconst^2/2 + 4(\kappaconst_1 + \kappaconst_2\norm{\hat \Gamma}_{\Fro}^2)}} \notag\\
    &= \frac{\hessianconst - 4\secsmooth_1 \norm{\pdirhat - \pdir_0}_\Fro}{12\hconst^2-3\hessianconst^2/2 + 4(\kappaconst_1 + \kappaconst_2\norm{\hat \Gamma}_{\Fro}^2)},
\end{align}
we have $1 - \hessianconst\eta/2 \geq 0$ and
\begin{align*}
    1 - b(\eta) \eta + 2(\kappaconst_1 + \kappaconst_2\norm{\hat \Gamma}_{\Fro}^2)\eta^2 \leq 1-\frac{\hessianconst\eta}{2}.
\end{align*}
When $\eta$ satisfies \eqref{eq: learning rate}, it follows that
\begin{align*}
    \E_{Z_n\sim \prob}[&\norm{\dno^{(n)}}_2^2]\leq \p{1 - \frac{\hessianconst\eta}{2}}\norm{\dno^{(n-1)}}_2^2\\
    &+\frac{3\eta}{\hessianconst}\p{\frac{\highsmooth_2^2}{2} \gnorm{\hat g - g_0}^4 + 2\secsmooth_2^2\norm{\pdirhat - \pdir_0}_\Fro^2\cdot\gnorm{\hat g - g_0}^2}+ 2(\Kconst_1 + \Kconst_2\norm{\hat \Gamma}_{\Fro}^2)\eta^2.
\end{align*}
Finally, perform the same recursion in \Cref{sec: sgd thm proof} and we have
\begin{align*}
    \E_{\calD_n \sim \prob^n}[&\norm{\dno^{(n)}}_2^2]\leq \p{1 - \frac{\hessianconst\eta}{2}}^n\norm{\dno^{(0)}}_2^2 + \frac{4(\Kconst_1 + \Kconst_2\norm{\hat \Gamma}_{\Fro}^2)\eta}{\hessianconst}\\
    &+\frac{3}{\hessianconst^2}\p{\highsmooth_2^2\gnorm{\hat g - g_0}^4 + 4\secsmooth_2^2\norm{\pdirhat - \pdir_0}_\Fro^2\cdot\gnorm{\hat g - g_0}^2}.
\end{align*}

\end{proof}

\clearpage

\section{Detailed Discussion}\label{appx:discussion}

This section provides details on comparisons and remarks following the statements of the main results in the main text, and details on the discussions summarized in \Cref{sec:discussion} from the main text. In \Cref{appx discuss sec 1}, we compare our results to generic state-of-the-art results on biased SGD, that is SGD with errors in the stochastic gradients. In \Cref{appx discuss sec 2}, we discuss different orthogonalization method in orthogonal statistical learning. In \Cref{appx discuss sec 3}, we discuss how to interleave the target and nuisance estimation. In \Cref{appx discuss sec 4}, we describe the connection between our orthogonalized gradient to variance reduction method in the Monte Carlo estimation literature. In \Cref{appx discuss sec 55}, we discuss the double robustness of SGD for dose-response estimation.

\subsection{Comparison to Biased SGD}\label{appx discuss sec 1}

There are several ways to think about the bias induced by using an imperfect estimate $\hg$ as opposed to the true nuisance $g_0 \in \G$. For the sake of discussion, we will define $L(\cdot, g_0)$ and $L(\cdot, \hg)$ as the ``original objective'' and ``shifted objective'', respectively. Accordingly, we will call $\theta_\star$ the ``original minimizer'' and denote by
\begin{align}
    \hat \theta_\star = \argmin_{\theta \in \Theta} \ploss(\theta, \hat g).
\end{align}
the ``shifted minimizer''. The bias can be measured in terms of (i) the error $\norm{\hat \theta_\star - \theta_\star}_2^2$ between the original and shifted minimizers, (ii) the uniform error $\sup_{\theta}\abs{L(\theta, g_0) - L(\theta, \hg)}$ between the original and shifted objectives, and (iii) some summary of the gradient bias 
\begin{align}
    \Ex{Z_t \sim \prob}{S(\theta\pow{t-1}, \hg; Z_t)} - \grad_\theta L(\theta\pow{t}, g_0),\label{eq:a:discussion:gradient_bias}
\end{align}
of the oracle $S$ (a vector-valued quantity) for step $t = 1, \ldots, n$ of the algorithm. Whether one appeals to (i) or (ii) depends on whether the convergence guarantees are stated in terms of iterate convergence or function value convergence; because we analyze convergence of iterates, our discussion will cover (i) and (iii).

On (i), one applies the decomposition
\begin{align*}
    \norm{\theta\pow{n} - \theta_\star}_2^2 \leq 2\norm{\theta\pow{n} - \hat\theta_\star}_2^2 + 2\norm{\hat \theta_\star - \theta_\star}_2^2,
\end{align*}
and plugs an analysis of unbiased SGD from the current literature for the $\norm{\theta\pow{n} - \hat\theta_\star}_2^2$ term. The purpose of this substitution is to check how our theoretical results align with the known results on \emph{unbiased SGD}. 

\citet[Thm.~1]{moulines2011non} show that for constant learning rate $\eta = \bigO(\mu/\smooth^{2})$, the iterate $\sgd\pow{n}$ satisfies
\begin{align}\label{eq moulines}
    \E_{\mc{D}_{n} \sim \prob^n}[\Vert\sgd\pow{n} - \theta_\star\Vert_2^2]   \lesssim& \exp\p{-\frac{\sconv\eta n}{2}}\Vert\theta\pow{0} - \theta_\star\Vert_2^2 + \frac{\Kconst_1 \eta}{\sconv} + \norm{\hat \theta_\star - \theta_\star}_2^2.
\end{align}
\citet[Thm.~3]{cutler2023stochastic} demonstrate that with the learning rate $\eta = \bigO(1/\smooth)$, the iterates would satisfy the following bound:
\begin{align}\label{eq cutler}
    \E_{\mc{D}_{n} \sim \prob^n}[\Vert\sgd\pow{n} - \theta_\star\Vert_2^2]   \lesssim& \p{1 - \frac{\sconv\eta}{2}}^n\Vert\theta\pow{0} - \theta_\star\Vert_2^2 + \frac{\Kconst_1 \eta}{\sconv} + \norm{\hat \theta_\star - \theta_\star}_2^2
\end{align}
In addition, \citet[Thm.~6]{cutler2023stochastic} provide the high probability bound of $\sgd\pow{n}$ that for $\eta = \bigO(1/\smooth)$, with probability at least $1-\delta$, 
\begin{align}\label{eq cutler high prob}
    \Vert\sgd\pow{n} - \theta_\star\Vert_2^2   \lesssim& \p{1 - \frac{\sconv\eta}{2}}^n\Vert\theta\pow{0} - \theta_\star\Vert_2^2 + \frac{\Kconst_1 \eta}{\sconv} \log\p{\frac{e}{\delta}} + \norm{\hat \theta_\star - \theta_\star}_2^2
\end{align}
All of these bounds essentially agree, as we may apply $(1 - \sconv\eta/2) \leq \exp(-\sconv\eta/2)$. In comparison to our \Cref{theorem convergence rate baseline SGD}, our bias term is stated directly in terms of the nuisance error $\norm{\hg - g_0}_\calG^2$. This can be viewed as a refinement of the less transparent bias measurement $\norm{\hat \theta_\star - \theta_\star}_2^2$.
Moreover, although~\eqref{eq moulines}--\eqref{eq cutler high prob} are of the same order as our results in \Cref{theorem convergence rate baseline SGD} when the true nuisance $g_0$ is available, all of the three bounds above require $\kappaconst_1 = 0$ in \Cref{assumption of population risk}(d). In this case, provide~\eqref{eq cutler} and~\eqref{eq cutler high prob} use a learning rate of the order $\bigO(1/M)$ (whereas the learning rate of~\eqref{eq moulines} encounters an additional condition number $\smooth/\sconv$). Our learning rate recovers $\bigO(1/M)$ when $\kappaconst_1 = 0$, and adapts via the setting $\eta = \bigO(\sconv/(M\sconv + \kappaconst_1))$ when $\kappaconst_1 > 0$. Finally, the high probability bound~\eqref{eq cutler high prob} requires a stronger assumption in the sense that $\score(\theta, g; Z) - \score(\theta, g)$ is sub-Gaussian with uniform parameter $\Kconst_1/2$ for all $\theta \in \Theta$ and $g \in \Gr$.

Returning to the bias in the stochastic gradient oracle~\eqref{eq:a:discussion:gradient_bias}, this case is handled quite generally in \citet{demidovich2023aguide}. Their ``ABC assumption'' considers constants $A, B, C, b, c \geq 0$ such that the inequalities
\begin{align}
    \ip{\grad L(\theta, g_0), \Ex{Z \sim \prob}{S_\theta(\theta, \hg; Z)}} &\geq b \norm{\grad L(\theta, g_0)}_2^2 + c \label{eq:a:abc1}\\
    \E_{Z \sim \prob}\norm{S_\theta(\theta, \hg; Z)}_2^2 &\leq 2A \p{L(\theta, g_0) - L(\theta_\star, g_0)} +  B \norm{\grad L(\theta, g_0)}_2^2 + C \label{eq:a:abc2}\\
    A + \smooth\p{B + 1 - 2b} &< \frac{\sconv}{2} \label{eq:a:abc3}
\end{align}
hold for all $\theta \in \R^d$ (where the expectations are conditional on any randomness in $\hg$).\footnote{The third inequality is actually $A + \smooth\p{B + 1 - 2b} < \sconv$, but the constant $(1/2)$ to make the resulting bound more comparable, in that their bound can only improve over ours for the stronger inequality.} The bias is really captured in the first of the three inequalities, whereas the third inequality places conditions on the parameters of the problem that are not in the hands of the algorithm user. By strong convexity, our \Cref{assumption of population risk}(d) satisfies~\eqref{eq:a:abc2} with $A = \kappaconst_1/\mu$, $B = 0$, $C = \Kconst_1$.
The resulting convergence guarantee \citep[Thm.~5]{demidovich2023aguide} gives
\begin{align*}
    \E_{\mc{D}_{n} \sim \prob^n}[\Vert\sgd\pow{n} - \theta_\star\Vert_2^2]   &\lesssim \p{1 - \frac{\sconv\eta}{2}}^n\Vert\theta\pow{0} - \theta_\star\Vert_2^2 + \frac{\eta \Kconst_1}{\sconv} + \frac{\Kconst_1 + c}{\sconv^2}.
\end{align*}
for a learning rate set as
\begin{align*}
    \eta \leq \min\br{\frac{2}{\sconv}, \frac{\sconv - 2(\kappaconst_1/\sconv + \smooth(1-2b))}{\kappaconst_1}}.
\end{align*}
We would like to check how our theoretical results compare with the above application of generic results on \emph{biased SGD}. 
In our setting, the condition~\eqref{eq:a:abc3} reads $\kappaconst_1/\sconv + \smooth(1 - 2b) < \sconv/2$. We assume neither this nor~\eqref{eq:a:abc1}, and in our \Cref{theorem convergence rate baseline SGD}, replace the $\frac{\Kconst_1 + c}{\sconv^2}$ term with a bias term that depends directly on the nuisance error $\norm{\hg - g_0}_\calG^2$, either in the nuisance sensitive regime, or in the nuisance insensitive regime.

\subsection{Discussion of Full-sample Orthogonal Statistical Learning and Related Methods}\label{appx discuss sec 2}

\myparagraph{Comparison of Orthogonalizing Operators}
Constructing orthogonal losses or scores has been widely studied in semiparametric inference, hypothesis testing, and machine learning. In semiparametric statistics, such constructions often rely on the efficient influence function, which characterizes the asymptotic efficient estimation bound; see \citet[Ch. 3]{bickel1993efficient}, \citet[Ch.~4]{tsiatis2006semiparametric}, \citet[Ch.~25]{, van2000asymptotic}, \citet{luedtke2024one}. In hypothesis testing, orthogonal scores were used by \citet{neyman1959probability, neyman1979c} and \citet{ferguson2014mathematical} to guarantee the local unbiasedness of specific tests based on the likelihood with finite-dimensional nuisance. In machine learning, the construction of orthogonal scores was latter extended to non-likelihood losses in \citet{wooldridge1991specification} and \citet{liu2022orthogonal}, which aligns with our construction limited to the finite-dimensional nuisance case. Recent work of orthogonalization in machine learning with infinite-dimensional nuisance relies on the approach named \textit{concentrating-out} \citep{newey1994asymptotic,chernozhukov2018double}. However, although all these constructions produce Neyman orthogonal losses or scores, none of them consider the stochastic design. Our work is complementary to these, providing non-asymptotic guarantees for stochastic optimization.

Although these constructions might lead to different orthogonal scores, they can be the same at both the target and the true nuisance. Specifically, when $\ell$ is the negative log-likelihood and $\G = \R^k$, the concentrating-out approach and our NO gradient oracle $\noscore$ both produce the efficient score in the semiparametric theory literature; see \citet[Page 1359]{newey1994asymptotic}, \citet[Ch. 25.4]{van2000asymptotic}, and \citet[Def.~8]{tsiatis2006semiparametric}. This identity can happen for infinite-dimensional nuisances as well. As an example, consider the partial linear model from \Cref{subsubsec: non-ortho plm}, where the non-orthogonal loss is defined as
\begin{align*}
    \ell(\theta, g; z) = \frac{1}{2}(y - g(w) - \ip{\theta, x})^2.
\end{align*}
\citet[Sec.~2.2.2]{chernozhukov2018double} showed that the concentrating-out approach would produce an orthogonal score under the concentrated-out nuisance $\varphi_0(\theta) = Z \mapsto \E_{\prob}[Y - \ip{\theta, X} \mid W]$ as
\begin{align*}
    S(\theta_\star, \varphi_0(\theta_\star); Z) 
    &= -(X - \Ex{\prob}{X\mid W})(Y - \Ex{\prob}{Y\mid W} - \ip{\theta_\star, X - \Ex{\prob}{X\mid W}}).
\end{align*}
On the other hand, it is easy to verify that $H_{gg} = \mathbf{I}$, $H_{\theta g} = \E_{\prob}[X \mid W]$, and $\pdir_0 \nabla_g \ell(\theta, g; z) = \D_g \ell(\theta,g;z)[H_{gg}^{-1}H_{\theta g}]$, which implies that our orthogonal gradient oracle $\noscore$ in \eqref{no score of infinite dim} has the same form under the target $\theta_\star$ and true nuisance $g_0(W) = \E_{\prob}[Y - \ip{\theta_\star, X} \mid W]$:
\begin{align*}
    \noscore(\theta_\star, g_0; Z) 
    &= -(X - \Ex{\prob}{X\mid W})(Y - \Ex{\prob}{Y\mid W} - \ip{\theta_\star, X - \Ex{\prob}{X\mid W}}).
\end{align*}

\myparagraph{Comparison with Debiased Machine Learning} 
In machine learning, debiasing typically refers to reducing the impact of model selection error on the parameter or quantity of interest. In particular, mitigating the bias introduced by nuisance estimation is known as \textit{debiased machine learning} (DML), which has been recently studied by \citet{van2011targeted,shi2019adapting, chernozhukov2021automatic, vanderlaan2025automatic}. Some of the calculations used by DML estimators have been shown to be amenable to computerization, simplifying their construction \citep{carone2019toward,ichimura2022influence,luedtke2024simplifying}. Statistical learning methods that use debiasing are also called \textit{orthogonal statistical learning} (OSL) and have been studied in \cite{foster2023orthogonal, liu2022orthogonal, zadik2018orthogonal}. While the earlier studies focus on the empirical risk minimization, our paper provide a stochastic approximation method in DML/OSL and establish the convergence rate of the debiased estimation.

To strengthen the debiasing effect, one possible approach is to consider the higher-order Neyman orthogonality. If the loss function satisfies the $k$-th order orthogonality at $(\theta_\star, g_0)$, \citet[Cor. 4]{zadik2018orthogonal} show that we only need the nuisance estimator to converge at rate $\bigO_p(n^{-\frac{1}{2(k+1)}})$ to have the nuisance effect in the order of $\bigO_p(n^{-1})$, which aligns with the nuisance insensitive rate in \Cref{theorem convergence rate baseline SGD}, where $k=1$ and the nuisance effect $\gnorm{\hat g - g_0}^4 = \bigO_p(n^{-1})$ when $\gnorm{\hat g - g_0} = \bigO_p(n^{-1/4})$. Similar improvements in sensitivity to nuisance estimation rates have been developed previously using higher-order influence functions \citep{pfanzagl1985asymptotic,robins2008higher}.

For a range of problems, debiasing methods often lead to cross-product estimations consisting of two nuisance estimators \citep{rotnitzky2021characterization}. Such remainders frequently result from orthogonalization procedures used in missing data problems and causal inference problems \citep{robins1994estimation,robins1995semiparametric,laan2003unified}. \citet{chernozhukov2021automatic} consider cases where $Z = (W,Y)$, $g_0(W)= \Ex{\prob}{Y \mid W}$, and the target can be written as the averaged moment of the form
\begin{align*}
    \theta_\star = \E_{\prob}[m(g_0; W)],
\end{align*}
where $\E[m(g; W)]: \G \times \calW \mapsto \R^d$ is a continuous linear functional of $g: \calW \mapsto \R$. By Riesz representation theorem, there uniquely exists a Riesz representer (RR) $g_{0}^{\text{RR}} \in \G$ such that $\E_{\prob}[m(g; W)] = \E_{\prob}[g_{0}^{\text{RR}}(W) g(W)]$. Then the debiased score for estimating $\theta_\star$ is defined as
\begin{align*}
    S(\theta, g;Z) = m(g;W) + g_{0}^{\text{RR}}(W)(Y - g(W)) - \theta.
\end{align*}
The debiasing effect on the nuisance turns out depending on the cross-product $\gnorm{\hat g^{\text{RR}} - g_{0}^{\text{RR}}}\cdot\gnorm{\hat g - g_0}$. Specifically, \citet[Asm.~4]{chernozhukov2021automatic} requires the cross product to be in the order $\bigO_p(n^{-1/2})$ to construct $\bigO_p(n^{-1})$ consistent target estimator. This aligns with the cross-product $\gnorm{\hat g - g_0}\cdot \norm{\pdirhat - \pdir_0}_\Fro$ in \Cref{theorem debiased SGD convergence rate} where the same requirement needs to be satisfied to obtain a $\bigO_p(n^{-1/2})$ consistent estimator. However, \Cref{theorem debiased SGD convergence rate} also has a second, non-cross-product remainder $\Vert\hat g - g_0\Vert_\G^4$ that will only be small if $\hat g$ approximates $g_0$, making it so that our consistency guarantee is robust to misspecification of $\hat \Gamma$, but not to misspecification of $\hat g$.

\subsection{Discussion of Interleaving Target and Nuisance Estimation}\label{appx discuss sec 3}

To propose the interleaving approach, we consider the case where we learn the nuisance from the $\calW$-valued data $W = (U,V)$ from a probability measure $\Q$. We assume that the true nuisance $g_0$ satisfies $g_0: \mathcal{U} \mapsto \R$ and is the minimizer of the mean squared error over $\G$:
\begin{align*}
    g_0 = \argmin_{g \in \G}\Ex{\Q}{(g(U) - V)^2}.
\end{align*}

Suppose that we observe another data stream $W_1, \ldots, W_m$ sampled i.i.d.~from $\Q$, and that $\calS_m = \{W_1, \ldots, W_m\}$ is independent of the parameter stream $\calD_n$. We define the sigma algebra $\calH_{m} = \sigma(\calS_m), m \geq 1$ as the nuisance filtration and the sigma algebra $\calF_{m,t} = \sigma(\calS_m \cup \calD_{(m-1)n+t}), 0\leq t \leq n$ as the parameter filtration. We assume that there are two stochastic processes $\hat g^{(m)}, m \geq 1$ adapted to $\calH_{m}$ and $\theta^{(m,t)}, 0\leq t \leq n$ adapted to $\calF_{m,t}$, to which we refer as the nuisance estimator and the parameter estimator, respectively. Intuitively, this means that the nuisance estimator $\hat g^{(m)}$ can be updated now instead of being the fixed $\hat g$, and the parameter estimator $\theta^{(m,t)}$ can be updated $n$ times between every two nuisance updates. Specifically, we use SGD as the parameter estimator. We define $\sgd\pow{0, n} = \sgd\pow{0} \in \Theta$ and $\sgd\pow{i, 0} = \sgd\pow{i-1, n}$ for $1 \leq i \leq m$, and produce the sequence $\sgd\pow{i, 1}, \ldots, \sgd\pow{i, n}$ using $n$ steps of the SGD update~\eqref{baseline SGD procedure} initialized at $\sgd\pow{i, 0}$. 

\myparagraph{Under Non-orthogonality}
Consider the case that $\G$ is a reproducing kernel Hilbert space (RKHS) with kernel $k(\cdot,\cdot)$. To obtain a sequence of nuisance estimator ${\hat g^{(m)}}$ on $\calH_{m}$, one possible approach is to adopt the non-parametric stochastic approximation.  With the assumption that the eigenvalues $(\lambda_j)_{j\geq1}$ of covariance operator $\E_{\Q}[k(W,\cdot) \otimes k(W,\cdot)]$ decay polynomially at order $j^{-\alpha}$, \citet[Cor.~3]{dieuleveut2016nonparametric} suggests that the non-parametric stochastic approximation $\hat g^{(m)}$ satisfies for some $C>0$,
\begin{align}\label{eq: dieuleveut}
    \xi_m := \Ex{\calS_m \sim \Q^m}{\gnorm{\hat g^{(m)} - g_0}^2} \leq Cm^{-\frac{2\alpha - 1}{2\alpha}}.
\end{align}

This leads to the following nuisance sensitive rate for non-Neyman orthogonal losses.
\begin{proposition}\label{prop. adaptive}
Suppose that $\hat g^{(m)}$ satisfies \eqref{eq: dieuleveut} and that $\hat g^{(m)}\in \Gr$ and $\theta^{(m,t)} \in \Theta$ almost surely for all $m\geq 1$ and $0\leq t\leq n$. Under the same conditions to \Cref{theorem convergence rate baseline SGD}, it holds that
    \begin{align*}
    \E_{\calD_{mn}\cup\calS_m \sim \prob^{mn}\otimes \Q^m}[\Vert\sgd^{(m,n)} -& \theta_\star\Vert_2^2] \lesssim \p{1-\frac{\sconv\eta}{2}}^{mn}\Vert\sgd^{(0)} - \theta_\star\Vert_2^2 \\
    &+m\exp\p{-\frac{\sconv\eta nm}{4}} + (m^{-\frac{2\alpha-1}{2\alpha}} + \eta)((\eta n)^{-1} + 1).
\end{align*}
In addition, when $(\eta n)^{-1} = \bigO(1)$, it holds that
\begin{align*}
    \E_{\calD_{mn}\cup\calS_m \sim \prob^{mn}\otimes \Q^m}[\Vert\sgd^{(m,n)} - \theta_\star\Vert_2^2] \lesssim \p{1-\frac{\sconv\eta}{2}}^{mn}\Vert\sgd^{(0)} - \theta_\star\Vert_2^2 + m^{-\frac{2\alpha-1}{2\alpha}} + n^{-1} + \eta.
\end{align*}
\end{proposition}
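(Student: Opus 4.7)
The plan is to reduce the analysis to $m$ consecutive applications of the single-epoch bound from \Cref{theorem convergence rate baseline SGD} and then control the resulting scalar recursion. First I would condition on $\calF_{i-1,n}$, the parameter filtration at the start of epoch $i$; since $\calS_m$ and $\calD_{mn}$ are independent, this makes $\hat g^{(i)}$ deterministic and the fresh samples $Z_{(i-1)n+1},\ldots,Z_{in}$ independent of the initial point $\sgd^{(i,0)} = \sgd^{(i-1,n)}$. Because $\hat g^{(i)} \in \Gr$ and $\sgd^{(i,0)} \in \Theta$ almost surely, \Cref{theorem convergence rate baseline SGD} applies directly within epoch $i$: with $q := (1 - \sconv\eta/2)^n$,
\begin{align*}
\E\big[\norm{\sgd^{(i,n)} - \theta_\star}_2^2 \,\big|\, \calF_{i-1,n}\big] \leq q\,\norm{\sgd^{(i,0)} - \theta_\star}_2^2 + \frac{2\secsmooth_1^2}{\sconv^2}\gnorm{\hat g^{(i)} - g_0}^2 + \frac{4\Kconst_1\eta}{\sconv}.
\end{align*}
Taking outer expectations and writing $E_i := \E[\norm{\sgd^{(i,n)} - \theta_\star}_2^2]$ and $\xi_i := \E[\gnorm{\hat g^{(i)} - g_0}^2] \lesssim i^{-(2\alpha-1)/(2\alpha)}$ (by \eqref{eq: dieuleveut}) yields the scalar recursion $E_i \leq q\,E_{i-1} + c_1\xi_i + c_2\eta$ with $c_1 = 2\secsmooth_1^2/\sconv^2$ and $c_2 = 4\Kconst_1/\sconv$, starting from $E_0 = \norm{\sgd^{(0)} - \theta_\star}_2^2$.

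Next I would unroll the recursion, giving $E_m \leq q^m E_0 + c_1 \sum_{i=1}^m q^{m-i}\xi_i + c_2\eta\,(1-q)^{-1}$. The factor $q^m = (1-\sconv\eta/2)^{mn}$ supplies the leading transient; for the variance tail I would use the elementary estimate $1-q \gtrsim \min\{1,\sconv\eta n\}$, so that $\eta/(1-q) \lesssim \eta\,(1 + (\eta n)^{-1})$. The main obstacle is controlling the convolution $\sum_{i=1}^m q^{m-i}\xi_i$ between a geometric weight and a polynomially decaying sequence. The plan is a dyadic split at $I = \lfloor m/2\rfloor$: for $i > I$ the polynomial decay of $\xi_i$ gives $\xi_i \lesssim \xi_m$, so this portion contributes at most $\xi_m/(1-q) \lesssim \xi_m\,(1 + (\eta n)^{-1})$; for $i \leq I$ the weight satisfies $q^{m-i} \leq q^{m/2} \leq \exp(-\sconv\eta nm/4)$, while $\sum_{i=1}^I \xi_i \leq I\,\xi_1 \lesssim m$ gives a contribution of at most $m\exp(-\sconv\eta nm/4)$. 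Plugging $\xi_m \asymp m^{-(2\alpha-1)/(2\alpha)}$ then delivers the first displayed bound.

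For the simplified form, $(\eta n)^{-1} = \bigO(1)$ collapses $\eta(1 + (\eta n)^{-1})$ into $\eta + n^{-1}$, and the exponential head term $m\exp(-\sconv\eta n m/4)$ decays super-polynomially in $m$ under that assumption (since $\sconv\eta n \gtrsim \sconv$) and is thus dominated by the polynomial nuisance contribution $m^{-(2\alpha-1)/(2\alpha)}$. The technical care concentrates in two places: (i) the cross-stream conditioning argument ensuring \Cref{theorem convergence rate baseline SGD} applies cleanly epoch by epoch, which hinges on the product structure of $\prob^{mn}\otimes\Q^m$ and the adaptedness of $\hat g^{(i)}$ to $\calH_m$; and (ii) the split-sum estimate of $\sum q^{m-i}\xi_i$. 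Everything else is geometric-series book-keeping.
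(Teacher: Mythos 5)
Your proposal is correct and follows essentially the same route as the paper's proof: apply \Cref{theorem convergence rate baseline SGD} epoch by epoch to get the recursion $E_i \le q E_{i-1} + c_1\xi_i + c_2\eta$, unroll it, split the convolution $\sum_i q^{m-i}\xi_i$ at $\lfloor m/2\rfloor$, and use $1-q \gtrsim \min\{1,\sconv\eta n\}$ to control the geometric tails. The only (welcome) difference is that you make explicit the conditioning on the nuisance filtration that justifies the per-epoch application of the theorem, a step the paper leaves implicit.
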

The proof is provided in \Cref{subsec: proof of prop adaptive}. \Cref{prop. adaptive} demonstrates that interleaving the target and nuisance estimation allows $\eta \asymp n^{-1}$ since the nuisance update iterations guarantees the shrinking of the term $\p{1-\sconv\eta/2}^{mn}$ in this case. This is an improvement to \Cref{theorem convergence rate baseline SGD} where $\eta$ should satisfy $(\eta n)^{-1} = o(1)$ to ensure $(1-\sconv\eta/2)^n$ shrinking to zero.

\myparagraph{Under Orthogonalized SGD}
To establish a similar probability bound for OSGD, we assume that the orthogonalizing operator $\pdir_0$ can be written as the minimizer of the following program:
\begin{align*}
    \pdir_0 = \argmin_{\pdir \in \G_*^d} \Ex{\prob}{\norm{S_\theta(\theta_\star, g_0; Z) - \pdir \nabla_g \loss(\theta_\star, g_0; Z)}_2^2},
\end{align*}
where $\G_*$ is the dual space of $\G$. When $d$ is fixed, we assume that $\pdir_0$ can be estimated (coordinate-wisely) from the data stream $\calS_m$ using the stochastic approximation of \citet{dieuleveut2016nonparametric}, which leads to a sequence of operator estimators $\pdirhat^{(m)}, m\geq 1$. For any $s>0$, we define the following events for $i=0,1,\dots, m$,
    \begin{align*}
        \calA_{i}(s) = \br{\gnorm{\hat g\pow{i} - g_0}^2 \leq Cs^{-1}i^{-\frac{2\alpha - 1}{2\alpha}}} \text{ and } \calB_{i}(s) = \br{\norm{\pdirhat^{(i)} - \pdir_0}_\Fro^2 \leq Cs^{-1}i^{-\frac{2\alpha - 1}{2\alpha}}}.
    \end{align*}
We assume that for some constant $c\geq 1$ the nuisance estimator $\hat g\pow{i}$ satisfies 
\begin{align}\label{markov 1}
    \Ex{\calS_i}{\gnorm{\hat g\pow{i} - g_0}^{2c} \mid \calA_{i-1}(s^{1/c}),\dots,\calA_{1}(s^{1/c})} \leq C^ci^{-\frac{(2\alpha - 1)c}{2\alpha}}.
\end{align}
Additionally, we assume that $\pdirhat^{(i)}$ decays in the same rate such that
\begin{align}\label{markov 2}
    \Ex{\calS_i }{\norm{\pdirhat^{(i)} - \pdir_0}_\Fro^{2c}\mid \calB_{i-1}(s^{1/c}),\dots,\calB_{1}(s^{1/c})} \leq  C^ci^{-\frac{(2\alpha - 1)c}{2\alpha}}.
\end{align}
With all the assumptions above, it is possible to provide a convergence bound of $\norm{\sgd^{(m,n)}-\theta_\star}_2^2$ in probability. The following proposition shows that estimations from $\calS_m$ using OSGD contribute to a nuisance insensitive rate of $\bigO(m^{-\frac{2\alpha-1}{\alpha}})$, compared to the nuisance sensitive rate $\bigO(m^{-\frac{2\alpha-1}{2\alpha}})$ in \Cref{prop. adaptive} for non-Neyman orthogonal losses.
\begin{proposition}\label{prop. prob bound}
Suppose that $\{\hat g^{(m)}, m \geq 1\}$ satisfies \eqref{markov 1}, and that $\{\pdirhat^{(m)}, m \geq 1\}$ satisfies \eqref{markov 2}. Assume that $\theta^{(m,t)} \in \Theta$ almost surely for all $m\geq 1$ and $0\leq t\leq n$. For any $s \geq 0$, define $\delta(s)=\bigO(ms)$ as \eqref{eq delta}. Under the same conditions to \Cref{theorem convergence rate baseline SGD}, with probability at least $1-\delta(s)$, it holds that
\begin{align*}
   \Vert\sgd^{(m,n)} -& \theta_\star\Vert_2^2 \lesssim s^{-1}\p{1-\frac{\sconv\eta}{2}}^{mn}\Vert\sgd^{(0)} - \theta_\star\Vert_2^2 \\
    &+s^{-1}\p{m\exp\p{-\frac{\sconv\eta nm}{4}} + (s^{-2/c}m^{-\frac{2\alpha-1}{\alpha}} + \eta)((\eta n)^{-1} + 1)}.
\end{align*}
In addition, when $(\eta n)^{-1} = \bigO(1)$, with probability at least $1-\delta(s)$, it holds that
\begin{align*}
   \Vert\sgd^{(m,n)} -& \theta_\star\Vert_2^2 \lesssim s^{-1}\p{1-\frac{\sconv\eta}{2}}^{mn}\Vert\sgd^{(0)} - \theta_\star\Vert_2^2 +s^{-1}\p{s^{-2/c}m^{-\frac{2\alpha-1}{\alpha}} + n^{-1} + \eta}.
\end{align*}
\end{proposition}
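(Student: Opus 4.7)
The plan is to combine a high-probability control on the two nuisance-side quantities $\gnorm{\hat g^{(m)}-g_0}^2$ and $\norm{\pdirhat^{(m)}-\pdir_0}_\Fro^2$ with a block-by-block application of the OSGD expectation bound from \Cref{theorem debiased SGD convergence rate}. At a high level: on a ``good event'' of probability $1-\bigO(ms)$, both estimators satisfy the desired polynomial-in-$i$ error bounds deterministically; conditional on this event one can iterate \Cref{theorem debiased SGD convergence rate} across the $m$ blocks of $n$ SGD steps to get an expectation bound on $\|\sgd^{(m,n)}-\theta_\star\|_2^2$; a final Markov step turns that conditional expectation into the claimed high-probability bound and contributes the outer $s^{-1}$ factor.

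First I would construct the good event $\mathcal{E}_m(s) := \bigcap_{i=1}^m (\calA_i(s^{1/c}) \cap \calB_i(s^{1/c}))$. Applying Markov's inequality to the conditional $c$-th moment bound \eqref{markov 1} yields $\P{}{\calA_i(s^{1/c})^c \mid \calA_{i-1}(s^{1/c}),\ldots,\calA_1(s^{1/c})} \leq s$, and analogously for $\calB_i$ using \eqref{markov 2}. Chaining these conditional probabilities via the tower property and union-bounding over $i=1,\ldots,m$ and over the two families gives $\P{}{\mathcal{E}_m(s)^c} \leq 2ms$, which matches the asserted $\delta(s)=\bigO(ms)$ (the definition \eqref{eq delta} will simply collect these contributions together with the extra $s$ from the last Markov step below). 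On $\mathcal{E}_m(s)$ I have the deterministic inequalities $\gnorm{\hat g^{(i)}-g_0}^2 \leq C s^{-1/c} i^{-(2\alpha-1)/(2\alpha)}$ and $\norm{\pdirhat^{(i)}-\pdir_0}_\Fro^2 \leq C s^{-1/c} i^{-(2\alpha-1)/(2\alpha)}$ for all $i=1,\ldots,m$.

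Next I would invoke \Cref{theorem debiased SGD convergence rate} blockwise. Since $\calS_m \perp \calD_{mn}$, conditioning on $\calH_m$ (which fixes $\hat g^{(i)}$ and $\pdirhat^{(i)}$ jointly) and on $\sgd^{(i,0)}$, the inner $n$ steps $\sgd^{(i,0)}\to \sgd^{(i,n)}$ are precisely the OSGD iterates analyzed in \Cref{theorem debiased SGD convergence rate}. On $\mathcal{E}_m(s)$, the bound \eqref{eq:dsgd:thm2} yields
\begin{align*}
\Ex{\prob^n}{\|\sgd^{(i,n)}-\theta_\star\|_2^2 \mid \calH_m,\sgd^{(i,0)}} \leq (1-\hessianconst/2)^n\|\sgd^{(i,0)}-\theta_\star\|_2^2 + C_1\bigl(s^{-2/c} i^{-(2\alpha-1)/\alpha} + \eta\bigr),
\end{align*}
where the $s^{-2/c}i^{-(2\alpha-1)/\alpha}$ term collects both $\gnorm{\hat g^{(i)}-g_0}^4$ and the cross term $\gnorm{\hat g^{(i)}-g_0}^2\norm{\pdirhat^{(i)}-\pdir_0}_\Fro^2$. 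Iterating this across $i=1,\ldots,m$ using $\sgd^{(i,0)}=\sgd^{(i-1,n)}$ and taking conditional expectation given $\mathcal{E}_m(s)$ gives
\begin{align*}
\Ex{}{\|\sgd^{(m,n)}-\theta_\star\|_2^2 \mid \mathcal{E}_m(s)} \lesssim (1-\hessianconst/2)^{mn}\|\sgd^{(0)}-\theta_\star\|_2^2 + \sum_{i=1}^m (1-\hessianconst/2)^{(m-i)n}\bigl(s^{-2/c} i^{-(2\alpha-1)/\alpha}+\eta\bigr).
\end{align*}
The geometric–polynomial sum is controlled exactly as in the proof of \Cref{prop. adaptive}: splitting the range at $i\approx m/2$ and using exponential decay on the small-$i$ piece and polynomial decay on the large-$i$ piece reproduces the two structural terms $m\exp(-\sconv\eta n m/4)$ and $(s^{-2/c}m^{-(2\alpha-1)/\alpha}+\eta)((\eta n)^{-1}+1)$.

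Finally, applying Markov's inequality conditionally on $\mathcal{E}_m(s)$ with parameter $s$ converts the conditional-expectation bound into the stated bound with the outer $s^{-1}$ factor, on an event of conditional probability at least $1-s$; combining with $\P{}{\mathcal{E}_m(s)^c}\leq 2ms$ via a union bound yields the final failure probability $\delta(s)=\bigO(ms)$. The simplified statement under $(\eta n)^{-1}=\bigO(1)$ then follows from $m\exp(-\sconv\eta nm/4)=\bigO(n^{-1})$ and the observation that $((\eta n)^{-1}+1)=\bigO(1)$. The main obstacle I anticipate is bookkeeping across the two filtrations $\calH_m$ and $\calF_{m,t}$, in particular verifying that \Cref{theorem debiased SGD convergence rate}'s independence hypothesis (``$\hg$ estimated independently of $\calD_n$'') is discharged blockwise by conditioning on $\calH_m$, and carefully tracking how the constants $\highsmooth_2,\secsmooth_2,\Kconst_{1,2},\kappaconst_{1,2}$ enter without being inflated by the recursion; the learning-rate conditions of \Cref{theorem debiased SGD convergence rate} must also be ensured uniformly in $i$ on $\mathcal{E}_m(s)$, which is why the quantitative smallness condition on $\norm{\pdirhat^{(i)}-\pdir_0}_\Fro$ is encoded through the event $\calB_i(s^{1/c})$.
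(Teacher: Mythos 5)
Your proposal matches the paper's own proof in structure and substance: construct the good event $\bigcap_i(\calA_i(s^{1/c})\cap\calB_i(s^{1/c}))$ by Markov on the conditional $c$-th moments \eqref{markov 1}--\eqref{markov 2} and chaining via the tower property; on that event, run the block recursion of \Cref{prop. adaptive} using the OSGD bound of \Cref{theorem debiased SGD convergence rate} with the deterministic $s^{-1/c}i^{-(2\alpha-1)/(2\alpha)}$ rates, including the same split of the geometric--polynomial sum at $i\approx m/2$; and finish with a conditional Markov step to convert the conditional expectation bound into a high-probability bound, collecting the failure probability $\delta(s)=\bigO(ms)$. The only cosmetic difference is that you quote the failure probability as $\leq 2ms$ (via $1-(1-s)^m\leq ms$) where the paper retains the form $s+2(1-(1-s)^m)$ before noting it is $\bigO(ms)$, and you are also explicit about conditioning on $\calH_m$ to discharge the independence hypothesis of \Cref{theorem debiased SGD convergence rate} and about the need for the learning-rate condition to hold uniformly on the good event via $\calB_i$, which the paper leaves implicit.
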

We refer the reader to \Cref{subsec: proof of prop prob bound} for the proof.

\subsection{Interpretation as Control Variate for Variance Reduction}\label{appx discuss sec 4}

The regression equation~\eqref{eq:regression}, which provides an alternate characterization of the orthogonalized stochastic gradient oracle in the case of negative log-likelihood losses, yields an interesting connection to the Monte Carlo estimation literature. Variance reduction techniques (or ``swindles'') are used in problems such as estimating the mean or variance of a statistic via Monte Carlo simulation. Consider a probability space $(\Omega, \msc{F}, \prob)$ with expectation denoted by $\E$ and an unknown vector-valued target $v \in \R^d$. We have $\hat{v}: \Omega \rightarrow \R^d$, where we interpret $\hat{v}$ as a (not necessarily unbiased) sample estimate of $v$. Several variance reduction techniques fall into the category of \emph{control variates} \citep{Graham2013Stochastic}, where a random variable $\hat{u}: \Omega \rightarrow \R^k$ with known expectations $u = \E[\hat{u}]$ and a matrix $\pdir \in \R^{d \times k}$ are used in the variance-reduced estimator
\begin{align*}
    \tilde{v} = \hat{v} - \pdir(\hat{u} - u).
\end{align*}
A mean squared error decomposition yields the identity
\begin{align*}
    \E\norm{\tilde{v} - v}_2^2 &= \E\norm{\hat{v} - v}_2^2 - 2\E\ip{\hat{v} - v, \pdir(\hat{u} - u)} + \E\norm{\pdir(\hat{u} - u)}_2^2\\
    &= \E\norm{\hat{v} - v}_2^2 - 2\E\ip{\hat{v} - v, \pdir(\hat{u} - u)} + o(\norm{\pdir}_{\op}),
\end{align*}
indicating that for sufficiently ``small'' $\pdir$, $\tilde{v}$ provides an improved estimator if $\hat{v} - v$ and $\hat{u} - u$ have high (multiple) correlation. While in the Monte Carlo literature, $\hat{u}$ and $\pdir$ can be chosen optimally provided knowledge of the underlying data-generating mechanism, as $\pdir$ can be interpreted as the regression function of $\hat{v} - v$ on $\hat{u} - u$.\footnote{In the Monte Carlo settings, it often holds that $d = k$ and $\pdir = \alpha I$ for some constant $\alpha \in \R$. Then, $\E\ip{\hat{v} - v, \pdir(\hat{u} - u)}$ can be replaced by $\alpha \Tr(\Cov(\hat{v}, \hat{u}))$ and  $o(\norm{\pdir}_{\op})$ can be replaced by $o(\alpha)$.} Outside of Monte Carlo simulation, this procedure can be applied more widely if the user chooses $\hat{u}$ and $\hat{\pdir}$ based on intuition or limiting arguments.

In the stochastic optimization setting, $v$ represents the true gradient of the objective at a particular parameter, while $\hat{v}$ represents a stochastic gradient estimate from an oracle. Variance reduction techniques have previously been applied in an incremental setting, in which a fixed data set of size $n$ is provided at initialization, and the algorithm may only make multiple passes through this same data set \citep{Gower2020Variance}. Note that this differs from our fully stochastic setting, in which we receive a fresh sample $Z_t$ on each iterate $t = 1, \ldots, n$. For negative log-likelihood losses, our orthogonalized oracle can be viewed in a similar light to control variate-based variance reduction methods (although in an infinite-dimensional setting). To summarize, we have from~\eqref{eq:regression} that
\begin{align*}
    v &= \score(\theta_\star, g_0)\\
    \hat{v} &= S_\theta(\theta_\star, \hat g)\\
    u &= 0 &\text{ (by \Cref{asm:noscore})}\\
    \hat{u} &= \grad_g \ell(\theta_\star, g_0; Z)\\
    \tilde{v} &= S_\theta(\theta_\star, g_0; Z) - \pdir_0 \nabla_g \loss(\theta_\star, g_0; Z),\\
\intertext{using the idealized parameters. Using the approximations for $\theta \neq \theta_\star$, we have}
    v &= \score(\theta, g_0)\\
    \hat{v} &= S_\theta(\theta, \hg)\\
    u &\approx 0 &\text{ (for $\theta \approx \theta_\star$)}\\
    \hat{u} &= \grad_g \ell(\theta, \hg; Z)\\
    \tilde{v} &= S_\theta(\theta, \hg; Z) - \pdirhat \nabla_g \loss(\theta, \hg; Z).
\end{align*}
In fact, using the derivative of the log likelihood in a control variate procedure has been explored in the simulation literature as early as \citet{johnstone1985efficient}, as the correlation between a statistic and the score function has tight connections to the Cram\'{e}r-Rao lower variance bound and exponential families. We emphasize, however, that our method does not require the loss to be of negative log-likelihood form nor any specific distributional knowledge to be applied.

\subsection{Discussion of Double Robustness}\label{appx discuss sec 55}
We now study the double robustness of SGD for dose-response estimation as discussed in \cite{bonvini2022fastconvergenceratesdoseresponse}. Consider estimating the effect of the continuous treatment $A \in \calA \subset \R$ on the outcome $Y \in \calY \subset \R$, which is defined as $\E{Y(a)}$ (known as the dose-response function, DRF) under the potential outcomes framework. Under standard assumptions, the DRF takes the form
\begin{align*}
    \theta_0(t) = \E\sbr{\E[Y \mid A = t, X]} = \int \E[Y \mid A = t, X=x] \d \prob(x),
\end{align*}
where $X \in \calX \subset \R^d$ is the measured confounders. Let $Z = (Y, A, X) \sim \prob$ with density $p$. We take the following notations:
\begin{align*}
    p(u) = \frac{\d}{\d u}\prob(U \leq u), \pi(a \mid x) = \frac{p(a, x)}{p(x)}, \mu(a,x) = \E[Y \mid A = a, X = x], w(a, x) = \frac{p(a)}{\pi(a\mid x)}.
\end{align*}
We can rewrite $\theta_0(t)$ equivalently as
\begin{align*}
    \theta_0(t) = \Ex{}{\mu(t, X)} = \Ex{}{w(t,X)Y\mid A = t}.
\end{align*}
We also take the notations $\prob(g(Z)) = \int g(z) \d \prob(z)$, $\prob_n(g(Z)) = n^{-1}\sum_{i=1}^n g(Z_i)$, $\norm{g}_{L_2(\prob)} = [\prob(g^2(Z))]^{1/2}$ to denote the $L_2(\prob)$ norm, and $\norm{g}_{L_4(\prob)} = [\prob(g^4(Z))]^{1/4}$ to denote the $L_4(\prob)$ norm. We now establish the procedure to estimate $\theta_0(t)$ as Algorithm 1 in \cite{bonvini2022fastconvergenceratesdoseresponse} with slightly modification to apply SGD:
\begin{enumerate}
\item Observe \iid samples $\{Z_i'\}_{i=1}^m$ for the nuisance estimation and \iid samples $\{Z_i\}_{i=1}^n$ for the parameter estimation.
    \item Estimate $\mu$, $w$, and $m(a) = \prob\mu(a, \cdot)$ using $\{Z_i'\}_{i=1}^m$ with $\hat \mu$, $\hat w$, and $\hat m(a) = \prob_n(\hat \mu(a, \cdot))$, respectively.
    \item Construct the pseudo-outcome
    \begin{align*}
        \hat \varphi(Z) = \hat w(A,X){Y - \hat \mu(A,X)} + \hat m(A).
    \end{align*}
    We also define the true nuisance as 
    \begin{align*}
        \varphi_0(Z) = w(A,X){Y - \mu(A,X)} + \int \mu(A, x) \d\prob(x).
    \end{align*}
    \item Define the loss function via a parametric function class $\calF_\Theta = \{f_\theta: \calA \mapsto \R \mid  \theta \in \Theta \subset \R^d\}$ as
    \begin{align}\label{db loss}
        \ell(\theta, \varphi; z) = \frac{1}{2}(f_\theta(a) - \varphi(z))^2.
    \end{align}
    Define the stochastic gradient oracle as
    \begin{align*}
        \score(\theta, \varphi; z) = (f_\theta(a) - \varphi(z))\nabla_\theta f_\theta(a).
    \end{align*}
    \item Solve the optimization problem
    \begin{align*}
        \theta_\star = \argmin_{\theta \in \Theta}\E\sbr{\ell(\theta, \varphi_0; Z)}
    \end{align*}
    using SGD with the stochastic gradient $\score(\theta, \hat \varphi; Z)$ by
    \begin{align}\label{db sgd}
        \theta\pow{n} = \theta\pow{n-1} - \eta \score(\theta\pow{n-1}, \hat \varphi; Z_{n-1}), \quad \theta\pow{0} \in \Theta.
    \end{align}
\end{enumerate}
As demonstrated in \cite{bonvini2022fastconvergenceratesdoseresponse}, this procedure would yield a doubly robust ERM estimator. In the following proposition, we claim that double robustness would be preserved if the SGD estimator is adopted instead.
\begin{proposition}\label{db prop}
    Assume that $\Ex{}{\norm{\nabla_\theta f_{\theta_\star}(A)}_2^2}^{1/2} \leq C_A$. Suppose that \Cref{assumption of population risk} holds and $\sgd\pow{0}, \ldots, \sgd\pow{n} \in \Theta$ almost surely for $\theta\pow{n}$ in \eqref{db sgd}. If $\eta \leq \sconv/2(M\sconv + \kappaconst_1)$, the iterates of~\eqref{db sgd} satisfy
        \begin{align*}
             \E_{\mc{D}_{n} \sim \prob^n}[\Vert\sgd\pow{n} - \theta_\star\Vert_2^2]  \lesssim& \p{1-\frac{\sconv \eta}{2}}^n + \eta\\
             &+ \norm{w - \hat w}_{L_4(\prob)}\norm{\mu - \hat \mu}_{L_4(\prob)} + \max_{a \in \calA} \abs{(\prob_n-\prob)\br{\hat \mu(a, X)}}^2.
        \end{align*}
\end{proposition}
\Cref{db prop} follows directly from the following two lemmas, \Cref{db lem 1} and \Cref{db lem 2}, and the nuisance sensitive rate in \Cref{theorem convergence rate baseline SGD}. Whenever the empirical estimation $\max_{a \in \calA} \abs{(\prob_n-\prob)\br{\hat \mu(a, X)}}^2$ shrinks, \Cref{db prop} suggests that the $\sgd\pow{n}$ would converge to the the target parameter when either $\hat w$ or $\hat\mu$ is correctly specified.
\begin{lemma}\label{db lem 1}
Assume that $\Ex{}{\norm{\nabla_\theta f_{\theta_\star}(A)}_2^2}^{1/2} \leq C_A$. Then for the loss defined in \eqref{db loss}, we have
    \begin{align*}
        \abs{\D_\varphi \D_\theta L(\theta_\star, \bar \varphi)[\theta - \theta_\star, \hat\varphi - \varphi_0]} \leq C_A\Vert \hat \varphi - \varphi_0\Vert_\calG\norm{\theta - \theta_\star}_2,
    \end{align*}
    where $\Vert \hat \varphi - \varphi_0\Vert_\calG = \E[\E[\hat \varphi(Z) - \varphi_0(Z) \mid A]^2]^{1/2} = \E[(\E[\hat \varphi(Z) \mid A] - \theta_0(A))^2]^{1/2}$.
\end{lemma}
\begin{proof}
    Let $\hat r(t) = \Ex{}{\hat \varphi(Z) \mid A = t} - \theta_0(t)$.
Note that 
\begin{align*}
    \D_\varphi \D_\theta L(\theta_\star, \bar \varphi)[\theta - \theta_\star, \hat\varphi - \varphi_0] &= -\Ex{}{(\hat\varphi(Z) - \varphi_0(Z))\ip{\nabla_\theta f_{\theta_\star}(A), \theta - \theta_\star}}\\
    &=-\Ex{}{(\Ex{}{\hat\varphi(Z) \mid A} - \Ex{}{\varphi_0(Z) \mid A})\ip{\nabla_\theta f_{\theta_\star}(A), \theta - \theta_\star}}\\
    &=-\Ex{}{(\Ex{}{\hat\varphi(Z) \mid A} - \theta_0(A))\ip{\nabla_\theta f_{\theta_\star}(A), \theta - \theta_\star}}\\
    &= -\Ex{}{\hat r(A)\ip{\nabla_\theta f_{\theta_\star}(A), \theta - \theta_\star}}.
\end{align*}
Thus, by the assumption that $\Ex{}{\norm{\nabla_\theta f_{\theta_\star}(A)}_2^2}^{1/2} \leq C_A$, 
\begin{align*}
    \abs{\D_\varphi \D_\theta L(\theta_\star, \bar \varphi)[\theta - \theta_\star, \hat\varphi - \varphi_0]} &\leq \Ex{}{\abs{\hat r(A)}^2}^{1/2}\Ex{}{\norm{\nabla_\theta f_{\theta_\star}(A)}_2^2}^{1/2}\norm{\theta - \theta_\star}_2\\
    &\leq C_A\Ex{}{\abs{\hat r(A)}^2}^{1/2}\norm{\theta - \theta_\star}_2.
\end{align*}
\end{proof}

\begin{lemma}\label{db lem 2}
For the norm defined in \Cref{db lem 1}, we have
    \begin{align*}
        \gnorm{\hat\varphi - \varphi_0} \leq \norm{w - \hat w}_{L_4(\prob)}^{1/2}\norm{\mu - \hat \mu}_{L_4(\prob)}^{1/2} + \max_{a \in \calA} \abs{(\prob_n-\prob)\br{\hat \mu(t, X)}}.
    \end{align*}
\end{lemma}

\begin{proof}
Lemma 1 of \cite{bonvini2022fastconvergenceratesdoseresponse} demonstrates that
\begin{align}\label{db eq}
    \abs{\hat r(t)} \leq \norm{w - \hat w}_t \norm{\mu - \hat \mu}_t + \abs{(\prob_n-\prob)\br{\hat \mu(t, X)}},
\end{align}
where $\norm{f}_t^2 = \int f^2(z)\d\prob(z\mid A=t)$. By \eqref{db eq}, we have
\begin{align*}
    \gnorm{\hat\varphi - \varphi_0} &\leq \norm{\norm{w - \hat w}_A \norm{\mu - \hat \mu}_A + \abs{(\prob_n-\prob)_{t=A}\br{\hat \mu(t, X)}}}_{L_2(\prob_A)}\\
    &\leq \norm{w - \hat w}_{L_4(\prob)}^{1/2}\norm{\mu - \hat \mu}_{L_4(\prob)}^{1/2} + \max_{a \in \calA} \abs{(\prob_n-\prob)\br{\hat \mu(t, X)}}.
\end{align*}
\end{proof}

\subsection{Proof of Proposition \ref{prop. adaptive}}\label{subsec: proof of prop adaptive}
\begin{proof}
    For simplicity, we use the notation $\E_{m,n}$ to replace $\E_{\calD_{mn}\cup\calS_m \sim \prob^{mn}\otimes \Q^m}$. Let $q_n = (1-\sconv\eta/2)^{n}$, $\delta\pow{m,n} = \sgd^{(m,n)} - \theta_\star$, and $\delta\pow{0} = \delta\pow{0,n}$. Thus, by \Cref{theorem convergence rate baseline SGD}, 
    \begin{align*}
        \E_{m,n}[\Vert\delta\pow{m,n}\Vert_2^2] &\leq q_n\E_{m-1,n}[\Vert\delta\pow{m,0}\Vert_2^2]+ \frac{2\secsmooth_1^2}{\sconv^2} \xi_m + \frac{4\Kconst_1 \eta}{\sconv}\\
        &=q_n\E_{m-1,n}[\Vert\delta\pow{m-1,n}\Vert_2^2]+ \frac{2\secsmooth_1^2}{\sconv^2} \xi_m + \frac{4\Kconst_1 \eta}{\sconv}.
    \end{align*}
This recursive formula gives a complete bound for $\sgd^{(m,n)}$ as 
\begin{align*}
    \E_{m,n}[\Vert\delta\pow{m,n}\Vert_2^2] &\leq q_n^m\Vert\delta\pow{0}\Vert_2^2+ \frac{2\secsmooth_1^2}{\sconv^2} \sum_{i=1}^m q_n^{m-i}\xi_i + \frac{4\Kconst_1 \eta}{\sconv}\sum_{i=1}^m q_n^{m-i}.
\end{align*}

By \eqref{eq moulines}, we assume that $\xi_m \leq Cm^{-\frac{2\alpha-1}{2\alpha}}$ for some $C>0$. Note that 
\begin{align*}
    q_n = \p{1-\frac{\sconv\eta}{2}}^n \leq \exp\p{-\frac{\sconv\eta n}{2}}.
\end{align*}

For the second term, when $q_n \in (0,1)$ we have
\begin{align*}
    \sum_{i=1}^m q_n^{m-i}\xi_i &= \sum_{i=1}^{\lfloor m/2\rfloor} q_n^{m-i}\xi_i + \sum_{i=\lfloor m/2\rfloor+1}^m q_n^{m-i}\xi_i \\
    &\leq C\sum_{i=1}^{\lfloor m/2\rfloor} q_n^{m-i} + C\p{\frac{m}{2}}^{-\frac{2\alpha-1}{2\alpha}}\sum_{i=\lfloor m/2\rfloor+1}^m q_n^{m-i}\\
    &\leq \frac{Cm}{2}q_n^{m/2} + \frac{C}{1-q_n}\p{\frac{m}{2}}^{-\frac{2\alpha-1}{2\alpha}}\\
    &\leq \frac{Cm}{2}\exp\p{-\frac{\sconv\eta nm}{4}} + \frac{C}{1-q_n}\p{\frac{m}{2}}^{-\frac{2\alpha-1}{2\alpha}}.
\end{align*}

The last term is easy to bound since for $q_n \in (0,1)$,
    \begin{align*}
        \sum_{i=1}^m q_i^{m-i} = \sum_{i=0}^{m-1} q_n^{i} \leq \frac{1}{1 - q_n}.
    \end{align*}

We claim that for some constant $c>0$,
\begin{align}\label{eq 1-q lower bound}
    1 - q_n = 1 - \p{1-\frac{\sconv\eta}{2}}^n\geq c\min\br{\frac{\sconv \eta n}{2}, 1}.
\end{align}
With \eqref{eq 1-q lower bound}, we have 
\begin{align*}
    \frac{1}{1-q_n} \leq c^{-1}\p{\frac{2}{\sconv \eta n} + 1},
\end{align*}
which implies that
\begin{align*}
    \E_{m,n}[\Vert\delta\pow{m,n}\Vert_2^2] &= \bigO\p{q_n^m\Vert\delta\pow{0}\Vert_2^2+  m\exp\p{-\frac{\sconv\eta nm}{4}} + \p{m^{-\frac{2\alpha-1}{2\alpha}} + \eta}\p{\frac{1}{\eta n} + 1}}.
\end{align*}
When $(\eta n)^{-1} = \bigO(1)$, the bound above reduces to
\begin{align*}
    \E_{m,n}[\Vert\delta\pow{m,n}\Vert_2^2] &= \bigO\p{q_n^m\Vert\delta\pow{0}\Vert_2^2+ m^{-\frac{2\alpha-1}{2\alpha}} + n^{-1} + \eta}.
\end{align*}

We will finish the proof by showing \eqref{eq 1-q lower bound}. The key step is to show $1 - e^{-x} \geq c\min(x, 1)$ for all $x > 0$ and some constant $c>0$.

Let $f(x) = 1 - e^{-x} - x/2$, for $x \in (0,1)$ we have
\begin{align*}
    f'(x) = e^{-x} - \frac{1}{2} \begin{cases}
        > 0 \text{ for } x \in (0, \log 2),\\
        = 0 \text{ for } x = \log 2,\\
        < 0 \text{ for } x \in (\log 2, 1).
    \end{cases}
\end{align*}
Thus, $f(x) \geq f(\log 2) = (1 - \log 2)/2 > 0$ for $x \in (0,1)$, which implies that $1 - e^{-x} > x/2$ for $x \in (0,1)$. Note that $1 - e^{-x} \geq 1 - e^{-1}$ for $x \geq 1$. Let $c = \min(2^{-1}, 1-e^{-1})$. Then we have $1 - e^{-x} \geq c\min(x,1)$.

It follows that
\begin{align*}
    1 - q_n = 1 - \exp\p{-n\log \p{\frac{1}{1 - \sconv \eta/2}}} \geq c\min\br{n\log \p{\frac{1}{1 - \sconv \eta/2}}, 1}.
\end{align*}
Since $x-1 \geq \log x$ for all $x >0$, we have
\begin{align*}
    \log \p{1 - \sconv \eta/2} \leq 1 - \sconv \eta/2 - 1 = - \sconv \eta/2,
\end{align*}
which implies that
\begin{align*}
    n\log \p{\frac{1}{1 - \sconv \eta/2}} \geq \frac{\sconv \eta n}{2}.
\end{align*}
Thus, we complete the proof.
\end{proof}

\subsection{Proof of Proposition \ref{prop. prob bound}}\label{subsec: proof of prop prob bound}

\begin{proof}
Given $s>0$, we define $\calA_{i} = \calA_{i}(s^{1/c})$ and $\calB_{i} = \calB_{i}(s^{1/c})$ for $i=0,\dots,m$ for simplicity. First, since $c\geq 1,$ by \eqref{markov 1} and Markov inequality, for $i = 1,\dots, m,$
    \begin{align*}
        \P{}{\calA_{i} \mid \calA_{i-1}, \dots, \calA_{0}} &= 1-\P{}{\gnorm{\hat g\pow{i} - g_0}^2 \geq Cs^{-1/c}i^{-\frac{2\alpha - 1}{2\alpha}} \mid \calA_{i-1}, \dots, \calA_{0}}\\
        &= 1-\P{}{\gnorm{\hat g\pow{i} - g_0}^{2c} \geq C^cs^{-1}i^{-\frac{(2\alpha - 1)c}{2\alpha}} \mid \calA_{i-1}, \dots, \calA_{0}}\\
        &\geq 1-\frac{\Ex{\calS_m}{\gnorm{\hat g\pow{i} - g_0}^{2c} \mid \calA_{i-1}, \dots, \calA_{1}}}{C^cs^{-1}i^{-\frac{(2\alpha - 1)c}{2\alpha}}} \\
        &\geq 1- s.
    \end{align*}
We assume that $\P{}{\calA_0}=\P{}{\calB_0} = 1$, and we have
    \begin{align*}
       \P{}{\calA_{m}, \calA_{i-1}, \dots, \calA_{1}, \calA_0}  &= \P{}{\calA_{m} \mid \calA_{m-1}, \dots, \calA_{1}}\dots  \P{}{\calA_{1} \mid \calA_0}\P{}{\calA_0} \\
       &\geq \prod_{i=1}^m (1- s) = (1-s)^m.
    \end{align*}
Similarly, we have
\begin{align*}
    \P{}{\calB_{m}, \calB_{i-1}, \dots, \calB_{1}}\geq (1-s)^m.
\end{align*}
we consider the conditional mean squared error of $\sgd^{(m,n)}$ given $(\cap_{i=0}^m \calA_i) \cap (\cap_{i=0}^m \calB_i)$. By similar proof to \Cref{prop. adaptive}, we can show that for some constant $C_1>0$,
\begin{align*}
    \E_{\calD_{mn}\cup\calS_m}[\Vert\sgd^{(m,n)} -& \theta_\star\Vert_2^2 \mid (\cap_{i=0}^m \calA_i )\cap (\cap_{i=0}^m \calB_i)] \leq C_1\p{1-\frac{\sconv\eta}{2}}^{mn}\Vert\sgd^{(0)} - \theta_\star\Vert_2^2 \\
    &+C_1\p{m\exp\p{-\frac{\sconv\eta nm}{4}} + (s^{-2/c}m^{-\frac{2\alpha-1}{\alpha}} + \eta)((\eta n)^{-1} + 1)}.
\end{align*}

We define the event of interest as
\begin{align*}
    \calE(s) &= \br{\Vert\sgd^{(m,n)} - \theta_\star\Vert_2^2  \leq C_1s^{-1}f_s(m,n)},
\end{align*}
where $f_s(m,n)$ is defined as
\begin{align*}
    f_s(m,n) = &\p{1-\frac{\sconv\eta}{2}}^{mn}\Vert\sgd^{(0)} - \theta_\star\Vert_2^2 \\
    &+ m\exp\p{-\frac{\sconv\eta nm}{4}} + (s^{-2/c}m^{-\frac{2\alpha-1}{\alpha}} + \eta)((\eta n)^{-1} + 1).
\end{align*}

By Markov inequality, we have
\begin{align*}
    \P{}{\calE_3(s)\mid (\cap_{i=0}^m \calA_i) \cap (\cap_{i=0}^m \calB_i)} &\geq 1-\frac{\E_{\calD_{mn}\cup\calS_m}[\Vert\sgd^{(m,n)} - \theta_\star\Vert_2^2 \mid (\cap_{i=0}^m \calA_i) \cap (\cap_{i=0}^m \calB_i)]}{C_1s^{-1}f(m,n)} \\
    &\geq 1-s.
\end{align*}
Since
\begin{align*}
    \P{}{\calE_3(s)^c} &= \Ex{\prob}{\ind_{\calE_3(s)^c}\ind_{(\cap_{i=0}^m \calA_i) \cap (\cap_{i=0}^m \calB_i)}} + \Ex{\prob}{\ind_{\calE_3(s)^c}\ind_{((\cap_{i=0}^m \calA_i) \cap (\cap_{i=0}^m \calB_i))^c}}\\
    &\leq \Ex{\prob}{\ind_{\calE_3(s)^c\cap(\cap_{i=0}^m \calA_i) \cap (\cap_{i=0}^m \calB_i)}} + \Ex{\prob}{\ind_{((\cap_{i=0}^m \calA_i) \cap (\cap_{i=0}^m \calB_i))^c}}\\
    &=\P{}{\calE_3(s)^c\cap(\cap_{i=0}^m \calA_i) \cap (\cap_{i=0}^m \calB_i)} + \P{}{((\cap_{i=0}^m \calA_i) \cap (\cap_{i=0}^m \calB_i))^c}\\
    &\leq \P{}{\calE_3(s)^c\mid (\cap_{i=0}^m \calA_i) \cap (\cap_{i=0}^m \calB_i)} + \P{}{(\cap_{i=0}^m \calA_i)^c} + \P{}{(\cap_{i=0}^m \calB_i)^c},
\end{align*}
which implies that 
\begin{align*}
  \P{}{\calE_3(s)} \geq  2(1-s)^m - s - 1.
\end{align*}
Define $\delta(s)$ as
\begin{align}\label{eq delta}
    \delta(s) = s + 2(1-(1-s)^m) = \bigO(ms).
\end{align}
Then, with probability at least $1-\delta(s)$, we have
\begin{align*}
   \Vert\sgd^{(m,n)} -& \theta_\star\Vert_2^2 \lesssim s^{-1}\p{1-\frac{\sconv\eta}{2}}^{mn}\Vert\sgd^{(0)} - \theta_\star\Vert_2^2 \\
    &+s^{-1}\p{m\exp\p{-\frac{\sconv\eta nm}{4}} + (s^{-2/c}m^{-\frac{2\alpha-1}{\alpha}} + \eta)((\eta n)^{-1} + 1)}.
\end{align*}
When $(\eta n)^{-1} = \bigO(1)$, it follows that
\begin{align*}
   \Vert\sgd^{(m,n)} -& \theta_\star\Vert_2^2 \lesssim s^{-1}\p{1-\frac{\sconv\eta}{2}}^{mn}\Vert\sgd^{(0)} - \theta_\star\Vert_2^2 +s^{-1}\p{s^{-2/c}m^{-\frac{2\alpha-1}{\alpha}} + n^{-1} + \eta}.
\end{align*}
\end{proof}

\clearpage

\section{Numerical Experiments}\label{appx:experiments}

This section provides numerical experiments of the proposed stochastic methods in this paper. In \Cref{appx discuss sec 5}, we design a numerical experiment to illustrate our orthogonalization method. In \Cref{sec: sim}, we design simulations based on a partially linear model. In \Cref{sec: real data}, we conduct a real data analysis with synthetic outcome to evaluate the performance of our methods. Code for reproduction can be found at \href{https://fachengyu.github.io/}{\url{https://fachengyu.github.io/}}.

\subsection{Numerical Illustration}\label{appx discuss sec 5}

In this section, we design a numerical experiment to illustrate how our orthogonalization method effects the target estimation as shown in \Cref{fig:osl} from the main text.

\myparagraph{Settings} Consider $\Theta\in R$ and $\G =\R$. Let $\ploss(\theta, g)$ be a real-valued risk function defined as
\begin{align}
    \ploss(\theta, g) := \ploss(u) = \frac{1}{2}\ip{u, Au} + \lambda \sin^2(\ip{u, Bu}),
\end{align}
where $u = (\theta, g)^\top \in \R^2$, $\lambda = 0.02$ is the regularization parameter, and 
\begin{align*}
    A = \begin{pmatrix}8 & 3\\ 3 & 2\end{pmatrix} \succ 0 \text{ and } B = \begin{pmatrix}2 & -1\\ -1 & 1.5\end{pmatrix}\succ 0.
\end{align*}
It is easy to see that $(0,0)$ is the global minimizer of $L$ since $L(\theta, g) \geq 0$. Let $q(u) = \ip{u, Bu}$. The gradient w.r.t. $u$ is
\begin{align*}
    \nabla_u L(u) &= Au + 4 \lambda\sin(q(u))\cos(q(u))Bu\\
    &=(A + 2\lambda\sin(2q(u))B)u.
\end{align*}
Since $A + 2\lambda\sin(2q(u))B \succcurlyeq A - 0.04 B \succ 0$, it is clear that $(0,0)$ is the only stationary point, implying that $(0,0)$ is the only minimizer of $L$. Furthermore, we can obtain the Hessian w.r.t. $u$ as
\begin{align}\label{eq colab hessian}
    \nabla_u^2 L(u) = A  + 2\lambda\sin(2q(u))B + 8\lambda \cos(2q(u))Bu(Bu)^\top,
\end{align}
which implies that $L(\cdot,g)$ is not convex in $\R$ given any $g \in \Gr$. However, when $\Theta$ is a small neighborhood around zero, it is still possible to have $L(\cdot,g)$ strongly convex for in $\Theta$ given any $g \in \Gr$.

\myparagraph{Orthogonalization}
To orthogonalize $\ploss$, we first derive the orthogonal gradient oracle using \eqref{no score of infinite dim}, and then integral the oracle \wrt $\theta$ to obtain the orthogonalized loss $\ploss_{\text{no}}$. 

Let $H$ be the Hessian at $(0,0)$. By \eqref{eq colab hessian}, we know that $H = A$, implying $H_{\theta g} = A_{12}$ and $H_{gg} = A_{22}$. Since the gradient w.r.t. $\theta$ satisfies
\begin{align*}
  \nabla_\theta L(\theta, g) &= [1, 0](A + 2\lambda\sin(2q(u))B )u \\
  &= (A_{11} + 2\lambda\sin(2q(\theta, g))B_{11})\theta + (A_{12} + 2\lambda\sin(2q(\theta, g))B_{12})g,
  \end{align*}
and the gradient w.r.t. $g$ satisfies
\begin{align*}
  \nabla_g L(\theta, g) &= [0, 1](A + 2\lambda\sin(2q(u))B )u \\
  &= (A_{21} + 2\lambda\sin(2q(\theta, g))B_{21})\theta + (A_{22} + 2\lambda\sin(2q(u))B_{22})g,
  \end{align*}
follow the construction of \eqref{no score of infinite dim} and we obtain the orthogonal gradient oracle as
  \begin{align*}
  \noscore(\theta ,g) &= \nabla_\theta L(\theta, g) - H_{\theta g} H_{gg}^{-1} \nabla_g L(\theta, g) \\
  &=(a + 2b\lambda\sin(2q(\theta, g)))\theta + 2c\lambda\sin(2q(\theta, g))g,
  \end{align*}
where $a = A_{11} - A_{12} A_{22}^{-1}A_{21}$, $b = B_{11} - A_{12} A_{22}^{-1}B_{21}$, and $c = B_{12} - A_{12} A_{22}^{-1}B_{22}$. Finally, we can integral $\noscore(\theta ,g)$ w.r.t. $\theta$ and recover the orthognalized loss $\ploss_{\text{no}}$ as
\begin{align*}
    \ploss_{\text{no}}(\theta, g) = \int_{0}^\theta \noscore(s ,g) \d s.
\end{align*}

\myparagraph{Numerical Computation}
Usually, $\noscore(s ,g)$ contains a form of integral, which needs to be numerically computed. For the example introduced above, we can simplify $\ploss_{\text{no}}(\theta, g)$ to stabilize the numerical computation. Note that $\nabla_\theta \sin^2(q(\theta, g)) = \sin(2q(\theta, g))(B_{11}\theta + B_{12}g)$. Then
\begin{align*}
  2b\lambda \int_0^\theta \sin(2q(s, g))s ds &= \frac{2b\lambda}{B_{11}}\p{\int_0^\theta \sin(2q(s, g))(B_{11}s + B_{12}g) \d s - B_{12} g\int_0^\theta \sin(2q(s, g)) \d s}\\
  &= \frac{2b\lambda}{B_{11}}\p{\sin^2(q(\theta, g)) -  B_{12} g\int_0^\theta \sin(2q(s, g)) \d s}.
  \end{align*}
It follows that the orthogonalized loss $\ploss_{\text{no}}$ admits the following form 
\begin{align*}
    \ploss_{\text{no}}(\theta, g) &=\frac{a}{2}\theta^2 +  \frac{2b\lambda}{B_{11}}\sin^2(q(\theta, g)) + 2\p{c - \frac{B_{12}}{B_{11}}b}\lambda g\int_0^\theta \sin(2q(s, g)) \d s,
\end{align*}
which implies that only the integral of $\sin(2q(s, g))$ \wrt $s$ needs to be computed.

\subsection{Simulations}\label{sec: sim}
\subsubsection{Data Generating Process}
To demonstrate \Cref{theorem convergence rate baseline SGD} and \Cref{theorem debiased SGD convergence rate}, we revisit the partially linear model and the corresponding orthogonal and non-orthogonal losses in \Cref{appx PLM}. Specifically, $(X,W,Y) \in \R^d \times \R^d \times \R$ satisfies the following partially linear model where the nonlinear function is determined by the distribution of $(W, U) \in \R^d \times \R$: 
\begin{align}
    Y &= \ip{\theta_0, X} + \alpha_0(W) + \epsilon \label{sim eq Y},\\
    U &= \alpha_0(W) + \xi \label{sim eq U},
\end{align}
where $\theta_0 \in \R^d$ is the true parameter, $\alpha_0: \calW \mapsto \R$ is the true nonlinear function, $\epsilon$ and $\xi$ are independent noises that satisfy $\E[\epsilon \mid X, W] = 0$ and $\E[\xi \mid W] = 0$. It is clear that $\alpha_0(W) = \E[U \mid W]$. In our simulations, we choose $d = 2$ and $\theta_0 = [-0.5 ~~1]^\top$. 

To get samples for simulations, we first generate $(X,W)$ under the Gaussian model
\begin{align}
    \begin{bmatrix} X\\ W \end{bmatrix} = \calN\left(\begin{bmatrix} \boldsymbol{\mu}_X \\ \boldsymbol{\mu}_W \end{bmatrix}, \begin{bmatrix} (1 + \delta)\mathbf{I}_2 &\lambda \mathbf{I}_2\\ \lambda \mathbf{I}_2 & (1 + \delta)\mathbf{I}_2 \end{bmatrix}\right),
\end{align}
where $\boldsymbol{\mu}_X = [1 ~~1]^\top$, $\boldsymbol{\mu}_W = [2 ~~2]^\top$, $\mathbf{I}_2 \in \R^{2 \times 2}$ is the identity matrix, $\lambda \in [0,1]$ is used to control the correlation between $X$ and $W$, and  $\delta = 0.05$ is used to prevent the degeneration of the covariance matrix. For simplicity, we define the nonlinear function $\alpha_0$ as
\begin{align}
    \alpha_0(w) = 0.5\times \cos\p{\frac{w_1 + w_2}{2}} + 0.5\times \sin\p{\frac{w_1 + w_2}{2}},
\end{align}
where $w = [w_1~~ w_2]^\top \in \R^2$. We then generate $Y$ and $U$ using independent Gaussian noises $\epsilon \sim \calN(0,1)$ and $\xi \sim \calN(0,1)$ based on \eqref{sim eq Y} and \eqref{sim eq U}, respectively.

\subsubsection{Stochastic Gradient Oracles}\label{subsec: sim SGD oracle}
To estimate the true parameter $\theta_0$ using stochastic gradients, we need to design a correspond loss whose minimizer $\theta_\star$ is equal to $\theta_0$.
Based on \Cref{appx PLM}, there are two types of loss, the orthogonal loss and the non-orthogonal loss, available for this goal. We will derive the stochastic gradient oracle for these two losses and further derive the orthogonalized gradient oracle for the non-orthogonal loss.

\myparagraph{Orthogonal loss}
Recall the orthogonal loss in \Cref{subsubsec: ortho PLM}:
\begin{align}\label{sim ortho loss}
        \loss(\theta, g; z) = \frac{1}{2}[y - g_Y(w) - \ip{\theta, x - g_X(w)}]^2,
\end{align}
where $g = (g_Y, g_X): \calW \rightarrow \R \times \R^d$ and the norm $\gnorm{\cdot}$ is defined in \eqref{eq: ex1 Gr}. The true nuisance for this loss is $g_0 = (g_{0, X}, g_{0, Y})$, where $g_{0, Y}(w) := \Ex{\prob}{Y \mid W = w} \text{ and } g_{0, X}(w) := \Ex{\prob}{X \mid W = w}.$
In fact, the explicit expression for $g_0$ can be easily obtained as
\begin{align}
    g_{0, Y}(w) &= \ip{\theta_0, g_{0, X}(w)} + \alpha_0(w),\label{sim g0,Y}\\
    g_{0, X}(w) &= \mu_X + \frac{\lambda}{1.05}(w - \mu_W). \label{sim g0,X}
\end{align}
From \eqref{sim eq Y} and \eqref{sim g0,Y}, it is clear that 
\begin{align*}
    Y - g_{0, Y}(W) = \ip{\theta_0, X - g_{0, X}(W)} + \epsilon,
\end{align*}
which implies that $\theta_\star = \theta_0$ by \Cref{lemma: example ortho PLM}. The stochastic gradient oracle for the orthogonal loss \eqref{sim ortho loss} is then defined as 
\begin{align}\label{sim ortho score}
    S_\theta(\theta, g; z) = -(y - g_Y(w) - \ip{\theta, x - g_X(w)})(x - g_X(w)).
\end{align}

\myparagraph{Non-orthogonal loss}
We also provide the non-orthogonal loss in \Cref{subsubsec: non-ortho plm} as
\begin{align}\label{sim nonortho loss}
    \loss(\theta, g; z) = \frac{1}{2}[y  - g(w) - \ip{\theta, x}]^2,
\end{align}
where $g: \calW \mapsto \R$ and the norm $\gnorm{\cdot}$ is now defined in \eqref{eq: ex2 Gr}. The true nuisance for this non-orthogonal loss satisfies 
\begin{align}
    g_0(w) = \alpha_0(w) = \Ex{}{U \mid W = w}. \label{sim g0}
\end{align}
By \Cref{lemma: example non-ortho PLM}, we have $\theta_\star = \theta_0$. The stochastic gradient oracle for the orthogonal loss \eqref{sim nonortho loss} is then defined as 
\begin{align}\label{sim nonortho score}
    S_\theta(\theta, g; z) = -(y - g(w) - \ip{\theta, x})x.
\end{align}

\myparagraph{Orthogonalized gradient oracle}
Since we perform orthogonalization on the non-orthogonal loss, we have $\theta_\star = \theta_0$ being the same target parameter. By \eqref{ex2 noscore} in \Cref{subsubsec: non-ortho plm}, the Neyman orthogonalized gradient oracle for this non-orthogonal loss \eqref{sim nonortho loss} is given by
\begin{align}\label{sim noscore}
    \noscore(\theta, g; z) = -(y - g(w) - \ip{\theta, x})(x - \Ex{}{X\mid W=w}).
\end{align}
\subsubsection{Estimation Methods}\label{subsec: sim nuisance estimation}
Throughout the experiments, we estimate the nuisances and the orthogonalizing operator using full-batch data and stream data, respectively.

\myparagraph{Nuisance estimation} Note that the true nuisances for the orthogonal loss and the non-orthogonal loss are conditional expectation given $W$. To conduct nonparametric regression, we use random Fourier feature (RFF) \citep{rahimi2007random} using the kernel $w \mapsto \exp\p{-\gamma \cdot\norm{w}_2^2}$ to generate a randomized feature map for $W$.

The nuisance estimation procedure for obtaining $\hat g\pow{m}$ using full batch data can be summarized as
\begin{enumerate}
    \item Fit RFF sampler with $20$ components using $m$ \iid samples from $P_{W\mid \lambda}$.
    \item Fit Ridge regressions where the regularization parameter is set to be $0.01/m$. Specifically, 
    \begin{itemize}
        \item For the orthogonal loss, fit two Ridge regressions using $m$ \iid samples from the joint distribution $P_{X,W,Y\mid \lambda}$ and the fitted RFF sampler to coordinate-wisely estimate $\E[X \mid W]$. With the same data, fit one Ridge regression using the fitted RFF sampler to estimate $\E[Y \mid W]$.
        \item For the non-orthogonal loss, fit one Ridge regression using $m$ \iid samples from the joint distribution $P_{X,W,Y\mid \lambda}$ and the fitted RFF sampler to estimate $\E[U \mid W]$. 
    \end{itemize}
\end{enumerate}

To estimate nuisances using stream data, instead of fit a Ridge regression each time, we perform SGD for the Ridge regression loss. The procedure can be summarized as
\begin{enumerate}
    \item Initialize RFF sampler with $20$ components using $n_0$ \iid samples $(W_i)_{i=1}^{n_0}$ from $P_{W\mid \lambda}$.
    \item Perform SGD update once observing a mini-batch of \iid samples from the joint distribution $P_{X,W,Y\mid \lambda}$ with size $n_g$. Specifically, 
    \begin{itemize}
        \item For the orthogonal loss, perform two SGD with the Ridge loss for $m$ iterations to estimate $\E[X\mid W]$ coordinate-wisely. With the same data perform another SGD with the Ridge loss for $m$ iterations to estimate $\E[Y\mid W]$.
        \item For the non-orthogonal loss, perform one SGD with the Ridge loss for $m$ iterations to estimate $\E[U\mid W]$.
    \end{itemize}
\end{enumerate}

\myparagraph{Orthogonalizing operator estimation} To approximate the orthogonalizing operator $\Gamma_0$, it suffices to estimate $\Ex{}{X \mid W}$ by \eqref{sim gamma0}. To that end, we use the same method as the nuisance estimation. The orthogonalizing operator estimation procedure for obtaining $\hat \Gamma\pow{k}$ can be summarized as
\begin{enumerate}
    \item Fit RFF sampler with $20$ components using $k$ \iid samples $(W'_i)_{i=1}^{k}$ from $P_{W\mid \lambda}$.
    \item Fit two Ridge regressions with the regularization parameter being $0.01/k$ using the fitted RFF sampler and another $k$ \iid samples $(X'_i, W'_i)_{i=k}^{2k}$ to coordinate-wisely estimate $\E[X \mid W]$.
\end{enumerate}

\myparagraph{Target estimation}
After the estimation of nuisances and orthogonalizing operator, we perform stochastic gradient descent (SGD) to estimate $\theta_\star$ using each of the three stochastic gradient oracles in \eqref{sim ortho score}, \eqref{sim nonortho score}, and \eqref{sim noscore} on $n$ \iid samples drawn from the joint distribution $P_{X,W,U,Y}$. The learning rates of all the three SGDs are fixed during the training.

\subsubsection{Simulation Results}\label{subsec: sim results}
\myparagraph{Setup} 
For each nuisance estimation setting, we study three types of estimation methods for learning $\theta_0$ established in this paper: (1) (orthogonal loss) obtain nuisance estimator $\hat g\pow{m} = (\hat g_{Y}\pow{m}, \hat g_{X}\pow{m})$ of \eqref{sim g0,Y} and \eqref{sim g0,X} and then perform SGD to obtain $\theta\pow{n}$ using the gradient oracle \eqref{sim ortho score} after plugging in $\hat g\pow{m}$; (2) (non-orthogonal loss) obtain the nuisance estimator $\hat g\pow{m} = \hat \alpha\pow{m}$ of \eqref{sim g0} and then perform SGD to obtain $\theta\pow{n}$ using the gradient oracle \eqref{sim nonortho score} after plugging in $\hat g\pow{m}$; (3) (OSGD) obtain the nuisance estimator $\hat g\pow{m}$ of \eqref{sim g0} and the orthogonalizing operator estimator $\hat \Gamma\pow{k}$ of \eqref{sim gamma0}, and then perform SGD to obtain $\theta\pow{n}$ using the gradient oracle \eqref{sim noscore} after plugging in $\hat g\pow{m}$ and $\hat \Gamma\pow{k}$. Each method is independently repeated 20 times. For nuisance estimated using stream data, we allow the procedure repeated by plugging in updated nuisance estimators and an updated operator estimator, where the nuisances get updated for 2000 iterations after every 2000 target SGD iterations.

\myparagraph{Evaluation}
We evaluate the performance of nuisance estimators using the corresponding norms defined in \eqref{eq: ex1 Gr} and \eqref{eq: ex2 Gr}. Specifically, for method (1), we evaluate the nuisance estimation by 
\begin{align}\label{sim gnorm for ortho}
    \gnorm{\hat g\pow{m} - g_0} =  \max\br{\Ex{}{\norm{\hat g_{X}\pow{m}(W) - g_{0,X}(W)}_2^4}^{\frac{1}{4}}, \Ex{}{(\hat g_{Y}\pow{m}(W) - g_{0,Y}(W))^4}^{\frac{1}{4}}}.
\end{align}
For method (2) and (3), we evaluate the nuisance estimation by 
\begin{align}\label{eq: sim gnorm non-ortho}
    \gnorm{\hat g\pow{m} - g_0} =  \Ex{}{\norm{\hat \alpha\pow{m}(W) - \alpha_{0}(W)}_2^2}^{\frac{1}{2}}.
\end{align}
We evaluate $\hat \Gamma\pow{k}: g \mapsto \E[\hg_X\pow{k}(W) g(W)]$ in method (3) using the Frobenius norm $\Vert \hat \Gamma\pow{k} - \Gamma_0\Vert_\Fro$, which is defined as
\begin{align}\label{sim operator norm}
    \Vert \hat \Gamma\pow{k} - \Gamma_0\Vert_\Fro = \Ex{}{\norm{\hg_X\pow{k}(W) - \hg_{0,X}(W)}_2^2}^{\frac{1}{2}}.
\end{align}
Finally, we evaluate the target estimation using two kinds of criterion: (a) the relative error $\frac{\norm{\theta\pow{n} - \theta_0}_2}{\norm{\theta_0}_2}$, and (b) the risk $L(\theta\pow{n}, g_0) - L(\theta_\star, g_0)$ where $L(\theta, g) = \E[\ell(\theta, g; Z)]$. For method (1), $\ell(\theta, g; z)$ is the orthogonal loss defined in \eqref{sim ortho loss} while for method (2) and (3), $\ell(\theta, g; z)$ is the non-orthogonal loss defined in \eqref{sim nonortho loss}.

\myparagraph{Results using nuisances fitted on full-batch data}
We first estimate the target using prefitted nuisances and operator. The estimation errors of nuisances and the operator fitted on full-batch data are shown in \Cref{fig:nuisance estimation error}, where all estimation converges when the sample size $m$ increases and less samples are usually required to obtain the same error level when $\lambda$ increases.

\begin{figure}
    \centering
    \includegraphics[width=\linewidth]{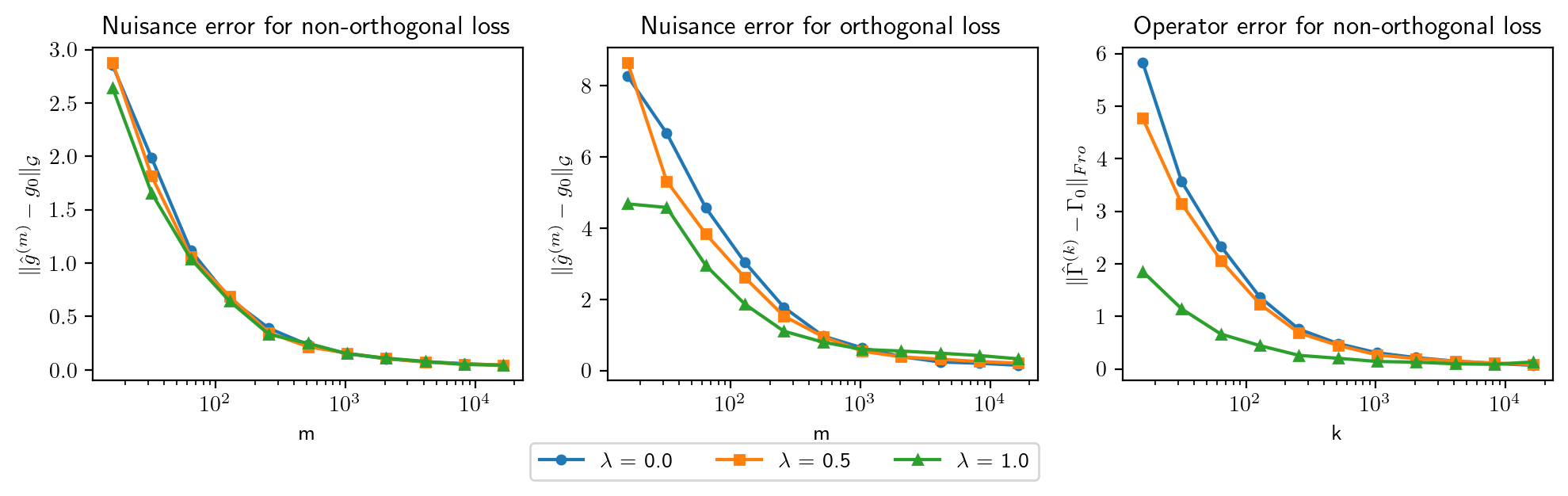}
    \caption{{\bf The Nuisance and Orthogonalizing Operator Fitted on Full-Batch Simulated Data.} The y-axis measures the corresponding error defined in \eqref{sim gnorm for ortho} - \eqref{sim operator norm} and the x-axis displays the sample size of data used to estimate the nuisance and operator.}
    \label{fig:nuisance estimation error}
\end{figure}

The performances of SGDs using prefitted nuisances and stochastic gradient oracles \eqref{sim ortho score}, \eqref{sim nonortho score}, and \eqref{sim noscore} are shown in \Cref{fig: sgd orthogonal}, \Cref{fig: sgd non-orthogonal}, and \Cref{fig: osgd}, respectively. These figures suggest that when $\lambda$ increases, i.e., the correlation between $X$ and $W$ increases, usually more iterations are required to have SGD converged due to the difficulty of separating the effect of $X$ from $W$. In addition, a well prefitted nuisance estimator would largely reduce the SGD estimation error, which aligns with \Cref{theorem convergence rate baseline SGD}. This improvement would be more obvious as $\lambda$ increases. \Cref{fig: osgd} also shows that either using a well estimated nuisance or a well estimated orthogonalizing operator can improve the OSGD performance, and that OSGD using both well prefitted nuisance and operator would perform nearly the same as OSGD using the true nuisance and the true operator.

\begin{figure}
    \centering
    \includegraphics[width=\linewidth]{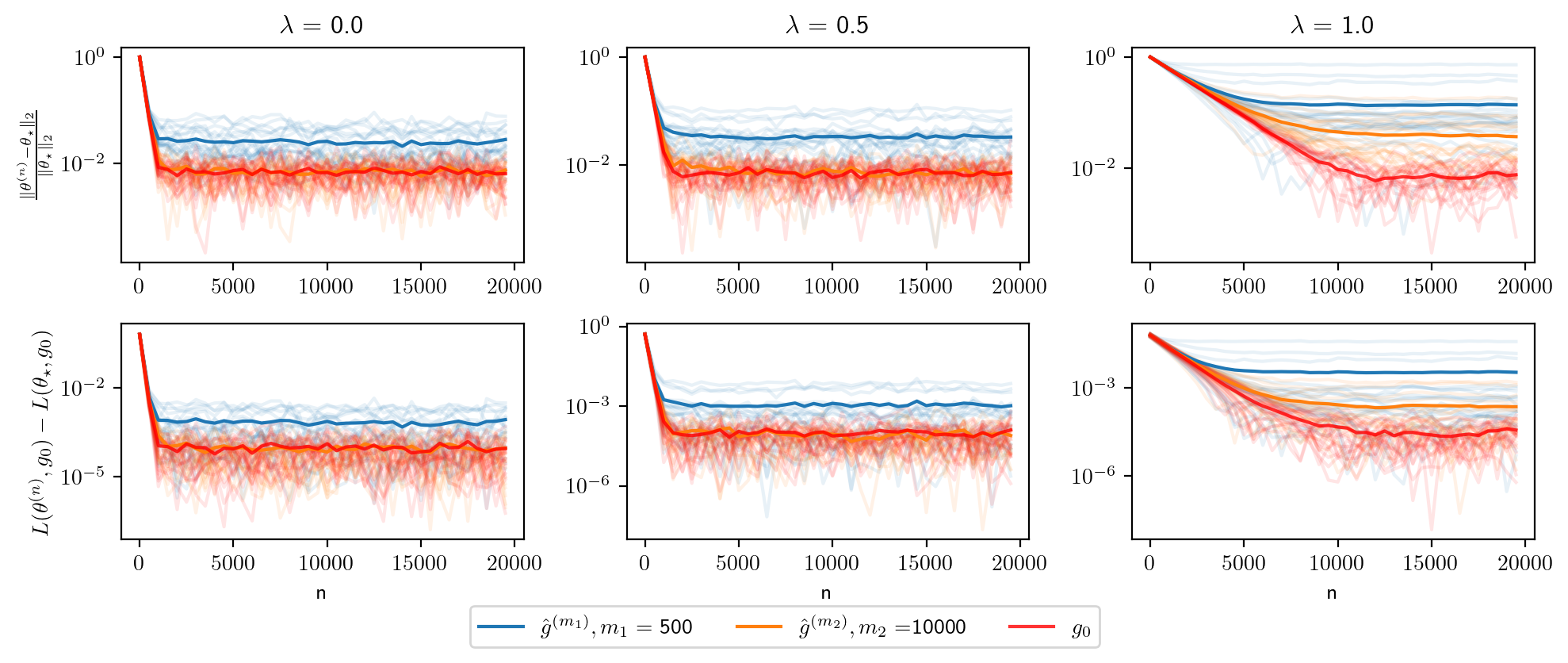}
    \caption{{\bf SGD for Orthogonal Loss with the Nuisance Fitted on Full-Batch Simulated Data.} The x-axis represents the SGD iteration. {\bf Top:} The y-axis measures the relative error. {\bf Bottom:} The y-axis measures the risk.}
    \label{fig: sgd orthogonal}
\end{figure}

\begin{figure}
    \centering
    \includegraphics[width=\linewidth]{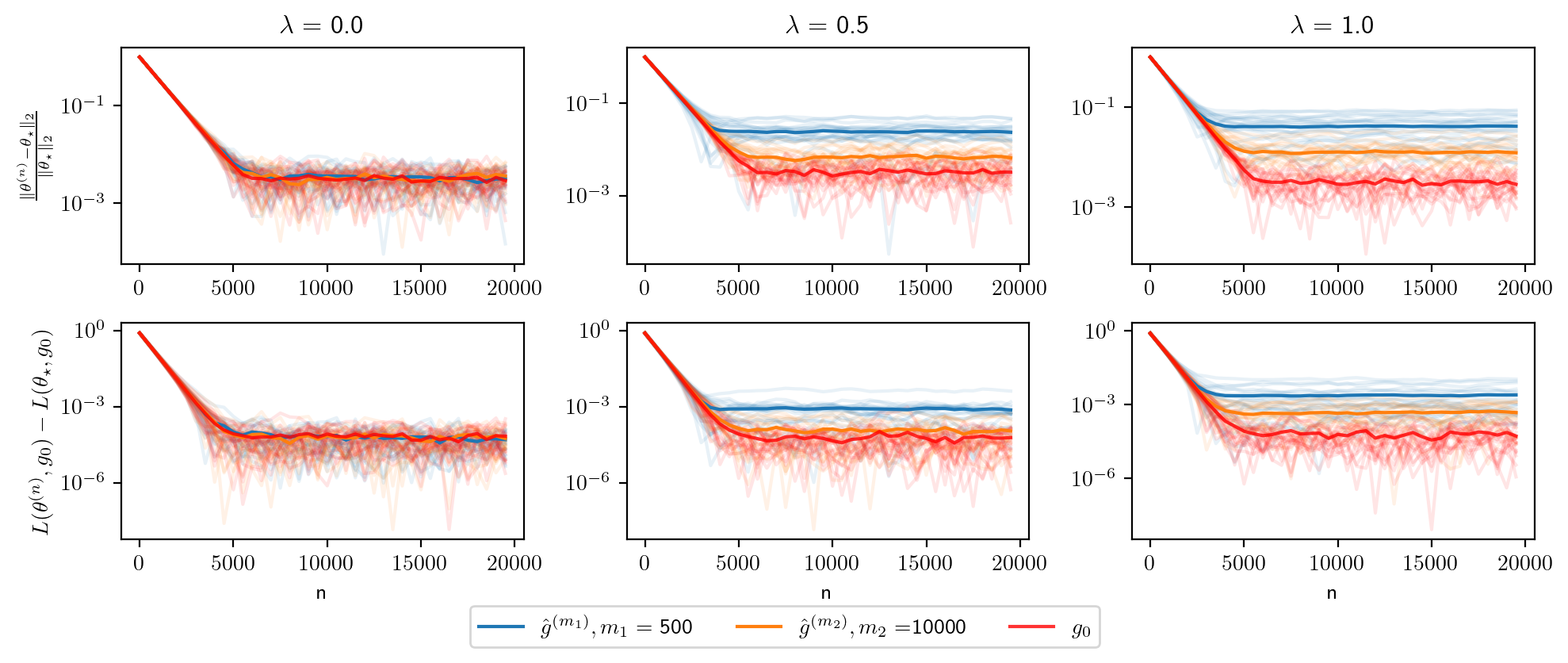}
    \caption{{\bf SGD for Non-Orthogonal Loss with the Nuisance Fitted on Full-Batch Simulated Data.} The x-axis represents the SGD iteration. {\bf Top:} The y-axis measures the relative error. {\bf Bottom:} The y-axis measures the risk.}
    \label{fig: sgd non-orthogonal}
\end{figure}

\begin{figure}
    \centering
    \includegraphics[width=\linewidth]{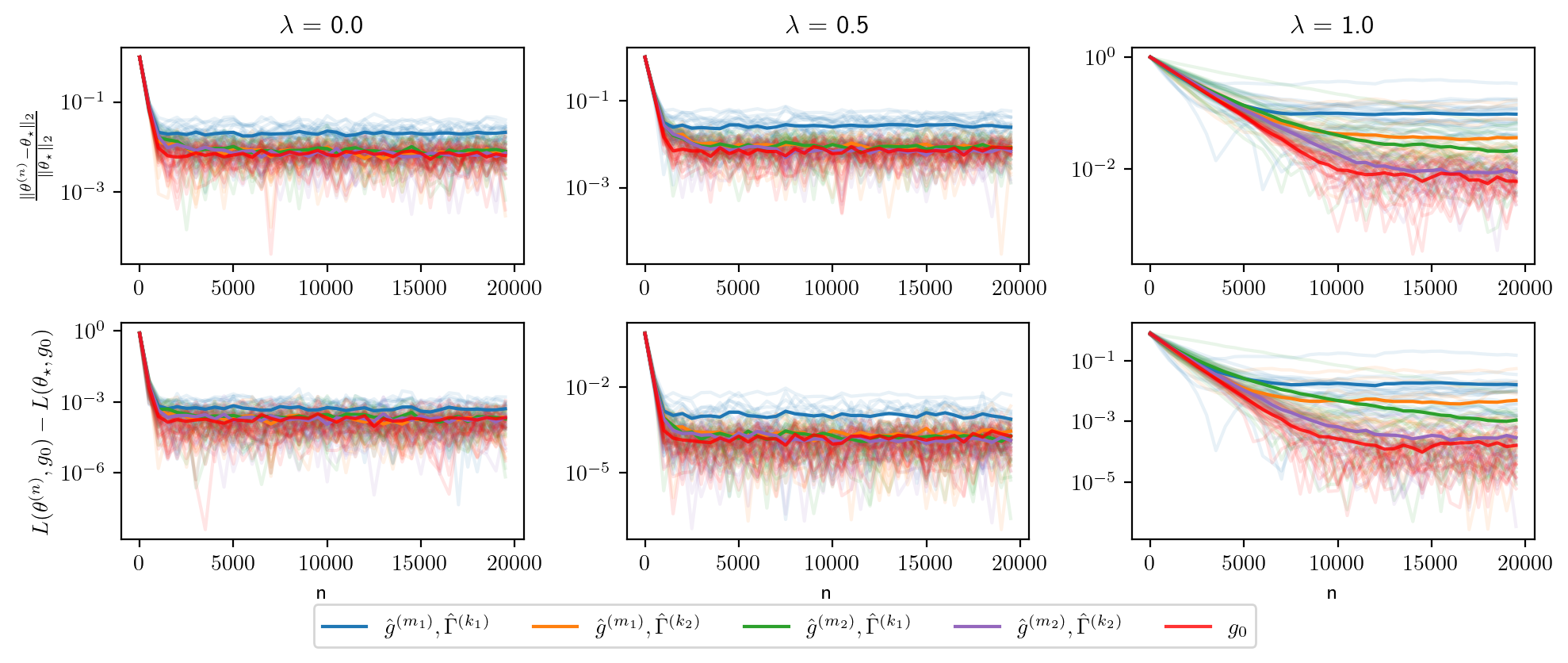}
    \caption{{\bf OSGD with the Nuisance and Operator Fitted on Full-Batch Simulated Data.} Here, $m_1 = 500$, $m_2 = 10000$, $k_1 = 300$, $k_2 = 10000$. The x-axis represents the OSGD iteration. {\bf Top:} The y-axis measures the relative error. {\bf Bottom:} The y-axis measures the risk.}
    \label{fig: osgd}
\end{figure}

\myparagraph{Results using nuisances fitted on stream data}
We then study the interleaving the nuisance and target estimations discussed in \Cref{appx discuss sec 3}. Here, Both the nuisance and the operator are learned using the same data stream and the results are shown in \Cref{fig: nuisance error stream}. Compared with \Cref{fig:nuisance estimation error}, nuisances estimated using stream data usually has larger error and need more iterations to converge due to mini-batch, learning rate, and other tuning parameters.

\begin{figure}
    \centering
    \includegraphics[width=\linewidth]{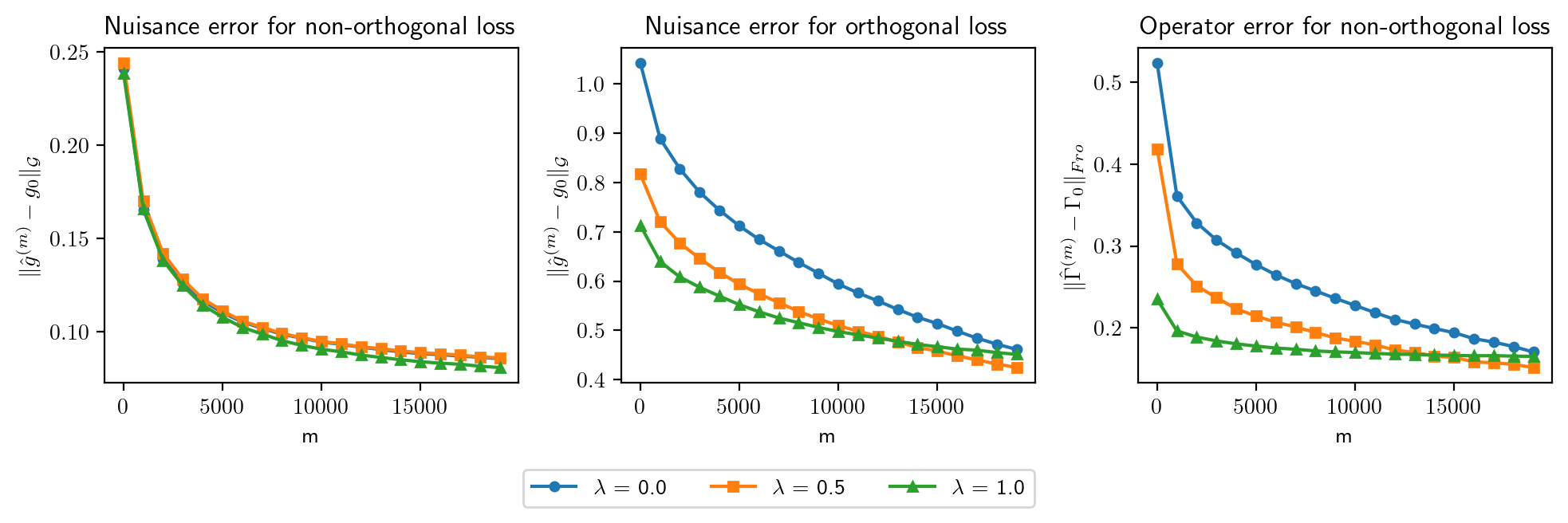}
    \caption{{\bf Nuisance and Orthogonalizing Operator Fitted on Simulated Stream Data.} The y-axis measures the corresponding error defined in \eqref{sim gnorm for ortho} - \eqref{sim operator norm} and the x-axis displays the sample size of data used to estimate the nuisance and operator.}
    \label{fig: nuisance error stream}
\end{figure}

The performances of SGDs by interleaving nuisance and target updates with stochastic gradient oracles \eqref{sim ortho score}, \eqref{sim nonortho score}, and \eqref{sim noscore} are shown in \Cref{fig: sgd orthogonal stream}, \Cref{fig: sgd non-orthogonal stream}, and \Cref{fig: osgd stream}, respectively. For all the three stochastic gradients, when $\lambda$ increases, the relative errors of the target SGD always get larger and their convergence rates become slower. There are obvious errors for SGDs using gradient oracles \eqref{sim ortho score} and \eqref{sim nonortho score} in \Cref{fig: sgd orthogonal stream} and \Cref{fig: sgd non-orthogonal stream} since nuisances are not well estimated. However, OSGD performs perfectly as shown in \Cref{fig: osgd stream}, which verifies the analysis of \Cref{theorem debiased SGD convergence rate} that using an estimated orthogonalizing operator would reduce the bias from nuisance estimation.

\begin{figure}
    \centering
    \includegraphics[width=\linewidth]{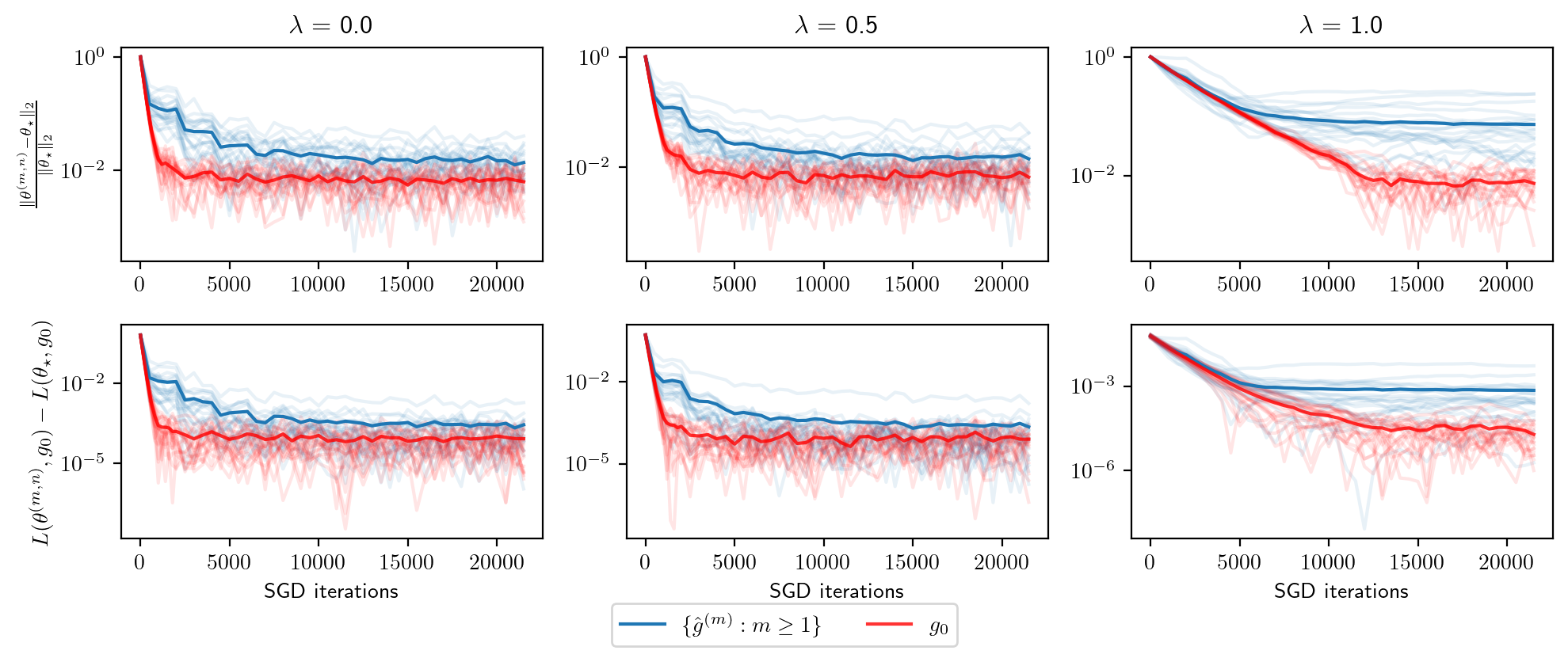}
    \caption{{\bf SGD for Orthogonal Loss with the Nuisance Fitted on Simulated Stream Data.} The x-axis represents the SGD iteration. {\bf Top:} The y-axis measures the relative error. {\bf Bottom:} The y-axis measures the risk.}
    \label{fig: sgd orthogonal stream}
\end{figure}

\begin{figure}
    \centering
    \includegraphics[width=\linewidth]{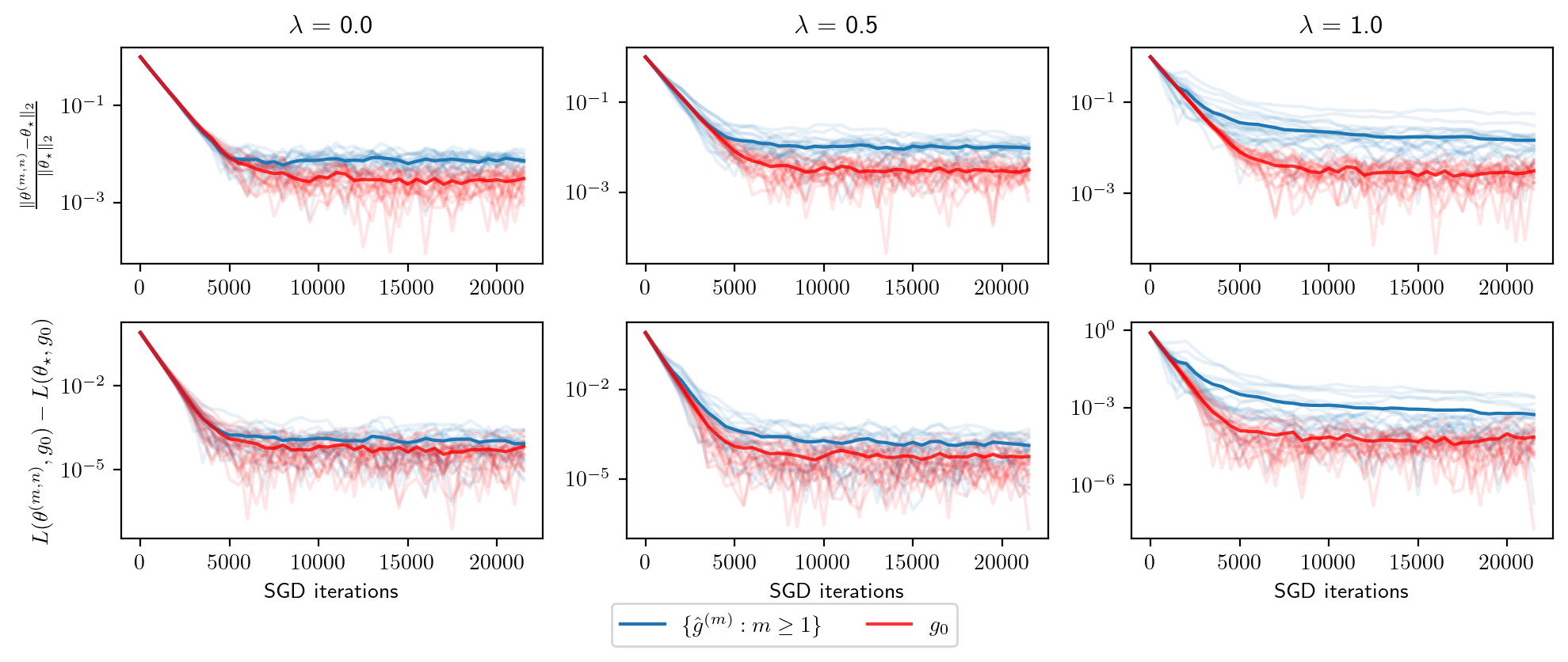}
    \caption{{\bf SGD for Non-Orthogonal Loss with the Nuisance Fitted on Simulated Stream Data.} The x-axis represents the SGD iteration. {\bf Top:} The y-axis measures the relative error. {\bf Bottom:} The y-axis measures the risk.}
    \label{fig: sgd non-orthogonal stream}
\end{figure}

\begin{figure}
    \centering
    \includegraphics[width=\linewidth]{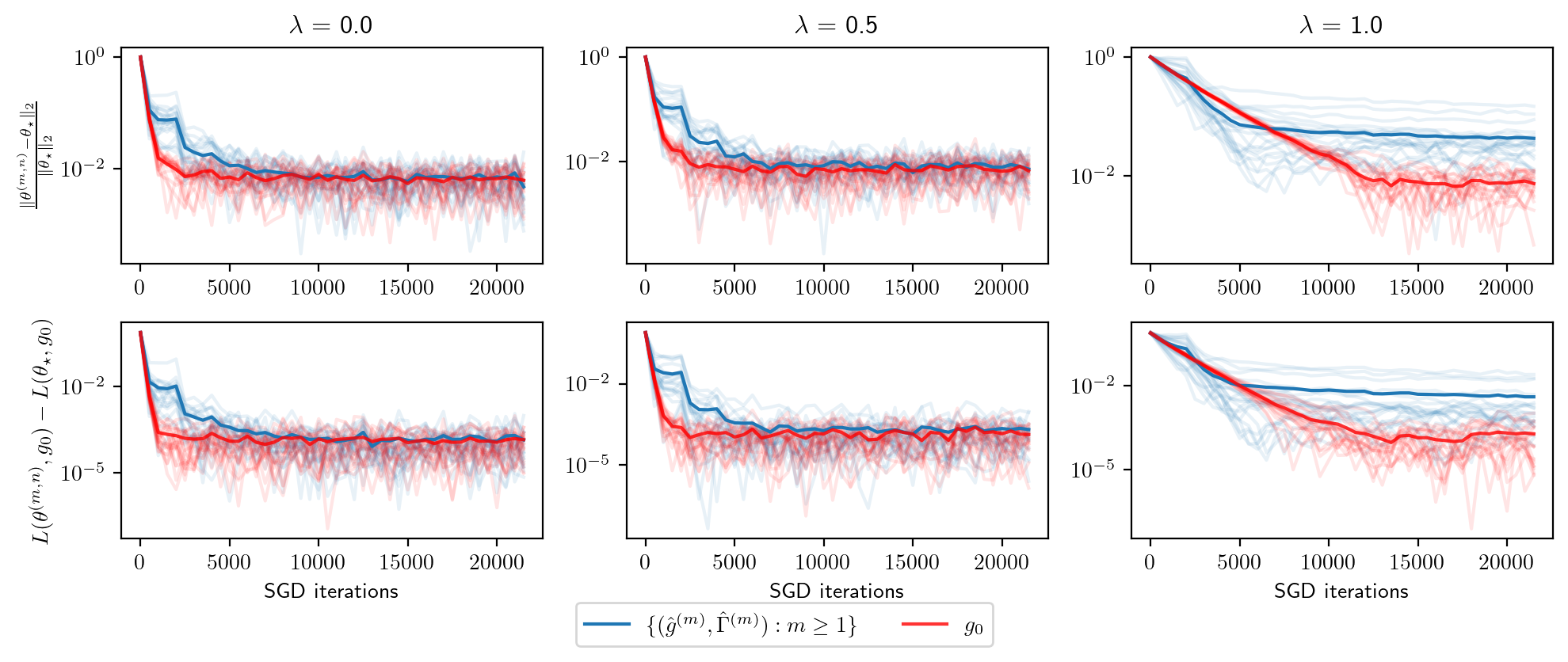}
    \caption{{\bf OSGD with the Nuisance Fitted on Simulated Stream Data.} The x-axis represents the SGD iteration. {\bf Top:} The y-axis measures the relative error. {\bf Bottom:} The y-axis measures the risk.}
    \label{fig: osgd stream}
\end{figure}

\subsection{Real Data Analysis}\label{sec: real data}
We consider the Diabetes 130-Hospitals Dataset \citep{diabetes_130-us_hospitals_for_years_1999-2008_296} as the real dataset example. We use six of these features as covariates, which are summarized in \Cref{tab:real data}. We take the binary feature ``change'' as the input $X \in \{0, 1\}$ and take the rest five features as the control $W \in \R^5$. 

\begin{table}[ht]
    \centering

    \begin{adjustbox}{max width=0.9\linewidth}
    \renewcommand{\arraystretch}{1.4}
    \begin{tabular}{cc}
    \toprule
        {\bf Feature} & {\bf Description}\\
        \midrule
        change & Indicates if there was a change in diabetic medications.\\
        time\_in\_hospital & Integer number of days between admission and discharge.\\
        num\_lab\_procedures & Integer number of lab tests performed during the encounter. \\
        num\_procedures & Integer number of procedures (other than lab tests) performed during the encounter. \\
        num\_medications & 	
Integer number of distinct generic names administered during the encounter. \\
        number\_diagnoses & Integer number of diagnoses.\\
        \bottomrule
    \end{tabular}
    \end{adjustbox}
    \vspace{6pt}
    \caption{Features used for real data analysis.}
    \label{tab:real data}
\end{table}

\subsubsection{Synthetic outcomes}
To evaluate the performance of our proposed methods, we use the synthetic outcome instead of a real outcome to examine the performance of our proposed methods. Using the synthetic outcome is common in causal inference; see \citet[Sec. 4.1]{nie2021quasi}. In this real data analysis, we generate outcome according to the following partially linear model:
\begin{align}
    Y &= \tilde \theta_0 \cdot X + \tilde \alpha_0(W) + 0.5\times \epsilon \label{real data eq Y},\\
    U &= \tilde \alpha_0(W) + 0.5 \times \xi \label{real data eq U},
\end{align}
where $\tilde \theta_0  = -1$, $\epsilon \sim \calN(0,1)$ and $\xi \sim \calN(0,1)$ are independent noises, and $\tilde \alpha_0: \R^5 \mapsto \R$ satisfies that for $w = (w_1, \dots, w_5)$,
\begin{align*}
    \tilde \alpha_0(w) = 0.5\times \cos\p{5^{-1}\sum_{i=1}^5 w_i} + 0.5\times \sin\p{5^{-1}\sum_{i=1}^5 w_i}.
\end{align*}
Similar to \Cref{subsec: sim SGD oracle}, we have $\theta_\star = \tilde \theta_0$ in this case.

\subsubsection{Real Data Results}
\myparagraph{Setup} We consider the same three stochastic gradient oracles as \Cref{subsec: sim SGD oracle} and the same two nuisance estimation methods as \Cref{subsec: sim nuisance estimation} except that we use logistic regression on full batch data and SGD of the logistic loss on stream data for estimating $\E[X\mid W]$. The setup of SGD using prefitted nuisances for this real data analysis is the same as \Cref{subsec: sim results}. For nuisance estimated using stream data, we update nuisances for 100 iterations after every 500 target SGD iterations.

\myparagraph{Evaluation} Since the true nuisances $\E[X\mid W]$ and $\E[Y \mid W]$ are unknown, we evaluate the performance of nuisance estimation $\hg\pow{m} = (\hat g_{Y}\pow{m}, \hat g_{X}\pow{m})$ for the orthogonal loss by the mean squared error:
\begin{align}\label{eq mse 1}
    \text{MSE}_1(\hat g^{(m)}) =  \max\br{\Ex{\prob}{(\hat g_{Y}\pow{m}(W) - Y)^2}, \Ex{\prob}{(\hat g_{X}\pow{m}(W) - X)^2}}.
\end{align}
We adopt the nuisance estimation error $\gnorm{\hg\pow{n} - g_0}$ defined in \eqref{eq: sim gnorm non-ortho} as the nuisance evaluation for non-orthogonal loss due to the synthetic outcome, where now $g_0 = \tilde \alpha_0$. For the operator estimation $\hat \Gamma\pow{m}: g \mapsto \E[\hg_X\pow{m}(W) g(W)]$, evaluate its performance by the mean squared error:
\begin{align}\label{eq mse 2}
    \text{MSE}_2(\hat \Gamma^{(m)}) =  \Ex{\prob}{(\hg_X\pow{m}(W) - X)^2}.
\end{align}

\myparagraph{Results using nuisances fitted on full-batch data}
We first estimate the target using prefitted nuisances and operator. The estimation errors of nuisances and the operator using full-batch real data are shown in \Cref{fig: real nuisance estimation error}, which suggests that the estimation of $\tilde \alpha_0$ converges to zero due to our design while there exists obvious bias for estimating the nuisance $(g_{0,X}, g_{0,Y})$ and the orthogonalizing operator $\Gamma_0$ possibly due to model misspecification.

\begin{figure}
    \centering
    \includegraphics[width=\linewidth]{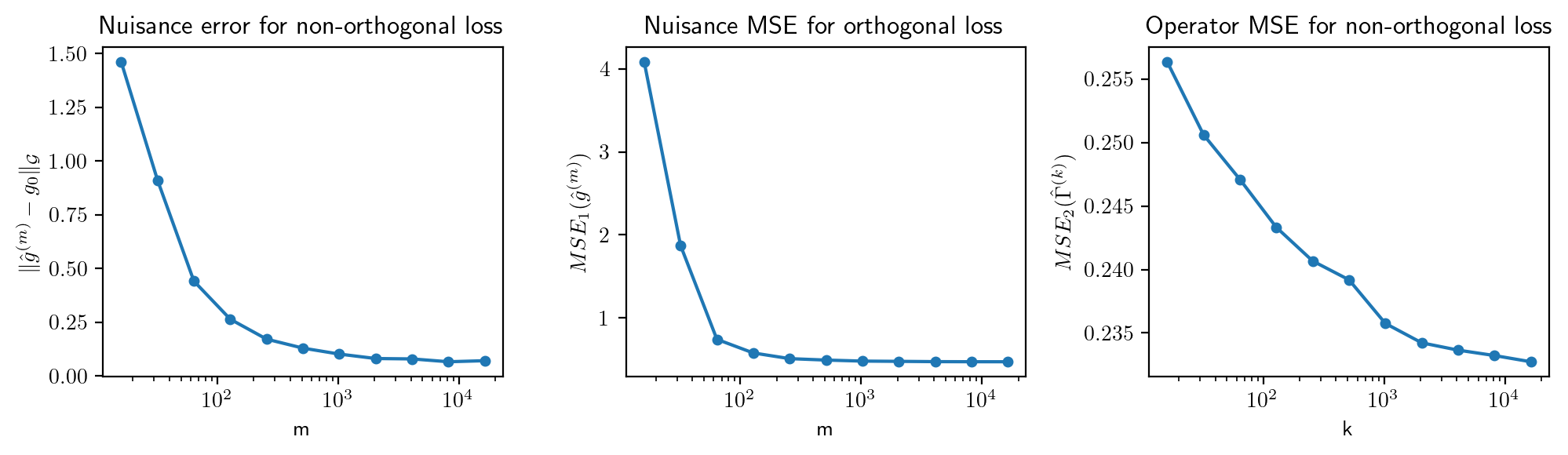}
    \caption{{\bf Nuisance and Orthogonalizing Operator Fitted on Full-Batch Real Data.} The x-axis displays the
sample size of data used to estimate the nuisance and operator. {\bf Left.} The y-axis measure the nuisance error defined in \eqref{eq: sim gnorm non-ortho}. {\bf Middle.} The y-axis measure the nuisance estimation MSE defined in \eqref{eq mse 1}. {\bf Right.} The y-axis measure the operator estimation MSE defined in \eqref{eq mse 2}.}
    \label{fig: real nuisance estimation error}
\end{figure}

The performances of SGDs using prefitted nuisances and stochastic gradient oracles \eqref{sim ortho score}, \eqref{sim nonortho score}, and \eqref{sim noscore} are shown in \Cref{fig: real sgd batch}. Overall, the relative error and the risk are small when well estimated nuisances are used. In addition, both relative errors and risks become smaller when we use more samples to estimate nuisances for the orthogonal loss and the non-orthogonal loss.

\begin{figure}
    \centering
    \includegraphics[width=\linewidth]{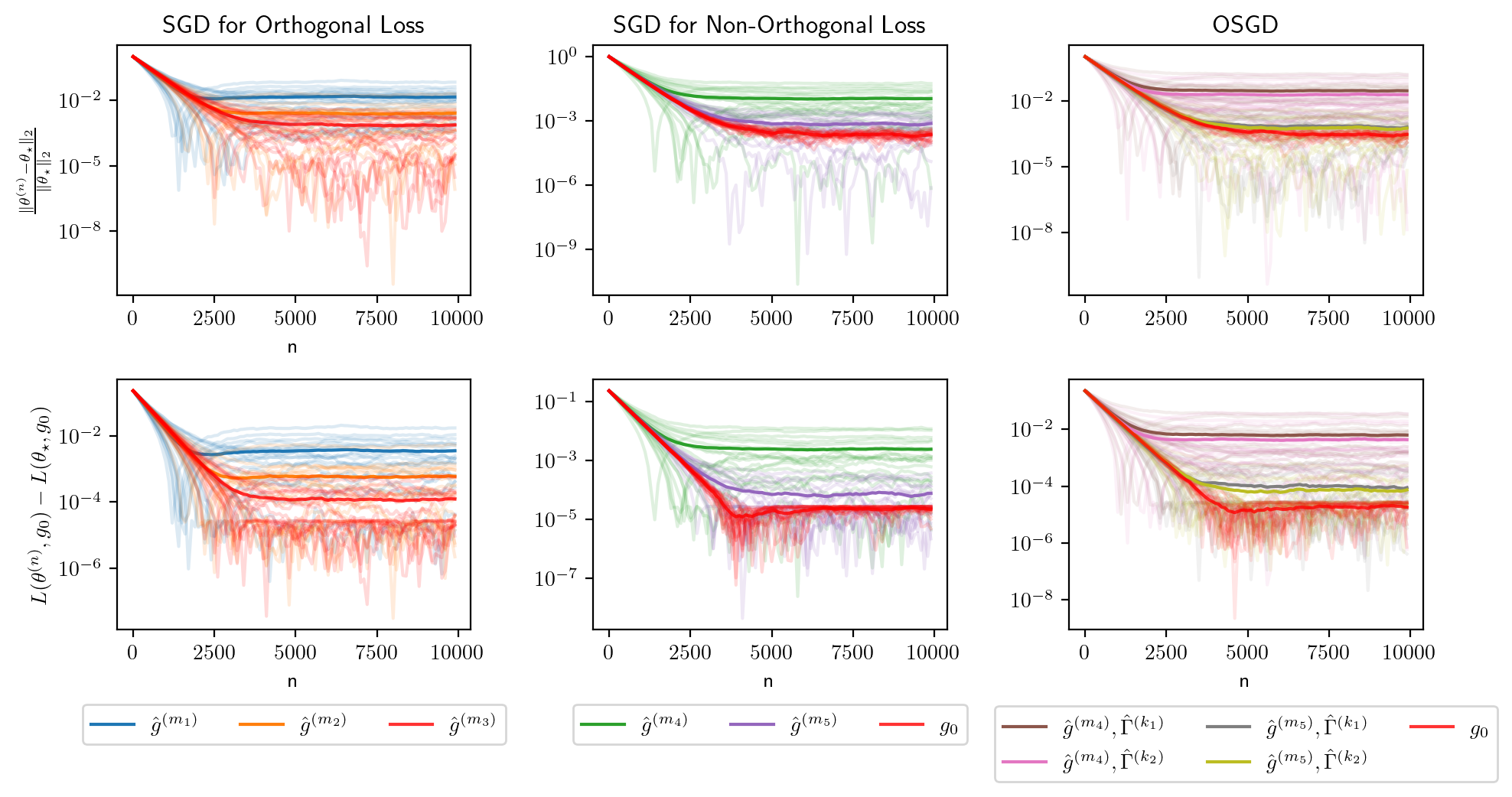}
    \caption{{\bf Stochastic Gradients with Nuisance Fitted on Full-Batch Real Data.} Here, $m_1 = 32$, $m_2 = 64$, $m_3 = 128$, $m_4 = 8$, $m_5 = 128$, $k_1 = 32$ and $k_2 = 128$. The x-axis represents the SGD iteration using corresponding stochastic gradient. {\bf Top:} The y-axis measures the relative error. {\bf Bottom:} The y-axis measures the risk.}
    \label{fig: real sgd batch}
\end{figure}

\myparagraph{Results using nuisances fitted on stream data}
We then estimate the target by interleaving the nuisance and target updates. Here, Both the nuisance and the operator are learned using the same data stream and the results are shown in \Cref{fig: real nuisance error stream}. Compared with \Cref{fig: real nuisance estimation error}, nuisances estimated using stream data converges similarly.
\begin{figure}
    \centering
    \includegraphics[width=\linewidth]{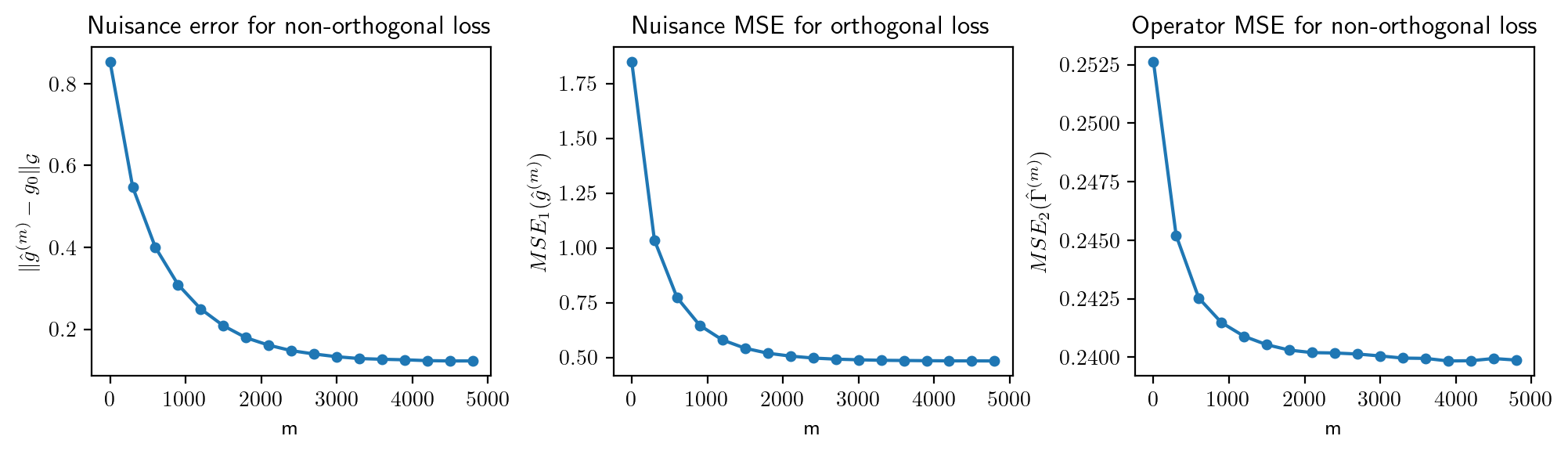}
    \caption{{\bf Estimation Errors of Nuisance and Orthogonalizing Operator Fitted on Stream Data.} The x-axis displays the
sample size of data used to estimate the nuisance and operator. {\bf Left.} The y-axis measure the nuisance error defined in \eqref{eq: sim gnorm non-ortho}. {\bf Middle.} The y-axis measure the nuisance estimation MSE defined in \eqref{eq mse 1}. {\bf Right.} The y-axis measure the operator estimation MSE defined in \eqref{eq mse 2}.}
    \label{fig: real nuisance error stream}
\end{figure}

The performances of SGDs by interleaving nuisance and target updates with stochastic gradient oracles \eqref{sim ortho score}, \eqref{sim nonortho score}, and \eqref{sim noscore} are shown in \Cref{fig: real sgd orthogonal stream}. The figure on the left in \Cref{fig: real sgd orthogonal stream} shows that the target estimation has small relative error using the estimated nuisance sequence $\{\hat g\pow{m}:m \geq 1\}$. The figure in the middle suggests that there is still some bias for the target estimation while this bias is negligible. The figure on the right shows the performance of OSGD, where the relative error of OSGD using the estimated nuisance sequence is similar to OSGD using the true nuisance, which aligns with \Cref{theorem debiased SGD convergence rate}.

\begin{figure}
    \centering
    \includegraphics[width=\linewidth]{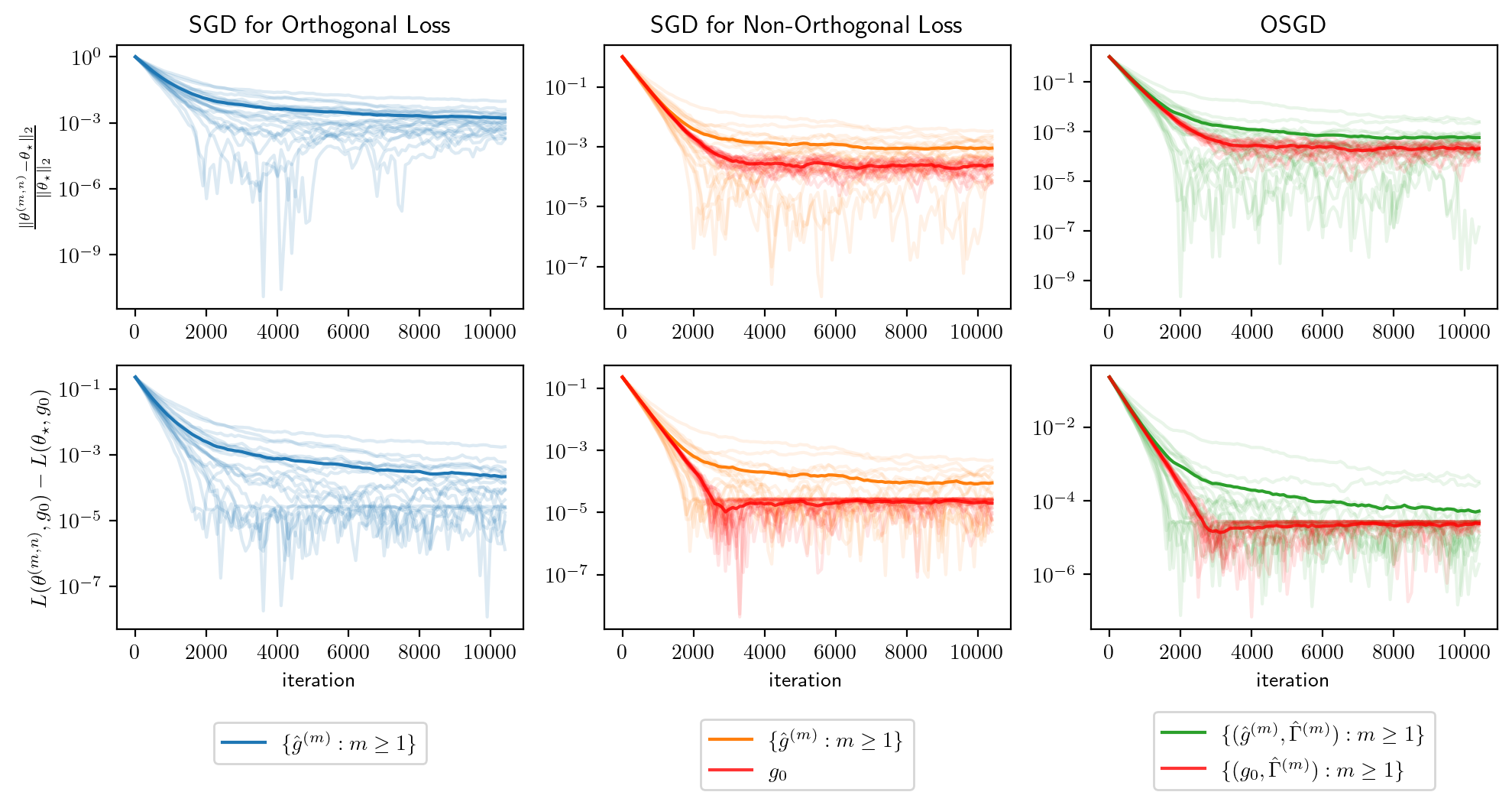}
    \caption{{\bf Stochastic Gradients with Nuisance Fitted on Real Stream Data.} The x-axis represents the SGD iteration. {\bf Top:} The y-axis measures the relative error. {\bf Bottom:} The y-axis measures the risk.}
    \label{fig: real sgd orthogonal stream}
\end{figure}

\clearpage

\section{Extension to SGD Variants}\label{appx:ASGD}
In this section, we discuss strategies for analyzing other variants of SGD under nuisances. In \Cref{sec: asgd}, we discuss the relationship between SGD with momentum and averaged SGD and provide a convergence analysis example of the averaged SGD. In \Cref{sec: adam}, we discuss Adam as a generalization of SGD with momentum and the difficulties to analyze the convergence rate of Adam.

\subsection{SGD with Momentum and Averaged SGD}\label{sec: asgd}

For the gradient oracle sequence $S\pow{n}$, SGD with momentum following the description of \citet{li2022onthelast} can be expressed as
\begin{align}
    m\pow{n+1} = \beta_n m\pow{n} + S\pow{n} \text{ and } \bar{\theta}\pow{n+1} = \bar{\theta}\pow{n} - \alpha_n m\pow{n},
\end{align}
where $\bar{\theta}\pow{n}$ is the SGD estimation sequence, $m\pow{n}$ is the momentum sequence, and $(\alpha_n)_{n \geq 0}$ and $(\beta_n)_{n \geq 0}$ can be any positive sequence. The following example shows that the averaged SGD is a special case of SGD with momentum.

\begin{examplebox}
    \textbf{Example 5} (Averaged SGD). Let $\beta_n = 1/n$ and $\alpha_n = \eta(1 - \beta_{n+1})$ for all $n \geq 1$. The momentum updates implied by this sequence are
\begin{align*}
    m\pow{n+1} = \frac{1}{n} m\pow{n} + S\pow{n} \text{ and } \bar{\theta}\pow{n+1} = \bar{\theta}\pow{n} - \eta\p{1-\frac{1}{n+1}} m\pow{n},
\end{align*}
    which implies that $\bar{\theta}\pow{n+1}$ is the averaged SGD such that
    \begin{align}\label{asgd}
        \bar{\theta}\pow{n+1} = \frac{1}{n+1}\sum_{t=0}^n \theta\pow{t}.
    \end{align}
\end{examplebox}

Example 5 demonstrates that the convergence rate of SGD with momentum can be analyzed in the same way as averaged SGD. While it is not the focus of this paper, we provide a convergence result of the averaged SGD based on the analysis of \cite{defossez2015averaged}.

\begin{proposition}[Convergence rate of averaged SGD] \label{prop asgd}
Consider the partially linear model and the non-orthogonal loss $\ell(\theta, g; z)$ in \Cref{subsubsec: non-ortho plm}. Define $\mc{D}_n = (Z_1, \ldots, Z_n)$, sampled from the product measure $\prob^n$. Choose the gradient oracle $S\pow{n}$ to be the score $\score(\theta, \hat g; Z_n)$ where $\hat g$ is estimated independently of $\calD_n$. Let $\bar \sgd\pow{n}$ be the averaged SGD defined in \eqref{asgd}. Suppose the same assumptions as \Cref{lemma: example non-ortho PLM}. If $0<\eta<\eta_{\max}$, then
    \begin{align*}
        \Ex{\prob}{\norm{\bar \sgd\pow{n} - \theta_\star}_2^2} \lesssim \frac{1}{n} + \gnorm{\hat g - g_0}^2,
    \end{align*}
where $\eta_{\max} = \sup\{\eta>0: \tr{A^\top \E_{\prob}[XX^\top] A} - \eta\Ex{\prob}{(X^\top A X)^2} > 0, \forall A \in \calS(\R^d)\}$ and $\calS(\R^d)$ is the set of all $d \times d$ symmetric matrices.
\end{proposition}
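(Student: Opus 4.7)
\textbf{Proof proposal for Proposition \ref{prop asgd}.} The plan is to exploit the fact that, when the non-orthogonal squared loss of Example 2 is combined with any fixed $\hat g$, the SGD recursion coincides exactly with a standard linear least-squares SGD on a shifted response, for which the $O(1/n)$ convergence of the averaged iterate is known from \citet{defossez2015averaged}. The nuisance error then enters only through a bias in the shifted minimizer, which we control separately.

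First, I would introduce the shifted population minimizer $\hat\theta_\star := \argmin_{\theta \in \R^d} L(\theta, \hat g) = \E_{\prob}[XX^\top]^{-1} \E_{\prob}[X(Y - \hat g(W))]$. Under the assumptions of \Cref{lemma: example non-ortho PLM}, $\theta_\star = \theta_0$ and a direct calculation gives $\hat\theta_\star - \theta_\star = \E_{\prob}[XX^\top]^{-1} \E_{\prob}[X(g_0(W) - \hat g(W))]$. Using $\lambda_{\min}(\E_{\prob}[XX^\top]) \geq \lambda_0$, Cauchy--Schwarz, and $\|X\|_\infty \le C_X$, this yields
\begin{align*}
\|\hat\theta_\star - \theta_\star\|_2^2 \;\lesssim\; \|\hat g - g_0\|_\G^2,
\end{align*}
providing the second term in the target bound.

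Next, I would rewrite the SGD iterate using $\tilde Y_n := Y_n - \hat g(W_n)$: since $\score(\theta, \hat g; Z_n) = -(\tilde Y_n - \langle \theta, X_n\rangle) X_n$, the recursion becomes $\theta^{(n+1)} = (I - \eta X_{n+1} X_{n+1}^\top)\theta^{(n)} + \eta \tilde Y_{n+1} X_{n+1}$, which is exactly the linear least-squares SGD targeting $\hat\theta_\star$ on the pair $(X, \tilde Y)$. Because $\hat g$ is independent of $\calD_n$, the pairs $(X_n, \tilde Y_n)$ are i.i.d.\ conditional on $\hat g$, with $\E_{\prob}[(\tilde Y - X^\top\hat\theta_\star) X \mid \hat g] = 0$ and bounded fourth moments inherited from the assumptions of \Cref{lemma: example non-ortho PLM}. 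The defining condition $\eta < \eta_{\max}$ is precisely the step-size admissibility needed in \citet[Thm.~1]{defossez2015averaged}, so that result applies (conditionally on $\hat g$) and produces
\begin{align*}
\E_{\calD_n \sim \prob^n}\bigl[\|\bar\theta^{(n)} - \hat\theta_\star\|_2^2 \,\bigm|\, \hat g\bigr] \;\lesssim\; \frac{1}{n}.
\end{align*}
Combining with the shift bound via $\|\bar\theta^{(n)} - \theta_\star\|_2^2 \le 2\|\bar\theta^{(n)} - \hat\theta_\star\|_2^2 + 2\|\hat\theta_\star - \theta_\star\|_2^2$ and taking expectation gives the claim.

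The main obstacle will be verifying the hypotheses of \citet{defossez2015averaged} in our framework: in particular, checking the moment conditions on the ``noise'' $(\tilde Y - X^\top \hat\theta_\star)X$ (which involves both the intrinsic noise $\epsilon$ and the nuisance error $g_0 - \hat g$) and confirming that the constants hidden in the $\lesssim$ do not blow up with the nuisance error, so that the bias contribution remains cleanly separated into the $\|\hat g - g_0\|_\G^2$ term. A minor subtlety is handling the averaging index convention so that the bound transfers directly; this is routine but must be done carefully. Aside from these, the proof is a clean decomposition argument that reuses an off-the-shelf averaged SGD result, which is why the rate $1/n$ (rather than $\eta$) appears in the variance term compared with \Cref{theorem convergence rate baseline SGD}.
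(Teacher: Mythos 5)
Your proposal is correct, and it reaches the same bound by a genuinely different decomposition. The paper keeps the recursion centered at $\theta_\star$, unrolls it, and splits the averaged error into \emph{three} pieces: the \citet{defossez2015averaged} bias term (order $n^{-2}$), their variance term applied to the re-centered noise $X_k(\epsilon_k - r_k) + \E[X_k r_k]$ (order $n^{-1}$), and a third, deterministic ``drift'' term $E_n = \frac{\eta}{n+1}\sum_{k}\bigl(\sum_{j=k}^n M_{k,j}\bigr)\E[X r]$, which it bounds by a Jensen-plus-trace computation to extract the $\gnorm{\hat g - g_0}^2$ contribution. You instead re-center at the shifted minimizer $\hat\theta_\star = \E[XX^\top]^{-1}\E[X(Y-\hat g(W))]$, which makes the forcing term exactly conditionally zero-mean, so the Défossez--Bach result applies off the shelf (conditionally on $\hat g$) to give $\E\|\bar\theta^{(n)} - \hat\theta_\star\|_2^2 \lesssim n^{-1}$, and the nuisance enters only through $\|\hat\theta_\star - \theta_\star\|_2^2 = \|H^{-1}\E[X(\hat g - g_0)(W)]\|_2^2 \lesssim \lambda_0^{-2}C_X^2\gnorm{\hat g - g_0}^2$. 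This is exactly the ``biased SGD via minimizer shift'' decomposition the paper discusses (but does not use here) in \Cref{appx discuss sec 1}. Your route is more modular and avoids the hand-rolled analysis of $E_n$; the paper's route keeps everything expressed relative to the original target and makes the drift term's dependence on the operator $T^{-1}$ and on $\eta$ explicit. The caveats you flag are the right ones and are benign: conditional on $\hat g$ the pairs $(X_n,\tilde Y_n)$ are i.i.d., the noise covariance is finite under $\|X\|_\infty \le C_X$, $\E[\epsilon^2]\le\sigma^2$, and $\hat g \in \Gr$ (its constants depend on $r$ but not on $n$, just as in the paper's $\Sigma_0$), and $\eta<\eta_{\max}$ is the same admissibility condition ($\rho<1$) both arguments need.
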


Before we prove \Cref{prop asgd}, recall the example of non-orthogonal loss for the partially linear model in \Cref{subsubsec: non-ortho plm}, where $Z = (X, W, Y) \sim \prob$ satisfies 
\begin{align}
    Y = \ip{\theta_0, X} + g_0(W) + \epsilon, \quad \Ex{\prob}{\epsilon \mid X, W} = 0.
\end{align}
The target parameter $\theta_\star = \argmin_{\theta \in \Theta} \Ex{\prob}{\ell(\theta, g; Z)}$ where $\ell$ is the non-orthogonal loss defined as
\begin{align}
    \loss(\theta, g; z) = \frac{1}{2}[y  - g(w) - \ip{\theta, x}]^2,
\end{align}
By \cref{lemma: example non-ortho PLM}, we have $\theta_\star = \theta_0$. The stochastic gradient oracle for this non-orthogonal loss is 
\begin{align*}
    \score(\theta, g; z) = -X(y  - g(w) - \ip{\theta, x}),
\end{align*}
and the SGD iteration is defined by $\sgd\pow{0} \in \Theta$ and
\begin{align}\label{appx asgd: eq sgd}
    \sgd\pow{n} = \sgd\pow{n-1} - \eta \score(\theta, \hat g; Z_{n-1}) = \sgd\pow{n-1} + \eta X_{n-1}(Y_{n-1}  - \hat g(W_{n-1}) - \ip{\theta, X_{n-1}}),
\end{align}
where $\hat g \in \Gr$ is any nuisance estimator independent of $\{Z_i\}_{i=1}^n$. 
Note that \eqref{appx asgd: eq sgd} can be written as
\begin{align*}
    \sgd\pow{n} - \theta_\star &= \p{I - \eta X_nX_n^\top}\sgd\pow{n-1} + \eta X_n\p{Y_n - \hat g\p{W_n}} - \theta_\star\\
    &=\p{I - \eta X_nX_n^\top}(\sgd\pow{n-1}-\theta_\star) + \eta X_n\p{Y_n - \hat g\p{W_n}-X_n^\top \theta_\star}\\
    &=\p{I - \eta X_nX_n^\top}(\sgd\pow{n-1}-\theta_\star) + \eta X_n\epsilon_n - \eta X_n\p{\hat g\p{W_n} - g_0\p{W_n}}.
\end{align*}
Let $\beta\pow{n} = \sgd\pow{n} - \theta_\star$, $r_n = \hat g\p{W_n} - g_0\p{W_n}$, and 
\begin{align*}
    M_{k,j} = \p{\prod_{i=k+1}^j \p{I - \eta X_iX_i^\top}}^\top \in \R^{d\times d}.
\end{align*}
By recursion, we have
\begin{align*}
    \beta\pow{n} &= \p{I - \eta X_nX_n^\top}\beta\pow{n-1} + \eta X_n\epsilon_n - \eta X_nr_n\\
    &= M_{0,n}\beta\pow{0} + \eta \sum_{k=1}^n M_{k,n} X_k\epsilon_k - \eta \sum_{k=1}^n M_{k,n} X_k r_k.
\end{align*}
Let $\bar \beta\pow{n} = \bar \sgd\pow{n} - \theta_\star = \p{n+1}^{-1}\sum_{j=0}^n \beta\pow{j}$, we have
\begin{align*}
    \bar \beta\pow{n} &=  \frac{1}{n+1}\sum_{j=0}^nM_{0,j}\beta\pow{0} + \frac{\eta}{n+1}\sum_{k=1}^n\p{\sum_{j=k}^n M_{k,j}}\p{X_k\p{\epsilon_k - r_k} + \E\sbr{X_kr_k}} \\
    &\quad - \frac{\eta}{n+1}\sum_{k=1}^n\p{\sum_{j=k}^n M_{k,j}}\E\sbr{X_kr_k}.
\end{align*}
In the above formula, first two terms are usually interpreted as the bias term and the variance term under the true nuisance, respectively according to \cite{defossez2015averaged}, and the last term can be viewed as the error term caused by the nuisance estimation. 

To analyze the bias term and the variance term, we adopt the notations of \cite{defossez2015averaged} for matrices and operators. First, Define $H = \E_{\prob}[XX^\top]$. Let $H_L$ (resp.~$H_R$) be the matrix operator representing left multiplication (resp.~right multiplication) by $H$, and $T$ be the linear operator such that for any square matrix $M \in \R^{d \times d}$, $TM = HM + MH - \eta \Ex{\prob}{(X^\top M X)XX^\top}$. Let $\rho = \max\{\Vert I - \eta H \Vert_{\text{op}}, \Vert I - \eta T \Vert_{\text{op}}\}$ where the operator norm $\Vert \cdot\Vert_{\text{op}}$ is defined as the largest singular value. Finally, let $\eta_{\max}$ be the same as in \Cref{prop asgd}. With definitions above, the asymptotic covariances of the bias  and the variance follow directly from Theorems 1 and 2 of \citet[Appx.~3]{defossez2015averaged}.

\begin{lemma}[Asymptotic covariance of the bias]\label{prop asymp bias cov}
   Let $\Xi_0=\Ex{\prob}{\beta\pow{0} {\beta\pow{0}}^T}$. If $0<\eta<\eta_{\max }$, then
\begin{align*}
    \Ex{\prob}{B_n B_n^\top}=\frac{1}{n^2 \eta^2}\left(H_L^{-1}+H_R^{-1}-\eta I\right)\left(T^{-1} \Xi_0\right)+O\left(\frac{\rho^n}{n}\norm{\Xi_0}_F\right),
\end{align*}
where $B_n = \frac{1}{n+1}\sum_{j=0}^nM_{0,j}\beta\pow{0}$.
\end{lemma}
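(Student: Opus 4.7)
The plan is to observe that the bias term $B_n$ is built from the matrices $M_{0,j}$, which depend solely on the covariate stream $\{X_i\}_{i=1}^n$ and the initial error $\beta_0$, while being statistically independent of both the residuals $\epsilon_k$ and the nuisance errors $r_k = \hat g(W_k) - g_0(W_k)$. Consequently, the statement reduces to a purely optimization-theoretic claim about averaged SGD in the noise-free, correctly-specified setting, which is exactly the situation treated by Theorem 1 of \cite{defossez2015averaged}. The work is therefore to unfold the double sum defining $\E_{\prob}[B_n B_n^\top]$ and verify that the operator-algebraic manipulations reproduce the claimed leading term and remainder in our notation.

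First, I would expand $\E_{\prob}[B_n B_n^\top] = (n+1)^{-2} \sum_{j,\ell = 0}^n \E_{\prob}[M_{0,j}\, \Xi_0\, M_{0,\ell}^\top]$ and compute each cross moment. Since $A_i := I - \eta X_i X_i^\top$ is symmetric, the factorization $M_{0,\ell} = M_{0,j} M_{j,\ell}$ for $j \le \ell$, combined with the independence of $(X_1,\ldots,X_j)$ from $(X_{j+1},\ldots,X_\ell)$, yields $\E_{\prob}[M_{0,j} A M_{0,\ell}^\top] = [(I - \eta T)^j A]\,(I - \eta H)^{\ell - j}$ for any deterministic matrix $A$. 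The diagonal case $j = \ell$ follows inductively from $\E_{\prob}[(I - \eta X X^\top) A (I - \eta X X^\top)] = A - \eta T A$, which is precisely the defining property of $T$; the symmetric case $j > \ell$ gives an analogous identity with $(I - \eta H)^{j - \ell}$ multiplied on the left.

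Next, I would close each half-sum using the geometric-series identity $\sum_{k = 0}^m (I - \eta H)^k = (\eta H)^{-1}(I - (I - \eta H)^{m+1})$, valid under $\eta < \eta_{\max}$ by construction. Splitting the double sum into $\{j \le \ell\}$ and $\{j \ge \ell\}$, subtracting the twice-counted diagonal, and collecting the leading terms produces the skeleton $\eta^{-2}(H_L^{-1} + H_R^{-1})(T^{-1}\Xi_0) - \eta^{-1}(T^{-1}\Xi_0)$, which consolidates into $\eta^{-2}(H_L^{-1} + H_R^{-1} - \eta I)(T^{-1}\Xi_0)$. All remaining terms carry one of the factors $(I - \eta T)^{n+1}$, $(I - \eta H_L)^{n+1}$, or $(I - \eta H_R)^{n+1}$, each of operator norm at most $\rho^{n+1}$, multiplied by an $O(n)$ partial-sum factor and $\norm{\Xi_0}_F$; after normalizing by $(n+1)^2 \asymp n^2$, this yields the claimed $O(\rho^n \norm{\Xi_0}_F / n)$ remainder.

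The main obstacle will be the careful bookkeeping needed to verify that the two off-diagonal half-sums combine to recover exactly the $H_L^{-1} + H_R^{-1}$ structure, that the double-counted diagonal contributes exactly the $-\eta I$ correction, and that every leftover operator term admits a common $\rho^{n}$ bound after passing through the partial-sum identity and being compared on the same footing. Apart from this operator algebra, which is identical to the noise-free analysis of averaged SGD in \cite{defossez2015averaged}, no component of the argument touches the nuisance estimator $\hat g$ or the residual $\epsilon$, confirming that the bias contribution is entirely insensitive to nuisance misspecification.
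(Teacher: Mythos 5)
Your proposal is correct and takes the same route as the paper: the paper simply cites Theorem 1 of Appendix 3 of D\'efossez and Bach, and the key observation justifying that citation is exactly the one you open with, namely that $B_n$ is a function of $\beta_0$ and the covariate stream $\{X_i\}$ alone, so the noise-free averaged-least-squares analysis applies verbatim. The operator-algebraic bookkeeping you sketch (factoring cross terms via $\E[M_{0,j}\Xi_0 M_{0,\ell}^\top] = (I-\eta T)^{j}\Xi_0\,(I-\eta H)^{\ell-j}$ for $j\le\ell$, closing with geometric series, and absorbing the leftover $(I-\eta T)^{n+1}$, $(I-\eta H)^{n+1}$ factors into the $O(\rho^n/n)$ remainder) is a faithful unpacking of the D\'efossez--Bach derivation rather than a new argument.
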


\begin{lemma}[Asymptotic covariance of the variance]\label{prop asymp var cov}
    Let $\Sigma_0=\Var(X_n\p{\epsilon_n - r_n} + \E\sbr{X_nr_n})$. If $0<\eta<\eta_{\max }$, then
\begin{align*}
    \Ex{\prob}{V_n V_n^\top}&=\frac{1}{n}\left(H_L^{-1}+H_R^{-1}-\eta I\right) T^{-1} \Sigma_0 \\
    &\quad-\frac{1}{\eta n^2}\left(H_L^{-1}+H_R^{-1}-\eta I\right)(I-\eta T) T^{-2} \Sigma_0 +O\left(\frac{\rho^n}{n}\norm{\Sigma_0}_F\right),
\end{align*}
where $V_n = \frac{\eta}{n+1}\sum_{k=1}^n\p{\sum_{j=k}^n M_{k,j}}\p{X_k\p{\epsilon_k - r_k} + \E\sbr{X_kr_k}}$.
\end{lemma}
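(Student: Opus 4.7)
The plan is to verify that $V_n$ has exactly the variance structure analyzed in Défossez--Bach's Theorem~2 and then invoke their operator calculus. First I would check the ``iid noise'' structure: conditional on $\hat g$ (independent of $\calD_n$ by hypothesis), the vectors $\xi_k := X_k(\epsilon_k - r_k) + \E[X_k r_k]$ are iid across $k$ with mean zero---since $\E[X_k\epsilon_k]=0$ and the constant correction $\E[X_k r_k]$ is subtracted---and common covariance $\Sigma_0$. Writing $V_n = \tfrac{\eta}{n+1}\sum_{k=1}^n C_{k,n}\xi_k$ with $C_{k,n} := \sum_{j=k}^n M_{k,j}$, independence and the block-diagonal covariance immediately yield
\[
\E[V_n V_n^\top] \;=\; \frac{\eta^2}{(n+1)^2}\sum_{k=1}^n \E\!\left[C_{k,n}\,\Sigma_0\,C_{k,n}^\top\right].
\]

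Next I would expand the inner expectation via the multiplicative structure of $M_{k,j}$. For $k\le j\le j'$, the symmetry of $I-\eta X_i X_i^\top$ gives $M_{k,j'} = M_{j,j'} M_{k,j}$, and independence of disjoint blocks combined with the operator identities $\E[M_{k,j}] = (I-\eta H)^{j-k}$ and $\E[M_{k,j}\,A\,M_{k,j}^\top] = (I-\eta T)^{j-k}A$---the latter being precisely the definition of $T$, since $\E[(I-\eta XX^\top)A(I-\eta XX^\top)] = A - \eta TA$---produces
\[
\E\bigl[M_{k,j}\,\Sigma_0\,M_{k,j'}^\top\bigr] \;=\; \bigl((I-\eta T)^{j-k}\Sigma_0\bigr)(I-\eta H)^{j'-j},
\]
with the symmetric formula for $j'<j$. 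Splitting the $(j,j')$ sum into diagonal, upper-triangular, and lower-triangular parts and applying the geometric identity $\sum_{m=0}^N (I-\eta X)^m = \eta^{-1}X^{-1}\bigl(I - (I-\eta X)^{N+1}\bigr)$ separately in the $T$-variable and the $H$-variable reduces the inner expectation to a finite operator expression; the condition $\eta<\eta_{\max}$ ensures $T$ is invertible and $\norm{I-\eta T}_{\op}\le\rho<1$.

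The outer sum $\sum_{k=1}^n$ against the $(n+1)^{-2}$ prefactor then yields an $O(1/n)$ leading term and an $O(1/n^2)$ correction, while all residual terms carrying a full factor of $(I-\eta T)^{n}$ or $(I-\eta H)^{n}$ are absorbed into the $O(\rho^n/n \cdot \norm{\Sigma_0}_{\Fro})$ remainder by the spectral bound on $\rho$. The key algebraic step---and the main obstacle---is to repackage these three contributions into the compact form $(H_L^{-1}+H_R^{-1}-\eta I)T^{-1}\Sigma_0$ for the leading term and $-\eta^{-1}(H_L^{-1}+H_R^{-1}-\eta I)(I-\eta T)T^{-2}\Sigma_0$ for the subleading one. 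The factor $H_L^{-1}+H_R^{-1}-\eta I$ emerges by combining the $(I-\eta H)^{j'-j}$ powers from the upper triangle, their transpose-dual from the lower triangle, and the diagonal $(I-\eta T)^{j-k}\Sigma_0$, and then applying the identity $T = H_L + H_R - \eta \mathcal{K}$, where $\mathcal{K} A := \E[(X^\top A X)XX^\top]$, to convert between one-sided powers of $H$ and operator inverses of $T$. This is precisely the algebraic simplification carried out in the proof of Défossez--Bach's Theorem~2; once the iid-noise identification above is in place, their argument applies verbatim and delivers the stated expansion.
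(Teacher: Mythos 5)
Your proposal is correct and follows essentially the same route as the paper: the paper offers no proof beyond asserting that the result ``follows directly'' from Theorems~1 and~2 of D\'efossez--Bach, and your argument is exactly that reduction, with the useful extra step of checking that the modified noise $\xi_k = X_k(\epsilon_k - r_k) + \E[X_k r_k]$ is iid, mean-zero, and independent of the blocks $M_{k,j}$, so that their operator calculus applies with $\Sigma_0 = \E[\xi\xi^\top]$ in place of $\E[\epsilon^2 xx^\top]$. The only point worth stating more carefully is the vanishing of the off-diagonal terms $\E[C_{k,n}\xi_k\xi_{k'}^\top C_{k',n}^\top]$: since $\xi_{k'}$ is \emph{not} conditionally centered given $X_{k'}$ (the $-X_{k'}r_{k'}+\E[Xr]$ part is only unconditionally centered) and $C_{k,n}$ for $k<k'$ does depend on $X_{k'}$, one should note that the term still vanishes because the noise with the \emph{smaller} index is independent of every other factor (all products $M_{\cdot,\cdot}$ start strictly after their lower index) and has mean zero.
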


In fact, the convergence rate of averaged SGD depends on $\tr{B_nB_n^\top}$ and $\tr{V_nV_n^\top}$. When $\rho<1$, \Cref{prop asymp bias cov} demonstrate that $\tr{B_nB_n^\top}$ is of order $n^{-2}$, while \Cref{prop asymp var cov} shows that $\tr{V_nV_n^\top}$ is of order $n^{-1}$, which is reasonable due to the randomness of the noise $\epsilon_k - r_k$.

For the error term, we can analyze it in a similar way to the bias term. Let $\Delta = \E_{\prob}\sbr{X_nr_n}$ and 
\begin{align*}
    E_n = \frac{\eta}{n+1}\sum_{k=1}^n\p{\sum_{j=k}^n M_{k,j}}\Delta.
\end{align*}
Note that by Jensen's inequality,
\begin{align*}
    \Ex{\prob}{\norm{E_n}_2^2} &\leq  \Ex{\prob}{\frac{\eta^2}{n}\sum_{k=1}^n\left\Vert\p{\sum_{j=k}^n M_{k,j}}\Delta\right\Vert_2^2 }\\
    &= \frac{\eta^2}{n}\sum_{k=1}^n\tr{\Ex{\prob}{\p{\sum_{j=k}^n M_{k,j} \Delta}\p{\sum_{j=k}^n M_{k,j} \Delta}^\top}}.
\end{align*}
It is clear to see that the asymptotic covariance of $\Delta_{k,n} := \sum_{j=k}^n M_{k,j} \Delta$ is of the same form as the bias term in \Cref{prop asymp bias cov}. Let $G_0 = \Ex{\prob}{\Delta\Delta^\top}$ and we have
\begin{align*}
    \Ex{\prob}{\Delta_{k,n} \Delta_{k,n}^\top} = \frac{1}{\eta^2}\left(H_L^{-1}+H_R^{-1}-\eta I\right)\left(T^{-1} G_0\right)+O\left((n-k)\rho^{n-k}\norm{G_0}_F\right).
\end{align*}
Thus, the trace of the covariance summation over $k=1, \dots, n$ satisfies
\begin{align}\label{eq asgd error term}
    \frac{\eta^2}{n}\sum_{k=1}^n\tr{\Ex{\prob}{\Delta_{k,n} \Delta_{k,n}^\top}} = \tr{(H_L^{-1}+H_R^{-1}-\eta I)(T^{-1} G_0)} + O\p{\frac{\eta^2}{n}\sum_{k=0}^{n-1} k\rho^{k}\norm{G_0}_F}.
\end{align}

Gathering the bias term, the variance term, and the error term, we are now ready to proof \Cref{prop asgd}.
\begin{proof}[Proof of \Cref{prop asgd}]
By Lemma 1 of \cite{defossez2015averaged}, $0< \rho < 1$ when $0< \eta < \eta_{\max}$. Suppose that $\norm{X}_{\infty} \leq C_X$ almost surely. By Jensen's inequality we have that 

\begin{align*}
\norm{G_0}_2 = \norm{\Ex{\prob}{X_nr_n}}_2^2\leq C_X^2\Ex{\prob}{r_n}^2 \leq C_X^2\Ex{\prob}{r_n^2} = C_X^2\gnorm{\hat g - g_0}^2.
\end{align*}
Note that
\begin{align*}
    \sum_{k=0}^{n-1} k\rho^{k} = \rho \frac{\d }{\d \rho} \p{\sum_{k=0}^{n-1} \rho^{k}} = \rho \frac{\d }{\d \rho} \p{\frac{1 - \rho^n}{1 - \rho}} = \frac{\rho - n \rho^n + (n-1)\rho^{n+1}}{(1 - \rho)^2} = O(1).
\end{align*}
By \Cref{prop asymp bias cov}, \Cref{prop asymp var cov}, and \eqref{eq asgd error term}, we have
    \begin{align*}
    \Ex{\prob}{\norm{\bar \sgd\pow{n} - \theta_\star}_2^2} &\lesssim \Ex{\prob}{\norm{B_n}_2^2} + \Ex{\prob}{\norm{V_n}_2^2} + \Ex{\prob}{\norm{E_n}_2^2}\\
    &\lesssim \tr{\Ex{\prob}{B_n B_n^\top}} + \tr{\Ex{\prob}{V_n V_n^\top}} + \frac{\eta^2}{n}\sum_{k=1}^n\tr{\Ex{\prob}{\Delta_{k,n} \Delta_{k,n}^\top}}\\
    &\lesssim \frac{1}{n} + \gnorm{\hat g - g_0}^2.
\end{align*}
\end{proof}

\subsection{Adam}\label{sec: adam}

The primary updates for Adam under nuisance estimate $\hg$ are given by the following recursive equations. Below, we let $i \in \br{1, \ldots, d}$ denote a particular dimension of the finite-dimensional parameter of interest. Following the description of \citet{defossez2022asimple}, for the gradient oracle sequence $S\pow{n}$ the Adam generates the target estimator $\tilde \theta\pow{n}$ as below:
\begin{align}
    m\pow{n}_i &= \beta_1 m\pow{n-1}_i + S_{i}\pow{n}\label{adam eq 1}\\
    v\pow{n}_i &= \beta_2 v\pow{n-1}_i + \p{S_{i}\pow{n}}^2\label{adam eq 2}\\
    \tilde \theta\pow{n}_i &= \tilde \theta\pow{n-1}_i - \eta \frac{m\pow{n}_i}{\sqrt{\epsilon + v\pow{n}_i}}, \label{adam eq 3}
\end{align}
where $\beta_2 \in (0, 1]$, $\beta_1 \in [0, \beta_2]$ are the momentum and variance parameters, $m\pow{n}, v\pow{n} \in \R^d$ are the momentum and variance sequences, and $\epsilon > 0$ is a numerical stability parameter. Adam differs from the SGD with momentum by adding a variance sequence $v\pow{n}$. When $v\pow{n}$ is chosen to be a constant, then \eqref{adam eq 1} and \eqref{adam eq 3} would reduce to the special case of SGD with momentum where $\beta_n$ and $\alpha_n$ are constant. 

The analysis of Adam is often done in the case of smooth non-convex optimization, in which it is shown that the gradient of the objective tends to zero \citep{ward2020adagrad, defossez2022asimple}. Specifically, \citet{defossez2022asimple} consider a momentum-free Adam ($\beta_1 = 0$) to analyze the essential ingredients that differ from momentum: the variance pre-conditioning and element-wise updates, which suggests that under the true nuisance, i.e., $S\pow{n} = \score(\theta\pow{n}, g_0; Z_n)$ for an \iid sample $\br{Z_i}_{i=1}^n \sim \prob^n$, the convergence result of Adam satisfies
\begin{align*}
    \Ex{\prob^n}{\norm{S_\theta(\tilde \theta\pow{n}, g_0)}_2^2} \lesssim \frac{1}{\sqrt{n}} \p{1+  \log\p{\frac{1}{\epsilon}}}.
\end{align*}
Note that this result is not comparable to our convergence criterion (in terms of iterations), which differs non-trivially from a stationarity or function value analysis. While the convergence of Adam without nuisance has been studied in the literature, it still remains unclear that how to analysis Adam under an estimated nuisance $\hg$ and what should be the nuisance effect on the gradient norm criterion.

\end{document}